\renewenvironment{proof}[1][\proofname]{\noindent {\bfseries #1.}  }{\qed}
\newtheorem{theorem}{Theorem}[section]
\newtheorem{lemma}[theorem]{Lemma}
\newtheorem{proposition}[theorem]{Proposition}
\newtheorem{definition}[theorem]{Definition}
\newtheorem{remark}[theorem]{Remark}
\newtheorem{fact}[theorem]{Fact}
\renewcommand{\mathbf}{\boldsymbol}
\newcommand{\mb}{\mathbf}
\newcommand{\mc}{\mathcal}
\newcommand{\bb}{\mathbb}
\newcommand{\set}[1]{\left\{ #1 \right\}}
\newcommand{\reals}{\bb R}
\newcommand{\downto}{\,\searrow\,}
\newcommand{\eps}{\varepsilon}
\newcommand{\R}{\reals}
\newcommand{\indicator}{\indic}
\newcommand{ \brac }[1]{\left[ #1 \right]}
\newcommand{ \paren }[1]{ \left( #1 \right) }
\newcommand{\objscale}{\sqrt{\frac{\pi}{2}}\cdot}
\newcommand{\what}{\widehat}
\newcommand{\rgrad}{\partial_R}
\DeclareMathOperator{\hd}{d_H}
\DeclareMathOperator{\supp}{supp}
\DeclareMathOperator{\conv}{conv}
\DeclareMathOperator{\sign}{sign}
\DeclareMathOperator{\grad}{grad}
\DeclareMathOperator{\mini}{minimize}
\DeclareMathOperator{\st}{subject\; to}
\DeclareMathOperator*{\argmax}{arg\,max}
\DeclareMathOperator{\radi}{rad}
\DeclareMathOperator{\dexp}{d_{\bb E}}
\DeclareMathOperator{\d2}{d_2}
\newcommand{\wt}{\widetilde}
\newcommand{\ol}{\overline}
\newcommand{\norm}[2]{\left\| #1 \right\|_{#2}}
\newcommand{\abs}[1]{\left| #1 \right|}
\newcommand{\innerprod}[2]{\left\langle #1,  #2 \right\rangle}
\newcommand{\prob}[1]{\bb P\left[ #1 \right]}
\newcommand{\expect}[1]{\bb E\left[ #1 \right]}
\newcommand{\iid}{iid}
\newcommand{\indic}[1]{\mathbbm 1\left\{#1\right\}} 
\newcommand{\normal}{\mathsf{N}}  
\newcommand{\wtilde}[1]{\widetilde{#1}}
\newcommand{\EOmega}{\bb E_{\Omega}}
\newcommand{\bgt}{\mathrm{BG}(\theta)}
\newcommand{\<}{\left\langle}
\renewcommand{\>}{\right\rangle}
\renewcommand{\dh}{\mathrm{d}}
\newcommand{\floor}[1]{\left\lfloor #1\right\rfloor}
\newcommand{\ceil}[1]{\left\lceil #1\right\rceil}
\numberwithin{equation}{section}
\title{Subgradient Descent Learns Orthogonal Dictionaries}
\author{Yu Bai\footnote{Department of Statistics, Stanford
    University. \texttt{yub@stanford.edu}.}
  \and
  Qijia Jiang\footnote{Department of Electrical Engineering, Stanford
    University. \texttt{qjiang2@stanford.edu}.}
  \and
  Ju Sun\footnote{Department of Mathematics, Stanford
    University. \texttt{sunju@stanford.edu}.}
}
\begin{document}
\maketitle
\begin{abstract}
  This paper concerns dictionary learning, i.e., sparse coding, a fundamental representation learning problem. We show that a subgradient descent algorithm, with random initialization, can provably recover orthogonal dictionaries on a natural nonsmooth, nonconvex $\ell_1$ minimization formulation of the problem, under mild statistical assumptions on the data. This is in contrast to previous provable methods that require either expensive computation or delicate initialization schemes. Our analysis develops several tools for characterizing landscapes of nonsmooth functions, which might be of independent interest for provable training of deep networks with nonsmooth activations (e.g., ReLU), among numerous other applications. Preliminary experiments corroborate our analysis and show that our algorithm works well empirically in recovering orthogonal dictionaries.
\end{abstract}

\section{Introduction}
Dictionary learning (DL), i.e., sparse coding, concerns the problem of learning compact representations: given data $\mb Y$, one tries to find a basis $\mb A$ and coefficients $\mb X$, so that $\mb Y \approx \mb A \mb X$ and $\mb X$ is most sparse. DL has found a variety of applications, especially in image processing and computer vision~\citep{MairalEtAl2014Sparse}. When posed in analytical form, DL seeks a transformation $\mb Q$ such that $\mb Q \mb Y$ is sparse; in this sense DL can be considered as an (extremely!) primitive ``deep'' network~\citep{RavishankarBresler2013Learning}.

Many heuristic algorithms have been proposed to solve DL since the seminal work of~\citet{OlshausenField1996Emergence}, and most of them are surprisingly effective in practice~\citep{MairalEtAl2014Sparse,SunEtAl2015Complete}. However, understandings on when and how DL can be solved with guarantees have started to emerge only recently. 
Under appropriate generating models on $\mb A$ and $\mb X$, \citet{SpielmanEtAl2012Exact} showed that complete (i.e., square, invertible) $\mb A$ can be recovered from $\mb Y$, provided that $\mb X$ is ultra-sparse. Subsequent works~\citep{AgarwalEtAl2017Clustering,AroraEtAl2014New,AroraEtAl2015Simple,ChatterjiBartlett2017Alternating,AwasthiVijayaraghavan2018Towards} provided similar guarantees for overcomplete (i.e. fat) $\mb A$, again in the ultra-sparse regime. The latter methods are invariably based on nonconvex optimization with model-dependent initialization, rendering their practicality on real data questionable.

The ensuing developments have focused on breaking the sparsity barrier and addressing the practicality issue. Convex relaxations based on the sum-of-squares (SOS) SDP hierarchy can recover overcomplete $\mb A$ when $\mb X$ has linear sparsity~\citep{BarakEtAl2015Dictionary,MaEtAl2016Polynomial,SchrammSteurer2017Fast}, while incurring expensive computation (solving large-scale SDP's or large-scale tensor decomposition problems). By contrast, \citet{SunEtAl2015Complete} showed that complete $\mb A$ can be recovered in the linear sparsity regime by solving a certain nonconvex problem with arbitrary initialization. However, the second-order optimization method proposed there is still expensive. This problem is partially addressed by~\citet{GilboaEtAl2018Efficient}, which proved that a first-order gradient descent algorithm with random initialization enjoys a similar performance guarantee.

A standing barrier toward practicality is dealing with nonsmooth functions. To promote sparsity, the $\ell_1$ norm is the function of choice in practical DL, as is common in modern signal processing and machine learning~\citep{Candes2014Mathematics}: despite its nonsmoothness, this choice often facilitates highly scalable numerical methods, such as proximal gradient method and alternating direction method~\citep{MairalEtAl2014Sparse}. The analyses in~\citet{SunEtAl2015Complete,GilboaEtAl2018Efficient}, however, focused on characterizing the function landscape of an alternative formulation of DL, which takes a smooth surrogate to $\ell_1$ to get around the nonsmoothness---due to the need of low-order derivatives. The tactic smoothing introduced substantial analysis difficulties, and broke the practical advantage of computing with the simple $\ell_1$ function.

\subsection{Our contribution}
In this paper, we show that working directly with a natural $\ell_1$
norm formulation results in neat analysis and a practical
algorithm. We focus on the problem of learning orthogonal
dictionaries: given data $\set{\mb y_i}_{i \in [m]}$ generated as
$\mb y_i = \mb A \mb x_i$, where $\mb A \in \R^{n \times n}$ is a
fixed unknown orthogonal matrix and each $\mb x_i \in \R^n$ is an iid
Bernoulli-Gaussian random vector with parameter $\theta \in (0, 1)$,
recover $\mb A$. This statistical model is the same as in previous
works~\citep{SpielmanEtAl2012Exact,SunEtAl2015Complete}.

Write $\mb Y \doteq [\mb y_1, \dots, \mb y_m]$ and similarly $\mb X \doteq [\mb x_1, \dots, \mb x_m]$. We propose recovering $\mb A$ by solving the following nonconvex (due to the constraint), nonsmooth (due to the objective) optimization problem:
\begin{align}
  \label{problem:nonsmooth-dl}
\mini_{\mb q \in \R^n} \; f(\mb q) \doteq \frac{1}{m}\norm{\mb q^\top
  \mb Y}{1} =
  \frac{1}{m}\sum_{i=1}^m |\mb q^\top \mb y_i| \quad \st \;
  \norm{\mb q}{2} = 1, 
\end{align}
which was first proposed in~\citet{jaillet2010l1}. Based on the above statistical model, $\mb q^\top \mb Y = \mb q^\top \mb A \mb X$ has the highest sparsity when $\mb q$ is a column of $\mb A$ (up to sign) so that $\mb q^\top \mb A$ is $1$-sparse. \citet{SpielmanEtAl2012Exact} formalized this intuition and optimized the same objective as~\cref{problem:nonsmooth-dl} with a $\norm{\mb q}{\infty} = 1$ constraint, which only works when $\theta \sim O(1/\sqrt{n})$. \citet{SunEtAl2015Complete, GilboaEtAl2018Efficient} worked with the same sphere constraint but replaced the $\ell_1$ objective with a smooth surrogate, entailing considerable analytical and computational deficiencies as alluded to above.

In contrast, we show that with sufficiently many samples, the optimization landscape of formulation (\ref{problem:nonsmooth-dl}) is benign with high probability (over the randomness of $\mb X$), and a simple Riemannian subgradient descent algorithm (see~\cref{section:prelim} and~\cref{section:result} for details on the Riemannian subgradients $\partial_R f\paren{\cdot}$)
\begin{align}
  \label{algorithm:rgd}
  \mb q^{(k+1)} = \frac{\mb q^{(k)} - \eta^{(k)} \mb v^{(k)} }{\norm{\mb q^{(k)} - \eta^{(k)} \mb v^{(k)}}{}}, \quad \text{for}\; \mb v^{(k)} \in \partial_R f\paren{\mb q^{(k)}}
\end{align}
can provably recover $\mb A$ in polynomial time.
\begin{theorem}[Main result, informal version of~\cref{theorem:main}]
  \label{theorem:informal}
  Assume $\theta \in [1/n, 1/2]$. For $m\ge \Omega(\theta^{-2} n^4 \log^3 n)$, the following holds with high probability:
  there exists a ${\rm poly}(m,\eps^{-1})$-time algorithm, which runs Riemannian subgradient descent
  on formulation (\ref{problem:nonsmooth-dl}) from at most $O(n\log n)$ independent, uniformly random
  initial points, and outputs a set of vectors
  $\set{\what{\mb a}_1,\dots,\what{\mb a}_n}$ such that up to
  permutation and sign change,
  $\norm{\what{\mb a_i} - \mb a_i}{2}\le \eps$ for all $i\in[n]$.
\end{theorem}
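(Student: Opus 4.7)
My plan for proving the main theorem is to split the argument into a deterministic landscape analysis on the sphere, a uniform concentration bound tying the empirical problem to its population counterpart, a convergence analysis for Riemannian subgradient descent once the landscape is controlled, and finally a random-initialization / coupon-collector argument.

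\textbf{Reduction and population landscape.} The first step is a change of variables: since $\mb A$ is orthogonal, setting $\mb w = \mb A^\top \mb q$ preserves the sphere and turns $f(\mb q)$ into $g(\mb w) \doteq \frac{1}{m}\norm{\mb w^\top \mb X}{1}$, whose population counterpart is $\wb g(\mb w) \doteq \EOmega\abs{\innerprod{\mb w}{\mb x}}$. The goal is then to show that global minimizers of $g$ on $\Sp^{n-1}$ are close to signed standard basis vectors $\pm \mb e_i$, which by symmetry need only be established on one ``good sector'' $\Gamma \doteq \condset{\mb w \in \Sp^{n-1}}{\abs{w_n} \ge \abs{w_i}\ \forall i}$. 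In this sector I would prove that every Riemannian subgradient $\mb v \in \rgrad \wb g(\mb w)$ satisfies a \textbf{directional descent} condition of the form $\innerprod{\mb v}{\proj_{T_{\mb w}\Sp^{n-1}}(\mb e_n - \mb w)} \ge c(\theta) \cdot h(\mb w)$ for some Lyapunov-like function $h$ (e.g.\ $h(\mb w) = 1 - w_n^2$ or the Riemannian distance to $\pm \mb e_n$). This is a deterministic computation using the closed-form Bernoulli--Gaussian expectation of $\sign(\innerprod{\mb w}{\mb x})\mb x$ and exploits $\theta \le 1/2$ for the inequality.

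\textbf{Uniform concentration.} Next I would upgrade the population statement to the empirical objective. The subdifferential $\partial g(\mb w) = \frac{1}{m}\sum_i \sign(\innerprod{\mb w}{\mb x_i}) \mb x_i$ is a random set-valued map; to match the population direction I need $\sup_{\mb w \in \Sp^{n-1}} \sup_{\mb v \in \partial g(\mb w)} \norm{\mb v - \EOmega[\mb v]}{2}$ small. The natural route is a covering argument with net size $(\mathrm{poly}\,n/\eps)^n$ combined with a Lipschitz surrogate of the subgradient, plus careful handling of the sign discontinuities (the set of $\mb w$ where $\innerprod{\mb w}{\mb x_i} = 0$ is measure zero, but contributes to the set-valued subdifferential). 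The sample complexity $m \gtrsim \theta^{-2} n^4 \log^3 n$ should fall out of a Bernstein/sub-Gaussian concentration combined with the union bound over the net, matched to the population margin established in the previous step. This transfers the directional descent property to $g$ itself on $\Gamma$ with high probability.

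\textbf{Convergence of Riemannian subgradient descent.} With the directional descent property in hand, the iteration (\ref{algorithm:rgd}) from any $\mb q^{(0)}$ whose image under $\mb A^\top$ lies in $\Gamma$ can be analyzed via the Lyapunov function $\phi(\mb w) \doteq 1 - w_n^2$ (or $\norm{\mb w - \mb e_n}{2}^2$). Combining $\innerprod{\mb v^{(k)}}{\mathrm{descent\ direction}} \ge c\sqrt{\phi(\mb w^{(k)})}$ with a uniform bound $\norm{\mb v^{(k)}}{2} \le L$ and a retraction/Taylor expansion of the sphere, I would obtain a recursion $\phi(\mb w^{(k+1)}) \le \phi(\mb w^{(k)}) - \eta^{(k)} c\sqrt{\phi(\mb w^{(k)})} + O((\eta^{(k)})^2 L^2)$; a diminishing or geometrically shrinking step size $\eta^{(k)}$ yields $\mathrm{poly}(1/\eps)$ iteration complexity to reach $\norm{\mb w^{(k)} - \mb e_n}{2} \le \eps$. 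I also need to argue the iterates stay in $\Gamma$ (a ``basin-retention'' argument), which follows from the Lyapunov decrease together with the fact that the boundary of $\Gamma$ is bounded away from $\pm \mb e_n$.

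\textbf{Random initialization and column recovery.} A uniformly random $\mb q^{(0)} \in \Sp^{n-1}$ maps into each of the $2n$ sectors with probability $1/(2n)$, so by a coupon-collector bound $O(n\log n)$ independent initializations visit every sector, and hence recover every column $\pm \mb a_i$, with high probability. After running subgradient descent from each seed, a simple pruning step (cluster outputs by closeness and keep one representative per cluster) produces the vectors $\what{\mb a}_i$ claimed in the theorem. The \textbf{main obstacle} I anticipate is the uniform concentration step, because the nonsmooth sign function precludes the standard gradient-Lipschitz covering argument used in \citet{SunEtAl2015Complete,GilboaEtAl2018Efficient}; controlling set-valued subdifferentials uniformly and obtaining the $n^4$-type sample complexity likely requires decomposing $\partial g$ into a smooth ``bulk'' piece that concentrates well and a small ``interface'' piece (points where some $\innerprod{\mb w}{\mb x_i}$ is near zero) that is bounded by a combinatorial/small-ball argument.
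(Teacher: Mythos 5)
Your high-level architecture (rotate to identity, population directional descent in a symmetric sector, uniform concentration of the subdifferential, Riemannian subgradient recursion, coupon collector) matches the paper's, but there is a concrete gap in the landscape/retention part. Your good sector $\Gamma=\condset{\mb w\in\Sp^{n-1}}{\abs{w_n}\ge\abs{w_i}\ \forall i}$ has zero margin, and the claimed bound $\innerprod{\mb v}{\proj_{T_{\mb w}\Sp^{n-1}}(\mb e_n-\mb w)}\ge c(\theta)h(\mb w)$ with $h(\mb w)=1-w_n^2$ is false on it: the population saddles (e.g.\ $\mb w=(1/\sqrt2,0,\dots,0,1/\sqrt2)$, and all points with tied largest coordinates) lie \emph{inside} $\Gamma$, have $\mb 0\in\expect{\rgrad f}(\mb w)$, yet $h(\mb w)>0$. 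This is exactly why the paper works with the margin sets $\mc S_{\zeta_0}^{(i\pm)}$ and obtains a lower bound that degrades with the margin, namely of order $\theta(1-\theta)\zeta_0 n^{-3/2}\norm{\mb q_{-n}}{}$ (Theorem~\ref{theorem:pop-geometry}(b)), with $\zeta_0=1/(5\log n)$ chosen so that a random initialization still lands in some margin set with constant probability (Lemma~\ref{lemma:random-init-works}).

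Once you are forced to use a margin set, your basin-retention argument breaks: you claim retention "follows from the Lyapunov decrease together with the fact that the boundary of $\Gamma$ is bounded away from $\pm\mb e_n$," but a typical initialization in the good set has $q_n\approx\sqrt{\log n/n}$, i.e.\ $\norm{\mb q-\mb e_n}{}$ far larger than the distance from $\mb e_n$ to the nearest boundary point ($\approx 0.77$), so a decrease of $\norm{\mb q-\mb e_n}{}$ (which is anyway only approximate, up to $O(\eta^2)$ noise, for subgradient steps) in no way prevents the iterate from crossing the boundary and falling into another basin or toward a saddle. The paper needs a second, genuinely different inequality for this: the coordinate-wise bound $\inf\innerprod{\partial_R f(\mb q)}{\mb e_j/q_j-\mb e_n/q_n}\gtrsim\theta(1-\theta)\zeta_0/n$ (Theorems~\ref{theorem:pop-geometry}(a) and~\ref{theorem:emp-grad-inward}(a)), which shows each step does not decrease (and near the boundary strictly increases) the ratio $q_n^2/\norm{\mb q_{-n}}{\infty}^2$, hence the iterates stay in $\mc S_{\zeta_0}^{(n+)}$ (Lemmas~\ref{prop:progress_r1} and~\ref{prop:iter_stay_R2}). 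Your plan contains no analogue of this ingredient, so the convergence argument does not go through as proposed. On the concentration step you correctly flag the difficulty; the paper's resolution (Hausdorff distance between set-valued subdifferentials via support functions, a sign-pattern-aware metric $\dexp$ with a tailored covering bound, and a VC argument for sign flips) is one concrete way to implement the "bulk plus interface" split you gesture at, and also fixes the formalization issue that $\expect{\mb v}$ for an arbitrary selection $\mb v\in\partial g(\mb w)$ is not well defined without the random-set machinery.
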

In words, our algorithm works also in the linear sparsity regime, the same as established in~\citet{SunEtAl2015Complete,GilboaEtAl2018Efficient}, but at a lower sample complexity $O(n^4)$ in contrast to the existing $O(n^{5.5})$ in~\citet{SunEtAl2015Complete}.\footnote{The sample complexity in~\citet{GilboaEtAl2018Efficient} is not explicitly stated.} As for the landscape, we show that (\cref{theorem:pop-geometry,theorem:emp-grad-inward}) each of the desired solutions $\set{\pm \mb a_i}_{i \in [n]}$ is a local minimizer of formulation (\ref{problem:nonsmooth-dl}) with a sufficiently large basin of attraction so that a random initialization will land into one of the basins with at least constant probability.

To obtain the result, we need to tame the nonconvexity and nonsmoothness in formulation (\ref{problem:nonsmooth-dl}). Whereas the general framework for analyzing this kind of problems is highly technical~\citep{RockafellarWets1998Variational}, our problem is much structured and lies in the locally Lipschitz function family. Calculus tools for studying first-order geometry (i.e., through first-order derivatives) of this family are mature and the technical level matches that of conventional smooth functions \citep{Clarke1990Optimization,Aubin1998Optima,BagirovEtAl2014Introduction,LedyaevZhu2007Nonsmooth,HosseiniPouryayevali2011Generalized}---which suffices for our purposes. Calculus for nonsmooth analysis requires set-valued analysis~\citep{AubinFrankowska2009Set}, and randomness in the data calls for random set theory~\citep{Molchanov2017Theory}. We show that despite the technical sophistication, analyzing locally Lipschitz functions only entails a minimal set of tools from these  domains, and the integration of these tools is almost painless.  Overall, we integrate and develop elements from nonsmooth analysis (on Riemannian manifolds), set-valued analysis, and random set theory for locally Lipschitz functions, and establish uniform convergence of subdifferential sets to their expectations for our problem, based on a novel covering argument (highlighted in~\cref{prop:egrad_concentrate_uniform}).  We believe that the relevant tools and ideas will be valuable to studying other nonconvex, nonsmooth optimization problems we encounter in practice---most of them are in fact locally Lipschitz continuous.

We formally present the problem setup and some preliminaries in~\cref{section:prelim}. The technical components of our main result is presented in~\cref{section:result}, including analyses of the population objective~(\cref{section:population-geometry}), the empirical objective~(\cref{section:empirical-geometry}), optimization guarantee for finding one basis~(\cref{section:opt-one-basis}), and recovery of the entire dictionary~(\cref{section:recovery}). In~\cref{section:proof-highlight}, we elaborate on the proof of concentration of subdifferential sets~(\cref{prop:egrad_concentrate_uniform}), which is our key technical challenge. We demonstrate our theory through simluations in~\cref{section:experiment}. Key technical tools from nonsmooth analysis, set-valued analysis, and random set theory are summarized in~\cref{sec:technical_tools}.

\subsection{Related work}
\label{sec:related_work}
\paragraph{Dictionary learning}
Besides the relevance to the many results sampled above, we highlight similarities of our result and analysis to~\citet{GilboaEtAl2018Efficient}. Both propose first-order optimization methods with random initialization, and several key quantities in the proofs are similar. A defining difference is that we work with the nonsmooth $\ell_1$ objective directly, while~\citet{GilboaEtAl2018Efficient} built on the smoothed objective from~\citet{SunEtAl2015Complete}. We put considerable emphasis on practicality: the subgradient of the nonsmooth objective is considerably cheaper to evaluate than that of the smooth objective in~\citet{SunEtAl2015Complete}, and in the algorithm we use Euclidean projection rather than exponential mapping to remain feasible---again, the former is much lighter for computation.

\paragraph{General nonsmooth analysis}
While by now nonsmooth analytic tools such as subdifferentials for convex functions are well received in the machine learning and relevant communities, that for general functions are much less so. The Clarke subdifferential and relevant calculus developed for locally Lipschitz functions are particularly relevant for problems in these areas, and cover several families of functions of interest, such as convex functions, continuously differentiable functions, and many forms of composition \citep{Clarke1990Optimization,Aubin1998Optima,BagirovEtAl2014Introduction,Schirotzek2007Nonsmooth}. Remarkably, majority of the tools and results can be generalized to locally Lipschitz functions on Riemannnian manifolds~\citep{LedyaevZhu2007Nonsmooth,HosseiniPouryayevali2011Generalized}. Our formulation (\ref{problem:nonsmooth-dl}) is exactly optimization of a locally Lipschitz function on a Riemannian manifold (i.e., the sphere). For simplicity, we try to avoid the full manifold language, nonetheless.

\paragraph{Nonsmooth optimization on Riemannian manifolds or with constraints}
Equally remarkable is many of the smooth optimization techniques and convergence results can be naturally adapted to optimization of locally Lipschitz functions on Riemannian manifolds~\citep{GrohsHosseini2015Nonsmooth,Hosseini2015Optimality,HosseiniUschmajew2017Riemannian,GrohsHosseini2016Subgradient,de2018newton,chen2012smoothing}. New optimization methods such as gradient sampling and variants have been invented to solve general nonsmooth problems~\citep{BurkeEtAl2005robust,BurkeEtAl2018Gradient,BagirovEtAl2014Introduction,CurtisQue2015quasi,CurtisEtAl2017BFGS}. Almost all available convergence results in these lines pertain to only global convergence, which is too weak for our purposes. Our specific convergence analysis gives us a precise local convergence result with a rate estimate (\cref{theorem:opt-one-basis}).

\paragraph{Provable solutions to nonsmooth, nonconvex problems}
In modern machine learning and high-dimensional data analysis, nonsmooth functions are often used to promote structures (sparsity, low-rankness, quantization) or achieve robustness. Provable solutions to nonsmooth, nonconvex formulations of a number of problems exist, including empirical risk minimization with nonsmooth regularizers~\citep{Loh2015Regularized}, structured element pursuit~\citep{QuEtAl2014Finding}, generalized phase retrieval~\citep{WangEtAl2016Solving,ZhangEtAl2016Reshaped,Soltanolkotabi2017Structured,DuchiRuan2017Solving, DavisEtAl2017nonsmooth}, convolutional phase retrieval~\citep{QuEtAl2017Convolutional}, robust subspace estimation
\citep{MaunuEtAl2017Well,ZhangYang2017Robust,zhu2018dual}, robust matrix recovery
\citep{GeEtAl2017No,LiEtAl2018Nonconvexa}, and estimation under deep generative models~\citep{HandVoroninski2017Global,HeckelEtAl2018Deep}. In deep network training, when the activation functions are nonsmooth, e.g., the popular ReLU~\citep{GlorotEtAl2011Deep}, the resulting optimization problems are also nonsmooth and nonconvex. There have been intensive research efforts toward provable training of ReLU neural networks with one hidden layer
 \citep{Soltanolkotabi2017Learning,LiYuan2017Convergence,ZhongEtAl2017Recovery,DuGoel2018Improved,Oymak2018,DuEtAl2018Gradient,LiLiang2018Learning,JagatapHegde2018Learning,ZhangEtAl2018Learning,DuEtAl2017Gradient,ZhongEtAl2017Learning,ZhongEtAl2017Recovery,ZhouLiang2017Critical,BrutzkusEtAl2017SGD,DuEtAl2017When,BrutzkusGloberson2017Globally,WangEtAl2018Learning,YunEtAl2018Critical,LaurentBrecht2017Multilinear,MeiEtAl2018Mean,GeEtAl2017Learning}---too many for us to be exhaustive here.

Most of the above results depend on either problem-specific initialization plus local function landscape characterization, or algorithm-specific analysis. In comparison, for the orthogonal dictionary learning problem, our result provides an algorithm-independent characterization of the landscape of the nonsmooth, nonconvex formulation (\ref{problem:nonsmooth-dl}), and is  ``almost global'' in the sense that the region we characterize has a constant measure over the sphere $\bb S^{n-1}$ so that uniformly random initialization lands into the region with a constant probability.

With few exceptions, the majority of the above works are technically vague about nonsmooth points. They either prescribe a
subgradient element for non-differentiable points, or ignore the non-differentiable points altogether. The former is problematic, as non-differentiable points often cannot be reliably tested numerically (e.g., $\mb 0$), leading to inconsistency between theory and practice. The latter could be fatal, as non-differentiable points could be points of interest,\footnote{...although it is true that for locally Lipschitz functions on a bounded set in $\R^d$, the (Lebesgue) measure of non-differentiable points is zero. } e.g., local minimizers---see the case made by~\citet{LaurentBrecht2017Multilinear} for two-layer ReLU networks. Also, assuming the usual calculus rules of smooth functions for nonsmooth ones results in uncertainty in computation~\citep{kakade2018provably}. Our result is grounded on the rich and established toolkit of nonsmooth analysis (see~\cref{sec:technical_tools}), and we concur with~\citet{LaurentBrecht2017Multilinear} that nonsmooth analysis is an indispensable technical framework for solving nonsmooth, nonconvex problems.  test test 

\section{Preliminaries}
\label{section:prelim}
\paragraph{Problem setup}
Given an unknown orthogonal dictionary $\mb A=[\mb a_1,\dots,\mb a_n]\in\R^{n\times n}$, we wish to recover $\mb A$ through $m$ observations of the form
\begin{equation}
  \mb y_i = \mb A\mb x_i,
\end{equation}
or $\mb Y=\mb A\mb X$ in matrix form, where $\mb X=[\mb x_1,\dots,\mb x_m]$ and $\mb Y=[\mb y_1,\dots,\mb y_m]$.

The coefficient vectors $\mb x_i$ are sampled from the iid Bernoulli-Gaussian distribution with parameter $\theta\in(0,1)$: each entry $x_{ij}$ is independently drawn from a standard Gaussian with probability $\theta$ and zero otherwise; we write $\mb x_i \sim_{iid} \bgt$. The Bernoulli-Gaussian model is a good prototype distribution for sparse vectors, as $\mb x_i$ will be on average $\theta$-sparse.

We assume that $n\ge 3$ and $\theta\in[1/n,1/2]$. In particular, $\theta\ge 1/n$ is to require that each $\mb x_i$ has at least one non-zero entry on average.

\paragraph{First-order geometry}
We will focus on the first-order geometry of the non-smooth objective~\cref{problem:nonsmooth-dl}: $f(\mb q)=\frac{1}{m}\sum_{i=1}^m|\mb q^\top\mb y_i|$. In the whole Euclidean space $\R^n$, $f$ is convex with subdifferential set
\begin{equation}
  \partial f(\mb q) = \frac{1}{m}\sum_{i=1}^m \sign(\mb q^\top \mb y_i)\mb y_i,
\end{equation}
where $\sign(\cdot)$ is the set-valued sign function (i.e. $\sign(0)=[-1,1]$).
As we minimize $f$ subject to the constraint $\norm{\mb q}{2}=1$, our problem is no longer convex. Obviously, $f$ is locally Lipschitz on $\bb S^{n-1}$ wrt the angle metric. By~\cref{def:clarke_subd_manifold}, the Riemannian subdifferential of $f$ on $\bb S^{n-1}$ is~\citep{HosseiniUschmajew2017Riemannian}:
\begin{equation}
  \rgrad f(\mb q) = (\mb I - \mb q\mb q^\top)\partial f(\mb q).
\end{equation}
We will use the term subgradient (or Riemannian subgradient) to denote an arbitrary element in the subdifferential (or the Riemannian subdifferential).

A point $\mb q$ is stationary for problem~\cref{problem:nonsmooth-dl} if $\mb 0\in\rgrad f(\mb q)$. We will not distinguish between local maximizers and saddle points---we call a stationary point $\mb q$ a saddle point if there is a descent direction (i.e. direction along which the function is locally maximized at $\mb q$). Therefore, any stationary point is either a local minimizer or a saddle point.

\paragraph{Set-valued analysis}
As the subdifferential is a set-valued mapping, analyzing it requires some set-valued analysis. The addition of two sets is defined as the Minkowski summation: $X+Y \doteq \set{x+y:x\in X,y\in Y}$. Due to the randomness in the data, the subdifferentials in our problem are random sets, and our analysis involves deriving expectations of random sets and studying behaviors of iid Minkowski summations of random sets. The concrete definition and properties of the expectation of random sets are presented in~\cref{appendix:set-expectation}. We use the Hausdorff distance
\begin{align}
 \hd\paren{X_1, X_2} \doteq \max\set{\sup_{\mb x_1 \in X_1} \dh\paren{\mb x_1, X_2}, \sup_{\mb x_2 \in X_2} \dh\paren{\mb x_2, X_1}}.
\end{align}
to measure distance between sets. Basic properties of the Hausdorff distance are provided in~\cref{appendix:hausdorff}.

\paragraph{Notation}
Bold small letters (e.g., $\mb x$) are vectors and bold capitals are matrices (e.g., $\mb X$). The dotted equality $\doteq$ is for definition. For any positive integer $k$, $[k] \doteq \set{1, \dots, k}$. By default, $\norm{\cdot}{}$ is the $\ell_2$ norm if applied to a vector, and the operator norm if applied to a matrix. The vector $\mb x_{-i}$ is $\mb x$ with the $i$-th coordinate removed. The letters $C$ and $c$ or any indexed versions are reserved for universal constants that may change from line to line. From now on, we use $\Omega$ exclusively to denote the generic support set of iid $\mathrm{Ber}\paren{\theta}$ law, the dimension of which should be clear from context. 

\section{Main Result}
\label{section:result}
We now state our main result, the recovery guarantee for learning orthogonal dictionary  by solving formulation (\ref{problem:nonsmooth-dl}).
\begin{theorem}[Recovering orthogonal dictionary via subgradient descent]
  \label{theorem:main}
  Suppose we observe
  \begin{equation}
    m \ge Cn^4 \theta^{-2} \log^3 n
  \end{equation}
  samples in the dictionary learning problem and we desire an accuracy $\eps \in (0, 1)$ for recovering the dictionary. With probability at least $1-\exp\paren{-cm \theta^3 n^{-3} \log^{-3} m}- \exp\paren{-c'R/n}$, if we run the Riemannian subgradient descent~\cref{algorithm:rgd} with step sizes $\eta^{(k)} = k^{-3/8}/\paren{100\sqrt{n}}$ and repeat $R\ge C'n\log n$ times with independent random initializations on $\bb S^{n-1}$, we will obtain a set of vectors $\set{\what{\mb a}_1,\dots,\what{\mb a}_n}$ such that up to permutation and sign change, $\norm{\what{\mb a_i} - \mb a_i}{2}\le \eps$ for all $i\in[n]$. The total number of subgradient descent iterations is bounded by
  \begin{equation}
  C''R \theta^{-16/3} \eps^{-8/3} n^4 \log^{8/3} n.
  \end{equation}
  Here $C, C', C'', c, c' > 0$ are all universal constants.
\end{theorem}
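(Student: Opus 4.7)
My plan is to reduce to $\mb A = \mb I$ by orthogonal invariance of both $f$ and the Bernoulli-Gaussian law, so the task becomes recovering the standard basis. By the $2n$-fold sign-permutation symmetry of the sphere and the objective, I focus on a ``target region'' $\Gamma_+$ containing $\mb e_1$, of the form $\{\mb q \in \bb S^{n-1} : q_1 \ge (1+\mu) \max_{j \ne 1} |q_j|\}$ for a small absolute constant $\mu > 0$, and show that from any initial point in $\Gamma_+$ the Riemannian subgradient iteration converges to $\mb e_1$ up to accuracy $\eps$. Extending this to recover all columns follows by using $R$ independent random initializations together with the deflation construction of \cref{section:recovery}.

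The technical engine is a uniform ``inward bias'' of the Riemannian subdifferential inside $\Gamma_+$. The population-level calculation (\cref{theorem:pop-geometry}) shows that at every point of $\Gamma_+$ each $\mb v \in \rgrad \bb E f(\mb q)$ has a component pulling $\mb q$ toward $\mb e_1$; the empirical concentration (\cref{theorem:emp-grad-inward}, which rests on \cref{prop:egrad_concentrate_uniform}) promotes this to a uniform statement for the finite-sample $\rgrad f$ under the stated sample complexity $m \ge C n^4 \theta^{-2} \log^3 n$, with failure probability at most $\exp(-cm\theta^3 n^{-3} \log^{-3} m)$. The combined bound has the shape $\innerprod{\mb v}{\mb q - q_1 \mb e_1} \ge c_1 \theta \cdot (1 - q_1^2) - c_2 \eps'$ for every $\mb v \in \rgrad f(\mb q)$ on $\Gamma_+$, where $\eps'$ is the concentration error.

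Using this as a Lyapunov-type bound I would analyze the iterates $\mb q^{(k+1)} = (\mb q^{(k)} - \eta^{(k)} \mb v^{(k)})/\norm{\mb q^{(k)} - \eta^{(k)} \mb v^{(k)}}{}$ with the prescribed diminishing step size $\eta^{(k)} = k^{-3/8}/(100\sqrt{n})$. Expanding the sphere projection together with the uniform subgradient norm bound $\|\mb v^{(k)}\| \le c_3 \sqrt{n}$ reduces matters to a one-dimensional recursion on $\phi^{(k)} \doteq 1 - (q_1^{(k)})^2$ of the form
\begin{equation*}
\phi^{(k+1)} \le \phi^{(k)} - c_4 \theta \eta^{(k)} \phi^{(k)} + c_5 n (\eta^{(k)})^2,
\end{equation*}
together with an invariance check that $\mb q^{(k)}$ remains in $\Gamma_+$, derived from the same inward bias. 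Telescoping with $\eta^{(k)} = \Theta(k^{-3/8}/\sqrt{n})$ and solving for the first $k$ making $\phi^{(k)} \le \eps^2$ yields the $\eps^{-8/3}$ dependence; carrying the $\theta$ and $n$ factors through yields the stated per-run iteration bound $C'' \theta^{-16/3} \eps^{-8/3} n^4 \log^{8/3} n$.

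To close the argument, a uniformly random $\mb q^{(0)}$ lies in one of the $2n$ regions $\pm \Gamma_{+,i}$, and a direct Gaussian polar calculation shows it falls into any particular $\pm \Gamma_{+,i}$ with probability at least $c_6/n$. By a union bound over the $2n$ basins, $R \ge C' n \log n$ independent trials hit every $\pm \mb a_i$'s basin except with probability at most $\exp(-c' R/n)$. Combined with the high-probability landscape event and the deflation step that recovers $\mb a_2, \dots, \mb a_n$ after $\mb a_1$ is found, this delivers the full set $\{\what{\mb a}_1, \dots, \what{\mb a}_n\}$. The hardest ingredient, and the one that governs the $n^4$ sample complexity as well as the leading polynomial factors in the iteration count, is the uniform Hausdorff concentration of the random set $\partial f(\mb q)$ over $\Gamma_+$ (\cref{prop:egrad_concentrate_uniform}): because $\partial f$ is set-valued and discontinuous in $\mb q$ at nondifferentiable points, a naive Lipschitz $\eps$-net argument fails, and the tailored covering argument highlighted in the introduction is required.
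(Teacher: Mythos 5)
Your proposal has two genuine gaps. First, the gap parameter of the ``good set'' cannot be a small absolute constant $\mu$. With a constant multiplicative gap $q_1^2 \ge (1+\mu)\max_{j\ne 1}q_j^2$, a uniformly random point on $\bb S^{n-1}$ (equivalently a normalized Gaussian) lands in a fixed such region with probability on the order of $n^{-(1+\mu)}$, not $c_6/n$: the top two order statistics of $n$ Gaussian magnitudes differ only by a $1+O(1/\log n)$ factor typically. This is exactly why the paper takes $\zeta_0 = 1/(5\log n)$, so that \cref{lemma:vol_measure} gives each region measure at least $1/(4n)$ (\cref{lemma:random-init-works}), making $R \ge C'n\log n$ restarts suffice; with constant $\mu$ the coupon-collector step fails at the stated $R$, and incidentally the $\log^{8/3}n$ factor in the iteration count, which comes from $\zeta_0^{-8/3}$, would have no source in your argument. (Also, \cref{section:recovery} uses no deflation—only independent restarts and coupon collection.)

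Second, your Lyapunov inequality overstates the available geometry. The empirical inward bias established in \cref{theorem:emp-grad-inward}(b) is $\inf\innerprod{\partial_R f(\mb q)}{q_1\mb q-\mb e_1} \gtrsim \theta(1-\theta)\,\zeta_0\, n^{-3/2}\norm{\mb q_{-1}}{}$, i.e.\ it scales like $\sqrt{\phi}$ with additional $\zeta_0 n^{-3/2}$ factors, not like $c_1\theta\,(1-q_1^2)$: near the boundary of the region the point approaches a population saddle (e.g.\ nearly equal coordinates), where the Riemannian subgradient vanishes while $1-q_1^2$ stays of order one, so the recursion $\phi^{(k+1)}\le \phi^{(k)}-c_4\theta\eta^{(k)}\phi^{(k)}+c_5 n(\eta^{(k)})^2$ cannot be derived (and, if it held, it would give an iteration count far below the $n^4$ you quote, so the stated bound does not actually follow from your recursion). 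The paper instead runs a best-iterate analysis on function values using the weaker $\sqrt{\phi}$-type bias together with the Lipschitz bound of \cref{prop:func_lipschitz} and the subgradient norm bound $\sup\norm{\partial f}{}\le 2$ of \cref{prop:grad_norm_concentrate} (not $c_3\sqrt n$), and then converts the $\eps$ function gap to distance via the sharpness property \cref{prop:func_sharpness}; some analogue of these steps is needed to make your convergence argument sound. The region-invariance step also relies on the coordinate-wise bound of \cref{theorem:emp-grad-inward}(a), not only on the combined bias.
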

At a high level, the proof of~\cref{theorem:main} consists of the following steps, which we elaborate throughout the rest of this section.
\begin{enumerate}
\item Partition the sphere into $2n$ symmetric ``good sets'' and show certain directional subgradient is strong on population objective $\expect{f}$ inside the good sets (\cref{section:population-geometry}).
\item Show that the same geometric properties carry over to the empirical objective $f$ with high probability. This involves proving the uniform convergence of the subdifferential set $\partial f$ to $\expect{\partial f}$ (\cref{section:empirical-geometry}).
\item Under the benign geometry, establish the convergence of Riemannian subgradient descent to one of $\set{\pm \mb a_i:i\in[n]}$ when initialized in the corresponding ``good set'' (\cref{section:opt-one-basis}).
\item Calling the randomly initialized optimization procedure $O(n\log n)$ times will recover all of $\set{\mb a_1,\dots, \mb a_n}$ with high probability, by a coupon collector's argument (\cref{section:recovery}).
\end{enumerate}

\paragraph{Scaling and rotating to identity}
Throughout the rest of this paper, we are going to assume wlog that the dictionary is the identity matrix, i.e., $\mb A=\mb I_n$, so that $\mb Y=\mb X$, $f(\mb q)=\norm{\mb q^\top \mb X}{1}$, and the goal is to find the standard basis vectors $\set{\pm\mb e_1,\dots,\pm\mb e_n}$. The case of a general orthogonal $\mb A$ can be reduced to this special case via rotating by $\mb A^\top$: $\mb q^\top \mb Y = \mb q^\top \mb A\mb X=(\mb q')^\top \mb X$ where $\mb q'=\mb A^\top\mb q$, and applying the result on $\mb q'$, as feasibility remains intact, i.e., $\norm{\mb q}{} = \norm{\mb q'}{} = 1$. We also scale the objective by $\sqrt{\pi/2}$ for convenience of later analysis.

\subsection{Properties of the population objective}
\label{section:population-geometry}
We begin by characterizing the geometry of the expected objective $\expect{f}$. Recall that we have rotated $A$ to be identity, so that we have
\begin{equation}
  f(\mb q) = \objscale \frac{1}{m}\norm{\mb q^\top\mb X}{1} = \objscale\frac{1}{m}\sum_{i=1}^m \abs{\mb q^\top\mb x_i},~~~\partial f(\mb q) = \objscale \frac{1}{m}\sum_{i=1}^m\sign\paren{\mb q^\top\mb x_i}\mb x_i.
\end{equation}
\paragraph{Minimizers and saddles of the population objective}
We begin by computing the function value and subdifferential set of
the population objective and giving a complete characterization of its
stationary points, i.e. local minimizers and saddles.
\begin{proposition}[Population objective value and gradient]
  \label{proposition:pop-val-grad}
  We have
  \begin{align}
    & \expect{f}(\mb q) = \objscale\expect{\abs{\mb q^\top\mb x}} = \EOmega{\norm{\mb q_\Omega}{}}\\
    & \partial\expect{f}(\mb q) = \expect{\partial f}(\mb q) = \objscale\expect{\sign\paren{\mb q^\top\mb x}\mb x} = \bb E_\Omega \left\{
      \begin{aligned}
        &\mb q_\Omega/\norm{\mb q_\Omega}{}, & \mb q_\Omega \neq 0, \\
        &\set{\mb v_\Omega: \norm{\mb v_\Omega}{} \le 1}, & \mb q_\Omega = 0.
      \end{aligned}
      \right. \label{equation:pop-grad}
  \end{align}
\end{proposition}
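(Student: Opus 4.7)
The plan is to reduce every computation to a conditioning on the random support $\Omega = \supp(\mb x)$, since conditional on $\Omega$ the vector $\mb x_\Omega$ is a standard Gaussian on $\R^{|\Omega|}$ and $\mb q^\top \mb x = \mb q_\Omega^\top \mb x_\Omega \sim \normal(0,\|\mb q_\Omega\|^2)$. For the value formula, the one-dimensional identity $\expect{|Z|} = \sqrt{2/\pi}$ for a standard Gaussian $Z$ gives $\expect{|\mb q^\top\mb x|\mid \Omega} = \sqrt{2/\pi}\,\|\mb q_\Omega\|$, and the $\objscale$ normalization cancels the constant. Taking expectation over $\Omega$ yields $\expect{f}(\mb q) = \EOmega{\|\mb q_\Omega\|}$.

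For the subdifferential formula there are two tasks: first establish the interchange $\partial \expect{f} = \expect{\partial f}$, and then evaluate the Aumann expectation on the right. The interchange is standard convex analysis: for every realization of $\mb x$, the map $\mb q \mapsto \objscale |\mb q^\top\mb x|$ is convex and everywhere finite with an integrable envelope ($|\mb q^\top\mb x|\le \|\mb q\|\|\mb x\|$, and $\|\mb x\|$ has all moments), so the Ioffe--Levin / Rockafellar exchange theorem for Aumann expectations of convex subdifferentials applies to give $\partial \expect{f}(\mb q) = \expect{\partial f(\mb q)} = \objscale\expect{\sign(\mb q^\top\mb x)\mb x}$ at every $\mb q \in \R^n$.

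To evaluate $\expect{\sign(\mb q^\top\mb x)\mb x}$ I again condition on $\Omega$, which yields an Aumann expectation of a random subset of $\R^{|\Omega|}$ (embedded into $\R^n$ with zeros off the support). When $\mb q_\Omega \neq \mb 0$, write $\mb u = \mb q_\Omega/\|\mb q_\Omega\|$ and decompose $\mb x_\Omega = (\mb u^\top \mb x_\Omega)\mb u + \mb x_\perp$, with $\mb x_\perp$ independent of $\mb u^\top\mb x_\Omega$ by orthogonal invariance of the Gaussian; then $\expect{\sign(\mb u^\top\mb x_\Omega)\mb x_\perp}=\mb 0$ and $\expect{\sign(\mb u^\top\mb x_\Omega)(\mb u^\top\mb x_\Omega)} = \sqrt{2/\pi}$, giving the single point $\mb q_\Omega/\|\mb q_\Omega\|$ after multiplying by $\objscale$. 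When $\mb q_\Omega = \mb 0$, the random set is $\{\objscale\, v\, \mb x_\Omega : v\in[-1,1]\}$, and its Aumann expectation is $\{\objscale\expect{v(\mb x_\Omega)\mb x_\Omega} : v \in L^\infty, \|v\|_\infty\le 1\}$. Containment in the unit ball follows from the dual bound $|\mb w^\top \objscale\expect{v(\mb x_\Omega)\mb x_\Omega}| \le \objscale\expect{|\mb w^\top \mb x_\Omega|} = \|\mb w\|$ for every test direction $\mb w$. For the reverse containment, any nonzero target $\mb w$ with $\|\mb w\|\le 1$ is realized by the measurable selection $v(\mb x_\Omega) = \|\mb w\|\sign(\mb w^\top \mb x_\Omega/\|\mb w\|)$, which has $|v|\le 1$ and for which a direct Gaussian computation gives $\objscale\expect{v(\mb x_\Omega)\mb x_\Omega} = \mb w$ (the $\mb w = \mb 0$ case is trivial by taking $v\equiv 0$). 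Combining the two cases and taking the outer $\EOmega{}$ recovers the stated formula.

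The routine steps are the one-dimensional Gaussian identities and the orthogonal decomposition; the one place that needs care is the set-valued case $\mb q_\Omega=\mb 0$, where I must argue both inclusions for the Aumann integral, plus the convex-analysis justification that $\partial \expect{f}$ and $\expect{\partial f}$ coincide as subsets of $\R^n$ rather than merely having equal support functions. Both ingredients are classical and are among the nonsmooth / set-valued tools summarized in \cref{sec:technical_tools}, so no genuinely new machinery is required here.
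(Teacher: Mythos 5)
Your proposal is correct, but it evaluates the key set-valued quantity by a different route than the paper. Both arguments share the value computation (condition on $\Omega$, use $\expect{\abs{Z}} = \sqrt{2/\pi}\,\sigma$) and the convexity-based interchange of subdifferentiation and expectation; but the paper never computes $\expect{\sign\paren{\mb q^\top \mb x}\mb x}$ directly. Instead it differentiates the already-computed expression $\EOmega \norm{\mb q_\Omega}{}$, plugging in the standard subdifferential of the Euclidean norm and exchanging $\partial$ with $\EOmega$, which yields the rightmost formula; the middle expression $\objscale\expect{\sign\paren{\mb q^\top\mb x}\mb x}$ is then identified only indirectly, by noting that the summands $\sign\paren{\mb q^\top\mb x_i}\mb x_i$ are iid integrably bounded random convex compact sets and invoking \cref{thm:division_minkowski_sum} for the Minkowski average. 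You instead compute the selection (Aumann) expectation of $\sign\paren{\mb q^\top\mb x}\mb x$ head-on: the orthogonal decomposition for $\mb q_\Omega \ne \mb 0$ and, for $\mb q_\Omega = \mb 0$, a two-inclusion argument with the dual support-function bound in one direction and the explicit selection $v(\mb x_\Omega) = \norm{\mb w}{}\sign\paren{\mb w^\top\mb x_\Omega}$ in the other (note $\objscale\sqrt{2/\pi} = 1$, so the scaling works out). Your route is more self-contained on the set-valued side; the paper's is shorter because it leans on the known subdifferential of $\norm{\cdot}{}$ and gets the middle expression for free from the chain. Two small points you should make explicit to match the paper's level of care: (i) equating $\expect{\partial f}\paren{\mb q}$ — where $\partial f\paren{\mb q}$ is the Minkowski average of $m$ iid sets — with the single-sample expectation $\objscale\expect{\sign\paren{\mb q^\top\mb x}\mb x}$ uses linearity of the selection expectation (the paper's \cref{thm:division_minkowski_sum}), or equivalently you should apply your exchange theorem once to the $m$-sample objective and once to the single-sample one and combine; (ii) splitting the selection expectation by conditioning on $\Omega$ (a tower property for set expectations) deserves a one-line justification, e.g.\ via the support-function identity $h_{\expect{X}}\paren{\mb u} = \expect{h_X\paren{\mb u}}$ of \cref{thm:exchange_ex_supp}. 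Neither gap is substantive.
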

\begin{proposition}[Stationary points]
  \label{proposition:pop-stat-points}
  The stationary points of $\expect{f}$ on the sphere are
  \begin{equation}
    \mc{S} = \set{\frac{1}{\sqrt{k}}\mb q: \mb q\in\set{-1,0,1}^{n},\norm{\mb q}{0}=k,~k\in[n]}.
  \end{equation}
  The case $k=1$ corresponds to the $2n$ global minimizers $\mb q=\pm\mb e_i$, and all other values of $k$ correspond to saddle points.
\end{proposition}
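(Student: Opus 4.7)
My plan is to (i) convert the Riemannian stationarity condition $\mb 0\in\rgrad\bb E[f](\mb q)$ into an algebraic symmetry constraint on the entries of $\mb q$, (ii) use strict monotonicity of $x\mapsto 1/\sqrt x$ to force $\mb q\in\mc S$, and (iii) separately decide which $\mc S$-points are minima ($k=1$) versus saddles ($k\ge 2$). For (i)--(ii), stationarity says there exists $\mb v\in\partial\bb E[f](\mb q)$ with $\mb v=\lambda\mb q$ for some $\lambda\in\R$. By \cref{proposition:pop-val-grad}, any such $\mb v$ equals $\bb E_\Omega[V_\Omega]$, where $V_\Omega=\mb q_\Omega/\norm{\mb q_\Omega}{}$ when $\mb q_\Omega\neq\mb 0$ (forced) and $V_\Omega$ is an arbitrary element of the $\ell_2$ unit ball supported on $\Omega$ when $\mb q_\Omega=\mb 0$. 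Writing $S=\supp(\mb q)$, $k=|S|$, the $i\notin S$ coordinates of $\bb E_\Omega[V_\Omega]$ can be zeroed by a symmetric choice of the free selections, matching $\lambda \mb q_i=0$, so only the $i\in S$ coordinates are binding. A direct computation gives $\bb E_\Omega[(V_\Omega)_i] = \theta\,\mb q_i\,G_i(\mb q)$ for $i\in S$, where
\begin{equation*}
G_i(\mb q)\doteq \bb E_{\Omega'}\!\left[1/\sqrt{\mb q_i^2+\textstyle\sum_{l\in \Omega'\cap S}\mb q_l^2}\right],\quad \Omega'\sim\mathrm{Ber}(\theta)\ \text{on}\ [n]\setminus\{i\},
\end{equation*}
so stationarity reduces to $G_i(\mb q)=G_j(\mb q)$ for all $i,j\in S$. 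Splitting $\Omega'$ on whether it contains $j$ (resp.\ $i$), the difference collapses to $(1-\theta)\bb E[1/\sqrt{\mb q_i^2+Z}-1/\sqrt{\mb q_j^2+Z}]$ for a common $Z\ge 0$, and strict monotonicity of $1/\sqrt{\cdot}$ forces $\mb q_i^2=\mb q_j^2$; combined with $\norm{\mb q}{2}=1$, we conclude $|\mb q_i|=1/\sqrt k$ for all $i\in S$, i.e.\ $\mb q\in\mc S$.

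For the global-minimizer claim, $\bb E[f](\mb q)=\objscale\bb E[\sqrt{Y_k}]/\sqrt k$ at $\mb q\in\mc S$ with $|S|=k$, where $Y_k\sim\mathrm{Bin}(k,\theta)$. The elementary inequality $\sqrt y\ge y/\sqrt k$ on $[0,k]$ (strict unless $y\in\{0,k\}$) is strict in expectation whenever $k\ge 2$ and $\theta\in(0,1)$ (since then $Y_k\in\{1,\dots,k-1\}$ with positive probability), giving $\bb E[f](\mb q)>\objscale\theta=\bb E[f](\pm\mb e_i)$ for $k\ge 2$. Hence $\pm\mb e_i$ are the unique global minimizers up to sign, and a fortiori local minimizers.

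The hard part is the saddle classification for $k\ge 2$: since every directional derivative vanishes at a stationary point, I must exhibit a curve along which $\mb q$ is a strict local maximum. I plan to take $\mb w=(\sigma_i\mb e_i-\sigma_j\mb e_j)/\sqrt 2$ for two indices $i,j\in S$ with signs $\sigma_i,\sigma_j$ matching $\mb q$; one checks $\mb w\perp\mb q$, $\norm{\mb w}{2}=1$, and $\supp(\mb w)\subseteq S$. Along the great circle $\mb q(t)=\cos t\cdot\mb q+\sin t\cdot\mb w$ the iterate stays supported on $S$, so summands with $\Omega\cap S=\emptyset$ contribute $0$ to $\bb E[f](\mb q(t))$ and the remaining summands are smooth near $t=0$; a direct calculation yields, with $m=|\Omega\cap S|\ge 1$, $a=\mathbf 1\{i\in\Omega\}$, $b=\mathbf 1\{j\in\Omega\}$,
\begin{equation*}
\tfrac{d^2}{dt^2}\norm{\mb q(t)_\Omega}{}\big|_{t=0} = \tfrac{\sqrt k}{2m^{3/2}}\!\left[-\tfrac{2m^2}{k}+(a+b)m-(a-b)^2\right].
\end{equation*}
The crux will be verifying that $\bb E_\Omega[\cdot]$ of this expression is strictly negative for every $k\ge 2$ and $\theta\in[1/n,1/2]$. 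I would condition on $(a,b)\in\{0,1\}^2$ and note that the strictly negative contributions from the $m=1$ events (each of the form $-(2/k)$) and from the $a\neq b$ events (contributing $-(a-b)^2/m^{3/2}$) dominate the positive $(a+b)m/m^{3/2}$ contributions from $m\ge 2$ events, once one exploits $\theta\le 1/2$ and the weighting by $m^{-3/2}$. Once this second derivative is strictly negative, $t=0$ is a strict local max of $\bb E[f]\circ\mb q(t)$, making $\mb q$ a saddle in the paper's sense.
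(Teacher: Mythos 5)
Your stationarity reduction and global-minimizer step are sound: the coordinates on $\supp(\mb q)$ are selection-independent, the telescoping of $G_i-G_j$ into $(1-\theta)\,\bb E[1/\sqrt{q_i^2+Z}-1/\sqrt{q_j^2+Z}]$ is essentially the same computation the paper runs inside the proof of \cref{theorem:pop-geometry}(a), and your second-derivative formula along the great circle $\mb q(t)=\cos t\,\mb q+\sin t\,\mb w$ is correct. The genuine gap is exactly the step you defer: the domination argument you sketch for showing $\bb E_\Omega[\tfrac{d^2}{dt^2}\norm{\mb q(t)_\Omega}{}\,|_{t=0}]<0$ does not work. The negative pool you propose --- the $m=1$ events (each worth $-2/k$) plus the $-(a-b)^2/m^{3/2}$ terms --- is an order of magnitude too small to beat the positive $(a+b)m^{-1/2}$ mass coming from $m\ge 2$ events; for instance at $\theta=1/2$, $k=10$ the positive part is about $0.44$ while your proposed negative pool is about $0.055$. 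What actually rescues the claim is the term you drop from the ledger, $-2m^2/k$: writing $m=a+b+M$ with $a,b\sim\mathrm{Ber}(\theta)$, $M\sim\mathrm{Bin}(k-2,\theta)$ independent, so $m\sim\mathrm{Bin}(k,\theta)$, the identity $\bb E[\sqrt m]=k\theta\,\bb E[(1+b+M)^{-1/2}]$ shows that $\tfrac2k\bb E[\sqrt m]$ cancels $\bb E[(a+b)m^{-1/2}]$ \emph{exactly}, leaving $\bb E_\Omega[\tfrac{d^2}{dt^2}(\cdot)|_{t=0}]=-\sqrt k\,\theta(1-\theta)\,\bb E[(M+1)^{-3/2}]<0$ for every $k\ge2$ and every $\theta\in(0,1)$ (note no appeal to $\theta\le 1/2$ is needed, a hint that your intended mechanism was not the right one). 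So the inequality you need is true, but the route you outline would fail; with the identity above your saddle argument closes.

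Two smaller repairs, and a comparison. The justification ``every directional derivative vanishes at a stationary point'' is false for nonsmooth functions (consider $|x|$ at $0$); what you should say is that $t\mapsto\bb E f(\mb q(t))$ is smooth near $t=0$ (finitely many $\Omega$, each with $\norm{\mb q_\Omega}{}$ bounded away from zero on the relevant events), and its first derivative $\bb E[(a-b)/\sqrt{2m}\,]$ vanishes by exchangeability of $i$ and $j$. Also, your global-minimality claim compares values only over $\mc S$; make explicit that a global minimizer exists by compactness, is a local minimizer, hence stationary, hence in $\mc S$, so the comparison within $\mc S$ suffices (the paper instead uses the direct bound $\bb E\norm{\mb q_\Omega}{}\ge\bb E\norm{\mb q_\Omega}{}^2=\theta$ with equality only at $\pm\mb e_i$). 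Finally, the paper's saddle step avoids your binomial computation altogether: it reparametrizes $\mb w=\mb q_{1:n-1}$ and combines the radial concavity bound of \cref{prop:asym_directional_curv} with the zero radial derivative from \cref{prop:inward_radial_grad} to exhibit a direction of strict local maximality, and it reads off non-stationarity outside $\mc S$ from \cref{theorem:pop-geometry}(a); your approach is more self-contained but hinges on the unproven cancellation identified above.
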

A consequence of~\cref{proposition:pop-stat-points} is that though the problem itself is non-convex (due to the constraint), the population objective has no ``spurious local minima'': each stationary point is either a \emph{global} minimizer or a saddle point.

{\bf Identifying $2n$ ``good'' subsets}
We now define $2n$ disjoint subsets on the sphere, each containing one of the global minimizers $\set{\pm\mb e_i}$ and possessing benign geometry for both the population and empirical objective, following~\citep{GilboaEtAl2018Efficient}.
For any $\zeta\in[0,\infty)$ and $i\in[n]$ define
\begin{align}
  \mc S_{\zeta}^{(i+)} &\doteq \set{\mb q \in \bb S^{n-1}: q_i>0,\,  \frac{q_i^2}{\norm{\mb q_{-i}}{\infty}^2} \ge 1 + \zeta},\\
  \mc S_{\zeta}^{(i-)} &\doteq \set{\mb q \in \bb S^{n-1}: q_i<0,\,  \frac{q_i^2}{\norm{\mb q_{-i}}{\infty}^2} \ge 1 + \zeta}.
\end{align}
For points in $\mc{S}_\zeta^{(i+)}\cup\mc{S}_{\zeta}^{(i-)}$, the $i$-th index is larger than all other indices (in absolute value) by a multiplicative factor of $\zeta$. In particular, for any point in these subsets, the largest index (in absolute value) is unique, so by~\cref{proposition:pop-stat-points} all population saddle points are excluded from these $2n$ subsets. Each set contains exactly one of the $2n$ global minimizers.

Intuitively, this partition can serve as a ``tiebreaker'': points in $\mc{S}_{\zeta}^{(i+)}$ is closer to $\mb e_i$ than all the other $2n-1$ signed basis vectors. Therefore, we hope that optimization algorithms initialized in this region could favor $\mb e_i$ over the other standard basis vectors, which is indeed the case as we are going to show. For simplicity, we are going to state our geometry results in $\mc{S}_{\zeta}^{(n+)}$; by symmetry the results will automatically carry over to all the other $2n-1$ subsets.


\begin{theorem}[Lower bound on directional subgradients]
  \label{theorem:pop-geometry}
  Fix any $\zeta_0\in(0,1)$. We have
  \begin{enumerate}[(a)]
  \item For all $\mb q \in \mc{S}_{\zeta_0}^{(n+)}$ and all indices $j\neq n$ such that $q_j\neq 0$,
    \begin{align}
      \label{equation:pop-grad-inward}
      \inf \innerprod{\expect{\partial_R f}\paren{\mb q}}{\frac{1}{q_j} \mb e_j - \frac{1}{q_n} \mb e_n} \ge \frac{1}{2n}\theta\paren{1-\theta} \frac{\zeta_0}{1+\zeta_0};
    \end{align}
  \item For all $\mb q \in \mc{S}_{\zeta_0}^{(n+)}$, we have that
    \begin{equation}
      \label{equation:pop-grad-star}
      \inf \innerprod{\expect{\rgrad f}\paren{\mb q}}{ q_n \mb q - \mb e_n} \ge \frac{1}{8} \theta \paren{1-\theta}\zeta_0 n^{-3/2} \norm{\mb q_{-n}}{}.
    \end{equation}
  \end{enumerate}
\end{theorem}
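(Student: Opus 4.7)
The plan is to first reduce the Riemannian infima to Euclidean ones, handle part (a) by direct computation on $\EOmega{\mb q_\Omega/\norm{\mb q_\Omega}{}}$, and then deduce part (b) from part (a) via an algebraic decomposition together with a sharper denominator estimate. Both test directions are tangent to $\bb S^{n-1}$ at $\mb q$: $\mb q^\top(q_j^{-1}\mb e_j - q_n^{-1}\mb e_n) = 0$ and $\mb q^\top(q_n\mb q - \mb e_n) = 0$. Since $\rgrad f = (\mb I - \mb q\mb q^\top)\partial f$ and the projector acts trivially on tangent vectors, the inner products with $\expect{\rgrad f}(\mb q)$ and $\expect{\partial f}(\mb q)$ agree. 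Moreover, the set-valued ambiguity in $\expect{\partial f}(\mb q)$ lives only on the event $\mb q_\Omega = 0$, which forces $\Omega \cap \supp(\mb q) = \emptyset$; any selection supported in such an $\Omega$ is then orthogonal to $\mb q$, $\mb e_n$, and $\mb e_j$, so contributes zero. Hence in both (a) and (b) the infimum collapses to an explicit scalar expectation of the single selection $\mb q_\Omega/\norm{\mb q_\Omega}{}$.

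\noindent For part (a), I partition on the four Bernoulli configurations at $j$ and $n$. The $\set{j, n \in \Omega}$ and $\set{j, n \notin \Omega}$ cases contribute zero, while the remaining two combine (each with probability $\theta(1-\theta)$) into
\[
\theta(1-\theta)\,\bb E_{\Omega'}\brac{\frac{1}{\sqrt{\norm{\mb q_{\Omega'}}{}^2 + q_j^2}} - \frac{1}{\sqrt{\norm{\mb q_{\Omega'}}{}^2 + q_n^2}}},
\]
with $\Omega' \subseteq [n]\setminus\set{j,n}$. Using $\frac{1}{\sqrt{A}}-\frac{1}{\sqrt{B}} = (B-A)/(\sqrt{AB}(\sqrt{A}+\sqrt{B}))$ together with the trivial bounds $A, B \le 1$ shows the bracketed term is at least $(q_n^2 - q_j^2)/2$. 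Combining $q_n^2 \ge (1+\zeta_0)q_j^2$ from the definition of $\mc{S}_{\zeta_0}^{(n+)}$ with $q_n^2 \ge 1/n$ (since $q_n$ is the strict maximum-magnitude coordinate on $\mc{S}_{\zeta_0}^{(n+)}$) yields $q_n^2 - q_j^2 \ge \zeta_0/(n(1+\zeta_0))$, which is exactly the claimed bound.

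\noindent For part (b), the key observation is the algebraic identity
\[
q_n\mb q - \mb e_n = \sum_{j \ne n,\, q_j \ne 0} q_n q_j^2\, \paren{\frac{1}{q_j}\mb e_j - \frac{1}{q_n}\mb e_n},
\]
verified by matching coordinates (the $j$-th entry of the RHS is $q_n q_j$; the $n$-th entry is $-\sum_j q_j^2 = -\norm{\mb q_{-n}}{}^2$). Since each coefficient $q_n q_j^2$ is non-negative, the infimum distributes over this sum, so I just need a strong per-$j$ estimate. Revisiting the calculation from (a), the fact that $\Omega' \subseteq [n]\setminus\set{j,n}$ never touches the $n$-th coordinate gives $\norm{\mb q_{\Omega'}}{}^2 \le \norm{\mb q_{-n}}{}^2 - q_j^2$ and hence $A \le \norm{\mb q_{-n}}{}^2$. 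This sharpens the denominator to $\sqrt{AB}(\sqrt{A}+\sqrt{B}) \le 2\norm{\mb q_{-n}}{}$, yielding the per-$j$ lower bound $\theta(1-\theta)(q_n^2-q_j^2)/(2\norm{\mb q_{-n}}{})$. Summing against the coefficients $q_n q_j^2$ produces $q_n^2\norm{\mb q_{-n}}{}^2 - \norm{\mb q_{-n}}{4}^4$, which the chain $\norm{\mb q_{-n}}{4}^4 \le \norm{\mb q_{-n}}{\infty}^2 \norm{\mb q_{-n}}{}^2$ together with $\norm{\mb q_{-n}}{\infty}^2 \le q_n^2/(1+\zeta_0)$ lower-bounds by $q_n^2\norm{\mb q_{-n}}{}^2\zeta_0/(1+\zeta_0)$. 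Applying $q_n^3 \ge n^{-3/2}$ finishes the claim.

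\noindent The main obstacle is locating the sharper denominator estimate in part (b). The naive bound $\sqrt{AB}(\sqrt{A}+\sqrt{B}) \le 2$ used in (a) is far too loose when $\norm{\mb q_{-n}}{}$ is small, and would produce only an $O(\norm{\mb q_{-n}}{}^2)$ rate for (b), which fails near the minimizer $\mb e_n$. Recognizing that $A = \norm{\mb q_{\Omega'}}{}^2 + q_j^2$ actually lies in the slice indexed by $[n]\setminus\set{n}$, so that $A \le \norm{\mb q_{-n}}{}^2$, recovers the missing factor of $\norm{\mb q_{-n}}{}$ and brings the bound in line with the statement.
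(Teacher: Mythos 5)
Your argument is correct, and it splits cleanly into two parts relative to the paper. For part (a) you follow essentially the paper's route: reduce to the Euclidean subdifferential by tangency, drop the set-valued piece (the ball only appears when $\mb q_\Omega=\mb 0$, which excludes $j,n$ from $\Omega$), condition on the Bernoulli pattern at $\set{j,n}$ to get $\theta(1-\theta)\,\bb E_{\Omega'}\brac{(q_j^2+\norm{\mb q_{\Omega'}}{}^2)^{-1/2}-(q_n^2+\norm{\mb q_{\Omega'}}{}^2)^{-1/2}}$, and bound the bracket below by $\tfrac12(q_n^2-q_j^2)$ (your algebraic identity for $A^{-1/2}-B^{-1/2}$ is just a repackaging of the paper's integral representation), then use $q_n^2\ge 1/n$ and the $\mc S_{\zeta_0}^{(n+)}$ gap. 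For part (b) you take a genuinely different route. The paper reparametrizes via $\mb w=\mb q_{1:n-1}$, computes $\expect{g}$ in closed form, proves a radial inward-gradient bound from a negative-curvature estimate (its Lemmas on $\expect{g}$ and $h_{\mb v}(t)$), and translates back through the chain rule and a monotonicity-in-$t$ argument; you instead exploit the exact conic decomposition $q_n\mb q-\mb e_n=\sum_{j\ne n,\,q_j\ne 0}q_nq_j^2\paren{q_j^{-1}\mb e_j-q_n^{-1}\mb e_n}$ (which checks out coordinatewise) and sharpen the part-(a) estimate by observing $A=\norm{\mb q_{\Omega'}}{}^2+q_j^2\le\norm{\mb q_{-n}}{}^2$ since $\Omega'\subseteq[n]\setminus\set{j,n}$, so the denominator is at most $2\norm{\mb q_{-n}}{}$; summing against $q_nq_j^2$ and using $\norm{\mb q_{-n}}{4}^4\le\norm{\mb q_{-n}}{\infty}^2\norm{\mb q_{-n}}{}^2\le \tfrac{q_n^2}{1+\zeta_0}\norm{\mb q_{-n}}{}^2$ and $q_n^3\ge n^{-3/2}$ gives $\tfrac{\theta(1-\theta)\zeta_0}{2(1+\zeta_0)}n^{-3/2}\norm{\mb q_{-n}}{}$, which dominates the stated $\tfrac18$ constant since $\zeta_0<1$. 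Your version is more elementary and self-contained (no reparametrization, no Clarke directional-derivative manipulation, and a slightly better constant); what the paper's heavier route buys is reusable machinery—the reparametrized objective and its curvature/inward-gradient lemmas are also invoked to classify the population saddle points and to prove the sharpness bound used in the optimization analysis—so in the paper's context those lemmas are not solely in service of this theorem.
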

These lower bounds verify our intuition: points inside $\mc{S}_{\zeta_0}^{(n+)}$ have negative subgradients pointing toward $\mb e_n$, both in a coordinate-wise sense (\cref{equation:pop-grad-inward}) and a combined sense (\cref{equation:pop-grad-star}): the direction $\mb e_n - q_n \mb q$ is exactly the tangent direction of the sphere at $\mb q$ that points toward $\mb e_n$.

\subsection{Benign geometry of the empirical objective}
\label{section:empirical-geometry}
We now show that the benign geometry in~\cref{theorem:pop-geometry} is carried onto the empirical objective $f$ given sufficiently many samples, using a concentration argument. The key result behind is the concentration of the empirical subdifferential to the population subdifferential, where concentration is measured in the Hausdorff distance between sets.
\begin{proposition}[Uniform convergence of subdifferential]
  \label{prop:egrad_concentrate_uniform}
  For any $t \in (0, 1)$, when
  \begin{align}
    m \ge Ct^{-2}n\log (n/t),
  \end{align}
  with probability at least $1 - \exp\paren{-cm \theta t^2/\log m}$, we have
  \begin{align}
    \hd\paren{\partial f\paren{\mb q}, \expect{\partial f}\paren{\mb q}} \le t~~~\textrm{for all}~\mb q\in\bb S^{n-1}.
  \end{align}
  Here $C,c\ge 0$ are universal constants.
\end{proposition}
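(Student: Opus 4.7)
The plan is to recast the set-valued Hausdorff concentration as a uniform scalar concentration problem through support functions, and then combine a pointwise Bernstein estimate with a two-scale $\eps$-net argument, where the novel element is the net over $\mb q$ that handles the sign-discontinuity of the subdifferential map. Since both $\partial f(\mb q)$ and $\expect{\partial f}(\mb q)$ are convex compact subsets of $\R^n$, the Hausdorff distance equals the $\ell_\infty$-deviation of support functions:
\begin{align*}
\hd(\partial f(\mb q), \expect{\partial f}(\mb q)) = \sup_{\mb u \in \bb S^{n-1}} \abs{h_{\partial f(\mb q)}(\mb u) - h_{\expect{\partial f}(\mb q)}(\mb u)}.
\end{align*}
Using the Minkowski-sum decomposition of $\partial f$, the support function is a sum of i.i.d.\ scalars: $h_{\partial f(\mb q)}(\mb u) = \objscale \frac{1}{m}\sum_i \psi(\mb q^\top\mb x_i, \mb u^\top\mb x_i)$, where $\psi(s, r) = \sign(s) \cdot r$ if $s\neq 0$ and $\psi(0, r) = \abs{r}$. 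The task reduces to uniform control of a two-parameter empirical process indexed by $(\mb q, \mb u) \in \bb S^{n-1}\times\bb S^{n-1}$.

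For pointwise concentration, I would first condition on the high-probability event $\max_{i\le m} \norm{\mb x_i}{\infty} \le C\sqrt{\log m}$, so that $\abs{\mb u^\top\mb x_i}$ is bounded by $O(\sqrt{\log m})$ and each summand has variance at most $\expect{(\mb u^\top\mb x)^2} = \theta$. Bernstein's inequality then gives, for each fixed $(\mb q, \mb u)$,
\begin{align*}
\prob{\abs{\tfrac{1}{m}\sum_i \psi(\mb q^\top\mb x_i, \mb u^\top\mb x_i) - \expect{\psi}} > t/4} \le 2\exp(-cm\theta t^2/\log m),
\end{align*}
matching the stated failure-rate exponent. A standard $\delta_u$-net of $\bb S^{n-1}$ of cardinality $(C/\delta_u)^n$ takes care of the $\mb u$ direction immediately, since $\mb u \mapsto h_{\partial f(\mb q)}(\mb u)$ is Lipschitz with constant $\max_i\norm{\mb x_i}{2}\lesssim \sqrt{n\theta\log m}$.

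The main obstacle is the $\mb q$-covering: the support function is discontinuous in $\mb q$ because $\sign(\mb q^\top \mb x_i)$ jumps across hyperplanes. The key workaround is the elementary bound
\begin{align*}
\abs{h_{\partial f(\mb q)}(\mb u) - h_{\partial f(\mb q')}(\mb u)} \le \frac{2}{m}\sum_{i=1}^m \abs{\mb u^\top\mb x_i}\indic{\sign(\mb q^\top\mb x_i)\neq\sign(\mb q'^\top\mb x_i)},
\end{align*}
coupled with the fact that a sign flip for $\norm{\mb q-\mb q'}{2}\le \delta_q$ forces $\abs{\mb q^\top \mb x_i}\le \delta_q \norm{\mb x_i}{2}$, i.e., $\mb x_i$ is near-orthogonal to $\mb q$. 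The number of such $i$ is itself an empirical sum whose mean is $m\cdot \prob{\abs{\mb q^\top\mb x}\le \eta}$; the Bernoulli-Gaussian anti-concentration $\prob{\abs{\mb q^\top\mb x}\le \eta}\lesssim \eta/\sqrt{\theta}$ (valid for $\theta\ge 1/n$ so that $\mb q_\Omega\neq 0$ with high probability) controls this mean, and because $\indic{\abs{\mb q^\top\mb x_i}\le \eta}$ is monotone in $\abs{\mb q^\top\mb x_i}$, a routine Lipschitz sub-covering of the sphere suffices to concentrate the empirical count uniformly in $\mb q$. The hard part is calibrating $\delta_q$, $\eta$, and the truncation scale $\sqrt{\log m}$ to simultaneously bound the sign-flip error by $t/4$, keep the $(C/\delta_q)^n$ net-size factor within the union bound, and achieve the stated sample complexity $m\gtrsim t^{-2} n\log(n/t)$. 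This calibration of net scales against near-orthogonality control is precisely the novel covering argument alluded to in the introduction.
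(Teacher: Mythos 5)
Your overall skeleton matches the paper's: reduce to support functions via $\hd(X,Y)=\sup_{\mb u}\abs{h_X(\mb u)-h_Y(\mb u)}$, prove a pointwise bound, and combine a net over $\mb q$ with a bound on the sign-flip error $\frac{2}{m}\sum_i\abs{\mb u^\top\mb x_i}\indic{\sign(\mb q^\top\mb x_i)\ne\sign(\mb q'^\top\mb x_i)}$ (this is exactly the paper's term III). The gap is in the device you use for the $\mb q$-covering, which is the heart of the proposition. Your plan hinges on the anti-concentration claim $\prob{\abs{\mb q^\top\mb x}\le\eta}\lesssim\eta/\sqrt{\theta}$, but this is false under the Bernoulli--Gaussian model: $\mb q^\top\mb x$ has an atom at zero of mass $\prob{\mb x_{\supp(\mb q)}=\mb 0}$, which equals $1-\theta$ when $\mb q=\pm\mb e_i$ and remains order one for all nearly-sparse $\mb q$ --- precisely the points the uniform bound must cover (the targets of the whole analysis); the condition $\theta\ge 1/n$ does not make $\mb q_\Omega\ne\mb 0$ likely for such $\mb q$, and even off the atom the conditional density scales like $1/\norm{\mb q_\Omega}{}$, which can be arbitrarily large. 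Consequently the number of samples with $\abs{\mb q^\top\mb x_i}\le\delta_q\norm{\mb x_i}{}$ is $\Theta(m)$ no matter how small you take $\delta_q$, and the sign-flip error term stays of order $\theta$ rather than being driven below the target $t$ (recall downstream one needs $t\asymp\theta(1-\theta)\zeta_0 n^{-3/2}\ll\theta$). Equivalently, $\mb q\mapsto\partial f(\mb q)$ and $\mb q\mapsto\expect{\partial f}(\mb q)$ are genuinely discontinuous in Hausdorff distance at points with zero coordinates, with jumps of size $\Theta(\theta)$, so no calibration of $\delta_q$, $\eta$, and truncation scale can rescue a Euclidean-net argument; this is exactly the obstruction the paper flags when it says the usual covering argument fails.

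The paper's workaround is structurally different where yours breaks: it builds a net that is \emph{support-pattern consistent} --- every $\mb p\in\bb S^{n-1}$ has a net point $\mb q$ with $\supp(\mb q)=\supp(\mb p)$ --- and measures closeness in the sign-discrepancy metric $\dexp(\mb p,\mb q)=\prob{\sign(\mb p^\top\mb x)\ne\sign(\mb q^\top\mb x)}$, so exact zeros of $\mb q^\top\mb x$ never register as discrepancies between a point and its net representative. The technical work is then (i) bounding the $\dexp$-covering number by $\exp(Cn\log(n/\eps))$ via the vector angle inequality and a sorted, consecutive length-2-angle covering; (ii) showing the population subdifferential moves by only $O(\eps\sqrt{\log(1/\eps)})$ under $\dexp(\mb p,\mb q)\le\eps$; (iii) controlling the empirical number of sign flips uniformly over admissible pairs by a VC-dimension argument (not by monotone bracketing plus anti-concentration); and (iv) bounding the resulting sparse signed sum $\norm{\sum_i s_i\mb x_i}{}$ by a sub-gaussian matrix deviation inequality. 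Your pointwise step and $\mb u$-net are fine in spirit (the paper instead uses Talagrand's comparison inequality, and your stated Bernstein exponent does not quite follow from your own variance/truncation bounds, but that is minor); the proposal as written does not prove the proposition because of the $\mb q$-covering step.
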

The concentration result guarantees that the subdifferential set is close to its expectation given sufficiently many samples with high probability. Choosing an appropriate concentration level $t$, the lower bounds on the directional subgradients carry over to the empirical objective $f$, which we state in the following theorem.
\begin{theorem}[Directional subgradient lower bound, empirical objective]
  \label{theorem:emp-grad-inward}
  There exist universal constants $C, c \ge 0$ so that the following holds: for all $\zeta_0\in(0, 1)$, when $m \ge C n^4 \theta^{-2} \zeta_0^{-2} \log \paren{n/\zeta_0}$, with probability at least $1 - \exp\paren{-c m \theta^3 \zeta_0^2n^{-3} \log^{-1} m}$, the below estimates hold simultaneously for all the $2n$ subsets $\set{\mc{S}_{\zeta_0}^{(i+)}, \mc{S}_{\zeta_0}^{(i-)}:i\in[n]}$: (stated only for $\mc{S}_{\zeta_0}^{(n+)}$)
  \begin{enumerate}[(a)]
  \item For all $\mb q \in \mc{S}_{\zeta_0}^{(n+)}$ and all $j \in [n]$ with $q_j \ne 0$ and $q_n^2 /q_j^2 \le 3$,
    \begin{align}
      \label{equation:emp-grad-inward}
      \inf \innerprod{\partial_R f\paren{\mb q}}{\frac{1}{q_j} \mb e_j - \frac{1}{q_n} \mb e_n}
      \ge \frac{1}{4n}\theta\paren{1-\theta} \frac{\zeta_0}{1+\zeta_0};
    \end{align}
  \item For all $\mb q\in\mc{S}_{\zeta_0}^{(n+)}$,
    \begin{align}
      \label{equation:emp-grad-star}
      \inf \innerprod{\partial_R f\paren{\mb q}}{q_n \mb q - \mb e_n}
      \ge \frac{\sqrt{2}}{16} \theta \paren{1-\theta} n^{-\frac{3}{2}} \zeta_0 \norm{\mb q_{-n}}{} \ge  \frac{1}{16} \theta \paren{1-\theta} n^{-\frac{3}{2}} \zeta_0 \norm{\mb q - \mb e_n}{}.
    \end{align}
  \end{enumerate}
\end{theorem}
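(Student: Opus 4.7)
The plan is to derive the empirical lower bounds by combining the population lower bounds of \cref{theorem:pop-geometry} with the uniform subdifferential concentration from \cref{prop:egrad_concentrate_uniform}. The core observation is that if $\hd(A,B)\le t$, then for any direction $\mb w$ one has $|\inf\innerprod{A}{\mb w}-\inf\innerprod{B}{\mb w}|\le t\|\mb w\|$, because each element of $A$ has a counterpart in $B$ within distance $t$ (and vice versa). Since the Riemannian subdifferential is obtained from $\partial f$ by the $1$-Lipschitz projection $\mb I-\mb q\mb q^\top$, the Hausdorff bound transfers: $\hd(\partial_R f(\mb q),\expect{\partial_R f}(\mb q))\le t$ uniformly over $\bb S^{n-1}$, under the same event as \cref{prop:egrad_concentrate_uniform}.

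With this in hand, the strategy for each of parts (a) and (b) is: (i) compute $\|\mb w\|$ for the relevant direction; (ii) invoke \cref{theorem:pop-geometry} to obtain the population lower bound; (iii) pick $t$ small enough that the error $t\|\mb w\|$ is at most half of this population bound, so the remaining half becomes the empirical lower bound. For part (a), the direction is $\mb w=\mb e_j/q_j-\mb e_n/q_n$, with $\|\mb w\|^2=1/q_j^2+1/q_n^2$. The condition $q_n^2/q_j^2\le 3$ gives $\|\mb w\|\le 2/|q_n|$, and the defining property of $\mc S_{\zeta_0}^{(n+)}$ (namely $q_n^2\ge(1+\zeta_0)\|\mb q_{-n}\|_\infty^2$ together with $\|\mb q\|=1$) forces $q_n^2\ge 1/n$, so $\|\mb w\|\le 2\sqrt n$. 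For part (b), a direct expansion gives the exact identity $\|q_n\mb q-\mb e_n\|=\|\mb q_{-n}\|$, which is crucial: the $\|\mb q_{-n}\|$ factor on the right of \cref{equation:pop-grad-star} is precisely what appears in the concentration error, so the $\|\mb q_{-n}\|$-dependence cancels cleanly.

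Matching the required $t$ in both parts, the binding choice is $t\asymp \theta\zeta_0 n^{-3/2}$, which through \cref{prop:egrad_concentrate_uniform} gives the stated sample complexity $m\gtrsim n\cdot t^{-2}\log(n/t)\asymp n^4\theta^{-2}\zeta_0^{-2}\log(n/\zeta_0)$ and failure probability $\exp(-cm\theta t^2/\log m)=\exp(-cm\theta^3\zeta_0^2 n^{-3}\log^{-1}m)$, both exactly as claimed. The final inequality $\|\mb q_{-n}\|\ge\|\mb q-\mb e_n\|/\sqrt{2}$ in \cref{equation:emp-grad-star} follows from the identity $\|\mb q-\mb e_n\|^2=2(1-q_n)$ and $\|\mb q_{-n}\|^2=(1-q_n)(1+q_n)$, giving the ratio $\|\mb q-\mb e_n\|^2/\|\mb q_{-n}\|^2=2/(1+q_n)\le 2$ for $q_n\in(0,1)$. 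Because the subdifferential concentration of \cref{prop:egrad_concentrate_uniform} is uniform over all of $\bb S^{n-1}$, no additional union bound over the $2n$ good sets is needed; the statement for all $\mc S_{\zeta_0}^{(i\pm)}$ follows at once on the same high-probability event, by symmetry.

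The main obstacle is making step (i) tight in part (a): the direction $\mb e_j/q_j$ blows up when $q_j$ is tiny, which would force a much smaller $t$ and a much worse sample complexity. This is exactly why the hypothesis $q_n^2/q_j^2\le 3$ is imposed, restricting attention to coordinates $j$ that are comparable in magnitude to $q_n$ and ensuring $\|\mb w\|=O(\sqrt n)$. In the subsequent optimization analysis, the iterates are shown to remain in a region where this side condition holds, so no generality is lost for the intended application.
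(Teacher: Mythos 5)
Your proposal is correct and matches the paper's proof essentially step for step: uniform Hausdorff concentration of the subdifferential (\cref{prop:egrad_concentrate_uniform}), the directional bound $\abs{\inf\innerprod{A}{\mb w}-\inf\innerprod{B}{\mb w}}\le t\norm{\mb w}{}$, the estimates $\norm{\mb e_j/q_j-\mb e_n/q_n}{}\le 2\sqrt{n}$ and $\norm{q_n\mb q-\mb e_n}{}=\norm{\mb q_{-n}}{}$, the choice $t\asymp\theta(1-\theta)\zeta_0 n^{-3/2}$, and no extra union bound over the $2n$ sets. The only cosmetic deviations are that the paper works with $\partial f$ directly (both directions are tangent at $\mb q$, so transferring the Hausdorff bound through the projection is unnecessary), and that hitting the exact constant $\sqrt{2}/16$ in part (b) requires budgeting the concentration error at $\frac{2-\sqrt{2}}{16}\theta(1-\theta)\zeta_0 n^{-3/2}\norm{\mb q_{-n}}{}$ rather than exactly half the population bound---a trivially adjustable constant.
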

The consequence of~\cref{theorem:emp-grad-inward} is two-fold. First, it guarantees that the only possible stationary point of $f$ in $\mc{S}_{\zeta_0}^{(n+)}$ is $\mb e_n$: for every other point $\mb q\neq \mb e_n$, property (b) guarantees that $0\notin \rgrad f(\mb q)$, therefore $\mb q$ is non-stationary. Second, the directional subgradient lower bounds allow us to establish convergence of the Riemannian subgradient descent algorithm, in a way similar to showing convergence of unconstrained gradient descent on star strongly convex functions.

We now present an upper bound on the norm of the subdifferential sets, which is needed for the convergence analysis.
\begin{proposition}
  \label{prop:grad_norm_concentrate}
  There exist universal constants $C,c\ge 0$ such that
  \begin{align}
    \sup \norm{\partial f\paren{\mb q}}{} \le 2 \quad \forall\;  \mb q \in \bb S^{n-1}
  \end{align}
  with probability at least $1 - \exp\paren{-cm\theta \log^{-1} m}$, provided that $m \ge C n \log n$. This particularly implies that
  \begin{align}
    \sup \norm{\partial_R f\paren{\mb q}}{} \le 2  \quad \forall\;  \mb q \in \bb S^{n-1}.
\end{align}
\end{proposition}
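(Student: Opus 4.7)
The plan is to reduce the subdifferential-norm bound to a uniform upper bound on $f$ itself, and then control that uniform bound via an $\eps$-net argument combined with pointwise Bernstein concentration. Fix any $\mb q \in \bb S^{n-1}$ and any $\mb v \in \partial f(\mb q)$. By the subdifferential formula, $\mb v = \objscale \frac{1}{m}\sum_{i=1}^m s_i \mb x_i$ for some $s_i \in [-1,1]$, so for every $\mb u \in \bb S^{n-1}$,
$$\innerprod{\mb u}{\mb v} = \objscale \frac{1}{m}\sum_{i=1}^m s_i \innerprod{\mb u}{\mb x_i} \le \objscale \frac{1}{m}\sum_{i=1}^m |\innerprod{\mb u}{\mb x_i}| = f(\mb u).$$
Taking the supremum over $\mb u$ yields $\norm{\mb v}{} \le \sup_{\mb u \in \bb S^{n-1}} f(\mb u)$; since $\rgrad f(\mb q) = (\mb I - \mb q \mb q^\top)\partial f(\mb q)$ is obtained by an orthogonal projection, the same bound transfers to the Riemannian subgradient. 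Thus it suffices to prove $\sup_{\mb u \in \bb S^{n-1}} f(\mb u) \le 2$ with the stated probability. Note that by~\cref{proposition:pop-val-grad}, $\expect{f}(\mb u) = \EOmega \norm{\mb u_\Omega}{} \le \norm{\mb u}{} = 1$, so the task reduces to a uniform deviation of at most $1$.

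For pointwise concentration at a fixed $\mb u$, set $Z_i \doteq \objscale |\innerprod{\mb u}{\mb x_i}|$. Conditional on the support $\Omega_i$ of $\mb x_i$, $\innerprod{\mb u}{\mb x_i}$ is Gaussian with variance $\norm{\mb u_{\Omega_i}}{}^2 \le 1$, so $Z_i$ is sub-Gaussian with parameter $O(1)$ and variance $\expect{Z_i^2} - (\expect{Z_i})^2 \le O(\theta)$. Truncating each $Z_i$ at level $O(\sqrt{\log m})$---a step whose tail cost is negligible by a Gaussian tail bound on $|\innerprod{\mb u}{\mb x_i}|$ union-bounded over $i \in [m]$---and applying Bernstein's inequality yields
$$\prob{f(\mb u) - \expect{f}(\mb u) \ge t} \le \exp\paren{-cm\theta t^2/\log m}$$
for $t$ a small constant.

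To pass to a uniform bound, let $\mc N \subset \bb S^{n-1}$ be an $\eps$-net of cardinality at most $(3/\eps)^n$. The empirical function $f$ is $L$-Lipschitz with $L \le \objscale \frac{1}{m}\sum_i \norm{\mb x_i}{}$; Gaussian concentration of $\norm{\mb x_i}{}$ conditional on its support (mean-squared $\approx n\theta$), union bounded over $i$, shows $L \le C(\sqrt{n\theta}+\sqrt{\log m})$ with probability at least $1 - m^{-c}$. Choose $\eps = \Theta(1/L)$ so that $L\eps \le 1/4$; then $\log|\mc N| \lesssim n\log(n\log m)$, which is absorbed by the Bernstein exponent under $m \ge Cn\log n$ for a sufficiently large constant $C$. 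Union bounding the pointwise estimate over $\mc N$ with $t = 1/4$ and combining with the Lipschitz extension $f(\mb u) \le f(\mb u_0) + L\eps$ gives $\sup_{\mb u \in \bb S^{n-1}} f(\mb u) \le 1 + 1/4 + 1/4 \le 2$, with failure probability of the claimed form $\exp(-c'm\theta\log^{-1} m)$.

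The main obstacle is the careful balancing of three scales---the deviation parameter $t$, the covering radius $\eps$, and the high-probability Lipschitz bound $L$---so that the three contributions sum to exactly the target constant $2$ while the net-cardinality factor $\exp(n\log(3/\eps))$ is dominated by the Bernstein exponent $\exp(-cm\theta/\log m)$ under only $m\gtrsim n\log n$. Once these choices are made consistently, the remainder of the argument is routine; in particular, the bound for $\rgrad f$ follows for free from the Euclidean bound by non-expansiveness of $\mb I - \mb q \mb q^\top$.
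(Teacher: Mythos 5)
Your reduction $\sup\norm{\partial f(\mb q)}{} \le \sup_{\mb u \in \bb S^{n-1}} f(\mb u)$ (and its transfer to $\partial_R f$ by non-expansiveness of the projection) is correct and is a genuinely different route from the paper, which simply writes $\sup \norm{\partial f(\mb q)}{} = \hd\paren{\set{0},\partial f(\mb q)} \le \norm{\expect{\partial f}(\mb q)}{} + \hd\paren{\partial f(\mb q), \expect{\partial f}(\mb q)} \le 1+1$, invoking \cref{prop:egrad_concentrate_uniform} with $t=1$ (whose hypothesis is exactly $m \ge Cn\log n$ and whose failure probability is exactly $\exp(-cm\theta\log^{-1}m)$). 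However, your concentration bookkeeping has a genuine quantitative gap. With the pointwise rate you state, $\prob{f(\mb u)-\expect{f}(\mb u)\ge t} \le \exp(-cm\theta t^2/\log m)$, the union bound over an $\eps$-net of cardinality $\exp\paren{\Theta(n\log(n\log m))}$ does \emph{not} close under $m \ge Cn\log n$: you need $m\theta/\log m \gtrsim n\log(1/\eps)$, i.e.\ $m \gtrsim \theta^{-1} n \log n \log m$, and since $\theta$ may be as small as $1/n$ this demands $m = \wt{\Omega}(n^2)$, far beyond the stated hypothesis (at $\theta=1/n$, $m=Cn\log n$ the exponent $m\theta/\log m$ is only a constant while $\log|\mc N| \asymp n\log n$). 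The claim that the net cardinality ``is absorbed by the Bernstein exponent under $m\ge Cn\log n$'' is therefore false as written. The fix is to drop the truncation-plus-Bernstein detour entirely: by \cref{lemma:subgauss_para}, $\abs{\mb u^\top \mb x_i}$ has $\psi_2$-norm $O(1)$, so Hoeffding for sub-gaussian sums gives $\prob{f(\mb u)-\expect{f}(\mb u)\ge t}\le \exp(-cmt^2)$ with no $\theta$ or $\log m$ loss; the $\theta$-variance refinement only helps for deviations $t\lesssim\sqrt{\theta}$, which is not your regime since you need $t$ of constant order.

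A second, related issue is that two of your side events are controlled only with polynomial probability: the per-sample truncation event and the per-sample bound on $\norm{\mb x_i}{}$ used for the Lipschitz constant each fail with probability about $m^{-c}$. For large $m$ one has $m^{-c} \gg \exp(-cm\theta\log^{-1}m)$, so your final probability is not ``of the claimed form.'' This too is fixable: avoid truncation as above, and bound $L \le \objscale \frac{1}{m}\sum_i \norm{\mb x_i}{}$ by a single sub-gaussian/Hoeffding bound on the average (each $\norm{\mb x_i}{}$ concentrates about $\sqrt{n\theta}$ with constant $\psi_2$-parameter), which fails with probability $\exp(-cm)$ rather than $m^{-c}$. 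With these repairs your strategy does yield the proposition; as it stands, though, both the sample-size arithmetic in the union bound and the stated failure probability are unjustified, whereas the paper's proof is a two-line corollary of machinery it has already established.
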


\subsection{Finding one basis via Riemannian subgradient descent}
\label{section:opt-one-basis}
The benign geometry of the empirical objective allows a simple Riemannian subgradient descent algorithm to find one basis vector a time. The Riemannian subgradient descent algorithm with initialization $\mb q^{(0)}$ and step size $\set{\eta^{(k)}}_{k\ge 0}$ proceeds as follows: choose an arbitrary Riemannian subgradient $\mb v^{(k)} \in \partial_R f\paren{\mb q^{(k)}}$, and iterate
\begin{align}
  \mb q^{(k+1)} = \frac{\mb q^{(k)} - \eta^{(k)} \mb v^{(k)} }{\norm{\mb q^{(k)} - \eta^{(k)} \mb v^{(k)}}{}}, \quad \text{for}\; k = 0, 1, 2, \dots  \; .
\end{align}
Each iteration moves in an arbitrary Riemannian subgradient direction followed by a projection back onto the sphere.\footnote{The projection is a retraction in the Riemannian optimization framework~\citep{AbsilEtAl2008Optimization}.  It is possible to use other retractions such as the canonical exponential map, resulting in a messier analysis problem for our case. } We show that the algorithm is guaranteed to find one basis as long as the initialization is in the ``right'' region. To give a concrete result, we set $\zeta_0 = 1/(5\log n)$.\footnote{It is possible to set $\zeta_0$ to other values, inducing different combinations of the final sample complexity, iteration complexity, and repetition complexity in~\cref{theorem:opt-all-bases}. }

\begin{theorem}[One run of subgradient descent recovers one basis]
  \label{theorem:opt-one-basis}
Let $m \ge C\theta^{-2}n^4\log^3 n$ and $\eps \in (0, 2\theta/25]$. With probability at least $1 - \exp\paren{-cm\theta^3 n^{-3} \log^{-3} m}$ the following happens.  If the initialization $\mb q^{(0)}\in\mc{S}_{1/(5 \log n)}^{(n+)}$, and we run the projected Riemannian subgradient descent with step size $\eta^{(k)} = k^{-\alpha}/\paren{100\sqrt{n}}$ with $\alpha\in \paren{0, 1/2}$, and keep track of the best function value so far until after the iteration $K$ is performed (i.e., stopping at $\mb q^{(K+1)}$), producing $\mb q^{\mathrm{best}}$. Then, $\mb q^{\mathrm{best}}$ obeys
  \begin{align}
    f\paren{\mb q^{\mathrm{best}}} - f\paren{\mb e_n}
    & \le \eps, \quad \text{and} \quad
      \norm{\mb q^{\mathrm{best}} - \mb e_n}{} \le \frac{16 }{\theta\paren{1-\theta}} \eps,
  \end{align}
  provided that
  \begin{align}
    K \ge \max\set{\paren{\frac{32000n^{5/2} \log n \paren{1-\alpha}}{\theta \paren{1-\theta} \eps} }^{\frac{1}{1-\alpha}}, \paren{\frac{64n^{3/2} \log n \frac{1-\alpha}{1-2\alpha}}{5\eps \theta \paren{1-\theta} }}^{\frac{1}{\alpha}}}
  \end{align}
  In particular, choosing $\alpha=3/8 < 1/2$, it suffices to let
  \begin{equation}
    \label{equation:K-one-third}
    K \ge K_{3/8} \doteq C'\theta^{-8/3} \eps^{-8/3} n^4 \log^{8/3} n.
  \end{equation}
  Here $C, C', c\ge 0$ are universal constants.
\end{theorem}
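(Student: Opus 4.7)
The plan is to combine the local subgradient estimates in \cref{theorem:emp-grad-inward} with a best-iterate analysis of Riemannian subgradient descent, using \cref{prop:grad_norm_concentrate} to bound $\|\mb v^{(k)}\| \le 2$ uniformly. The argument decomposes into three pieces: (i) forward invariance of $\mc{S}_{\zeta_0}^{(n+)}$ under the iteration, (ii) a one-step contraction of $d_k \doteq \|\mb q^{(k)} - \mb e_n\|$, and (iii) telescoping with the diminishing step size, followed by converting $d_{\min}$ back and forth with the function-value gap.

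For (i), note that radial normalization preserves coordinate ratios $(q_n/q_j)^2$ exactly, so it suffices to examine the unnormalized update. Part (a) of \cref{theorem:emp-grad-inward} delivers $v_j^{(k)}/q_j^{(k)} - v_n^{(k)}/q_n^{(k)} > 0$ whenever $(q_n^{(k)})^2/(q_j^{(k)})^2 \le 3$, and a short algebraic step turns this into $(q_n - \eta v_n)^2/(q_j - \eta v_j)^2 \ge (q_n/q_j)^2$ provided the update does not flip the sign of $q_j^{(k)}$; for indices with $(q_n^{(k)})^2/(q_j^{(k)})^2 > 3$, the comfortable margin $3 > 1+\zeta_0$, combined with $\eta^{(k)} \le 1/(100\sqrt{n})$, $\|\mb v^{(k)}\| \le 2$, and $q_n^{(k)} \gtrsim 1/\sqrt{n}$, easily preserves the $(1+\zeta_0)$-separation. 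Hence $\mb q^{(k)} \in \mc{S}_{\zeta_0}^{(n+)}$ for every $k$, and estimate (b) applies throughout the run.

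For (ii), tangency $\langle \mb v^{(k)}, \mb q^{(k)}\rangle = 0$ gives $\|\mb q^{(k)} - \eta^{(k)} \mb v^{(k)}\|^2 = 1 + (\eta^{(k)})^2 \|\mb v^{(k)}\|^2 \ge 1$, and a direct calculation verifies that the radial projection onto $\bb S^{n-1}$ is non-expansive from exterior points to sphere points. Expanding $\|\mb q^{(k+1)} - \mb e_n\|^2$, using the tangency-based identity $\langle \mb v^{(k)}, \mb q^{(k)} - \mb e_n\rangle = \langle \mb v^{(k)}, q_n^{(k)} \mb q^{(k)} - \mb e_n\rangle$, invoking part (b) of \cref{theorem:emp-grad-inward}, and using $\|\mb v^{(k)}\|^2 \le 4$ yields
\begin{align*}
d_{k+1}^2 \le d_k^2 - 2 c_\star \eta^{(k)} d_k + 4 (\eta^{(k)})^2, \quad c_\star \doteq \frac{\theta(1-\theta)\zeta_0}{16 n^{3/2}}.
\end{align*}
For step (iii), denote $d_{\min} = \min_{0 \le k \le K+1} d_k$; summing from $k=0$ to $K$ and using $d_{K+1}^2 \ge 0$ gives
\begin{align*}
d_{\min} \le \frac{d_0^2 + 4 \sum_{k=1}^K (\eta^{(k)})^2}{2 c_\star \sum_{k=1}^K \eta^{(k)}}.
\end{align*}
Substituting $\eta^{(k)} = k^{-\alpha}/(100\sqrt{n})$ together with $\sum_{k=1}^K k^{-\alpha} \gtrsim K^{1-\alpha}/(1-\alpha)$ and $\sum_{k=1}^K k^{-2\alpha} \lesssim K^{1-2\alpha}/(1-2\alpha)$ (valid for $\alpha < 1/2$), splitting the upper bound into its two pieces, and forcing each to be at most $\eps/4$ produces the two stated conditions on $K$; taking $\alpha = 3/8$ specializes to \cref{equation:K-one-third}.

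To close the loop, the Lipschitz bound $f(\mb q) - f(\mb e_n) \le 2\|\mb q - \mb e_n\|$ (from \cref{prop:grad_norm_concentrate}) converts $d_{\min} \le \eps/2$ into $f(\mb q^{\mathrm{best}}) - f(\mb e_n) \le \eps$, while a linear sharpness inequality $f(\mb q) - f(\mb e_n) \ge (\theta(1-\theta)/16)\|\mb q - \mb e_n\|$, derived by combining Euclidean convexity with a lower bound on the directional derivative $f'(\mb e_n; \cdot)$ extracted from the explicit form of $\expect{\partial f}(\mb e_n)$ in \cref{proposition:pop-val-grad} together with \cref{prop:egrad_concentrate_uniform}, converts the function-value gap back into $\|\mb q^{\mathrm{best}} - \mb e_n\| \le 16\eps/(\theta(1-\theta))$. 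The main obstacle is step (i): uniformly verifying forward invariance across all $j \ne n$ and all $k$ through both regimes separated by $(q_n)^2/(q_j)^2 = 3$ requires careful case analysis, and it is precisely the bound $\eta^{(k)} \le 1/(100\sqrt{n})$ that keeps this argument tight.
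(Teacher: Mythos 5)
Your overall architecture (forward invariance of $\mc S_{\zeta_0}^{(n+)}$, a one-step inequality from tangency plus non-expansiveness of the projection, then telescoping with the diminishing steps) matches the paper's, with one harmless variation: you telescope the distances $d_k$ and convert to a function-value gap at the end via the Euclidean Lipschitz bound $f(\mb q)-f(\mb e_n)\le 2\|\mb q-\mb e_n\|$ (which indeed follows from convexity of $f$ on $\R^n$ plus \cref{prop:grad_norm_concentrate}), whereas the paper inserts the coarser bound $f(\mb q)-f(\mb e_n)\le 2\sqrt n\,\|\mb q-\mb e_n\|$ (\cref{prop:func_lipschitz}) \emph{inside} the sum and telescopes function values. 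Your bookkeeping therefore does not literally "produce the two stated conditions on $K$" (your route needs smaller powers of $n$), but since the stated thresholds are only larger, that part is fine.

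The genuine gap is the final sharpness step. You claim $f(\mb q)-f(\mb e_n)\ge \tfrac{\theta(1-\theta)}{16}\|\mb q-\mb e_n\|$ can be "derived by combining Euclidean convexity with a lower bound on the directional derivative $f'(\mb e_n;\cdot)$" from $\expect{\partial f}(\mb e_n)$ and \cref{prop:egrad_concentrate_uniform}. Convexity gives $f(\mb q)-f(\mb e_n)\ge h_{\partial f(\mb e_n)}(\mb q-\mb e_n)$, and in expectation $h_{\expect{\partial f}(\mb e_n)}(\mb q-\mb e_n)=\theta(q_n-1)+(1-\theta)\,\bb E_{\Omega}\|\mb q_{\Omega\setminus\{n\}}\|$; this is a useful lower bound only when $\|\mb q_{-n}\|$ is small. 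For points of $\mc S_{\zeta_0}^{(n+)}$ with $\|\mb q_{-n}\|$ of constant order (e.g.\ nearly uniform $\mb q$ with $\theta$ close to $1/2$), the term $\theta(q_n-1)\approx-\theta$ dominates and the linearization at $\mb e_n$ is even \emph{negative}, so no inequality of the form $\ge c\,\theta(1-\theta)\|\mb q-\mb e_n\|$ over all of $\mc S_{\zeta_0}^{(n+)}$ can come out of this argument. What is missing is a localization step: one must first show that a function gap at most $2\theta/25$ (this is exactly where the hypothesis $\eps\le 2\theta/25$ enters, which your write-up never uses) forces $\|\mb q_{-n}\|\le 1/2$. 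The paper does this in \cref{prop:func_sharpness} by combining uniform concentration of $f$ around $\expect{f}$ (a separate sub-gaussian process bound, not implied by subdifferential concentration) with the radial monotonicity of the population objective in the $\mb w$-parametrization (\cref{prop:inward_radial_grad}), and only then obtains the linear lower bound — via Lebourg's mean value theorem and a strengthened directional subgradient estimate valid in the ball $\|\mb q_{-n}\|\le 1/2$ that does not lose the $n^{-3/2}\zeta_0$ factor. Without this localization (or some substitute, e.g.\ a separate argument that $\mb q^{\mathrm{best}}$ itself is close to $\mb e_n$ — note your distance telescoping controls only the best-\emph{distance} iterate, not the best-\emph{function-value} iterate), the bound $\|\mb q^{\mathrm{best}}-\mb e_n\|\le \tfrac{16}{\theta(1-\theta)}\eps$ is not established.
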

\begin{lemma}[Random initialization falls in ``good set'']
  \label{lemma:random-init-works}
  Let $\mb q^{(0)}\sim{\rm Uniform}(\bb S^{n-1})$, then with probability at least $1/2$, $\mb q^{(0)}$ belongs to one of the $2n$ sets $\set{\mc{S}_{1/(5\log n)}^{(i+)}, \mc{S}_{1/(5\log n)}^{(i-)}:i\in[n]}$, and the set it belongs to is uniformly at random.
\end{lemma}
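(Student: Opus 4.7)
The plan is to realize a uniformly random initialization on the sphere as a normalized standard Gaussian, use coordinate sign and permutation symmetries to reduce the statement to a single scalar event on the top two order statistics of squared Gaussians, and then verify that event by combining a standard Gaussian maximum bound with a lower bound on the top-to-second gap of iid $\chi^2_1$ variates.

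First I would write $\mb q^{(0)} = \mb g/\|\mb g\|_2$ with $\mb g \sim N(\mb 0, \mb I_n)$, and observe that $\{\mb q^{(0)} \in \mc S_{\zeta_0}^{(i+)}\}$ coincides with $\{g_i > 0,\ g_i^2 \geq (1+\zeta_0)\max_{j\neq i} g_j^2\}$, where $\zeta_0 = 1/(5\log n)$. Since the standard Gaussian law is invariant under arbitrary coordinate permutations and sign flips, all $2n$ sets $\mc S_{\zeta_0}^{(i\pm)}$ have a common probability $p_0$. They are also pairwise disjoint: if $g_i^2 \geq (1+\zeta_0)\max_{j\neq i} g_j^2$ and $g_k^2 \geq (1+\zeta_0)\max_{j\neq k} g_j^2$ held simultaneously with $i\neq k$, chaining the two inequalities would yield $g_i^2 \geq (1+\zeta_0)^2 g_i^2 > g_i^2$, a contradiction, and within a given $i$ the sign of $g_i$ is pinned down by the set. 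Hence $\sum_{i,\pm}\prob{\mb q^{(0)}\in\mc S_{\zeta_0}^{(i\pm)}} = 2np_0 =: p$, and conditioning on the union yields a uniform distribution over the $2n$ sets, which already settles the second claim of the lemma.

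It remains to show $p \geq 1/2$. Letting $Y_{(1)} \geq Y_{(2)}$ denote the top two order statistics of the iid $\chi^2_1$ variables $g_1^2,\dots,g_n^2$, this amounts to
\[ \prob{Y_{(1)} - Y_{(2)} \;\geq\; Y_{(2)}/(5\log n)} \;\geq\; 1/2. \]
The natural decomposition is to intersect two high-probability events: $A := \{Y_{(2)} \leq c_1\log n\}$, which by the standard Gaussian tail combined with a union bound over coordinates holds with probability $1-o(1)$ for a fixed absolute constant $c_1$; and $B := \{Y_{(1)} - Y_{(2)} \geq c_1/5\}$, on which $A$ forces $\zeta_0 Y_{(2)} \leq c_1/5 \leq Y_{(1)}-Y_{(2)}$, giving the target inequality. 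The task then reduces to choosing $c_1$ so that $\prob{A^c} + \prob{B^c} \leq 1/2$.

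The main obstacle is obtaining a sufficiently sharp lower bound on $\prob{B}$. Asymptotically, via R\'enyi's representation $E_{(n)} - E_{(n-1)} \sim \text{Exp}(1)$ for exponential order statistics combined with the monotone transport $e \mapsto F_{\chi^2_1}^{-1}(1 - e^{-e})$, whose derivative tends to $2$ at infinity, the gap $Y_{(1)} - Y_{(2)}$ converges in distribution to $2\log(1 + E_1/E_2)$ for iid $\text{Exp}(1)$ variates $E_1,E_2$, with median $2\log 2 \approx 1.386$. Thus $\prob{B}$ exceeds $1/2$ only by a constant margin, and making the inequality effective for all $n \geq 3$ requires either tracking the finite-$n$ correction in $\frac{e^{-e}}{f_{\chi^2_1}(F_{\chi^2_1}^{-1}(1-e^{-e}))}$, or integrating the explicit joint density $n(n-1)f_{\chi^2_1}(y_1)f_{\chi^2_1}(y_2)F_{\chi^2_1}(y_2)^{n-2}$ over the bad region $\{y_1 < (1+\zeta_0)y_2\}$ and tuning $c_1$ accordingly. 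A handful of very small $n$ (e.g., $n=3,4$) may be handled by direct computation, and the factor $1/5$ built into $\zeta_0$ in the paper's statement leaves a comfortable margin for this calibration.
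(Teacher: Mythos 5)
Your symmetry and disjointness argument is fine and matches the paper's: by sign/permutation invariance all $2n$ sets have equal probability, they are pairwise disjoint, and conditioning on the union gives the uniform-choice claim. The genuine gap is in the only quantitative step, namely showing that the union has probability at least $1/2$ for every $n\ge 3$. Your reduction to $\prob{Y_{(1)}-Y_{(2)}\ge Y_{(2)}/(5\log n)}\ge 1/2$ is correct, but the proposed verification rests on asymptotic statements that are never turned into finite-$n$ bounds: the claim that $A=\{Y_{(2)}\le c_1\log n\}$ holds with probability $1-o(1)$, the convergence in distribution of the gap $Y_{(1)}-Y_{(2)}$ to $2\log(1+E_1/E_2)$, and the median of that limit. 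None of these gives $\prob{A^c}+\prob{B^c}\le 1/2$ for all $n\ge 3$, and you acknowledge this by deferring to ``tracking the finite-$n$ correction,'' ``integrating the explicit joint density and tuning $c_1$,'' and direct computation for small $n$ --- i.e., the actual proof is left undone. There is also a real tension in the calibration you would have to carry out: in the limit $\prob{B}\to e^{-c_1/10}$, so the budget $\prob{A^c}+\prob{B^c}<1/2$ forces $c_1<10\log 2$, while controlling $\prob{A^c}$ by a union bound needs $c_1>2$ and is vacuous for small $n$ (e.g.\ $n=3,4$), so the moderate-$n$ regime requires either a quantified rate of convergence for the gap law or explicit order-statistic integrals --- nontrivial work that the proposal does not supply.

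For contrast, the paper sidesteps order-statistic gaps entirely and argues non-asymptotically: it writes the per-set volume ratio as an explicit one-dimensional integral $\hbar(\zeta)$ over a Gaussian, computes $\hbar(0)=1/(2n)$, lower bounds $\hbar'(0)\ge -\tfrac{9}{8}\tfrac{\log n}{n}$ via $\bb E\norm{\mb x}{\infty}^2\le 4\log n+2\log 2$, and shows $\hbar''\ge 0$, so that $\hbar(\zeta)\ge \tfrac{1}{2n}-\tfrac{9}{8}\tfrac{\log n}{n}\zeta$; plugging in $\zeta=1/(5\log n)$ gives at least $1/(4n)$ per set, hence at least $1/2$ for the union, uniformly in $n\ge 3$. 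If you want to salvage your route, you would need an analogous explicit finite-$n$ estimate (for instance, integrating the joint density of $(Y_{(1)},Y_{(2)})$ over $\{y_1<(1+\zeta_0)y_2\}$ with concrete constants), since the asymptotic picture alone does not prove the lemma as stated.
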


\subsection{Recovering all bases from multiple runs}
\label{section:recovery}

As long as the initialization belongs to $\mc{S}_{1/(5\log n)}^{(i+)}$ or $\mc{S}_{1/(5\log n)}^{(i-)}$, our finding-one-basis result in~\cref{theorem:opt-one-basis} guarantees that Riemannian subgradient descent will converge to $\mb e_i$ or $-\mb e_i$ respectively. Therefore if we run the algorithm with independent, uniformly random initializations on the sphere multiple times, by a coupon collector's argument, we will recover all the basis vectors. This is formalized in the following theorem.
\begin{theorem}[Recovering the identity dictionary from multiple random initializations]
  \label{theorem:opt-all-bases}
 Let $m \ge C\theta^{-2} n^4\log^3 n$ and $\eps \in (0, 1)$, with probability at least $1-\exp\paren{-cm \theta^3 n^{-3} \log^{-3} m}$ the following happens. Suppose we run the Riemannian subgradient descent algorithm independently for $R$ times, each with a uniformly random initialization on $\bb S^{n-1}$, and choose the step size as $\eta^{(k)}=k^{-3/8}/(100\sqrt{n})$. Then, provided that $R \ge C' n \log n$, all standard basis vectors will be recovered up to $\eps$ accuracy with probability at least $1 - \exp\paren{-cR/n}$ in a total of $C' R\paren{\theta^{-16/3} \eps^{-8/3} n^4 \log^{8/3}n}$ iterations. Here $C, C', c \ge 0$ are universal constants.
\end{theorem}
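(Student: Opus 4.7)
The plan is to combine the per-run convergence guarantee of~\cref{theorem:opt-one-basis}, the random-initialization lemma~\cref{lemma:random-init-works}, and a standard coupon collector argument. The two probabilities in the statement are nested: first we condition on a high-probability ``good data'' event over $\mb X$, then we reason about the initialization randomness on top of that.

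First, I would condition on the data event $\mc E$ underlying~\cref{theorem:opt-one-basis} (which in turn uses~\cref{theorem:emp-grad-inward}), an event that holds with probability at least $1 - \exp(-cm\theta^3 n^{-3}\log^{-3} m)$ under the hypothesis $m \ge C\theta^{-2} n^4 \log^3 n$. Under $\mc E$, for every $i \in [n]$ and every initialization $\mb q^{(0)}\in \mc S_{1/(5\log n)}^{(i+)}\cup \mc S_{1/(5\log n)}^{(i-)}$, running~\cref{algorithm:rgd} with step size $\eta^{(k)} = k^{-3/8}/(100\sqrt n)$ for $K_{3/8}$ iterations returns an iterate within distance $16\eps'/[\theta(1-\theta)]$ of either $\mb e_i$ or $-\mb e_i$, by the reduction to the identity dictionary and the symmetry across the $2n$ good sets. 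To convert the~\cref{theorem:opt-one-basis} parameter $\eps'$ into the distance accuracy $\eps$ stated in~\cref{theorem:opt-all-bases}, I would set $\eps' = \theta(1-\theta)\eps/16$; the admissibility requirement $\eps' \le 2\theta/25$ is then satisfied for all $\eps\in(0,1)$ since $\theta\le 1/2$. Substituting into~\cref{equation:K-one-third} and using $1-\theta \ge 1/2$ gives the per-run iteration count
\begin{equation*}
  K_{3/8} \le C''\,\theta^{-16/3}\eps^{-8/3}\,n^4 \log^{8/3} n.
\end{equation*}

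Next, by~\cref{lemma:random-init-works}, each independent uniform initialization on $\bb S^{n-1}$ lands in one of the $2n$ good sets with probability at least $1/2$ and, conditional on that, uniformly among them; hence for each fixed $i\in[n]$, the probability of landing in the sign-coupled pair $\mc S_{1/(5\log n)}^{(i+)}\cup \mc S_{1/(5\log n)}^{(i-)}$ is at least $1/(2n)$. Let $A_i$ be the event that none of the $R$ independent initializations lies in this pair; then $\bb P[A_i]\le (1-1/(2n))^R\le \exp(-R/(2n))$, and a union bound yields
\begin{equation*}
  \bb P\paren{\bigcup_{i=1}^n A_i}\le n\exp(-R/(2n)) \le \exp(-cR/n),
\end{equation*}
where the second inequality holds whenever $R \ge C' n\log n$ for $C'$ chosen large enough (explicitly, any $R \ge 2n\log n/(1-2c)$ suffices). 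On the complementary event every basis pair is visited by at least one run, so the outputs collectively produce a set $\{\wh{\mb a}_1,\dots,\wh{\mb a}_n\}$ with $\|\wh{\mb a}_i - \mb a_i\|\le \eps$ up to permutation and sign. The total iteration budget is $R\cdot K_{3/8}$, which matches the claimed bound.

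Essentially all of the technical content lives in~\cref{theorem:opt-one-basis,lemma:random-init-works}, so~\cref{theorem:opt-all-bases} is primarily an assembly step. The only mild subtlety is the passage from function-value accuracy in~\cref{theorem:opt-one-basis} to iterate-distance accuracy here, which is what inflates the $\theta$-exponent from $-8/3$ to $-16/3$; I do not anticipate any serious obstacle.
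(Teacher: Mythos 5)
Your proposal is correct and follows essentially the same route as the paper: condition on the high-probability data event, invoke \cref{theorem:opt-one-basis} with the accuracy reset to $\theta(1-\theta)\eps/16$ (which is exactly the paper's translation from function-value to distance accuracy and yields the $\theta^{-16/3}$ factor), and finish with a coupon-collector argument over the $R$ random initializations using \cref{lemma:random-init-works}. The only cosmetic difference is that you bound the miss probability of each signed basis pair directly by $(1-1/(2n))^{R}$ and union bound, whereas the paper first uses Bernstein's inequality to guarantee at least $R/4$ ``effective'' runs and then collects coupons over those; both give the claimed $1-\exp(-cR/n)$ guarantee.
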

When the dictionary $\mb A$ is not the identity matrix, we can apply
the rotation argument sketched in the beginning of this section to get
the same result, which leads to our main result in~\cref{theorem:main}.

\section{Proof Highlights}
\label{section:proof-highlight}
A key technical challenge is establishing the uniform convergence of subdifferential sets in~\cref{prop:egrad_concentrate_uniform}, which we now elaborate. Recall that the population and empirical subdifferentials are
\begin{equation}
 \partial f(\mb q) = \objscale \frac{1}{m}\sum_{i=1}^m \sign(\mb q^\top\mb x_i)\mb x_i,~~~\expect{\partial f}(\mb q) = \objscale \bb E_{\mb x\sim\bgt}\brac{\sign(\mb q^\top\mb x)\mb x},
\end{equation}
and we wish to show that the difference between $\partial f(\mb q)$ and $\expect{\partial f}(\mb q)$ is small uniformly over $\mb q\in\mc{Q}=\bb S^{n-1}$. Two challenges stand out in showing such a uniform convergence:
\begin{enumerate}
\item The subdifferential is set-valued and random, and it is unclear a-priori how one could formulate and analyze the concentration of random sets.
\item The usual covering argument will not work here, as the Lipschitz gradient property does not hold: $\partial f(\mb q)$ and $\expect{\partial f}(\mb q)$ are not Lipschitz in $\mb q$ wrt the Euclidean metric nor the angle metric. 
\end{enumerate}

\subsection{Concentration of random sets}
\label{section:set-concentration}
We state and analyze concentration of random sets in the Hausdorff
distance (defined in~\cref{section:prelim}). We now illustrate why the
Hausdorff distance is the ``right'' distance to consider for
concentration of subdifferentials---the reason is that the Hausdorff
distance is closely related to the \emph{support function} of sets,
which for any set $S\in\bb R^n$ is defined as
\begin{equation}
  h_S(\mb u)\doteq \sup_{\mb x\in S}\innerprod{\mb x}{\mb u}.
\end{equation}
For convex compact sets, the sup difference between their support
functions is exactly the Hausdorff distance.
\begin{lemma}[Section 1.3.2,~\cite{Molchanov2013foundations}]
  \label{lemma:dist-support}
  For convex compact sets $X,Y\subset\R^n$, we have
  \begin{equation}
    \hd\paren{X, Y} = \sup_{\mb u\in\bb S^{n-1}} \abs{h_X(\mb u) - h_Y(\mb u)}.
  \end{equation}
\end{lemma}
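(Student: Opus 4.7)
The plan is to prove the equality by relating the Hausdorff distance to $\eps$-neighborhoods of $X$ and $Y$, and then invoking the standard characterization of convex closed sets via their support functions.

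First, I would verify (as a short sublemma) the classical identity
\begin{equation*}
  \hd(X, Y) = \inf\set{\eps \ge 0 : X \subseteq Y + \eps \ball \;\text{ and }\; Y \subseteq X + \eps \ball},
\end{equation*}
where $\ball$ denotes the closed Euclidean unit ball. This follows directly from the definitions: $\sup_{\mb x \in X} d(\mb x, Y) \le \eps$ if and only if every $\mb x \in X$ lies within distance $\eps$ of $Y$, which is exactly $X \subseteq Y + \eps \ball$; the symmetric statement handles the other term in the max. No convexity is needed for this step.

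Next I would use the fact that for convex closed sets $A, B \subseteq \R^n$, the inclusion $A \subseteq B$ holds if and only if $h_A(\mb u) \le h_B(\mb u)$ for every $\mb u \in \bb S^{n-1}$. The nontrivial direction is ``$\Leftarrow$'', which follows from the supporting hyperplane theorem: any $\mb x \in A \setminus B$ can be strictly separated from the closed convex set $B$ by some unit $\mb u$ with $\langle \mb u, \mb x \rangle > h_B(\mb u)$, but $\mb x \in A$ gives $\langle \mb u, \mb x \rangle \le h_A(\mb u) \le h_B(\mb u)$, a contradiction. I would also record the Minkowski-sum identity $h_{Y + \eps \ball}(\mb u) = h_Y(\mb u) + \eps h_\ball(\mb u) = h_Y(\mb u) + \eps$ for $\mb u \in \bb S^{n-1}$.

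Combining these ingredients, $X \subseteq Y + \eps \ball$ is equivalent to $h_X(\mb u) - h_Y(\mb u) \le \eps$ for all unit $\mb u$, and symmetrically for $Y \subseteq X + \eps \ball$. Taking the infimum of feasible $\eps$ over both inclusions yields
\begin{equation*}
  \hd(X, Y) = \sup_{\mb u \in \bb S^{n-1}} \abs{h_X(\mb u) - h_Y(\mb u)},
\end{equation*}
as claimed. The only conceptual step --- and the only place convexity is genuinely used --- is the support-function characterization of inclusion via separation; everything else is definition-chasing. A minor technical point to address is that the suprema involved are attained (hence the $\inf$ over $\eps$ is actually a minimum), which follows from compactness of $X,Y$ and continuity of support functions on $\bb S^{n-1}$.
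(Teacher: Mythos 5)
Your argument is correct: the reduction of $\hd$ to the two inclusions $X \subseteq Y + \eps \ball$, $Y \subseteq X + \eps \ball$, the separation-based equivalence $A \subseteq B \Leftrightarrow h_A \le h_B$ for closed convex sets, and the identity $h_{Y+\eps\ball}(\mb u) = h_Y(\mb u) + \eps$ on the unit sphere together give exactly the claimed equality, and compactness of $X,Y$ (plus nonemptiness, which is implicitly assumed throughout) covers the attainment issues you flag. Note that the paper does not prove this lemma at all --- it is quoted from Section 1.3.2 of Molchanov's book and merely restated in the appendix --- so your self-contained proof is the standard textbook argument for this fact and is a valid substitute for the citation.
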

\cref{lemma:dist-support} is convenient for us in the following sense. Suppose we wish to upper bound the difference of $\partial f(\mb q)$ and $\expect{\partial f}(\mb q)$ along some direction $\mb u\in\bb S^{n-1}$ (as we need in proving the key empirical geometry result~\cref{theorem:emp-grad-inward}). As both subdifferential sets are convex and compact, by~\cref{lemma:dist-support} we immediately have
\begin{align}
  \abs{\inf_{\mb g\in\partial f(\mb q)}\innerprod{\mb g}{\mb u} - \inf_{\mb g\in\expect{\partial f}(\mb q)}\innerprod{\mb g}{\mb u}}
  = \abs{-h_{\partial f(\mb q)}(-\mb u) + h_{\expect{\partial f}(\mb q)}(-\mb u)} \le \hd(\partial f(\mb q), \expect{\partial f}(\mb q)).
\end{align}
Therefore, as long as we are able to bound the Hausdorff distance, all
directional differences between the subdifferentials are
simultaneously bounded, which is exactly what we want to show to carry
the benign geometry from the population to the empirical
objective.

\subsection{Covering in the $\dexp$ metric}
We argue that the absence of gradient Lipschitzness is because the Euclidean distance is not the ``right'' metric in this problem. Think of the toy example $f(x)=|x|$, whose subdifferential set $\partial f(x)=\sign(x)$ is not Lipschitz across $x=0$ wrt the Euclidean distance. However, once we partition $\R$ into $\R_{>0}$, $\R_{<0}$ and $\set{0}$ (i.e. according to the sign pattern), the subdifferential set is Lipschitz on each subset.

The situation with the dictionary learning objective is quite similar: we resolve the gradient non-Lipschitzness by proposing a stronger metric $\dexp$ on the sphere which is sign-pattern aware and averages all ``subset angles'' between two points. Formally, we define $\dexp$ as
\begin{equation}
  \label{equation:dexp-closed-form}
  \dexp(\mb p, \mb q) \doteq \bb P_{\mb x\sim\bgt}\brac{\sign(\mb p^\top\mb x)\neq \sign(\mb q^\top\mb x)} = \frac{1}{\pi}\EOmega{\angle\paren{\mb p_\Omega, \mb q_\Omega}},
\end{equation}
(the second equality shown in~\cref{lemma:exp_metric}) on subsets of the sphere with consistent support patterns. Our plan is to perform the covering argument in $\dexp$, which requires showing gradient Lipschitzness in $\dexp$ and bounding the covering number.

\paragraph{Lipschitzness of $\partial f$ and $\expect{\partial f}$ in $\dexp$}
For the population subdifferential $\expect{\partial f}$, note that $\expect{\partial f}(\mb q)=\bb E_{\mb x\sim\bgt}[\sign(\mb q^\top \mb x)\mb x]$ (modulo rescaling). Therefore, to bound $\hd(\expect{\partial f}(\mb p), \expect{\partial f}(\mb q))$ by~\cref{lemma:dist-support}, we have the bound for all $\mb u\in \bb S^{n-1}$
\begin{multline}
  \abs{h_{\expect{\partial f}(\mb p)}(\mb u) - h_{\expect{\partial f}(\mb q)}(\mb u)}
  = \expect{\sup \abs{\sign(\mb p^\top\mb x) - \sign(\mb q^\top\mb x)} \cdot \abs{\mb x^\top\mb u}} \\
  \le 2\expect{\indicator{\sign(\mb p^\top\mb x)\neq \sign(\mb q^\top\mb x)}\abs{\mb x^\top\mb u}}.
\end{multline}
As long as $\dexp(\mb p,\mb q)\le\eps$, the indicator is non-zero with probability at most $\eps$, and thus the above expectation should also be small---we bound it by $O(\eps\sqrt{\log(1/\eps)})$ in~\cref{lemma:expected-gradient-diff}.

To show the same for the empirical subdifferential $\partial f$, one only needs to bound the observed proportion of sign differences for all $\mb p,\mb q$ such that $\dexp(\mb p, \mb q)\le \eps$, which by a VC dimension argument is uniformly bounded by $2\eps$ with high probability (\cref{lemma:uniform_sign_pert}).

\paragraph{Bounding the covering number in $\dexp$}
Our first step is to reduce $\dexp$ to the \emph{\bf maximum length-2 angle} (the $\d2$ metric) over any consistent support pattern. This is achieved through the following \emph{vector angle inequality}~(\cref{lemma:angle_ineqn}): for any $\mb p, \mb q\in\R^d$ ($d\ge 3$), we have
\begin{align}
  \angle\paren{\mb p, \mb q} \le \sum_{\Omega\subset[d],|\Omega|=2} \angle\paren{\mb p_\Omega, \mb q_\Omega}~~~\textrm{provided}~\angle\paren{\mb p, \mb q} \le \pi/2.
\end{align}
Therefore, as long as $\sign(\mb p)=\sign(\mb q)$ (coordinate-wise) and $\max_{|\Omega|=2}\angle(\mb p_\Omega, \mb q_\Omega)\le \eps/n^2$, we would have for all $|\Omega|\ge 3$ that
\begin{equation}
  \angle\paren{\mb p_\Omega,\mb q_\Omega} \le \pi/2~~~{\rm and}~~~\angle\paren{\mb p_\Omega,\mb q_\Omega} \le \sum_{\Omega'\subset\Omega, |\Omega'|=2} \angle\paren{\mb p_{\Omega'}, \mb q_{\Omega'}} \le \binom{|\Omega|}{2}\cdot \frac{\eps}{n^2}\le \eps.
\end{equation}
By~\cref{equation:dexp-closed-form}, the above implies that $\dexp(\mb p, \mb q)\le \eps/\pi$, the desired result.
Hence the task reduces to constructing an $\eta=\eps/n^2$ covering in $\d2$ over any consistent sign pattern.

Our second step is a {\bf tight bound on this covering number}: the $\eta$-covering number in $\d2$ is bounded by
$\exp(Cn\log(n/\eta))$~(\cref{lemma:cover-length-2}). To obtain this, a first thought would be to take the covering in all size-2 angles (there are $\binom{n}{2}$ of them) and take the common refinement of all their partitions, which gives covering number $(C/\eta)^{O(n^2)}=\exp(Cn^2\log(1/\eta))$. We improve upon this strategy by \emph{sorting} the coordinates in $\mb p$ and restricting attentions in the \emph{consecutive} size-2 angles after the sorting (there are $n-1$ of them). We show that a proper covering in these consecutive size-2 angles by $\eta/n$ will yield a covering for all size-2 angles by $\eta$. The corresponding covering number in this case is thus $(Cn/\eta)^{O(n)}=\exp(Cn\log(n/\eta))$, which modulo the $\log n$ factor is the tightest we can get.

\section{Experiments}
\label{section:experiment}
\subsection{Experiments with simulated data}
\paragraph{Setup}
We set the true dictionary $\mb A$ to be the identity and random orthogonal matrices, respectively. For each choice, we sweep the combinations of $\paren{m, n}$ with $n \in \set{30, 50, 70, 100}$ and $m = 10n^{\set{0.5, 1, 1.5, 2, 2.5}}$, and fix the sparsity level at $\theta=0.1, 0.3, 0.5$, respectively. For each $\paren{m, n}$ pair, we generate $10$ problem instances, corresponding to re-sampling the coefficient matrix $\mb X$ for $10$ times. Note that our theoretical guarantee applies for $m=\widetilde{\Omega}(n^4)$, and the sample complexity we experiment with here is lower than what our theory requires. To recover the dictionary, we run the Riemannian subgradient descent algorithm~\cref{algorithm:rgd} with  decaying step size $\eta^{(k)}=1/\sqrt{k}$, corresponding to the boundary case $\alpha=1/2$ in~\cref{theorem:opt-one-basis} with a much better base size.

\paragraph{Metric}
As~\cref{theorem:main} guarantees recovering the entire
dictionary with $R\ge Cn\log n$ independent runs, we perform $R=\mathrm{round}\paren{5n\log n}$ runs on each
instance.  For each run, a true dictionary element $\mb a_i$ is considered to be found if $\norm{\mb a_i - \mb q_{\mathrm{best}}}{} \le 10^{-3}$. For each instance, we regard it a successful recovery if the $R = \mathrm{round}\paren{5n\log n}$ runs have found all the dictionary elements, and we report the empirical success rate over the $10$ instances.

\paragraph{Result}
From our simulations, Riemannian subgradient descent succeeds in recovering the dictionary as long as $m \ge Cn^2$~(\cref{figure:exp}), across different sparsity levels. The dependency on $n$ is consistent with our theory and suggests that the actual sample complexity requirement for guaranteed recovery might be even lower than $\wt{O}(n^4)$ that we established.\footnote{The $\wt{O}(\cdot)$ notation ignores the dependency on logarithmic terms and other factors. }
\begin{figure}[!htbp]
\centering
\includegraphics[width=0.32\textwidth]{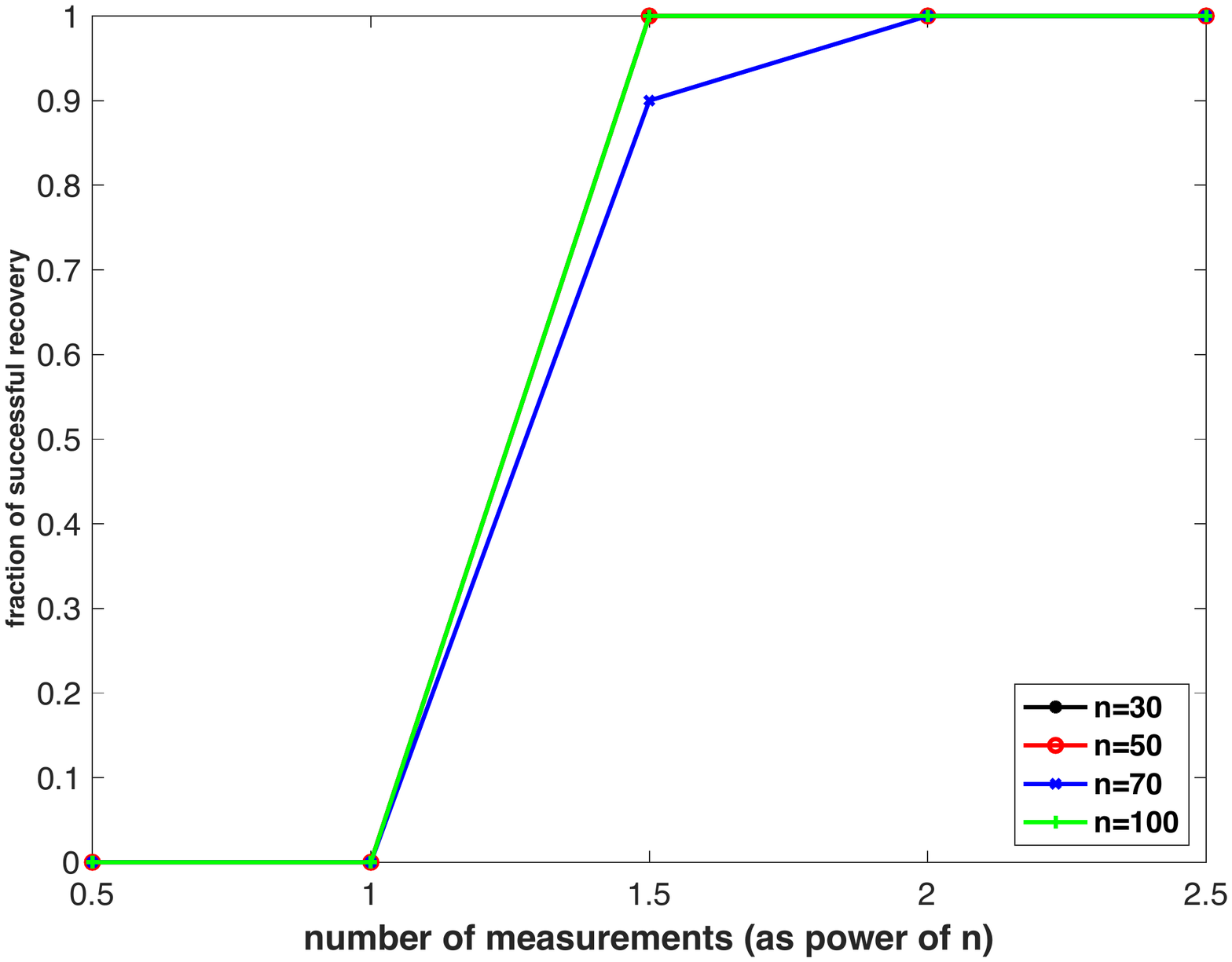}
\includegraphics[width=0.32\textwidth]{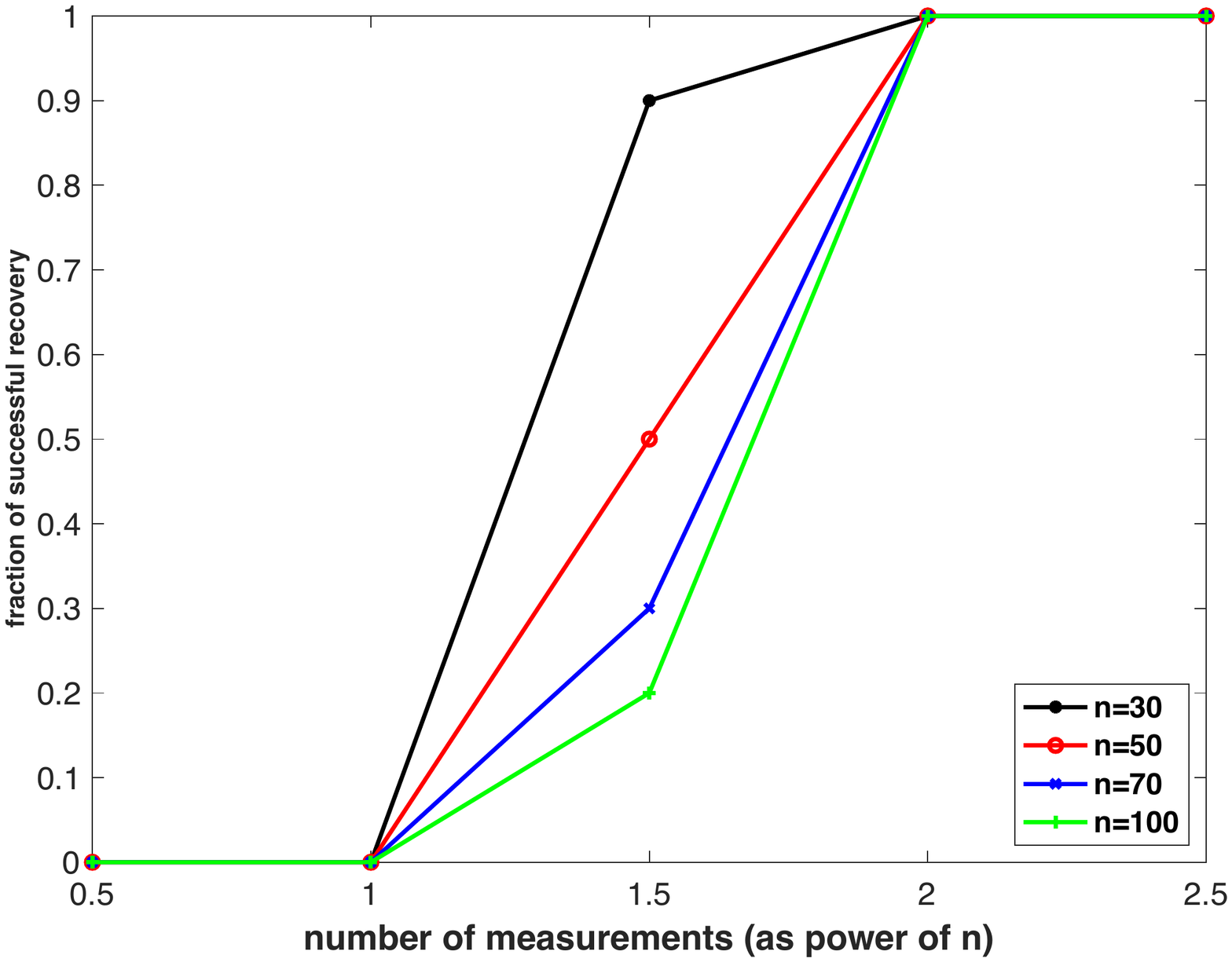}
\includegraphics[width=0.32\textwidth]{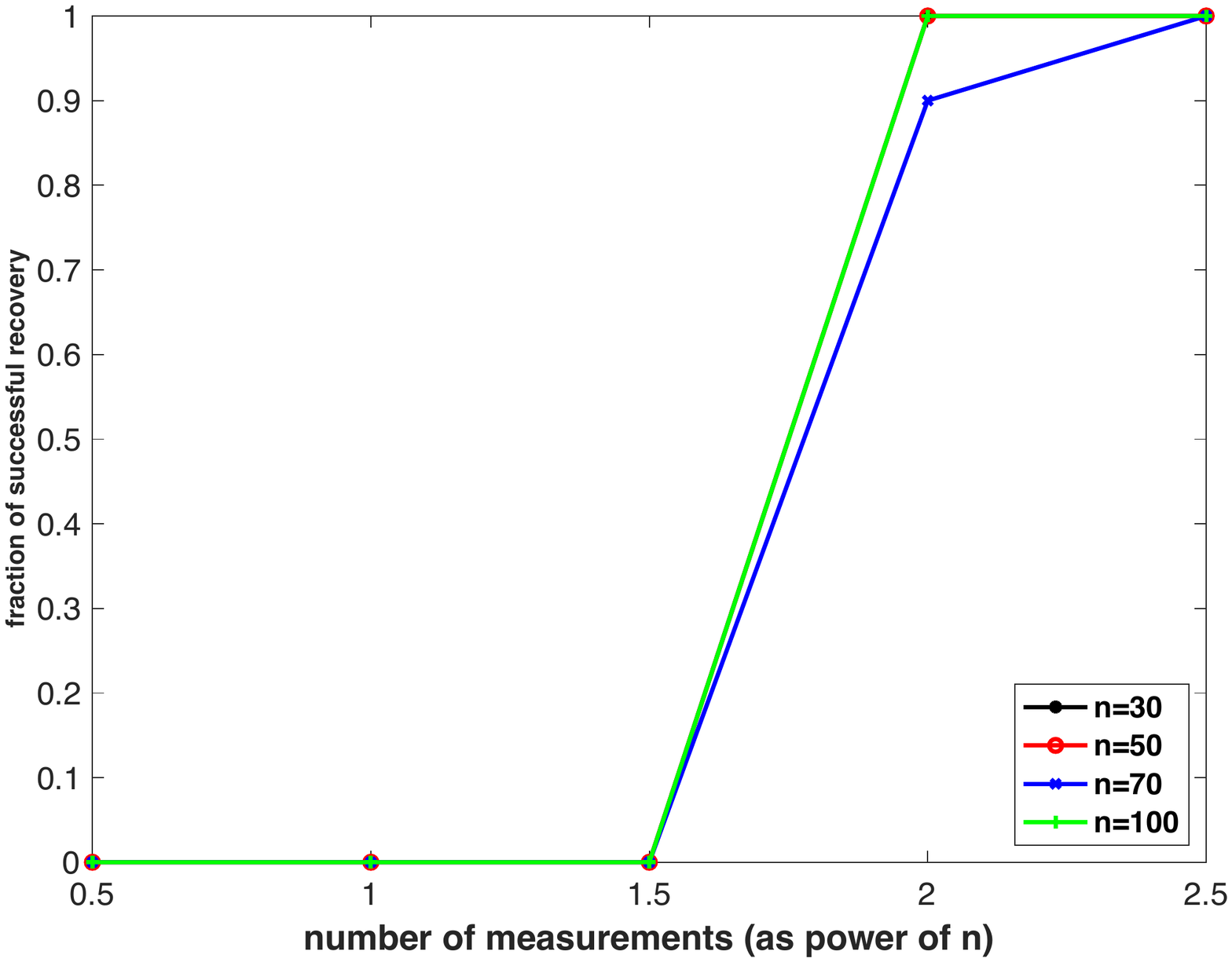} \\
\vspace{0.25cm}
\includegraphics[width=0.32\textwidth]{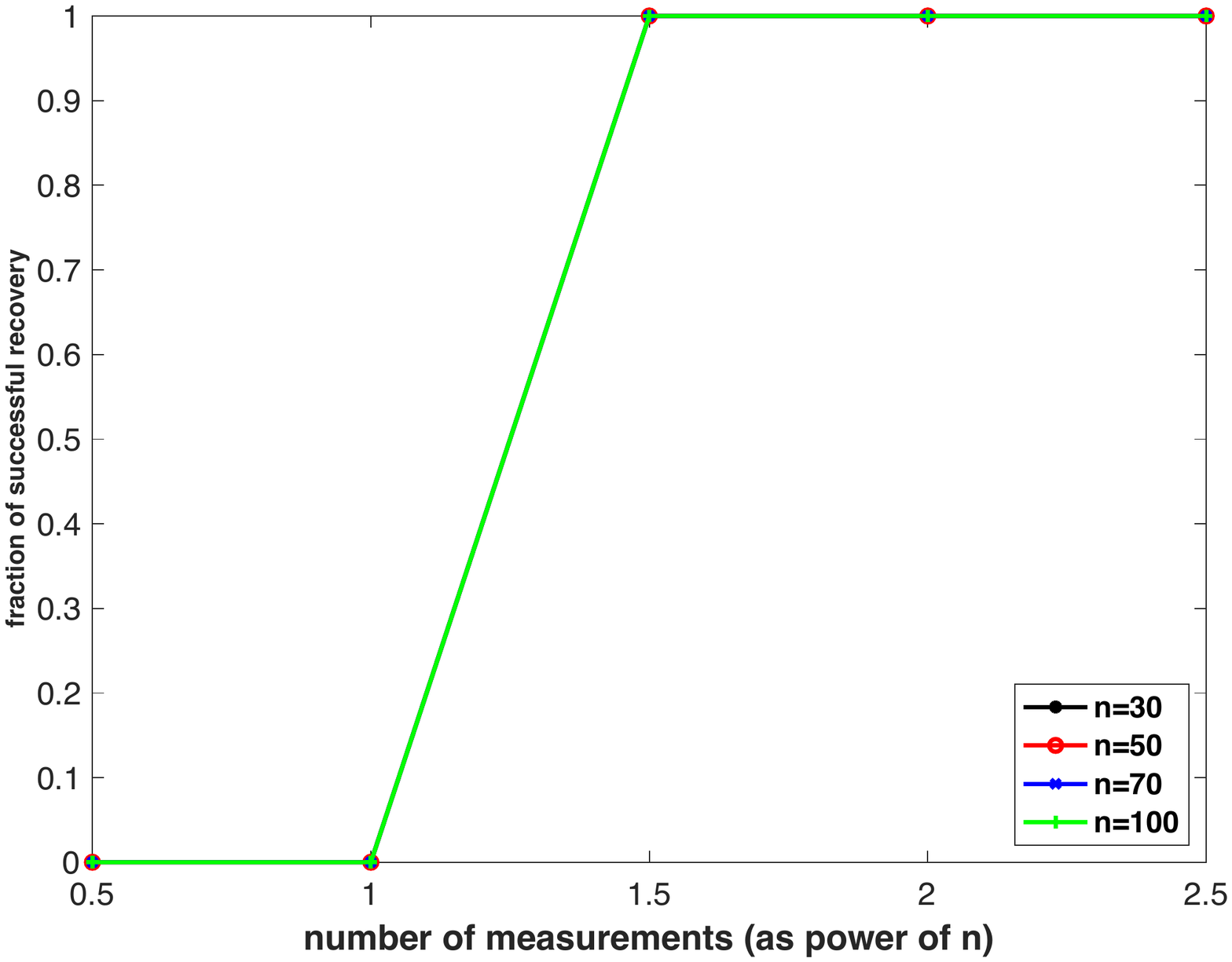}
\includegraphics[width=0.32\textwidth]{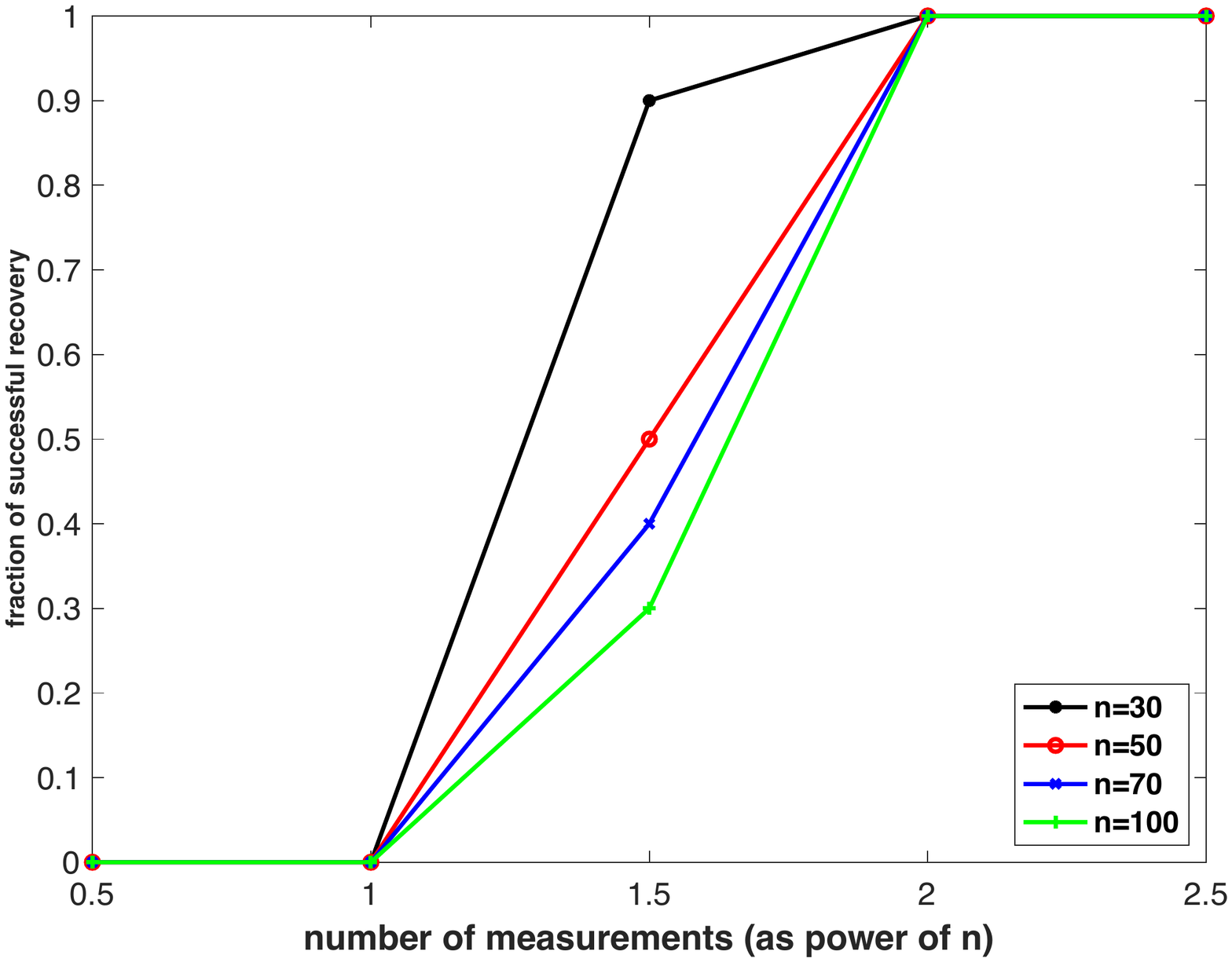}
\includegraphics[width=0.32\textwidth]{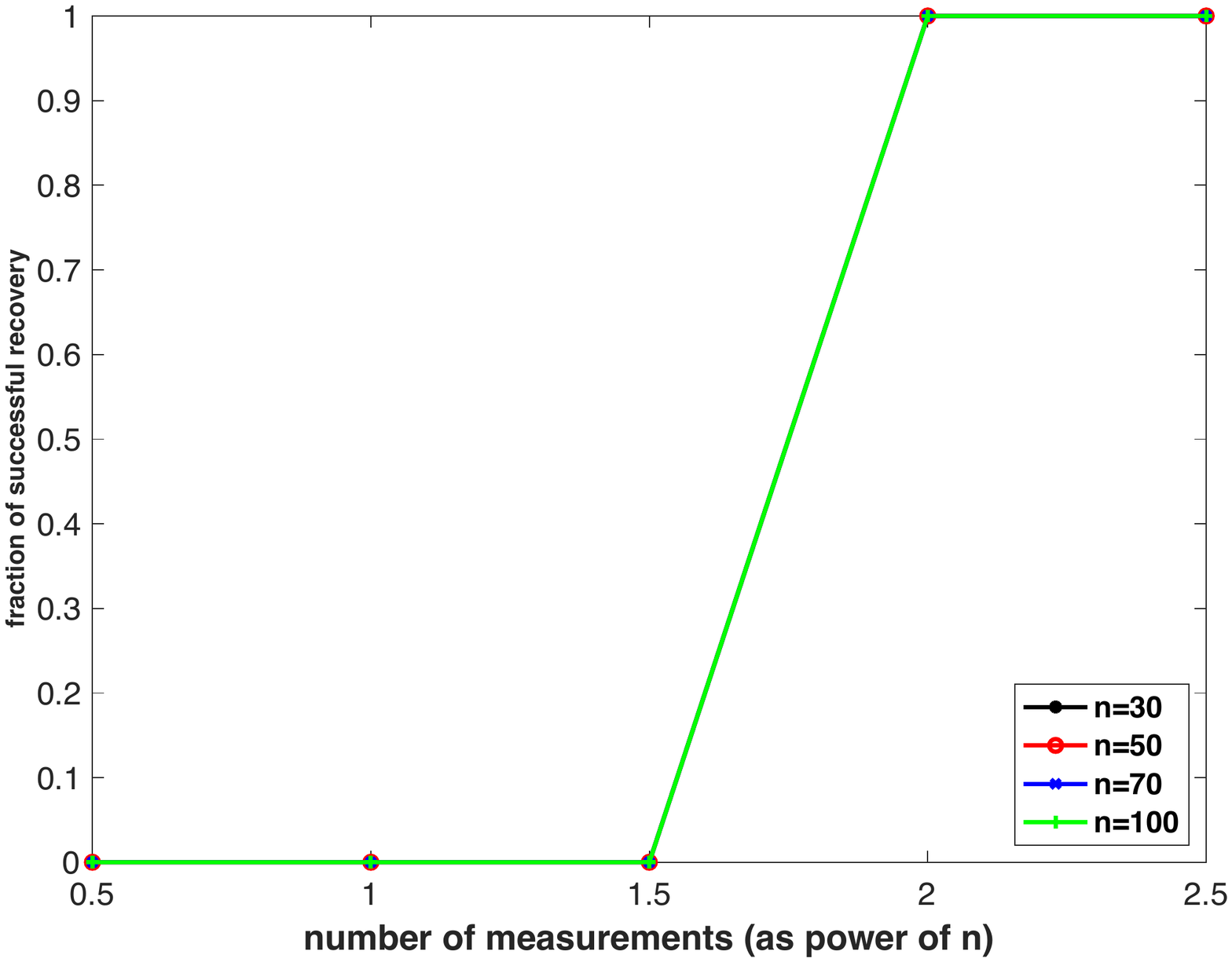}
\caption{\small Empirical success rates of recovery of the Riemannian subgradient descent with $R=5n\log n$ runs, averaged over $10$ instances. The top row is for the identity dictionary, and the bottom row for a random orthogonal dictionary. 
Left to right columns: $\theta=0.1, 0.3, 0.5$, respectively.}
\label{figure:exp}
\end{figure}
The $\wt{O}(n^2)$ rate we observe also matches the conjectured complexity based on the SOS method in~\citep{SchrammSteurer2017Fast}.\footnote{Previous methods based on SOS~\citet{BarakEtAl2015Dictionary,MaEtAl2016Polynomial} provide no concrete sample complexity estimates besides being polynomial in $n$. } Moreover,
the problem seems to become harder when $\theta$ grows, evident from the observation that the success transition threshold being pushed to the right.

\paragraph{Faster alternative for large-scale instances}
The Riemannian subgradient descent is cheap per iteration but slow in overall convergence, similar to many other first-order methods. We also test a faster quasi-Newton type method, GRANSO,\footnote{Available online: \url{http://www.timmitchell.com/software/GRANSO/}. Our experiment is based on version 1.6. } that employs BFGS for solving constrained nonsmooth problems based on sequential quadratic optimization~\citep{CurtisEtAl2017BFGS}. For a large dictionary of dimension $n=400$ and sample complexity $m=10n^2$ (i.e., $1.6 \times 10^{6}$), GRANSO successfully identifies a basis after $1500$ iterations with CPU time $4$ hours on a two-socket Intel Xeon E5-2640v4 processor (10-core Broadwell, 2.40 GHz)---this is approximately 10$\times$ faster than the Riemannian subgradient descent method, showing the potential of quasi-Newton type methods for solving large-scale problems.

\subsection{Experiments with images}
\label{section:image_exp}
\begin{figure}[!htbp]
\centering
\includegraphics[width=0.38\textwidth]{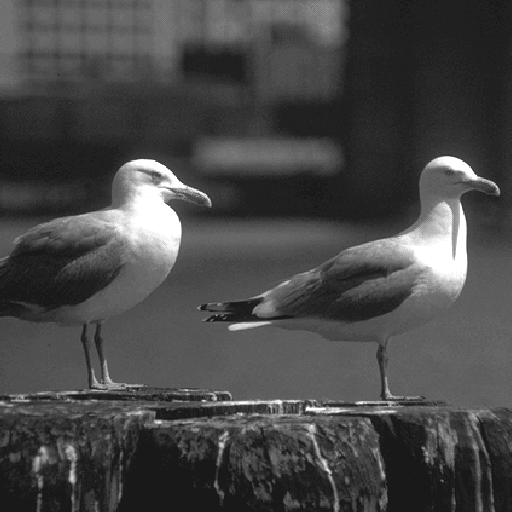}
\hspace{0.2in}
\includegraphics[width=0.38\textwidth]{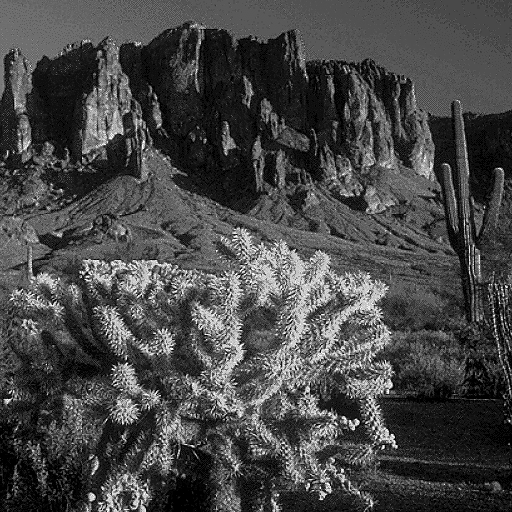} \\
\vspace{0.05in}
\includegraphics[width=0.38\textwidth]{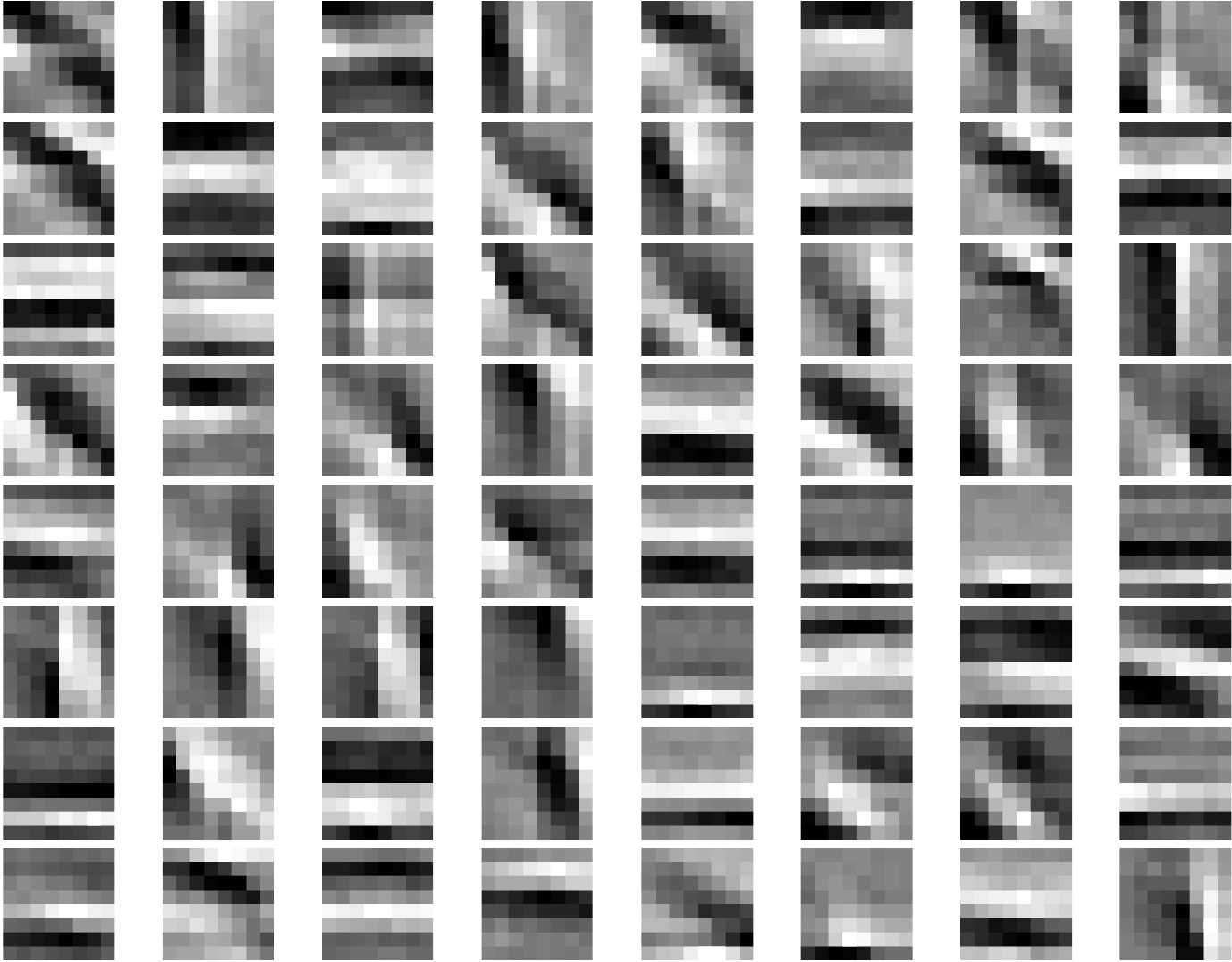}
\hspace{0.2in}
\includegraphics[width=0.38\textwidth]{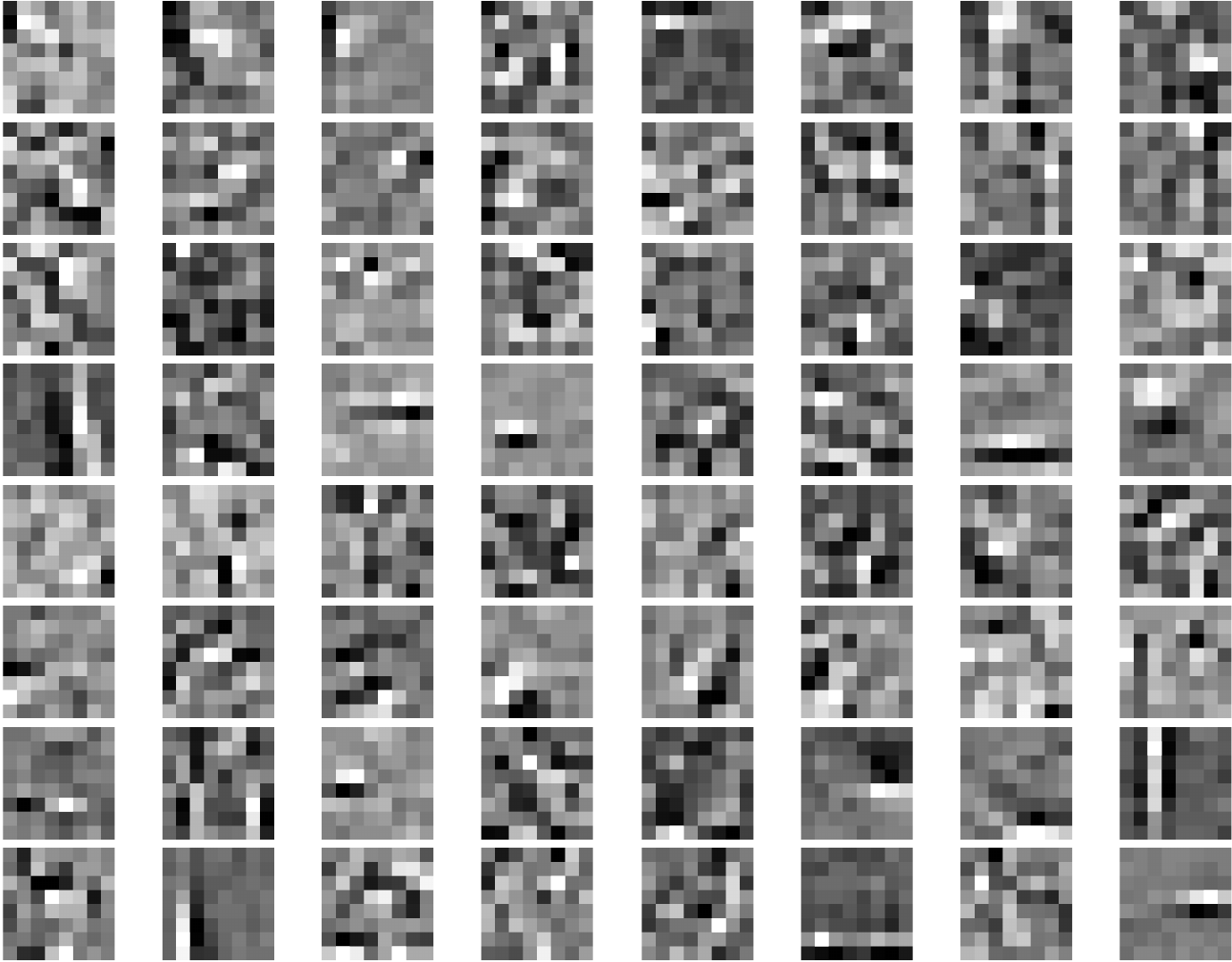}\\
\includegraphics[width=0.38\textwidth]{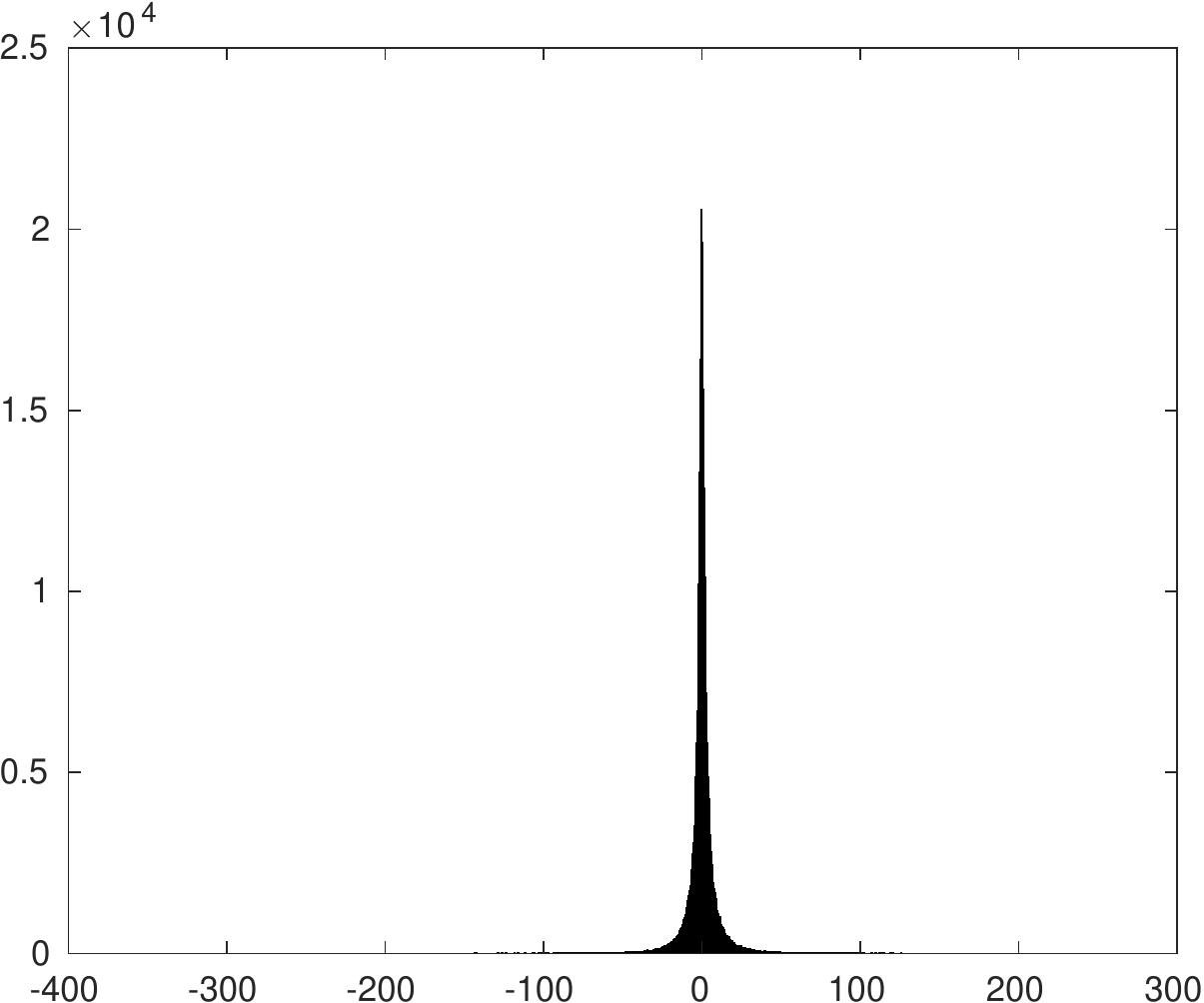}
\hspace{0.2in}
\includegraphics[width=0.38\textwidth]{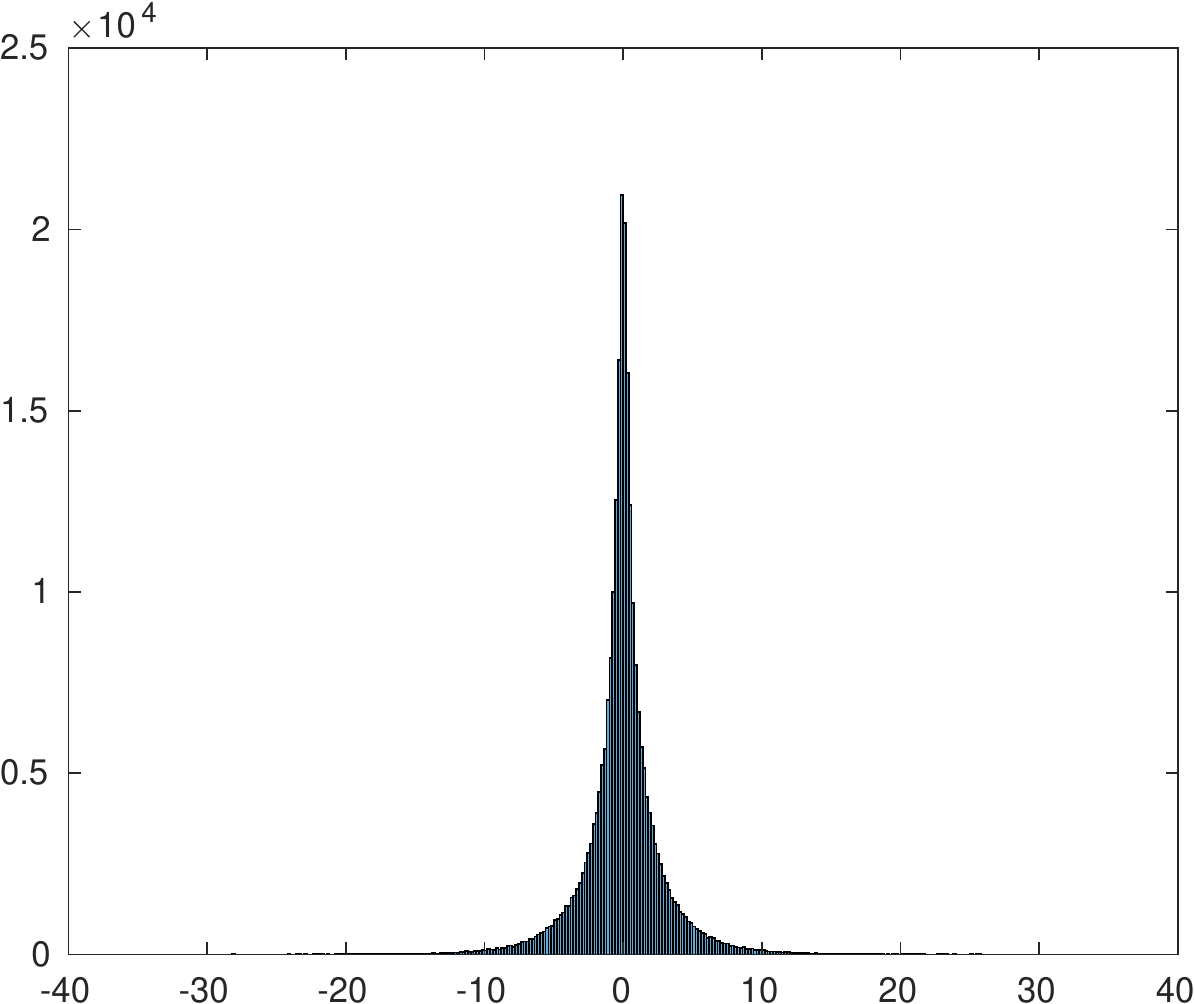}\\
\includegraphics[width=0.38\textwidth]{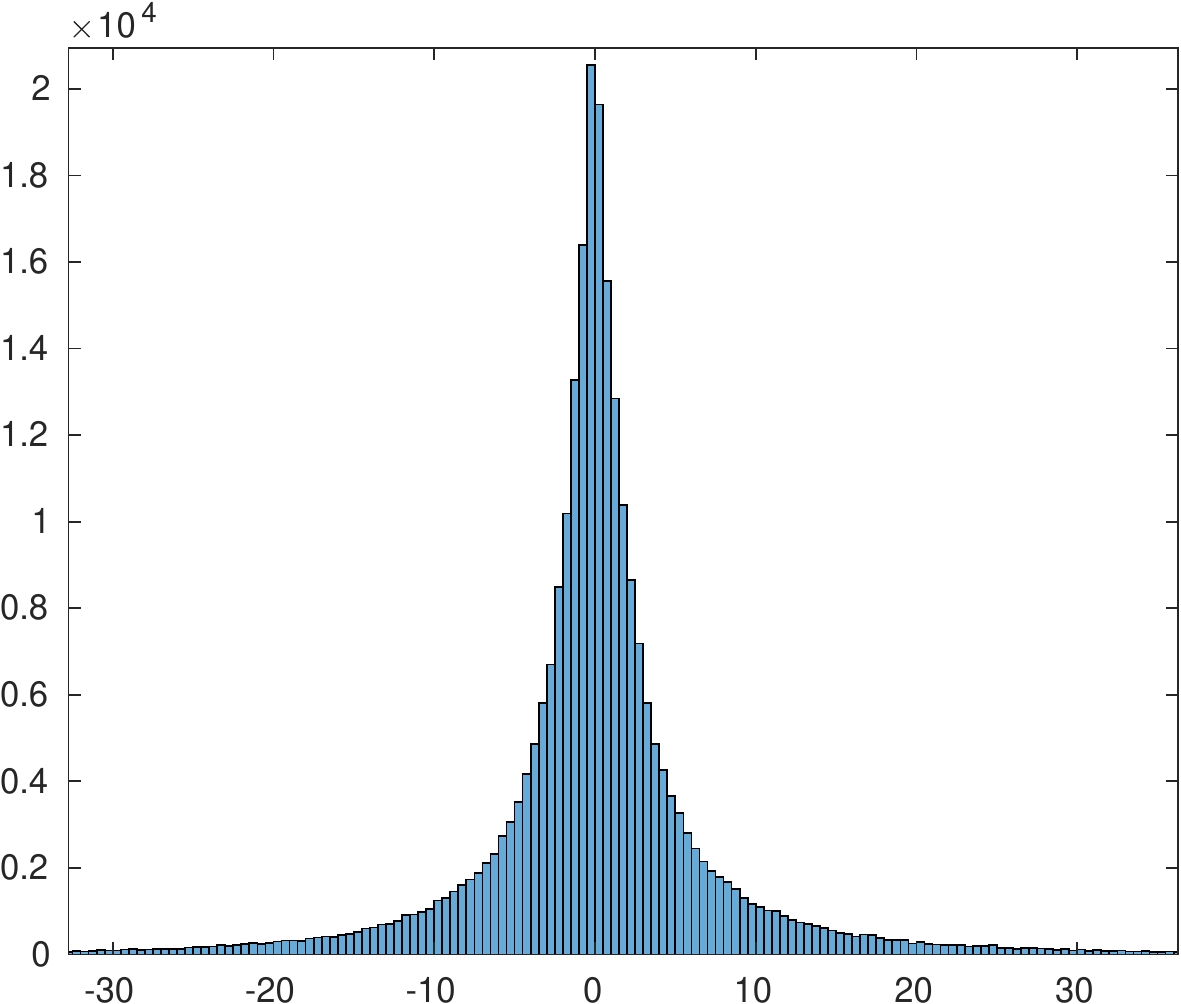}
\hspace{0.2in}
\includegraphics[width=0.38\textwidth]{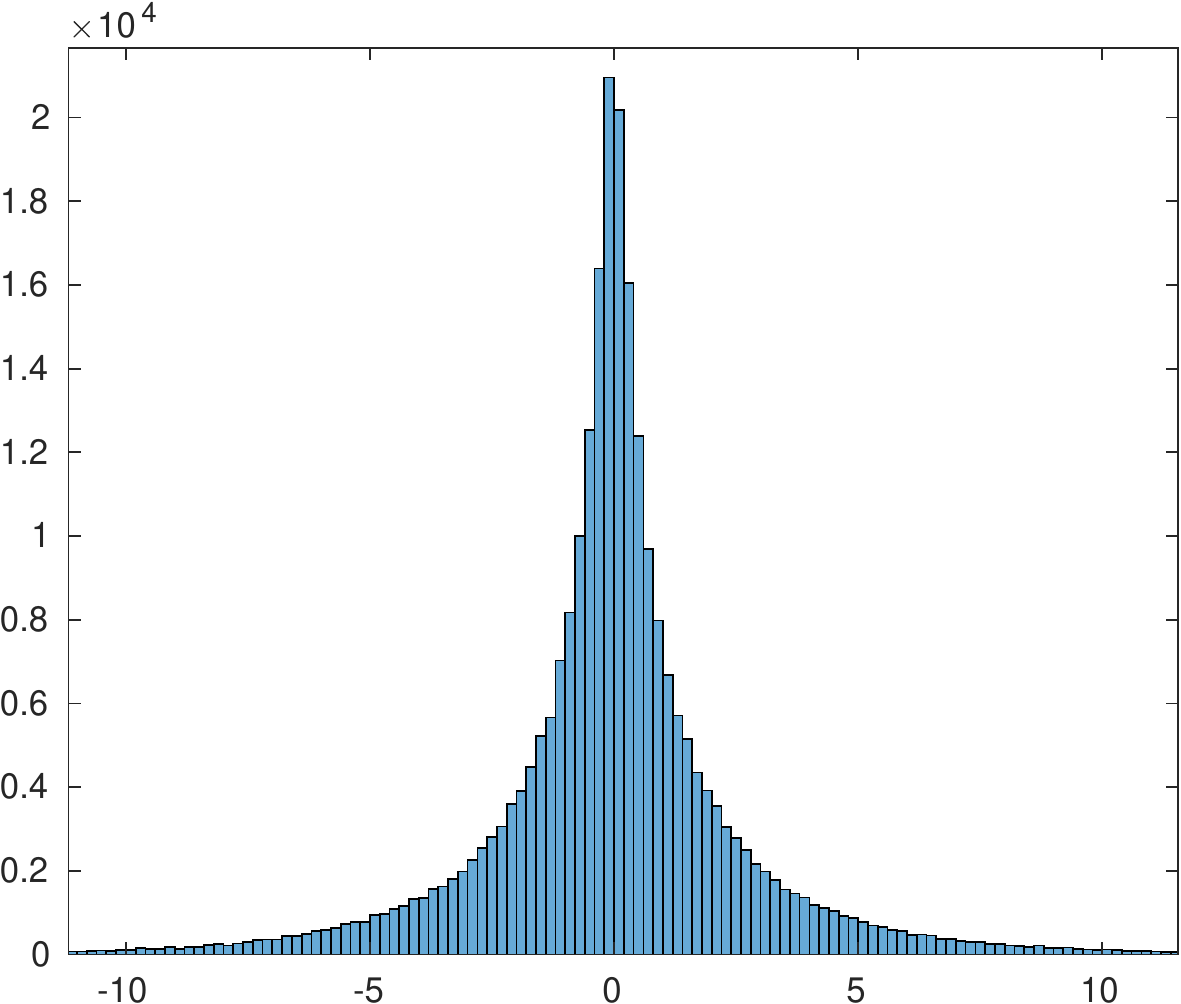}
\caption{Results on two images.  First row: the images; Second row: learned dictionaries; Third row: histograms of the representation coefficients; Fourth row: zoomed-in versions of the histograms around zero.}
\label{fig:image_exp}
\end{figure}

To experiment with images, we follow a typical setup for dictionary learning as used in image processing~\citep{MairalEtAl2014Sparse}. We focus on testing if complete (i.e., square and invertible) dictionaries are reasonable sparsification bases for real images, instead on any particular image processing or vision tasks.

\paragraph{Setup}
Two natural images are picked for this experiment, as shown in the first row of~\cref{fig:image_exp}, each of resolution $512 \times 512$. Each image is divided into $8 \times 8$ non-overlapping blocks, resulting in $64 \times 64 = 4096$ blocks. The blocks are then vectorized, and  stacked columnwise into a data matrix $\mb Y \in \R^{64 \times 4096}$. We precondition the data to obtain
\begin{align}
\ol{\mb Y} = \paren{\mb Y \mb Y^\top}^{-1/2} \mb Y,
\end{align}
so that nonvanishing singular values of $\ol{\mb Y}$ are identically one. We then solve formulation (\ref{problem:nonsmooth-dl}) $\mathrm{round}\paren{5n \log n}$ times with $n = 64$ using the BFGS solver based on GRANSO, obtaining $\mathrm{round}\paren{5n \log n}$ vectors. Negative equivalent copies are pruned and vectors with large correlations with other remaining vectors are sequentially removed until only $64$ vectors are left. This forms the final complete dictionary.

\paragraph{Results}
The learned complete dictionaries for the two test images are displayed in the second row of~\cref{fig:image_exp}. Visually, the dictionaries seem reasonably adaptive to the image contents: for the left image with prevalent sharp edges, the learned dictionary consists of almost exclusively oriented sharp corners and edges, while for the right image with blurred textures and occasional sharp features, the learned dictionary does seem to be composed of the two kinds of elements. Let the learned dictionary be $\mb A$. We estimate the representation coefficients as $\mb A^{-1} \ol{\mb Y}$. The third row of~\cref{fig:image_exp}  contains the histograms of the coefficients. For both images,  the coefficients are sharply concentrated around zero (see also the fourth row for zoomed versions of the portions around zero), and the distribution resembles a typical zero-centered Laplace distribution---which is a good indication of sparsity. Quantitatively, we calculate the mean sparsity level of the coefficient vectors (i.e., columns of $\mb A^{-1} \mb Y$) by the metric $\norm{\cdot}{1}/\norm{\cdot}{2}$:  for a vector $\mb v \in \R^n$, $\norm{\mb v}{1}/\norm{\mb v}{2}$ ranges from $1$ (when $\mb v$ is one-sparse) to $\sqrt{n}$ (when $\mb v$ is fully dense with elements of equal magnitudes), which serves as a good measure of sparsity level for $\mb v$. For our two images, the sparsity levels by the norm-ratio metric are $5.9135$ and $6.4339$, respectively,  while the fully dense extreme would have a value $\sqrt{64} = 8$, suggesting the complete dictionaries we learned are reasonable sparsification bases for the two natural images, respectively. 

\subsection{Reproducible research}
Codes to reproduce all the experimental results are available online:
\begin{quote}
\centering
\url{https://github.com/sunju/ODL_L1}
\end{quote}

\section{Discussion}
\label{section:discuss}
We close the paper by identifying a number of potential future directions.

\paragraph{Improving the sample complexity}
There is an $O(n^2)$ sample complexity gap between what we established in~\cref{theorem:main} and what we observed in the simulations. However, the main geometric result we obtained in~\cref{theorem:emp-grad-inward}, which dictates the final sample complexity, seem tight, as the order of the lower bounds are achievable for points that lie near the $\mc S_{\zeta_0}^{(i+)}$ set boundaries where the gradients are weak. Thus, to improve the sample complexity based on the current algorithm, one possibility is dispensing with the algorithm-independent landscape characterization and performing an algorithm-specific analysis. On the other hand, near the set boundaries, the inward directional curvatures are strong (\cref{prop:asym_directional_curv}) despite the weak gradients. This suggests that the second-order curvature information may come to rescue in saving the sample complexity. This calls for algorithms that can appropriately exploit the second-order curvature information.

\paragraph{Learning complete and overcomplete dictionaries}
We cannot directly generalize the result of learning orthogonal dictionaries to that of learning complete ones via Lipschitz continuity arguments as in~\citet{SunEtAl2015Complete}---most of the quantities needed in the generalization are not Lipschitz for our nonsmooth formulation. It is possible, however, to get around the continuity argument and perform a direct analysis. For learning overcomplete dictionaries in the linear sparsity regime, to date the only known provable results are based on the SOS method~\citep{BarakEtAl2015Dictionary,MaEtAl2016Polynomial,SchrammSteurer2017Fast}. The line of work~\citep{SpielmanEtAl2012Exact,SunEtAl2015Complete} that we follow here breaks the intrinsic bilinearity of dictionary learning problems (i.e., $\mb Y$ is bilinear in $\paren{\mb A, \mb X}$ when $\mb Y = \mb A \mb X$) by recovering one factor first based on structural assumptions on the data. The same strategy may also be applied to the overcomplete case. Nonetheless, directly solving the problem in the product space $\paren{\mb A, \mb X}$ would close the prolonged theory-practice gap for dictionary learning, and bears considerable ramifications on solving other bilinear problems.

\paragraph{Learning nonlinear transformations}
As  we mentioned in the introduction, one can also pose dictionary learning in an analysis setting, in which a linear transformation on given data is learned to reveal hidden structures. In this sense, we effectively learn an orthogonal transformation that sparsifies the given data. Following the success, it is natural to ask how to learn overcomplete transformations, convolutional transformations (as in signal processing and computer vision), and general nonlinear transformations to expose the intrinsic data structures---this is exactly the problem of representation learning and has been predominantly tackled by training deep networks~\citep{LeCunEtAl2015Deep} recently. Even for provable and efficient training of ReLU networks with two unknown weight layers, the current understanding is still partial,\footnote{Most of the existing results actually pertain only to the cases with one unknown weight layer, not two. } as discussed in~\cref{sec:related_work}.


\paragraph{Solving practical problems with nonsmoothness}
Besides applications that are directly formulated as nonsmooth, nonconvex problems as sampled in~\cref{sec:related_work}, there are cases where smooth formulations are taken instead of natural nonsmooth formulations: \citet{SunEtAl2015Complete} is certainly one example that has motivated the current work; others include (multi-channel) sparse blind deconvolution~\citep{ZhangEtAl2018Structured,LiBresler2018Global}. For the latter two, maximization of $\norm{\cdot}{4}^4$ is used as a proxy for minimization of $\norm{\cdot}{1}$ toward promoting sparsity. The $\norm{\cdot}{4}^4$ induces polynomials and hence nice smoothness properties, but also entails heavy-tailed distributions in its low-order derivatives if the data contain randomness. For all the applications covered above, the tools we gather here around nonsmooth analysis, set-valued analysis, and random set theory provide a solid and convenient framework for theoretical understanding and practical optimization.

\paragraph{Optimizing nonsmooth, nonconvex functions}
General iterative methods for nonsmooth problems discussed in~\cref{sec:related_work} are only guaranteed to converge to stationary points. By contrast, the recent progress in smooth nonconvex optimization has demonstrated that it is possible to separate the efforts of function landscape characterization and algorithm development~\citep{SunEtAl2015When,GeEtAl2015Escaping,ChiEtAl2018Nonconvex}, so that on a class of benignly structured problems, global optimizers can be found by virtually any reasonably iterative methods without special initialization. In this paper, we do not obtain a strong result of this form, as we have empirically found suspicious local minimizers near the population saddles, \footnote{We observed a similar phenomenon in our unpublished work on a nonsmooth formulation of the generalized phase retrieval problem. } and hence we only provide an ``almost global" landscape characterization. It is natural to wonder to what extent we can attain the nice separation similar to the smooth cases for nonsmooth functions, and what ``benign structures" are in this context for practical problems. The recent works~\citep{DavisEtAl2018Subgradient,DavisDrusvyatskiy2018Uniform,DuchiRuan2017Solving,DavisEtAl2017nonsmooth,LiEtAl2018Nonconvexa} represent pioneering developments in this line.

\subsection*{Acknowledgement}
We thank Emmanuel Cand{\`e}s for helpful discussion. We thank Ruixue Wen for pointing out a minor bug in our initial argument, Zhihui Zhu and R{\' e}mi Gribonval for pointing us to a couple of missing references. JS thanks Yang Wang (HKUST) for motivating him to study nonsmooth nonconvex optimization problems while visiting ICERM in 2017.

\appendix
\section{Technical tools}
\label{sec:technical_tools}

\subsection{Nonsmooth analysis on Riemannian manifolds}
We first sketch the main results in $\R^d$, and then discuss the extension to embedded Riemannian manifolds of $\R^d$. Our treatment loosely follows~\citet[Chapter 2]{Clarke1990Optimization} and~\cite{HosseiniPouryayevali2011Generalized, HosseiniUschmajew2017Riemannian}, and presents a minimal background needed for the current paper.

First consider functions $f: X \mapsto \R$ for $X \subset \R^d$. We focus on locally Lipschitz functions, which, as the name suggests, are functions that are Lipschitz only \emph{locally}. Simple examples are continuous convex (e.g., $\norm{\cdot}{1}$ and $\mb q \mapsto \norm{\mb q^\top \mb Y}{1}$) and concave functions (e.g., $-\norm{\cdot}{1}$), and continuously differentiable functions, as well as sums, products, quotients, and compositions of locally Lipschitz functions.

We now introduce the Clarke's subdifferential for locally Lipschitz functions.
\begin{definition}[Clarke directional derivatives for locally Lipschitz functions]
  Let $f: \R^d \to \R$ be locally Lipschitz at $\mb x$. The Clarke directional derivative of function $f$ at $\mb x$ in the direction $\mb v$, $D^c_{\mb v}$, is defined as
  \begin{align}
    D^c_{\mb v} f\paren{\mb x} \doteq \limsup_{t \downto 0, \; \mb y \to \mb x} \frac{f\paren{\mb y + t \mb v} - f\paren{\mb y}}{t}.
  \end{align}
\end{definition}

This generalizes the notion of right directional derivative, defined as
\begin{align}
  D_{\mb v} f\paren{\mb x} \doteq \lim_{t \downto 0} \frac{f\paren{\mb x + t \mb v} - f\paren{\mb x}}{t}.
\end{align}
For $f$ locally Lipschitz at $\mb x$, the right directional derivative may not be well defined, while the Clarke directional derivative always is. The cases when they are identical prove particularly relevant for applications.
\begin{definition}[(Subdifferential) regularity]
  A function $f: \R^d \to \R$ is said to be regular at point $\mb x$ if: i) $D_{\mb v}f\paren{\mb x}$ exists for all $\mb v$; and ii) $D^c_{\mb v}f\paren{\mb x} = D_{\mb v}f\paren{\mb x}$ for all $\mb v$.
\end{definition}
If $f$ is locally Lipschitz at $\mb x$ and convex, or continuously differentiable at $\mb x$, then it is regular at $\mb x$. Moreover,  if $f$ is a nonnegative linear combination of functions that are regular at $\mb x$, then $f$ is also regular at $\mb x$.
\begin{definition}[Clarke subdifferential, derivative-based definition]  \label{def:clarke_subd_euclidean_dd}
Let $f: \R^d \to \R$ be locally Lipschitz at $\mb x$ so that Clarke directional derivative $D^c_{\mb v} f\paren{\mb x}$ exists for all $\mb v$. The Clarke subdifferential of $f$ at $\mb x$ is defined as
\begin{align}
  \partial f\paren{\mb x} \doteq \set{\mb y \in \R^d: \; \innerprod{\mb y}{\mb v} \le D_{\mb v}^c f\paren{\mb v}\; \forall\; \mb v}.
\end{align}
\end{definition}
One important implication of this definition is that
\begin{align}
  D_{\mb v}^c f\paren{\mb x} = \sup \innerprod{\partial f\paren{\mb x}}{\mb v}  \; \; \forall \; \mb v.
\end{align}
If $f$ is Lipschitz over an open set $U \subset \R^d$ , it is differentiable almost everywhere in $U$, due to the celebrated Rademacher's theorem~\citep{Federer1996Geometric}. We then have the following well-defined, equivalent definition of Clarke subdifferential.
\begin{definition}[Clarke subdifferential, sequential definition]  \label{def:clarke_subd_euclidean}
For $f: X \mapsto \R$ locally Lipschitz and any $\mb x \in X$, the Clarke's subdifferential is defined as
\begin{align}
  \partial f\paren{\mb x} \doteq \mathrm{conv} \set{\lim \nabla f\paren{\mb x_k}: \; \mb x_k \to \mb x, f\; \text{differential at}\; \mb x_k}.
\end{align}
In particular, $\partial f\paren{\mb x}$ is a nonempty, convex, compact set.
\end{definition}
When $f$ is continuously differentiable at $\mb x$, $\partial f\paren{x}$ reduces to the singleton $\set{\nabla f\paren{\mb x}}$. When $f$ is convex, $\partial f$ coincides with the usual subdifferential in convex analysis~\citep{Hiriart-UrrutyLemarechal2001Fundamentals}. Most natural calculus rules hold with subdifferential inclusion, and they can often be strengthened to equalities under the subdifferential regularity condition. Here we provide a sample of these results.
\begin{theorem}[Subdifferential calculus]
Assume $f, f_1, \dots, f_K$ are functions mapping from $\R^d$ to $\R$ and locally Lipschitz at $\mb x$. We have
\begin{enumerate}
  \item $\alpha f$ is locally Lipschitz at $\mb x$ and $\partial \paren{\alpha f}\paren{\mb x} = \alpha \partial f\paren{\mb x}$ for all $\alpha \in \R$;
  \item $g \paren{\mb x} \doteq \sum_{i=1}^K \lambda_i f_i\paren{\mb x}$ is locally Lipschitz at $\mb x$ for any $\lambda_i \in \R$ and $\partial g\paren{\mb x} \subset \sum_{i=1}^K \lambda_i \partial f_i\paren{\mb x}$. If in addition $\lambda_i \ge 0$ for all $i \in [K]$ and all $f_i$'s are regular at $\mb x$, $g$ is regular at $\mb x$ and $\partial g\paren{\mb x} = \sum_{i=1}^K \lambda_i \partial f_i\paren{\mb x}$;
  \item The function $f_1 f_2$ is locally Lipschitz at $\mb x$ and $\partial \paren{f_1 f_2}\paren{\mb x} \subset f_1\paren{\mb x} \partial f_2\paren{\mb x} + f_2\paren{\mb x} \partial f_1\paren{\mb x}$. If in addition $f_1\paren{\mb x}, f_2\paren{\mb x} \ge 0$ and $f_1, f_2$ are both regular at $\mb x$, $f_1f_2$ are regular at $\mb x$ and $\partial \paren{f_1 f_2}\paren{\mb x} = f_1\paren{\mb x} \partial f_2\paren{\mb x} + f_2\paren{\mb x} \partial f_1\paren{\mb x}$;
  \item The function $f_1/f_2$ (assuming $f_2 \ne 0$) is locally Lipschitz at $\mb x$ and
  \begin{align*}
    \partial \paren{\frac{f_1}{f_2}}\paren{\mb x} \subset \frac{f_2\paren{\mb x} \partial f_1\paren{\mb x} - f_1\paren{\mb x} \partial f_2\paren{\mb x}}{f_2^2\paren{\mb x}}.
  \end{align*}
  If in addition $f_1 \paren{\mb x} \ge 0$ and $f_2\paren{\mb x} > 0$ and both $f_1$ and $-f_2$ are regular at $\mb x$, then $f_1/f_2$ is regular at $\mb x$ and the above inclusion becomes an equality.
  \item The function $g\paren{\mb x} \doteq \max\set{f_i\paren{\mb x}: i \in [K]}$ is locally Lipschitz at $\mb x$ and
  \begin{align*}
    \partial g\paren{\mb x} \subset \conv \set{\partial f_i\paren{\mb x}: \; f_i \; \text{active at}\; \mb x, i.e., f_i\paren{\mb x} = f\paren{\mb x}}.
  \end{align*}
  If in addition all active $f_i$'s are regular at $\mb x$, $f$ is regular at $\mb x$ and the above inclusion becomes an inequality.
\end{enumerate}
\end{theorem}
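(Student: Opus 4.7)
The plan is to treat all five calculus rules through the support-function characterization implied by Definition~\ref{def:clarke_subd_euclidean_dd}, namely $D_{\mb v}^c f(\mb x) = \sup \innerprod{\partial f(\mb x)}{\mb v}$ for every direction $\mb v$. Since each Clarke subdifferential is nonempty, convex, and compact, it is determined by its support function; hence an inclusion $\partial g(\mb x) \subset S$ for a convex compact $S$ is equivalent to the pointwise inequality $D_{\mb v}^c g(\mb x) \le h_S(\mb v)$ for every $\mb v$, and equality holds whenever both inequalities are verified. Under regularity, the $\limsup$ in $D_{\mb v}^c$ collapses to the one-sided limit $D_{\mb v}$, which is additive and positively homogeneous, and this is the mechanism that upgrades inclusions to equalities.

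For item~1, both directions follow from the identities $D_{\mb v}^c(\alpha f)(\mb x)=\alpha D_{\mb v}^c f(\mb x)$ when $\alpha\ge 0$ and $D_{\mb v}^c(-f)(\mb x)=D_{-\mb v}^c f(\mb x)$, read off directly from the $\limsup$ definition; local Lipschitzness of $\alpha f$ is immediate. For item~2, subadditivity of $\limsup$ gives
\[
D_{\mb v}^c\Big(\sum_i \lambda_i f_i\Big)(\mb x) \le \sum_i \lambda_i D_{\mb v}^c f_i(\mb x), \qquad \lambda_i\ge 0,
\]
which translates to the claimed inclusion via support functions. When each $f_i$ is regular, both sides become the genuine right directional derivative of the sum, which is manifestly additive; this forces equality and regularity of $g$.

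For items~3 and~4, I would apply the Leibniz expansion
\[
\frac{(f_1f_2)(\mb y+t\mb v)-(f_1f_2)(\mb y)}{t} = f_1(\mb y+t\mb v)\,\frac{f_2(\mb y+t\mb v)-f_2(\mb y)}{t} + f_2(\mb y)\,\frac{f_1(\mb y+t\mb v)-f_1(\mb y)}{t}
\]
and pass to the $\limsup$ as $(t,\mb y)\to(0^+,\mb x)$; continuity of $f_1,f_2$ (from local Lipschitzness) together with the sign assumption $f_i(\mb x)\ge 0$ lets the factors be pulled out, yielding $D_{\mb v}^c(f_1f_2)(\mb x) \le f_1(\mb x) D_{\mb v}^c f_2(\mb x) + f_2(\mb x) D_{\mb v}^c f_1(\mb x)$, hence the inclusion; regularity promotes this to an equality as before. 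The quotient rule would be obtained by first handling $1/f_2$ via the chain-like computation $\partial(1/f_2)(\mb x)\subset -\partial f_2(\mb x)/f_2^2(\mb x)$ (which is a special case of the product rule applied to $f_2\cdot(1/f_2)=1$, combined with the fact that $1/f_2$ is differentiable whenever $f_2$ is), and then composing with the product rule applied to $f_1\cdot(1/f_2)$.

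For item~5, I would exploit that for $\mb y$ near $\mb x$ the set of active indices satisfies $A(\mb y)\subset A(\mb x)$ by continuity, which gives the pointwise estimate $g(\mb y+t\mb v)-g(\mb y) \le \max_{i\in A(\mb x)}[f_i(\mb y+t\mb v)-f_i(\mb y)]$; taking $\limsup$ and bounding the max of limsups by a sup of sums then yields $D_{\mb v}^c g(\mb x) \le \max_{i\in A(\mb x)} D_{\mb v}^c f_i(\mb x) = h_{\conv \bigcup_{i\in A(\mb x)} \partial f_i(\mb x)}(\mb v)$, which is the desired inclusion; regularity of all active $f_i$'s converts this into the equality via the same mechanism as above. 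The main obstacle throughout is the regularity-preservation portion: the $\limsup$'s on both sides must collapse to the same one-sided limits \emph{along the same perturbation sequence} $(\mb y_k,t_k)$, and local Lipschitz continuity of each $f_i$ is the crucial tool that allows one to interchange the $\limsup$ with the algebraic operations (sum, product, quotient, max) without breaking the equality.
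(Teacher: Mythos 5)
This theorem is quoted in the paper as standard Clarke calculus (the appendix cites \citet{Clarke1990Optimization} and related texts) and is not proved there, so the only meaningful comparison is with the textbook arguments; your support-function strategy --- using $D^c_{\mb v}f(\mb x)=h_{\partial f(\mb x)}(\mb v)$, proving inequalities of Clarke directional derivatives, and letting regularity collapse $D^c_{\mb v}$ to $D_{\mb v}$ to upgrade inclusions to equalities --- is exactly that standard route, and your treatments of items 1, 2 and 5 are essentially correct (for item 2 note your displayed inequality only covers $\lambda_i\ge 0$; general signs follow by first absorbing $\lambda_i$ via item 1).

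There are, however, two concrete soft spots. First, in item 3 the inclusion $\partial(f_1f_2)(\mb x)\subset f_1(\mb x)\partial f_2(\mb x)+f_2(\mb x)\partial f_1(\mb x)$ is asserted \emph{without} sign hypotheses, but your derivation invokes $f_i(\mb x)\ge 0$ to ``pull the factors out'' of the $\limsup$; as written it does not prove the stated inclusion. The fix is a case analysis on the sign of the limiting factor: the difference quotients are bounded by local Lipschitz constants, so if $f_1(\mb x)>0$ the $\limsup$ of the first term is $f_1(\mb x)D^c_{\mb v}f_2(\mb x)$, if $f_1(\mb x)<0$ it equals $f_1(\mb x)\cdot\liminf$ of the quotient $=f_1(\mb x)\inf\innerprod{\partial f_2(\mb x)}{\mb v}$, and if $f_1(\mb x)=0$ it vanishes; in every case the term is bounded by $h_{f_1(\mb x)\partial f_2(\mb x)}(\mb v)$, so the inclusion survives, but this step must be spelled out. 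Second, in item 4 the claim that $\partial(1/f_2)(\mb x)\subset-\partial f_2(\mb x)/f_2^2(\mb x)$ is ``a special case of the product rule applied to $f_2\cdot(1/f_2)=1$'' is not valid: applying the product-rule inclusion to that identity only yields $\mb 0\in f_2(\mb x)\,\partial(1/f_2)(\mb x)+f_2(\mb x)^{-1}\partial f_2(\mb x)$, i.e.\ that the Minkowski sum contains the origin, which is far weaker than the desired containment of one set in another. The correct argument is the one your parenthetical hints at: near $\mb x$, $1/f_2$ is differentiable exactly where $f_2$ is, so the gradient-limit definition (\cref{def:clarke_subd_euclidean}) gives $\partial(1/f_2)(\mb x)=\conv\set{\lim -\nabla f_2(\mb x_k)/f_2^2(\mb x_k)}\subset -\partial f_2(\mb x)/f_2^2(\mb x)$; equivalently one may invoke the chain rule with the smooth outer map $t\mapsto 1/t$ (case 2 of~\cref{thm:subdiff_chain_rule}). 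With that repaired, composing with the product rule applied to $f_1\cdot(1/f_2)$ gives the quotient inclusion, and for the equality/regularity clause you still owe the observation that $1/f_2$ is regular at $\mb x$ when $-f_2$ is regular and $f_2(\mb x)>0$ (write $1/f_2=\phi\circ(-f_2)$ with $\phi(s)=-1/s$, $\phi'>0$, and use the mean value theorem on $\phi$ to pass the $\limsup$ through), a step your sketch does not address.
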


\begin{theorem}[Chain rule, Theorems 2.3.9 and 2.3.10 of~\citep{Clarke1990Optimization}]\label{thm:subdiff_chain_rule}
Consider $h: \R^d \mapsto \R^n$ with component functions $h_1, \dots, h_n$ and $g: \R^n \mapsto \R$, and assume each $h_i$ for all $i \in [n]$ is locally Lipschitz at $\mb x \in \R^d$ and $g$ is locally Lipschitz at $h\paren{\mb x}$. Then, the composition $f = g \circ h$ is locally Lipschitz at $\mb x$, and
\begin{align}
  \partial f\paren{\mb x} \subset \conv \set{\sum_i^n \alpha_i \mb \xi_i: \; \mb \xi_i \in \partial h_i\paren{\mb x}, \alpha \in \partial g\paren{h\paren{\mb x}}}.
\end{align}
The inclusion becomes an equality under additional assumptions:
\begin{enumerate}
  \item If each $h_i$ is regular at $\mb x$ and $g$ is regular at $h\paren{\mb x}$, and $\alpha \ge 0$ for all $\alpha \in \partial g\paren{h\paren{\mb x}}$,  $f$ is regular at $\mb x$ and
  \begin{align}
      \partial f\paren{\mb x} = \conv \set{\sum_i^n \alpha_i \mb \xi_i: \; \mb \xi_i \in \partial h_i\paren{\mb x}, \alpha \in \partial g\paren{h\paren{\mb x}}}.
  \end{align}
  \item If $g$ is continuously differentiable at $h\paren{\mb x}$ and $n=1$,
  \begin{align}
    \partial f\paren{\mb x} = g'\paren{h\paren{\mb x}} \partial h\paren{\mb x}.
  \end{align}
  \item If $\mb g$ is regular at $h\paren{\mb x}$ and $h$ is continuously differentiable at $\mb x$, $f$ is regular at $\mb x$ and
  \begin{align}
    \partial f\paren{\mb x} = \nabla h\paren{\mb x}^\top \partial g\paren{h\paren{\mb x}}.
  \end{align}
  \item If $\mb h$ is continuously differentiable at $\mb x$, and $\nabla h\paren{\mb x}$ is onto (locally),
  \begin{align}
    \partial f\paren{\mb x} = \nabla h\paren{\mb x}^\top \partial g\paren{h\paren{\mb x}}.
  \end{align}
\end{enumerate}
\end{theorem}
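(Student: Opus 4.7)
The plan is to first establish that $f = g \circ h$ is locally Lipschitz near $\mb x$, then prove the general inclusion via Clarke directional derivatives, and finally treat the four equality cases by using the extra regularity or smoothness hypotheses to sharpen the inclusion. Local Lipschitzness is routine: stacking the $h_i$ into $\mb h: \R^d \to \R^n$, each component being locally Lipschitz at $\mb x$ yields a local constant $L_h$ for $\mb h$; combined with local Lipschitzness of $g$ near $\mb h(\mb x)$ with constant $L_g$, one gets $|f(\mb y) - f(\mb y')| \le L_g L_h \|\mb y - \mb y'\|$ for $\mb y, \mb y'$ near $\mb x$. Consequently $\partial f(\mb x)$ is well-defined via~\cref{def:clarke_subd_euclidean}, and its support function coincides with $D^c_{\mb v} f(\mb x)$ by~\cref{def:clarke_subd_euclidean_dd}.

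For the inclusion, I would work with the Clarke directional derivative characterization rather than the sequential definition, because controlling which points $\mb x_k$ have both $\mb h$ differentiable at $\mb x_k$ and $g$ differentiable at $\mb h(\mb x_k)$ requires pulling back null sets through $\mb h$, which a generic locally Lipschitz map need not do. For any direction $\mb v$, write the increment as $f(\mb y + t \mb v) - f(\mb y) = g(\mb h(\mb y + t \mb v)) - g(\mb h(\mb y))$ and split it into a piece that, for $\mb y$ at which $\mb h$ is differentiable with Jacobian $J(\mb y)$, is approximately $g(\mb h(\mb y) + t J(\mb y) \mb v) - g(\mb h(\mb y))$ plus an error $o(t)$. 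The lim sup of the first piece, as $\mb y \to \mb x$ and $t \downto 0$, is bounded by $D^c_{J(\mb y)\mb v} g(\mb h(\mb y))$, which in turn is $\sup_{\alpha \in \partial g(\mb h(\mb x))} \alpha^\top J(\mb y) \mb v$. Using upper semicontinuity of the Clarke subdifferential in its argument and $J(\mb y) \to J$ along gradient-limit subsequences (whose rows lie in $\partial h_i(\mb x)$ by~\cref{def:clarke_subd_euclidean}), one obtains
\begin{equation*}
  D^c_{\mb v} f(\mb x) \le \sup \set{\sum_i \alpha_i \innerprod{\mb \xi_i}{\mb v} : \mb \xi_i \in \partial h_i(\mb x),\, \alpha \in \partial g(\mb h(\mb x))}.
\end{equation*}
Since the right-hand side is the support function of the convex hull appearing in the theorem statement, and $\partial f(\mb x)$ is characterized by its support function $D^c_{\mb v} f(\mb x)$, the inclusion follows.

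For the equality cases I would handle each hypothesis separately. In case (1), the additional assumption that each $h_i$ and $g$ are regular plus $\alpha \ge 0$ lets one show the reverse inequality $D_{\mb v} f(\mb x) \ge \sup\{\sum_i \alpha_i \innerprod{\mb \xi_i}{\mb v}\}$ by directly computing the right directional derivative: regularity of $\mb h$ gives $D_{\mb v} h_i(\mb x) = \sup_{\mb \xi_i \in \partial h_i(\mb x)} \innerprod{\mb \xi_i}{\mb v}$ and regularity plus $\alpha \ge 0$ lets one combine these via the regular chain rule for directional derivatives. Cases (2) and (3) are direct specializations: if $g$ is $C^1$ and $n=1$ the RHS collapses to $g'(h(\mb x)) \partial h(\mb x)$, and the equality follows from the scalar-multiplication rule; if $\mb h$ is $C^1$ and $g$ is regular, then $\mb h$ contributes no set-valuedness and one recovers $\nabla \mb h(\mb x)^\top \partial g(\mb h(\mb x))$ by applying regularity of $g$ to control $D_{\mb v} f$. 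Case (4) drops regularity of $g$ but assumes surjective $\nabla \mb h(\mb x)$; here one uses the open mapping / local surjection property to realize any tangent direction $\mb w \in \R^n$ at $\mb h(\mb x)$ as $\nabla \mb h(\mb x) \mb v$ for some $\mb v$, allowing $D^c_{\mb v} f(\mb x) = D^c_{\nabla \mb h(\mb x) \mb v} g(\mb h(\mb x))$ to be matched exactly.

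The main obstacle is the measure-theoretic subtlety in the general inclusion, which is precisely why I prefer the directional-derivative route over the sequential-limit route: controlling the joint event $\{\mb h \text{ differentiable at } \mb x_k,\ g \text{ differentiable at } \mb h(\mb x_k),\ f \text{ differentiable at } \mb x_k\}$ for a.e.\ $\mb x_k$ near $\mb x$ is nontrivial without extra assumptions on $\mb h$, and the directional approach sidesteps this by operating with sup-limits of difference quotients rather than with genuine gradients. A second delicate point, relevant in case (4), is verifying that the surjectivity of $\nabla \mb h(\mb x)$ is enough to turn the inclusion into equality despite non-regular $g$; this uses a stability argument for the Clarke directional derivative under the Lyusternik-type local submersion structure near $\mb x$.
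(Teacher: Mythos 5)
This statement is a quoted classical result (Theorems 2.3.9 and 2.3.10 of Clarke's book), which the paper states as a technical tool without proof, so there is no in-paper argument to compare against; your proposal has to stand on its own, and its central step does not.

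The genuine gap is in your derivation of the general inclusion. You propose to compute $D^c_{\mb v} f(\mb x)$ by restricting the base points $\mb y$ to points where $\mb h$ is differentiable and writing $\mb h(\mb y + t\mb v) = \mb h(\mb y) + t J(\mb y)\mb v + o(t)$, then discarding the error. But in the Clarke limsup the base point and the step vary jointly ($\mb y \to \mb x$, $t \downto 0$), while the $o(t)$ term is only pointwise in $\mb y$: its modulus depends on $\mb y$, so along a sequence $(\mb y_k, t_k)$ realizing the limsup there is no reason that $\norm{\mb h(\mb y_k + t_k\mb v) - \mb h(\mb y_k) - t_k J(\mb y_k)\mb v}{}/t_k \to 0$. (Restricting the limsup to a full-measure set of base points is legitimate, since for fixed $t$ the difference quotient is continuous in $\mb y$, but that does not supply the missing uniformity.) This is precisely the step the standard proof handles differently: apply Lebourg's mean value theorem (\cref{thm:nsms_mvt}) to $g$ on the segment $[\mb h(\mb y), \mb h(\mb y + t\mb v)]$ and to each $h_i$ on $[\mb y, \mb y + t\mb v]$, obtaining \emph{exact} representations $g(\mb h(\mb y + t\mb v)) - g(\mb h(\mb y)) = \innerprod{\alpha_{y,t}}{\mb h(\mb y+t\mb v) - \mb h(\mb y)}$ and $h_i(\mb y + t\mb v) - h_i(\mb y) = t\innerprod{\mb \xi_i^{y,t}}{\mb v}$ with subgradients taken at intermediate points, and then pass to the limit using local boundedness and upper semicontinuity of $\partial g$ and $\partial h_i$; no differentiability of $\mb h$ and no uniform error control are needed. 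A related slip in your sketch: you bound the relevant quantity by the support function of $\partial g(\mb h(\mb x))$ where the mean value/derivative argument naturally produces $\partial g$ at points near $\mb h(\mb y)$, and moving the base point to $\mb h(\mb x)$ is itself the upper-semicontinuity step, which you invoke only in passing.

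The equality cases are sketched in the right spirit: case (1) via regularity and $\alpha \ge 0$ to compute $D_{\mb v} f$ and match the support function, case (3) via $C^1$ smoothness of $\mb h$ (which gives a \emph{uniform} first-order expansion, unlike the pointwise one above), and case (4) via a Lyusternik/Graves-type surjection argument to realize arbitrary base points near $\mb h(\mb x)$ as images under $\mb h$. Case (2), however, needs more than the ``scalar-multiplication rule'': that rule applies to scaling a function, not to composition, and with $g$ merely $C^1$ and $n=1$ the reverse inclusion requires a genuine argument using the (strict) differentiability of $g$ at $h(\mb x)$. Until the general inclusion is proved by a sound argument such as the Lebourg route, the proposal has a real gap at its core.
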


We also have a remarkable extension of mean value theorem to nonsmooth functions (cf. the mean value theorem for convex functions in, e.g., Theorem D 2.3.3 of~\citet{Hiriart-UrrutyLemarechal2001Fundamentals}).
\begin{theorem}[Lebourg's mean value theorem, Theorem 2.3.7 of~\citet{Clarke1990Optimization}] \label{thm:nsms_mvt}
Let $\mb x, \mb y$ be distinct points in $\R^d$, and suppose that $f: \R^d \mapsto \R$ is locally Lipschitz on an open set $U$ containing the line segment $[\mb x, \mb y]$. Then there exists a $\mb u \in \paren{\mb x, \mb y}$ such that
\begin{align}
  f\paren{\mb y} - f\paren{\mb x} \in \innerprod{\partial f\paren{\mb u}}{\mb y -\mb x}.
\end{align}
\end{theorem}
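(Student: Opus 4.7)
The plan is to adapt the classical Rolle/Lagrange argument to the locally Lipschitz setting, combining three ingredients: (i) reduction to a one-variable auxiliary function on $[0,1]$, (ii) Fermat's rule for extrema of locally Lipschitz functions, and (iii) the Clarke sum/chain rules stated in~\cref{thm:subdiff_chain_rule}.

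First I would introduce the affine curve $h:[0,1]\to\R^d$ defined by $h(t)\doteq \mb x + t\paren{\mb y-\mb x}$ and the scalar auxiliary function $g:[0,1]\to\R$ defined by
\begin{align*}
g(t)\doteq f\paren{h(t)} + t\paren{f(\mb x)-f(\mb y)}.
\end{align*}
Since the segment $[\mb x,\mb y]=h\paren{[0,1]}$ lies in the open set $U$ on which $f$ is locally Lipschitz, the composition $f\circ h$ is locally Lipschitz on $[0,1]$, and hence so is $g$. A direct computation gives $g(0)=f(\mb x)=g(1)$. By continuity, $g$ attains its maximum and minimum on the compact interval $[0,1]$, and since $g(0)=g(1)$, at least one of these extrema is attained at some interior point $t^*\in(0,1)$ (if both extrema are attained only at the endpoints, then $g$ is constant and any interior point works).

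The next step is Fermat's rule for locally Lipschitz functions: at a local extremum $t^*$ of a locally Lipschitz scalar function $g$, one has $0\in\partial g(t^*)$. This follows in two lines from~\cref{def:clarke_subd_euclidean_dd}, since the Clarke directional derivative at a local minimum (resp.\ maximum) is nonnegative (resp.\ nonpositive) in every direction, so the scalar $0$ satisfies the defining inequality of the Clarke subdifferential.

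The final step translates $0\in\partial g(t^*)$ into the claim via subdifferential calculus. The map $t\mapsto t\paren{f(\mb x)-f(\mb y)}$ is continuously differentiable with derivative $f(\mb x)-f(\mb y)$, so the sum rule gives $\partial g(t^*)\subset \partial(f\circ h)(t^*)+\paren{f(\mb x)-f(\mb y)}$. Since $h:\R\to\R^d$ is continuously differentiable with $h'(t)=\mb y-\mb x$, the Clarke chain rule of~\cref{thm:subdiff_chain_rule} yields $\partial(f\circ h)(t^*)\subset \innerprod{\partial f\paren{h(t^*)}}{\mb y-\mb x}$. Setting $\mb u\doteq h(t^*)\in(\mb x,\mb y)$ and combining the two inclusions gives
\begin{align*}
0\in \innerprod{\partial f(\mb u)}{\mb y-\mb x} + \paren{f(\mb x)-f(\mb y)},
\end{align*}
which rearranges to $f(\mb y)-f(\mb x)\in \innerprod{\partial f(\mb u)}{\mb y-\mb x}$ as claimed. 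The essentially nontrivial input is Fermat's rule, which is cheap; the one bookkeeping care point is that since we only need $0$ to belong to a \emph{superset} of $\partial g(t^*)$, the one-sided inclusions provided by the calculus rules suffice and no subdifferential regularity of $f$ is required.
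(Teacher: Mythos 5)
The paper does not actually prove this statement: it is quoted from~\citet{Clarke1990Optimization} (Theorem 2.3.7), with a remark that local Lipschitzness suffices. Your argument is, in essence, the classical proof given there---the auxiliary function $g(t)=f(\mb x+t(\mb y-\mb x))+t\paren{f(\mb x)-f(\mb y)}$ with $g(0)=g(1)$, an interior extremum, the stationarity rule $0\in\partial g(t^*)$, and the sum and chain rules used only as one-sided inclusions (your closing observation that no regularity is needed is correct and is precisely why the inclusions suffice). The reduction to $[0,1]$, the handling of the degenerate constant case, and the application of~\cref{thm:subdiff_chain_rule} with $h$ affine, which collapses the convex hull to $\innerprod{\partial f(\mb u)}{\mb y-\mb x}$, are all fine.

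The one step that is not correct as written is your justification of Fermat's rule at a local \emph{maximum}. It is false in general that the Clarke directional derivative is nonpositive in every direction at a local maximizer: for $g(t)=-\abs{t}$, which has a local maximum at $0$, one computes $D^c_{1}g(0)=\limsup_{t\downto 0,\,s\to 0}\paren{g(s+t)-g(s)}/t=1>0$, because the $\limsup$ in~\cref{def:clarke_subd_euclidean_dd} ranges over nearby base points $s$, not just $t^*$. Moreover, even if nonpositivity held, it would not verify the defining inequality $\innerprod{0}{v}\le D^c_{v}g(t^*)$---it points the wrong way. The statement you need ($0\in\partial g(t^*)$ at any local extremum) is still true, but the maximum case should be obtained by applying your minimum argument to $-g$, which is locally Lipschitz and has a local minimum at $t^*$, and then using $\partial(-g)(t^*)=-\partial g(t^*)$ (the scalar-multiple rule in the paper's subdifferential calculus with $\alpha=-1$). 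With that one-line repair the proof is complete and matches the cited source's argument.
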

\begin{remark}
In~\cite{Clarke1990Optimization}, $f$ is assumed to be Lipschitz on $U$, instead of locally Lipschitz. Inspecting the proof, we can easily see that requiring $f$ being locally Lipschitz suffices. The version as stated here can also be found in~\citep[Theorem 4]{Hiriart-Urruty1980Mean} and~\citep[Theorem 7.4.4]{Schirotzek2007Nonsmooth}.
\end{remark}

\begin{theorem}[Exchanging subdifferentiation and integration, Theorem 2.7.2 of~\citet{Clarke1990Optimization}]
Let $U$ be an open subset in $\R^d$ and $\paren{T, \mc T, \mu}$ be a positive measure space. Consider a family of functions $f_t: U \mapsto \R$ satisfying:
\begin{enumerate}
  \item For each $\mb x \in U$, the map $t \mapsto f_t\paren{\mb x}$ is measurable;
  \item There exists an integrable function $k\paren{\cdot}: T \mapsto \R$ so that for all $\mb x, \mb y \in U$ and all $t \in T$, $\abs{f_t\paren{\mb x} - f_t\paren{\mb y}} \le k(t) \norm{\mb x - \mb y}{}$.
\end{enumerate}
If the integral function $f\paren{\mb x} \doteq \int_T f_t\paren{\mb x} \mu \paren{dt}$  is defined on some point $\mb x \in U$, it is defined and Lipschitz in $U$ and
\begin{align}
  \partial f \paren{\mb x} = \partial \int_T f_t\paren{\mb x} \mu \paren{dt} \subset \int_T \partial f_t\paren{\mb x} \mu \paren{dt},
\end{align}
where the second integral is understood as the selection integral~\citep{AubinFrankowska2009Set}. If in addition each $f_t\paren{\cdot}$ is regular at $\mb x$, $f$ is regular at $\mb x$ and the last inclusion becomes an equality.
\end{theorem}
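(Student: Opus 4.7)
The plan is to build the result in four stages: well-definedness of $f$, the Lipschitz property, the subdifferential inclusion, and finally the equality under regularity. For the first two, the workhorse is the uniform Lipschitz bound $k(t)$. Writing $f_t(\mb x) = f_t(\mb x_0) + [f_t(\mb x) - f_t(\mb x_0)]$ shows the second term is dominated by $k(t)\norm{\mb x - \mb x_0}{}$, so once $f$ is defined at some point of $U$ it is defined everywhere in $U$; and bounding $\abs{f(\mb x) - f(\mb y)} \le \int_T \abs{f_t(\mb x) - f_t(\mb y)}\, \mu(dt)$ by $(\int k\, d\mu)\norm{\mb x-\mb y}{}$ gives Lipschitzness with constant $\int k\, d\mu$.

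The heart of the proof is the subdifferential inclusion, and here the right vehicle is the support-function (directional-derivative) characterization of the Clarke subdifferential given in \cref{def:clarke_subd_euclidean_dd}. I would fix a direction $\mb v$ and try to show
\begin{align}
D^c_{\mb v} f(\mb x) \le \int_T D^c_{\mb v} f_t(\mb x)\, \mu(dt).
\end{align}
Write $f = \int f_t\, d\mu$ in the $\limsup$ defining $D^c_{\mb v} f(\mb x)$, pull the limsup inside the integral via a reverse Fatou's lemma (legitimate because the difference quotients $[f_t(\mb y + s\mb v) - f_t(\mb y)]/s$ are dominated by the integrable envelope $k(t)\norm{\mb v}{}$), and identify the inner limsup as $D^c_{\mb v} f_t(\mb x)$. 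Each $D^c_{\mb v} f_t(\mb x)$ equals the support function $h_{\partial f_t(\mb x)}(\mb v)$, so the right-hand side equals $h_{\int_T \partial f_t(\mb x)\, \mu(dt)}(\mb v)$, the support function of the (closed, convex) selection integral. If $\mb \xi \in \partial f(\mb x)$, then $\innerprod{\mb \xi}{\mb v} \le D^c_{\mb v} f(\mb x)$ for every $\mb v$, so the previous chain gives $\innerprod{\mb \xi}{\mb v} \le h_{\int \partial f_t\, d\mu}(\mb v)$ for all $\mb v$, and a Hahn-Banach / support function separation argument forces $\mb \xi \in \int_T \partial f_t(\mb x)\, \mu(dt)$, yielding the claimed inclusion.

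For the regular case, the one-sided derivatives $D_{\mb v} f_t(\mb x)$ exist and coincide with $D^c_{\mb v} f_t(\mb x)$. Now the same domination by $k(t)\norm{\mb v}{}$ lets me use \emph{dominated} convergence (instead of reverse Fatou) to get $D_{\mb v} f(\mb x) = \int_T D_{\mb v} f_t(\mb x)\, \mu(dt)$ as a genuine limit. Sandwiching $D^c_{\mb v} f(\mb x) \le \int D^c_{\mb v} f_t\, d\mu = \int D_{\mb v} f_t\, d\mu = D_{\mb v} f(\mb x) \le D^c_{\mb v} f(\mb x)$ shows $f$ is regular at $\mb x$ and, more importantly, upgrades the directional inequality to equality: $h_{\partial f(\mb x)}(\mb v) = h_{\int \partial f_t\, d\mu}(\mb v)$ for every $\mb v$. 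Two nonempty closed convex sets with identical support functions are equal, so the inclusion becomes an equality.

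The delicate step, and the one I expect to be the main obstacle, is the Fatou manoeuvre that swaps limsup and integral, because $\limsup$ commutes with $\int$ only under the reverse Fatou hypothesis. The uniform Lipschitz envelope $k(t)$ supplies precisely the required domination, and a secondary subtlety is measurability of $t \mapsto D^c_{\mb v} f_t(\mb x)$ (which follows from writing the $\limsup$ along a countable dense net in the $(s, \mb y)$ variables, using local Lipschitzness to reduce to such a net). Once the Fatou bound and the support-function dictionary are in hand, the rest of the argument is essentially bookkeeping with \cref{def:clarke_subd_euclidean_dd} and standard facts about selection integrals.
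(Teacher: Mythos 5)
The paper does not prove this statement---it is quoted verbatim as a technical tool, citing Theorem 2.7.2 of Clarke (1990)---and your sketch reproduces essentially the standard argument from that source: Lipschitzness of $f$ from the integrable envelope $k$, the reverse-Fatou bound $D^c_{\mb v} f(\mb x) \le \int_T D^c_{\mb v} f_t(\mb x)\,\mu(dt)$ using domination by $k(t)\norm{\mb v}{}$, identification of the right-hand side with the support function of the selection integral, and the dominated-convergence sandwich in the regular case. This is correct as an outline, with the only points deserving explicit care being the ones you already flag or implicitly invoke: measurability of $t \mapsto D^c_{\mb v} f_t(\mb x)$ (via countable dense parameters) and of the multifunction $t \mapsto \partial f_t(\mb x)$, together with the standard support-function identity for Aumann/selection integrals of integrably bounded convex compact-valued multifunctions.
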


We need an extended notion of Clarke's subdifferential on Riemannian manifolds. Excellent introduction to elements of Riemannian geometry tailored to the optimization context can be found in the monograph~\citep{AbsilEtAl2008Optimization}. We focus on finite-dimensional Riemannian manifolds. In this setting, locally Lipschitz functions can be defined similar to the Euclidean setting, assuming the corresponding Riemannian distance. Then, by Rademacher's theorem and the local equivalence of finite-dimensional Riemannian manifold and finite-dimensional Euclidean distance, locally Lipschitz $f: \mc M \mapsto \R$ on a Riemannian manifold $\mc M$ is differentiable almost everywhere. So, we can generalize naturally the definition of Clarke's subdifferential in~\cref{def:clarke_subd_euclidean} as follows.
\begin{definition}[Clarke subdifferential for locally Lipschitz functions on Riemannian manifolds] \label{def:clarke_subd_manifold}
  Consider $f: \mc M \mapsto \R$ locally Lipschitz on a finite-dimensional Riemannian manifold $\mc M$. For any $\mb x \in \mc M$, the Clarke's subdifferential is defined as
  \begin{align}
    \partial_R f\paren{\mb x} \doteq
    \mathrm{conv}\set{\lim \mathrm{grad} f\paren{\mb x_k}: \; \mb x_k \to \mb x, f\; \text{differentiable at}\; \mb x_k \in \mc M} \subset T_{\mb x} \mc M.
  \end{align}
  Here $\grad\paren{\cdot}$ denotes the Riemannian gradient, and we use subscript $R$ (as $\partial_R\paren{\cdot}$ ) to emphasize the subdifferential in taken wrt the Riemannian manifold $\mb M$, not the ambient space. The set $\partial_R f\paren{\mb x}$ is nonempty, convex, and compact.
\end{definition}

A point $\mb x \in \mc M$ is a stationary point of $f$ on $\mc M$ if $\mb 0 \in \partial_R f(\mb x)$. A necessary condition that $f$ achieves a local minimum at $\mb x$ is that $\mb 0 \in \partial_R f\paren{\mb x}$.

\subsection{Hausdorff distance}
\label{appendix:hausdorff}
We use the Hausdorff metric to measure differences between nonempty sets. For any set $X$ and a point $\mb p$ in $\R^n$, the point-to-set distance is defined as
\begin{align}
\mathrm{d}\paren{\mb q, X} \doteq \inf_{\mb x \in X} \norm{\mb x - \mb p}{}.
\end{align}
For any two sets $X_1, X_2 \in \R^n$, the Hausdorff distance is defined as
\begin{align}
 \hd\paren{X_1, X_2} \doteq \max\set{\sup_{\mb x_1 \in X_1} \dh\paren{\mb x_1, X_2}, \sup_{\mb x_2 \in X_2} \dh\paren{\mb x_2, X_1}},
\end{align}
or equivalently,
\begin{align}
  \hd\paren{X_1, X_2} \doteq \inf \set{\eps \ge 0: \; X_1 \subset X_2 + \bb B(\mb 0, \eps), X_2 \subset X_1 + \bb B(\mb 0, \eps)}.
\end{align}
When $X_1$ is a singleton, say $X_1 = \set{\mb p}$. Then
\begin{align}
\hd\paren{\set{\mb p}, X_2} = \sup_{\mb x_2 \in X_2} \norm{\mb x_2 - \mb p}{}.
\end{align}
Moreover, for any sets $X_1, X_2, Y_1, Y_2 \subset \R^n$,
\begin{align}
\hd\paren{X_1+ Y_1, X_2+ Y_2} \le \hd\paren{X_1, X_2 }  + \hd\paren{Y_1, Y_2},
\end{align}
which can be readily verified from the definition of $\hd$. On the sets of nonempty, compact subsets of $\R^n$, the Hausdorff metric is a valid metric; particularly, it obeys the triangular inequality: for nonempty, compact subsets $X, Y, Z \subset \R^n$,
\begin{align}
  \hd\paren{X, Z} \le \hd\paren{X, Y} + \hd\paren{Y, Z}.
\end{align}
See, e.g., Sec. 7.1 of~\cite{Sternberg2013Dynamical} for a proof.
\begin{lemma}[Restatement of~\cref{lemma:dist-support}]
  \label{lemma:dist-support-appendix}
  For convex compact sets $X,Y\subset\R^n$, we have
  \begin{equation}
    \hd\paren{X, Y} = \sup_{\mb u\in\bb S^{n-1}} \abs{h_X(\mb u) - h_Y(\mb u)},
  \end{equation}
  where $h_S(\mb u)\doteq \sup_{\mb x\in S}\<\mb x, \mb u\>$ is the support function associated with the set $S$.
\end{lemma}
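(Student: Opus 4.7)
The plan is to pass through the containment-based characterization of the Hausdorff distance and translate set inclusions into pointwise inequalities of support functions. Specifically, I will use the reformulation
\begin{align*}
\hd(X, Y) = \inf\set{\eps \ge 0 : X \subset Y + \eps \Ba, \; Y \subset X + \eps \Ba},
\end{align*}
already recorded in~\cref{appendix:hausdorff}, where $\Ba$ is the closed Euclidean unit ball. The goal then reduces to rewriting each of the two inclusions in terms of a pointwise inequality between $h_X$ and $h_Y$ on $\bb S^{n-1}$.

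The key ingredient is the standard fact that for closed convex sets $A, B \subset \R^n$, $A \subset B$ if and only if $h_A(\mb u) \le h_B(\mb u)$ for every $\mb u \in \bb S^{n-1}$. The ``only if'' direction is immediate from the definition of $h$; the ``if'' direction is a consequence of the Hahn--Banach separation theorem, since any $\mb x_0 \notin B$ (with $B$ closed convex) can be strictly separated from $B$ by some unit vector $\mb u$, yielding $\innerprod{\mb x_0}{\mb u} > h_B(\mb u)$ and hence $h_A(\mb u) > h_B(\mb u)$ if $\mb x_0 \in A$. Combining this with the elementary identity $h_{Y + \eps \Ba}(\mb u) = h_Y(\mb u) + \eps$ for $\mb u \in \bb S^{n-1}$ (which follows from additivity of the support function under Minkowski sums and $h_{\Ba}(\mb u) = \|\mb u\| = 1$), I get
\begin{align*}
X \subset Y + \eps \Ba \;\Longleftrightarrow\; h_X(\mb u) - h_Y(\mb u) \le \eps \quad \forall\, \mb u \in \bb S^{n-1},
\end{align*}
and symmetrically for the reverse inclusion.

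Putting the two equivalences together, the pair of inclusions $X \subset Y + \eps \Ba$ and $Y \subset X + \eps \Ba$ holds simultaneously if and only if $|h_X(\mb u) - h_Y(\mb u)| \le \eps$ for every $\mb u \in \bb S^{n-1}$. Taking the infimum over admissible $\eps$ on both sides, and using the compactness of $X$ and $Y$ (which ensures both $h_X$ and $h_Y$ are finite and continuous on $\bb S^{n-1}$, so the supremum is attained), yields
\begin{align*}
\hd(X, Y) = \sup_{\mb u \in \bb S^{n-1}} \abs{h_X(\mb u) - h_Y(\mb u)},
\end{align*}
as desired. The only nontrivial step is the Hahn--Banach-based ``set inclusion $\Leftrightarrow$ support function inequality'' equivalence; everything else is bookkeeping with the Minkowski-sum formula for support functions and the definition of $\hd$.
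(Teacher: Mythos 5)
Your argument is correct. Note, however, that the paper does not actually prove this lemma: both in \cref{lemma:dist-support} and in its restatement \cref{lemma:dist-support-appendix} it is quoted as a known fact from Section 1.3.2 of Molchanov's monograph, so there is no in-paper proof to compare against. What you have written is the standard self-contained derivation: the containment form $\hd(X,Y)=\inf\set{\eps\ge 0: X\subset Y+\eps\Ba,\ Y\subset X+\eps\Ba}$, the separation-based equivalence that for nonempty closed convex $A,B$ one has $A\subset B$ iff $h_A\le h_B$ on $\bb S^{n-1}$, and the identity $h_{Y+\eps\Ba}(\mb u)=h_Y(\mb u)+\eps$ for unit $\mb u$. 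Two minor points you should make explicit to be fully rigorous: (i) when you invoke the inclusion/support-function equivalence with $B=Y+\eps\Ba$, you need $Y+\eps\Ba$ to be closed and convex, which holds here because $Y$ is compact convex and $\Ba$ is compact convex (the Minkowski sum of a compact set and a closed set is closed); and (ii) the separation step requires the sets to be nonempty, which is the implicit convention in the paper since the sets in question are Clarke subdifferentials, always nonempty, convex, and compact. With those caveats noted, the proof is complete and buys the reader a short, elementary justification of a fact the paper otherwise outsources to the random-sets literature.
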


\subsection{Random sets and their (selection) expectations}
\label{appendix:set-expectation}

Here we present the basics of random sets and the notion and properties of (selection) expectation, which is frequently used in this paper. Our treatment follows~\citep{Molchanov2013foundations}. A more comprehensive treatment of the subject can be found in the monograph~\citep{Molchanov2017Theory}.

We focus on closed random sets. Let $\mc F_d$ and $\mc K_d$ be the families of closed sets and compact sets in $\R^d$, respectively. Random closed sets are defined as maps from the probability space $\paren{\Omega, \mc A, P}$ to $\mc F_d$, or simply thought of as set-valued maps from $\Omega$ to $\R^d$.
\begin{definition}[Closed random sets]
A map $X: \Omega \mapsto \mc F_d$ is called a \emph{closed random set} (in $\R^d$) if $\set{\omega: X\paren{\omega} \cap \mc K \ne \emptyset}$ is measurable for each $K \in \mc K_d$.
\end{definition}
So the $\sigma$-algebra $\mc A$ for closed random sets is generated by the family of sets
\begin{align}
\set{F \in \mc F_d: \; F \cap K \ne \emptyset}, \; \forall\; K \in \mc K_d.
\end{align}
A random compact set is a closed random set taking compact values almost surely; random convex sets are defined similarly. Because the notion of Clarke subdifferential always induces convex compact sets, we focus on random convex compact sets in $\R^d$ below.

Typical examples of closed random sets include random singleton ($\set{x}$), random half-lines ($(-\infty, x]$), random intervals ($[x, y]$), random balls ($\bb B(x; y)$), where we assume $x, y$ are random variables in appropriate senses. See Section 1.1.1 of~\citet{Molchanov2017Theory} for more examples and typical random variables associated with random sets.  Random closed sets can also be constructed from basic operations: if $X$ is a random closed set, then the closed convex hull of $X$, $\alpha X$ for $\alpha$ a random variable, the closure of $X^c$ (i.e., $\mathrm{cl}\paren{X^c}$), the closure of the interior of $X$ (i.e., $\mathrm{cl}\paren{\mathrm{int}\paren{X}}$), and the boundary of $X$ (i.e., $\partial X$) are all random closed sets. Moreover, if $X, Y$ are random sets, $X \cap Y$, $X \cup Y$, and $\mathrm{cl}\paren{X+Y}$ are also random sets. Similar properties can be worked out for random convex sets and random compact sets. We note particularly that for a sequence of random compact sets $\set{X_i}_{i \in [K]}$, the Minkowski sum $\sum_{i=1}^K X_i$ is also a random compact set.

To define the (selection) expectation of a random set, we need the notion of selection. A random variable $\xi$ is said to be a
\emph{selection} of a random set $X$ if $\xi\in X$ almost surely. In
order to emphasize that $\xi$ (being a random variable) is measurable,
a selection is also called a \emph{measurable selection}. A possibly
empty random set does not have a selection: if $X(\omega)=\emptyset$,
then $\xi(\omega)$ is not allowed to take any value. Otherwise, the
fundamental selection theorem guarantees that there exists at
least one such selection:
\begin{theorem}[Fundamental selection theorem]
  Suppose a random closed set $X \subset \R^d$ is almost surely non-empty, then $X$ has a measurable
  selection.
\end{theorem}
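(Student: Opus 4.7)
The plan is to prove this classical Kuratowski--Ryll-Nardzewski-style selection theorem by constructing a uniformly Cauchy sequence of measurable simple approximants $\xi_n \colon \Omega \to \R^d$, each taking countably many values drawn from a fixed dense set and each approximately lying in $X$, and then taking their pointwise limit. Closedness of $X(\omega)$ will force the limit to lie in $X(\omega)$ pointwise, and pointwise limits of measurable functions are measurable, producing the desired selection. First I would restrict attention to $\Omega_0 \doteq \set{\omega : X(\omega) \ne \emptyset}$, which is measurable (as a hit of $\R^d$) and has full probability by hypothesis, and extend the eventual selection arbitrarily (say to $\mb 0$) off $\Omega_0$.

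Fix a countable dense set $D = \set{d_1, d_2, \dots} \subset \R^d$. The inductive construction enforces the two invariants
\begin{align*}
\dh(\xi_n(\omega), X(\omega)) < 2^{-n} \quad \text{and} \quad \|\xi_n(\omega) - \xi_{n-1}(\omega)\| < 2^{-n+2}
\end{align*}
for all $\omega \in \Omega_0$, the second being vacuous at $n=0$. For the base case, the hitting events $H_k \doteq \set{\omega : X(\omega) \cap B(d_k, 1) \ne \emptyset}$ are measurable (expressible as countable unions of compact-ball hits, which are measurable by the definition of a random closed set), and $\bigcup_k H_k = \Omega_0$ by density of $D$; setting $\xi_0(\omega) = d_k$ on $H_k \setminus \bigcup_{j < k} H_j$ gives a measurable function meeting the first invariant. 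For the inductive step, partition $\Omega_0$ along the countable fibres $\set{\xi_{n-1} = d_j}$. On each such fibre, the invariant at level $n-1$ yields a point $x \in X(\omega)$ with $\|x - d_j\| < 2^{-n+1}$, and density of $D$ furnishes some $d_k$ with $\|d_k - d_j\| < 2^{-n+2}$ and $X(\omega) \cap B(d_k, 2^{-n}) \ne \emptyset$; picking the smallest such $k$ on each fibre defines $\xi_n$ as a measurable function, since each selection set is the intersection of $\set{\xi_{n-1} = d_j}$ with a measurable hitting event.

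The increments being summable, $\set{\xi_n(\omega)}$ is uniformly Cauchy in $\omega$, so $\xi(\omega) \doteq \lim_n \xi_n(\omega)$ exists and is measurable as a pointwise limit of measurable maps. Since $\dh(\xi_n(\omega), X(\omega)) \to 0$ and $X(\omega)$ is closed, $\xi(\omega) \in X(\omega)$ for every $\omega \in \Omega_0$, giving the required measurable selection. The main obstacle is measurability bookkeeping at the inductive step: one needs both that arbitrary hit-events against open balls are measurable (which reduces to hits against compact balls via exhaustion) and that the two invariants are simultaneously maintainable --- it is precisely this compatibility that dictates the $2^{-n+2}$ bound on the increment rather than the naive $2^{-n+1}$, and verifying the existence of a compliant $d_k$ on each fibre via the triangle inequality is the one place where the geometry really enters.
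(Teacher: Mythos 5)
Your proof is correct. Note, though, that the paper does not prove this theorem at all: it is quoted as classical background from the random-set literature (Molchanov), so there is no in-paper argument to compare against. What you have written is the standard successive-approximation proof of the Kuratowski--Ryll-Nardzewski fundamental selection theorem --- exactly the argument found in the cited sources --- specialized to $\R^d$: build measurable, countably-valued approximants from a countable dense set while maintaining the two invariants $\dh(\xi_n(\omega),X(\omega))<2^{-n}$ and $\norm{\xi_n(\omega)-\xi_{n-1}(\omega)}{}<2^{-n+2}$, then pass to the pointwise limit and use closedness of $X(\omega)$. Your measurability bookkeeping is sound: open-ball hitting events reduce to countable unions of closed-ball (compact) hits, and each level-$n$ value set is a countable Boolean combination of fibres of $\xi_{n-1}$ with such hitting events; the triangle-inequality check that a compliant $d_k$ exists on each fibre (giving the $2^{-n}+2^{-n+1}<2^{-n+2}$ increment bound) is right. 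One trivial imprecision: $\Omega_0=\set{X\ne\emptyset}$ is not literally ``a hit of $\R^d$'' under the compact-hit definition, but it is measurable by the same exhaustion trick you already invoke, $\Omega_0=\bigcup_k\set{X\cap \bar{B}(\mb 0,k)\ne\emptyset}$; with that wording fixed, the argument is complete and self-contained, which is more than the paper provides.
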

With the selection theorem in hand, we can define the expectation of a
random set.
\begin{definition}[Selection expectation of a random set]
  The (selection) expectation of a random set $X$ in $\R^d$ is the closure of the set of all expectations of integrable selections of $X$, that is
  \begin{align} \label{eq:def_select_exp}
    \expect{X} \doteq {\rm cl}\set{\expect{\xi}:\xi~\textrm{is a measurable
        selection of}~X,~\expect{\norm{\xi}{}} < \infty} \subset \R^d.
  \end{align}
  In particular, $X$ is said to be integrable if $\expect{X}\neq
  \emptyset$, and integrably bounded if $\expect{\norm{X}{}} < \infty$. The $\mathrm{cl}$ in~\cref{eq:def_select_exp} is extraneous if $X$ is integrably bounded.
\end{definition}
If $X \subset \R^d$ is an intergrably bounded random compact set, then its expectation $\expect{X}$ is a random compact set in $\R^d$. The above definition defines the expectation based on all selection integrals, which is not ideal for performing calculation. Below, we present a crucial exchangeability theorem that enables us to delineate the set expectation through expectation of the associated support function.  Support functions are defined in~\cref{section:set-concentration}, and recall from convex analysis that convex sets can be completely characterized by their support functions.
\begin{theorem}[Exchangeability of set expectation and support
  function]  \label{thm:exchange_ex_supp}
  Suppose a random compact set $X$ is integrably bounded and the underlying
  probability space is non-atomic, then $\expect{X}$ is a \emph{convex} set
  and
  \begin{align}
    h_{\expect{X}}(\mb u) = \expect{h_X(\mb u)},~~\forall u\in\R^d.
  \end{align}
\end{theorem}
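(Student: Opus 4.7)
The plan is to establish the two claims of \cref{thm:exchange_ex_supp} separately: first that $\expect{X}$ is convex, and then that its support function coincides pointwise with the map $\mb u \mapsto \expect{h_X(\mb u)}$.

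For \textbf{convexity}, I would exploit non-atomicity through a splicing argument. Given any two points $\mb a_1 = \expect{\xi_1}$ and $\mb a_2 = \expect{\xi_2}$ in $\expect{X}$ arising from integrable selections $\xi_1, \xi_2$, and any $\lambda \in [0, 1]$, the goal is to produce a measurable selection of $X$ whose expectation equals $\lambda \mb a_1 + (1-\lambda) \mb a_2$. Applying Lyapunov's convexity theorem to the $\R^{2d}$-valued atomless vector measure
\begin{align*}
  B \mapsto \paren{\expect{\xi_1 \indicator{B}}, \, \expect{\xi_2 \indicator{B}}}
\end{align*}
yields a measurable set $A$ satisfying simultaneously $\expect{\xi_1 \indicator{A}} = \lambda \mb a_1$ and $\expect{\xi_2 \indicator{A}} = \lambda \mb a_2$. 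The spliced map $\xi \doteq \xi_1 \indicator{\cdot \in A} + \xi_2 \indicator{\cdot \notin A}$ is still a measurable selection of $X$ almost surely, and its expectation computes directly to $\lambda \mb a_1 + (1 - \lambda) \mb a_2$.

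For the \textbf{support function identity}, I would argue the two inequalities separately. The easy direction $h_{\expect{X}}(\mb u) \le \expect{h_X(\mb u)}$ follows because any integrable selection $\xi$ of $X$ satisfies $\innerprod{\xi(\omega)}{\mb u} \le h_{X(\omega)}(\mb u)$ pointwise; taking expectations and then the supremum over selections yields the bound by the very definition of $\expect{X}$. For the reverse inequality, one must exhibit a single integrable selection $\xi^\ast$ of $X$ satisfying $\innerprod{\xi^\ast(\omega)}{\mb u} = h_{X(\omega)}(\mb u)$ almost surely, at which point $h_{\expect{X}}(\mb u) \ge \innerprod{\expect{\xi^\ast}}{\mb u} = \expect{h_X(\mb u)}$. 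This reduces to a measurable selection problem for the argmax map $\omega \mapsto \argmax_{\mb x \in X(\omega)} \innerprod{\mb x}{\mb u}$, whose values are nonempty closed sets almost surely by compactness of $X(\omega)$.

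The \textbf{main obstacle} is verifying that this argmax map qualifies as a closed random set in the sense of the excerpt, so that the fundamental selection theorem applies. I would do this by writing the argmax as the intersection $X(\omega) \cap H(\omega)$, where $H(\omega) \doteq \set{\mb x \in \R^d : \innerprod{\mb x}{\mb u} \ge h_{X(\omega)}(\mb u)}$. The scalar map $\omega \mapsto h_{X(\omega)}(\mb u)$ is measurable by a standard countable-dense approximation $h_X(\mb u) = \sup_{\mb x \in X \cap D} \innerprod{\mb x}{\mb u}$ for $D \subset \R^d$ countable dense, so $H$ is a random closed halfspace, and the intersection of two closed random sets is again a closed random set. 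The selection theorem then delivers $\xi^\ast$, and integrability comes for free since $\norm{\xi^\ast(\omega)}{} \le \norm{X(\omega)}{}$ is dominated by the integrable envelope guaranteed by integrable boundedness of $X$. Combining the two inequalities yields the identity, completing the proof.
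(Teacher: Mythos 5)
The paper never proves \cref{thm:exchange_ex_supp}: it is imported as a standard fact of random set theory (the appendix explicitly follows Molchanov), so there is no in-paper argument to compare against; judged on its own, your proof is essentially the classical textbook argument and is sound. The convexity step is the Richter/Aumann argument: Lyapunov's convexity theorem applied to the atomless $\R^{2d}$-valued measure $B\mapsto\paren{\expect{\xi_1\indic{B}},\expect{\xi_2\indic{B}}}$ yields the splicing set $A$, and the spliced map is again a selection with the required expectation; non-atomicity enters exactly and only here, which is consistent with the fact that the support-function identity itself does not need it. For the identity, the inequality $h_{\expect{X}}(\mb u)\le\expect{h_X(\mb u)}$ is immediate from the definition (and unaffected by the closure in the definition of $\expect{X}$, since support functions do not distinguish a set from its closure), and the reverse inequality follows from a measurable selection of the argmax set $X\cap H$, integrable because it is dominated by the integrable envelope $\norm{X}{}$. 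The one detail you should repair is the justification of measurability of $\omega\mapsto h_{X(\omega)}(\mb u)$: the formula $h_X(\mb u)=\sup_{\mb x\in X\cap D}\innerprod{\mb x}{\mb u}$ with $D$ a countable dense subset of $\R^d$ is false in general, since $X\cap D$ may be empty (e.g., $X$ a singleton avoiding $D$, or any lower-dimensional set). Replace it either by a Castaing representation of $X$ (a countable family of measurable selections $\xi_i$ with $\set{\xi_i(\omega)}_i$ dense in $X(\omega)$ a.s., giving $h_X(\mb u)=\sup_i \innerprod{\xi_i}{\mb u}$), or by noting directly that $\set{h_X(\mb u)>t}=\set{X\cap\set{\mb x: \innerprod{\mb x}{\mb u}>t}\ne\emptyset}$ is measurable because the open halfspace is an increasing union of compact sets and $X$ is a random closed set. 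With that fix, $H$ is a random closed halfspace, $X\cap H$ is an a.s. nonempty random compact set, and the fundamental selection theorem delivers your $\xi^\ast$ as claimed.
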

This theorem is often used together with the following basic properties of support functions: for convex compact sets $X, Y$, $h_{\alpha X} \paren{\mb u}= \alpha h_X\paren{\mb u}$ and $h_{X + Y}\paren{\mb u} = h_X\paren{\mb u} + h_Y\paren{\mb u}$ (support functions linearize Minkowski sums), $\hd\paren{X, Y} = \sup_{\mb u \in \bb S^{d-1}} \abs{h_X\paren{\mb u} - h_Y\paren{\mb u}}$, and also $\mathrm{rad}\paren{X} = \hd\paren{X, \set{0}} = \sup_{\mb  u \in \bb S^{n-1}} \abs{h_X\paren{\mb u}}$.

Furthermore, suppose $X$ and $Y$ are intergrably bounded random compact, convex sets. Then, we have the following natural equalities and inequalities: $\hd\paren{\bb E X, \bb E Y} \le \bb E \hd\paren{X, Y}$, $\norm{\bb E X}{} \le \bb E \norm{X}{}$ (here $\norm{\cdot}{}$ is the set radius), $\expect{X + Y} = \bb E X+ \bb E Y$ (linearity!), $\bb E X \subset \bb E Y$ provided that $X \subset Y$ almost surely, and $\bb E \innerprod{X}{\mb v} = \innerprod{\bb E X}{\mb v}$ for a fixed $\mb v$. Moreover,
\begin{theorem}  \label{thm:division_minkowski_sum}
Let $X_1, \dots, X_K$ be iid copies of $X$ which is an integrably bounded random convex, compact set in $\R^d$. Then,
\begin{align}
  \bb E\, \frac{1}{K}\sum_{i=1}^K X_i = \bb E X.
\end{align}
\end{theorem}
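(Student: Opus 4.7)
The plan is to reduce the claim to an equality of support functions, where scalar linearity of expectation does the work. The key tool is~\cref{thm:exchange_ex_supp}, which says that for integrably bounded random compact convex sets, the support function and the selection expectation commute: $h_{\bb E X}(\mb u) = \bb E\, h_X(\mb u)$ for every $\mb u \in \R^d$. Once support functions are available on both sides of the target equality, the problem collapses to an elementary identity about real-valued iid random variables.

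First I would observe that $\frac{1}{K}\sum_{i=1}^K X_i$ is itself an integrably bounded random compact convex set: each $X_i$ is convex and compact almost surely, Minkowski sums preserve convexity and compactness, scaling by $1/K$ preserves these, and integrable boundedness follows from $\bb E\, \mathrm{rad}\bigl(\frac{1}{K}\sum_i X_i\bigr) \le \frac{1}{K}\sum_i \bb E\, \mathrm{rad}(X_i) = \bb E\, \mathrm{rad}(X) < \infty$ via the sub-additivity of the set radius under Minkowski sums. So both $\bb E\bigl[\frac{1}{K}\sum_{i=1}^K X_i\bigr]$ and $\bb E X$ are well-defined convex compact subsets of $\R^d$, and it suffices to show they have the same support function (a convex compact set being uniquely determined by its support function).

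Next, applying~\cref{thm:exchange_ex_supp} to the left-hand side gives, for each fixed $\mb u \in \R^d$,
\begin{align*}
h_{\bb E[\frac{1}{K}\sum_{i=1}^K X_i]}(\mb u)
\;=\; \bb E\, h_{\frac{1}{K}\sum_{i=1}^K X_i}(\mb u)
\;=\; \bb E\, \Bigl[\frac{1}{K}\sum_{i=1}^K h_{X_i}(\mb u)\Bigr],
\end{align*}
where the second equality uses that support functions are positively homogeneous of degree one and additive under Minkowski summation, i.e.\ $h_{\alpha X}=\alpha h_X$ for $\alpha\ge 0$ and $h_{X+Y}=h_X+h_Y$. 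Now $h_{X_i}(\mb u)$ are iid real random variables with common mean $\bb E\, h_X(\mb u)$, and integrability follows from $|h_{X_i}(\mb u)|\le \mathrm{rad}(X_i)\,\|\mb u\|$. Therefore by scalar linearity of expectation and the iid assumption,
\begin{align*}
\bb E\, \Bigl[\frac{1}{K}\sum_{i=1}^K h_{X_i}(\mb u)\Bigr]
\;=\; \frac{1}{K}\sum_{i=1}^K \bb E\, h_{X_i}(\mb u)
\;=\; \bb E\, h_X(\mb u)
\;=\; h_{\bb E X}(\mb u),
\end{align*}
the last equality again by~\cref{thm:exchange_ex_supp}. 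Combining, $h_{\bb E[\frac{1}{K}\sum_i X_i]}(\mb u) = h_{\bb E X}(\mb u)$ for every $\mb u$, which yields the claimed set equality.

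I do not anticipate a genuine obstacle, since~\cref{thm:exchange_ex_supp} carries all the analytic weight; the only mild care needed is checking the integrable boundedness hypothesis for $\frac{1}{K}\sum_i X_i$ so that~\cref{thm:exchange_ex_supp} is applicable, and noting that the convexity of $X$ means we do not have to invoke any Lyapunov-type convexification argument from non-atomicity on this side.
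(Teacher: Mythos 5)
Your proposal is correct and follows essentially the same route as the paper: apply the exchangeability of selection expectation and support functions (\cref{thm:exchange_ex_supp}), use that support functions linearize Minkowski sums and positive scaling, invoke scalar linearity of expectation for the iid terms, and conclude from equality of support functions of convex compact sets. The only difference is that you spell out the integrable-boundedness check for $\frac{1}{K}\sum_i X_i$, which the paper asserts without detail.
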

\begin{proof}
By our assumption, $\frac{1}{K}\sum_{i=1}^K X_i$ is convex and integrably bounded. So,
\begin{align}
  h_{\bb E\, \frac{1}{K}\sum_{i=1}^K X_i}\paren{\mb u}
  & = \frac{1}{K} \bb E h_{\sum_{i=1}^K X_i} \paren{\mb u}  \quad (\text{by~\cref{thm:exchange_ex_supp}}) \\
  & = \frac{1}{K} \bb E \sum_{i=1}^K h_{X_i} \paren{\mb u}  \quad (\text{support functions linearize Minkowski sums}) \\
  & = \frac{1}{K} \sum_{i=1}^K \bb E h_{X_i} \paren{\mb u} = \bb E h_{X}\paren{\mb u} = h_{\bb E X}\paren{\mb u}.
\end{align}
As both $\bb E\, \frac{1}{K}\sum_{i=1}^K X_i$ and $\bb E X$ are convex and compact, we conclude that $\bb E\, \frac{1}{K}\sum_{i=1}^K X_i = \bb E X$, completing the proof.
\end{proof}

Note that the above equality does not hold in general absent the convexity assumption on $X$. Finally, we have the following law of large numbers.
\begin{theorem}[Law of large numbers for random sets in $\R^d$]
  Let $X_1, \dots, X_K$ be iid copies of $X$ which is an integrably bounded random compact set in $\R^d$, and let $S_K \doteq \sum_{i=1}^K X_i$ for $K \ge 1$. Then,
  \begin{align}
    \hd\paren{K^{-1} S_K, \bb E X} \to 0 \; \text{almost surely as}\; n \to \infty,
  \end{align}
  and
  \begin{align}
    \bb E \hd\paren{K^{-1} S_K, \bb E X} \to 0 \; \text{as}\; n \to \infty.
  \end{align}
\end{theorem}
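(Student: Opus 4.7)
The strategy is to reduce the set-valued Hausdorff convergence to a uniform (over $\bb S^{d-1}$) convergence statement about support functions, and then combine the classical scalar SLLN with a covering argument. By~\cref{lemma:dist-support-appendix}, for convex compact sets $A, B \subset \R^d$ we have $\hd\paren{A, B} = \sup_{\mb u \in \bb S^{d-1}} \abs{h_A\paren{\mb u} - h_B\paren{\mb u}}$. Support functions see only the convex hull, and they linearize Minkowski sums, so $h_{K^{-1} S_K}\paren{\mb u} = K^{-1} \sum_{i=1}^K h_{X_i}\paren{\mb u}$; by~\cref{thm:exchange_ex_supp} applied to $X$ we also have $h_{\bb E X}\paren{\mb u} = \bb E h_X\paren{\mb u}$. (Passing to closed convex hulls on the left is harmless since the Hausdorff distance to the convex set $\bb E X$ is unaffected, and by Shapley--Folkman the convexification gap of $K^{-1} S_K$ tends to zero.) Therefore the task reduces to proving
\[
\sup_{\mb u \in \bb S^{d-1}} \abs{K^{-1} \sum_{i=1}^K h_{X_i}\paren{\mb u} - \bb E h_X\paren{\mb u}} \to 0 \quad \text{a.s.\ and in } L^1.
\]

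\textbf{Pointwise SLLN.} For each fixed $\mb u \in \bb S^{d-1}$, $\set{h_{X_i}\paren{\mb u}}_{i \ge 1}$ are iid with $\abs{h_{X_i}\paren{\mb u}} \le \norm{X_i}{}$, which is integrable by the integrable boundedness of $X$. The classical Kolmogorov SLLN then yields $K^{-1} \sum_{i=1}^K h_{X_i}\paren{\mb u} \to \bb E h_X\paren{\mb u}$ both almost surely and in $L^1$.

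\textbf{Uniformization via a covering argument.} The support function is $\norm{X}{}$-Lipschitz in its argument: $\abs{h_X\paren{\mb u} - h_X\paren{\mb v}} \le \norm{X}{} \norm{\mb u - \mb v}{}$. Fix $\eps > 0$ and take a finite $\eps$-net $\mc N$ of $\bb S^{d-1}$ with $\abs{\mc N} \le \paren{3/\eps}^d$. On the intersection of the (finitely many) probability-one events from the pointwise SLLN at each $\mb u \in \mc N$, convergence holds simultaneously at every net point. For an arbitrary $\mb u \in \bb S^{d-1}$ choose $\mb u' \in \mc N$ with $\norm{\mb u - \mb u'}{} \le \eps$, and decompose
\[
\abs{K^{-1}\!\sum h_{X_i}\paren{\mb u} - \bb E h_X\paren{\mb u}} \le \eps \cdot K^{-1}\!\sum \norm{X_i}{} + \max_{\mb u' \in \mc N} \abs{K^{-1}\!\sum h_{X_i}\paren{\mb u'} - \bb E h_X\paren{\mb u'}} + \eps \cdot \bb E \norm{X}{}.
\]
Applying the scalar SLLN to $\norm{X_i}{}$ gives $K^{-1} \sum \norm{X_i}{} \to \bb E \norm{X}{}$ a.s., and the finite-net term tends to $0$. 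Hence $\limsup_{K} \sup_{\mb u} \abs{\cdot} \le 2 \eps \bb E \norm{X}{}$ almost surely; letting $\eps \downto 0$ yields the first claim.

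\textbf{$L^1$ upgrade.} Since
\[
\hd\paren{K^{-1} S_K, \bb E X} \le K^{-1} \sum_{i=1}^K \norm{X_i}{} + \norm{\bb E X}{},
\]
and $\set{K^{-1} \sum \norm{X_i}{}}_{K \ge 1}$ is uniformly integrable (as a convergent sequence in $L^1$ by the scalar $L^1$-SLLN), the dominating family is uniformly integrable, so almost-sure convergence upgrades to convergence in $L^1$ by Vitali's theorem, giving the second claim. The main obstacle is the uniformization step: one must simultaneously handle randomness in the Lipschitz bound $K^{-1} \sum \norm{X_i}{}$ and an $\eps$-net error, but the scalar SLLN applied to $\norm{X_i}{}$ exactly supplies the control needed.
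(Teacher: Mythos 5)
Your proposal cannot be compared against a proof in the paper, because there is none: the paper lists this law of large numbers in its appendix of technical tools as a known result from random set theory (the classical Artstein--Vitale LLN, following Molchanov) and does not prove it. What you wrote is essentially the standard proof of that classical theorem: reduce Hausdorff distance to support functions via \cref{lemma:dist-support-appendix} and \cref{thm:exchange_ex_supp}, apply the scalar SLLN at each $\mb u$, uniformize over $\bb S^{d-1}$ with an $\eps$-net using the $\norm{X}{}$-Lipschitzness of $\mb u \mapsto h_X(\mb u)$, control the non-convexity of $K^{-1}S_K$ by Shapley--Folkman, and upgrade to $L^1$ by uniform integrability. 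The skeleton is sound and the covering and Vitali steps are correct.

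Two points should be tightened. First, the parenthetical claim that passing to convex hulls leaves $\hd\paren{\cdot, \bb E X}$ ``unaffected'' is false in general (take $A = \set{0,1}$ and $B = [0,1]$ in $\R$): convexifying can only shrink the distance, and the deficiency $\sup_{\mb b \in \bb E X} \dh\paren{\mb b, K^{-1}S_K}$ is invisible to support functions. What you actually need is the triangle inequality together with the Shapley--Folkman--Starr bound $\hd\paren{K^{-1}S_K, \conv\paren{K^{-1}S_K}} \le \sqrt{d}\, K^{-1}\max_{i \le K}\norm{X_i}{}$, and you must then justify that $K^{-1}\max_{i\le K}\norm{X_i}{} \to 0$ almost surely; this does hold under mere integrability of $\norm{X}{}$, by Borel--Cantelli applied to $\sum_i \prob{\norm{X_i}{} > \eps i} < \infty$, but it is a step, not a throwaway remark. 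Second, the identity $h_{\bb E X}\paren{\mb u} = \expect{h_X\paren{\mb u}}$ from \cref{thm:exchange_ex_supp} relies on the non-atomicity of the underlying probability space (otherwise $\bb E X$ need not be convex and the limit must be written as the expectation of $\conv X$); this is consistent with the paper's standing framework, but it should be stated as a hypothesis you are using. With those repairs your argument is complete and matches the textbook route the paper implicitly cites.
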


\subsection{Sub-gaussian random matrices and processes}
\begin{proposition}[Talagrand's comparison inequality, Corollary 8.6.3 and Exercise 8.6.5 of~\cite{Vershynin2018High}]  \label{prop:talagrand_comparison_set}
Let $\{X_x\}_{x \in T}$ be a zero-mean random process on $T \subset \R^n$. Assume that for  all $\mb x, \mb y \in T$ we have
\begin{align}
  \norm{X_{\mb x} - X_{\mb y}}{\psi_2} \le K \norm{\mb x -\mb  y}{}.
\end{align}
Then, for any $t > 0$,
\begin{align}
  \sup_{\mb x \in T} \abs{X_{\mb x}} \le CK \brac{w(T) + t \cdot \radi(T)}
\end{align}
with probability at least $1 - 2\exp\paren{-t^2}$. Here $w(T) \doteq \bb E_{\mb g \sim \mc N\paren{\mb 0, \mb I}}\sup_{x \in T} \innerprod{\mb x}{\mb g}$ is the Gaussian width of $T$ and $\mathrm{rad}(T) = \sup_{x \in T} \norm{\mb x}{}$ is the radius of $T$.
\end{proposition}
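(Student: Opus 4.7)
The plan is to reduce the claim to two classical ingredients from Talagrand's theory of generic chaining: the chaining tail bound for processes with sub-Gaussian increments, and the majorizing measures theorem that identifies the $\gamma_2$-functional with the Gaussian width.

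\emph{Step 1 (reduction to increments).} Pick a reference point $\mb x_0 \in T$ of minimum Euclidean norm, so that $\norm{\mb x_0}{} \le \radi(T)$. Split
\begin{align*}
\sup_{\mb x \in T} \abs{X_{\mb x}} \;\le\; \sup_{\mb x \in T} \abs{X_{\mb x} - X_{\mb x_0}} + \abs{X_{\mb x_0}}.
\end{align*}
The second term is a single zero-mean sub-Gaussian variable; using the increment hypothesis anchored at $\mb x_0$ gives $\norm{X_{\mb x_0}}{\psi_2}\lesssim K \radi(T)$, so $\abs{X_{\mb x_0}} \le C_0 K t \radi(T)$ with probability at least $1-\exp(-t^2)$ by the standard sub-Gaussian tail.

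\emph{Step 2 (expected supremum via generic chaining).} The sub-Gaussian increment condition means the canonical pseudo-metric $d_X(\mb x,\mb y)\doteq \norm{X_{\mb x}-X_{\mb y}}{\psi_2}$ is dominated by $K\norm{\mb x-\mb y}{}$. Applying the generic chaining inequality (Talagrand's majorizing measures theorem, upper bound direction) to the centered process $\mb x \mapsto X_{\mb x}-X_{\mb x_0}$ yields
\begin{align*}
\bb E \sup_{\mb x \in T} \paren{X_{\mb x} - X_{\mb x_0}} \;\le\; C_1 K \,\gamma_2\paren{T, \norm{\cdot}{}},
\end{align*}
where $\gamma_2$ is Talagrand's functional. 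Combining with the lower bound direction of the majorizing measures theorem applied to the canonical Gaussian process $Y_{\mb x} = \innerprod{\mb g}{\mb x}$ (whose increment pseudo-metric is exactly $\norm{\cdot}{}$, and whose expected supremum is $w(T)$) gives $\gamma_2(T,\norm{\cdot}{})\asymp w(T)$, hence
\begin{align*}
\bb E \sup_{\mb x \in T} \paren{X_{\mb x} - X_{\mb x_0}} \;\le\; C_2 K\, w(T).
\end{align*}

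\emph{Step 3 (tail version of chaining).} The same chaining construction admits a high-probability refinement (the generic chaining tail inequality, cf.\ Theorem 8.5.5 in~\cite{Vershynin2018High}): for every $t > 0$,
\begin{align*}
\sup_{\mb x \in T} \paren{X_{\mb x} - X_{\mb x_0}} \;\le\; C_3 K \brac{w(T) + t \cdot \diam(T)}
\end{align*}
with probability at least $1-\exp(-t^2)$. Since $\diam(T)\le 2 \radi(T)$, adding the bound from Step 1 via a union bound and absorbing constants produces
\begin{align*}
\sup_{\mb x \in T}\abs{X_{\mb x}} \;\le\; CK\brac{w(T) + t\cdot \radi(T)}
\end{align*}
with probability at least $1-2\exp(-t^2)$, as claimed.

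\emph{Main obstacle.} The only nontrivial inputs are (i) the majorizing measures equivalence $\gamma_2\asymp w$, which is a deep theorem of Talagrand that we invoke as a black box, and (ii) the tail form of the chaining bound, which requires a careful union bound across dyadic scales in the admissible net construction. Both are recorded in packaged form in~\cite{Vershynin2018High}, so the proposition is essentially a repackaging of those results; our only task is bookkeeping the anchor term $\abs{X_{\mb x_0}}$ to pass from differences to absolute values, which is what the $\radi(T)$ factor absorbs.
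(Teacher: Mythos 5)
You should first note that the paper itself offers no proof of this proposition: it is imported verbatim as a technical tool from \citet{Vershynin2018High} (Corollary 8.6.3 and Exercise 8.6.5), so there is no in-paper argument to compare against. Your outline is indeed the standard route behind the cited results, and Steps 2--3 (the generic chaining tail bound for the anchored process $X_{\mb x}-X_{\mb x_0}$, the majorizing measures equivalence $\gamma_2(T,\norm{\cdot}{})\asymp w(T)$, and $\diam(T)\le 2\radi(T)$) are fine as black-box invocations.

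However, Step 1 contains a genuine gap: the increment hypothesis controls only differences $\norm{X_{\mb x}-X_{\mb y}}{\psi_2}$ for $\mb x,\mb y\in T$, and ``anchoring it at $\mb x_0$'' gives $\norm{X_{\mb x}-X_{\mb x_0}}{\psi_2}\le K\norm{\mb x-\mb x_0}{}$, not the bound $\norm{X_{\mb x_0}}{\psi_2}\lesssim K\radi(T)$ that you assert. From the stated assumptions alone (mean zero plus sub-Gaussian increments on $T$) the anchor term cannot be controlled: take the constant process $X_{\mb x}\equiv\xi$ with $\xi$ mean-zero and heavy-tailed, which satisfies the increment condition with any $K>0$, yet $\sup_{\mb x\in T}\abs{X_{\mb x}}=\abs{\xi}$ admits no sub-Gaussian tail, so the claimed conclusion with $\radi(T)$ fails. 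The statement (and Vershynin's exercise) is used under the implicit convention that the increment bound extends to the origin where the process vanishes, i.e., $\norm{X_{\mb x}}{\psi_2}\le K\norm{\mb x}{}$ for all $\mb x\in T$ (equivalently, adjoin $\mb 0$ to $T$ with $X_{\mb 0}=0$); this is exactly the situation in the paper's applications (\cref{prop:egrad_concentrate} and \cref{prop:func_sharpness}), where $X_{\mb u}$ is a centered support-function or function-value difference defined for all $\mb u$ with $X_{\mb 0}=0$ and the increment estimate valid on all of $\R^n$. With that convention made explicit, your Step 1 goes through with $\mb x_0=\mb 0$ and the rest of your argument is correct; without it, the passage from differences to absolute values is exactly the unproved step.
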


\begin{proposition}[Deviation inequality for sub-gaussian matrices, Theorem 9.1.1 and Exercise 9.1.8 of~\cite{Vershynin2018High}] \label{prop:matrix_dev_subgauss}
  Let $\mb A$ be an $n \times m$ matrix whose rows $\mb A^i$'s are independent, isotropic, and sub-gaussian random vectors in $\R^m$. Then for any subset $T \subset \R^m$, we have
  \begin{align}
    \prob{\sup_{\mb x \in T} \abs{\norm{\mb A \mb x}{} - \sqrt{n} \norm{\mb x}{}} > CK^2 \brac{w\paren{T} + t \cdot \radi\paren{T}}} \le 2\exp\paren{-t^2}.
\end{align}
Here $K = \max_{i} \norm{\mb A^i}{\psi_2}$.
\end{proposition}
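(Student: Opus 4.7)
The plan is to invoke Talagrand's comparison inequality (Proposition~\ref{prop:talagrand_comparison_set}) to the random process $Z_{\mb x} \doteq \norm{\mb A\mb x}{} - \sqrt{n}\norm{\mb x}{}$ indexed by $\mb x \in T$. Since $Z_{\mb x}$ is not exactly zero-mean, I would apply the comparison to differences $Z_{\mb x} - Z_{\mb x_0}$ for a fixed anchor $\mb x_0 \in T$, and then handle the single-point quantity $Z_{\mb x_0}$ with a separate scalar sub-gaussian estimate; the Gaussian width $w(T)$ and radius $\radi(T)$ are invariant (up to absolute constants) under this anchoring. The core of the proof therefore reduces to verifying the sub-gaussian increment bound
\[
\norm{Z_{\mb x} - Z_{\mb y}}{\psi_2} \le CK^2 \norm{\mb x - \mb y}{} \quad \text{for all}~\mb x, \mb y \in \R^m,
\]
which serves as the $K$-Lipschitz parameter in Talagrand's inequality.

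I would first establish the \emph{single-point} version $\norm{\norm{\mb A\mb z}{} - \sqrt{n}\norm{\mb z}{}}{\psi_2} \le CK^2\norm{\mb z}{}$. This follows by applying Bernstein's inequality to
\[
\norm{\mb A\mb z}{}^2 - n\norm{\mb z}{}^2 = \sum_{i=1}^n \paren{\innerprod{\mb A^i}{\mb z}^2 - \norm{\mb z}{}^2},
\]
a sum of $n$ iid centered sub-exponentials of $\psi_1$-norm $O(K^2\norm{\mb z}{}^2)$, followed by the linearization identity $a - b = (a^2 - b^2)/(a + b)$ with $a = \norm{\mb A\mb z}{}$ and $b = \sqrt{n}\norm{\mb z}{}$: the sub-exponential deviation of $a^2 - b^2$ at scale $K^2\norm{\mb z}{}^2(\sqrt{n}\,t + t^2)$ becomes a sub-gaussian deviation of $a - b$ at scale $K^2\norm{\mb z}{}\,t$ after dividing by $a + b \gtrsim \sqrt{n}\norm{\mb z}{}$, an inequality that itself holds with overwhelming probability from a preliminary crude bound.

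For the increment bound, I would reduce to the equal-norm case by splitting $\mb x$, $\mb y$ via $\mb x = \norm{\mb x}{}\mb u$, $\mb y = \norm{\mb y}{}\mb v$ with $\mb u,\mb v \in \Sp^{m-1}$ and decomposing $Z_{\mb x} - Z_{\mb y}$ into a purely radial contribution, handled directly by the single-point bound applied to $\mb u$ and $\mb v$ and using the scalar deviation $\sqrt{n}(\norm{\mb x}{} - \norm{\mb y}{})$ to cancel the expected part, and an angular contribution at common norm. For the latter, the key identity is
\[
\norm{\mb A\mb u}{} - \norm{\mb A\mb v}{} = \frac{\innerprod{\mb A(\mb u + \mb v)}{\mb A(\mb u - \mb v)}}{\norm{\mb A\mb u}{} + \norm{\mb A\mb v}{}}.
\]
Because $\innerprod{\mb u + \mb v}{\mb u - \mb v} = 0$ on $\Sp^{m-1}$, the numerator is a mean-zero quadratic form in the sub-gaussian rows of $\mb A$; by Hanson--Wright (or equivalently by applying the single-point bound to $\mb A(\mb u \pm \mb v)$ and exploiting the parallelogram identity $\norm{\mb A\mb u}{}^2 - \norm{\mb A\mb v}{}^2 = \tfrac{1}{4}\norm{\mb A(\mb u + \mb v)}{}^2 - \tfrac{1}{4}\norm{\mb A(\mb u - \mb v)}{}^2$), its $\psi_2$-norm is at most $O(K^2\sqrt{n}\,\norm{\mb u - \mb v}{})$. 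The denominator is bounded below by $\sqrt{n}$ with probability $1 - 2e^{-cn}$ by the single-point bound, and the quotient inherits a $\psi_2$-norm of $O(K^2\norm{\mb u - \mb v}{}) = O(K^2\norm{\mb x - \mb y}{}/\norm{\mb x}{})$ once the radial rescaling is undone.

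The main obstacle I anticipate is handling the ``bad'' event on which the denominator $\norm{\mb A\mb u}{} + \norm{\mb A\mb v}{}$ is atypically small: on this event the polarization trick cannot be used, and one must fall back on the a.s.\ inequality $\abs{\norm{\mb A\mb x}{} - \norm{\mb A\mb y}{}} \le \norm{\mb A(\mb x - \mb y)}{}$ combined with a sub-gaussian tail bound on $\norm{\mb A(\mb x - \mb y)}{}$, whose contribution must be absorbed into the main $K^2\norm{\mb x - \mb y}{}$ rate without losing a $\sqrt{n}$ factor. Careful bookkeeping in this case split, together with the unequal-norm reduction, is where the real work lies; modulo these, the result is a clean application of Talagrand's comparison to a process whose increment regularity is ultimately governed by Hanson--Wright concentration.
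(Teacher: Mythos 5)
The paper does not actually prove \cref{prop:matrix_dev_subgauss}: it is imported as a technical tool, quoted from Vershynin (Theorem 9.1.1 together with Exercise 9.1.8 for the $K^2$ dependence), so there is no internal proof to compare against. Your sketch is, in outline, a faithful and correct reconstruction of Vershynin's own argument: Talagrand's comparison inequality (\cref{prop:talagrand_comparison_set}) applied to $Z_{\mb x}=\norm{\mb A\mb x}{}-\sqrt{n}\norm{\mb x}{}$, with the entire burden on the sub-gaussian increment bound $\norm{Z_{\mb x}-Z_{\mb y}}{\psi_2}\le CK^2\norm{\mb x-\mb y}{}$, established via the single-point Bernstein-plus-linearization step, the radial/angular splitting by homogeneity of $Z$, the polarization identity on the sphere, and a case split for the atypical event in which the denominator is small (handled by $\abs{\norm{\mb A\mb u}{}-\norm{\mb A\mb v}{}}\le\norm{\mb A(\mb u-\mb v)}{}$, where one uses $K\ge c$ for isotropic sub-gaussian rows so that the crude tail is absorbed at scale $K^2\sqrt{n}\norm{\mb u-\mb v}{}$). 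Two small simplifications are available: in the single-point step the denominator bound is deterministic, since $\norm{\mb A\mb z}{}+\sqrt{n}\norm{\mb z}{}\ge\sqrt{n}\norm{\mb z}{}$ always, so no preliminary high-probability estimate is needed there; and for the non-centeredness of $Z$ the cleanest anchor is $\mb x_0=\mb 0$, where $Z_{\mb 0}=0$, since the increment bound holds on all of $\R^m$ and $w\paren{T\cup\set{\mb 0}}$, $\radi\paren{T\cup\set{\mb 0}}$ are comparable to $w(T)$, $\radi(T)$ up to constants --- though your anchoring at a point of $T$ plus a separate scalar sub-gaussian estimate for $Z_{\mb x_0}$ works equally well.
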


\begin{fact}[Centering]
If $X$ is a sub-gaussian random variable, $X - \mathbb{E}[X]$ is also sub-gaussian with $\|X- \mathbb{E}[X]\|_{\psi_2} \leq C\|X\|_{\psi_2}$.
\end{fact}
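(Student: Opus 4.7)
The plan is to invoke the triangle inequality for the sub-gaussian (Orlicz $\psi_2$) norm and then control the sub-gaussian norm of the constant $\mathbb{E}[X]$ in terms of $\|X\|_{\psi_2}$.

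First, I would recall that $\|\cdot\|_{\psi_2}$ defined by $\|X\|_{\psi_2} = \inf\{t>0:\mathbb{E}\exp(X^2/t^2)\le 2\}$ is a bona fide norm on the Orlicz space of sub-gaussian random variables (see, e.g., Proposition 2.5.2 of Vershynin), so in particular
\begin{align}
\|X - \mathbb{E}[X]\|_{\psi_2} \le \|X\|_{\psi_2} + \|\mathbb{E}[X]\|_{\psi_2}.
\end{align}
The task then reduces to bounding the second term on the right-hand side by a constant multiple of $\|X\|_{\psi_2}$.

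Next, I would handle the constant $c = \mathbb{E}[X]$ directly: since $\mathbb{E}\exp(c^2/t^2) = \exp(c^2/t^2) \le 2$ iff $t^2 \ge c^2/\log 2$, we have $\|c\|_{\psi_2} = |c|/\sqrt{\log 2}$. Combining with Jensen's inequality $|\mathbb{E}[X]| \le \mathbb{E}|X|$ yields
\begin{align}
\|\mathbb{E}[X]\|_{\psi_2} \le \frac{\mathbb{E}|X|}{\sqrt{\log 2}}.
\end{align}

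Finally, I would invoke the standard fact that the $L^1$ norm is dominated by the sub-gaussian norm, namely $\mathbb{E}|X| \le C_0\|X\|_{\psi_2}$ for a universal constant $C_0$ (this follows, e.g., from integrating the sub-gaussian tail bound $\mathbb{P}(|X|>u) \le 2\exp(-cu^2/\|X\|_{\psi_2}^2)$ over $u \ge 0$). Plugging back gives
\begin{align}
\|X - \mathbb{E}[X]\|_{\psi_2} \le \paren{1 + \frac{C_0}{\sqrt{\log 2}}} \|X\|_{\psi_2},
\end{align}
which is the claimed inequality with $C = 1 + C_0/\sqrt{\log 2}$. There is no real obstacle here; the only subtlety is making sure one uses a version of the sub-gaussian norm that is genuinely a norm (so that the triangle inequality holds cleanly), rather than one of the equivalent moment-based characterizations that satisfy the triangle inequality only up to a multiplicative constant.
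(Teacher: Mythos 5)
Your proof is correct. The paper itself states this Fact without proof, treating it as a standard result alongside the other tools quoted from \citet{Vershynin2018High} (it is Lemma 2.6.8 there), and your argument—triangle inequality for the genuine Orlicz norm $\|\cdot\|_{\psi_2}$, the exact computation $\|c\|_{\psi_2}=|c|/\sqrt{\log 2}$ for a constant, Jensen, and the bound $\mathbb{E}|X|\le C_0\|X\|_{\psi_2}$—is exactly that standard textbook proof, with the one genuinely important subtlety (using a version of $\|\cdot\|_{\psi_2}$ that satisfies the triangle inequality exactly) correctly flagged.
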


\begin{fact}[Sum of independent sub-gaussians]
Let $X_1,\cdots, X_n$ be independent, zero-mean sub-gaussian random variables, $\sum_{i=1}^n X_i$ is also sub-gaussian with $\|\sum_{i=1}^n X_i \|_{\psi_2}^2 \leq C\sum_{i=1}^n\|X_i\|_{\psi_2}^2$.
\end{fact}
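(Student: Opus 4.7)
The plan is to work through the moment generating function (MGF) characterization of the sub-gaussian norm. Recall that for a zero-mean sub-gaussian random variable $X$, the $\psi_2$ norm $\|X\|_{\psi_2}$ is equivalent, up to universal multiplicative constants, to the smallest $K > 0$ for which
\begin{align*}
\bb E\!\paren{\exp\paren{\lambda X}} \le \exp\!\paren{c_0 \lambda^2 K^2} \quad \text{for every}\; \lambda \in \R,
\end{align*}
for an appropriate absolute constant $c_0$ (see, e.g., Proposition 2.5.2 of Vershynin's text). This equivalence is the standard bridge used to port sub-gaussian tail or Orlicz-type bounds into MGF bounds that interact cleanly with sums of independent variables, and it is what makes the whole fact straightforward.

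Armed with this characterization, I would proceed in three short steps. \textbf{Step 1.} Apply the MGF bound to each $X_i$ to get $\bb E\!\paren{\exp\paren{\lambda X_i}} \le \exp\!\paren{c_0 \lambda^2 \|X_i\|_{\psi_2}^2}$ for every $\lambda \in \R$. \textbf{Step 2.} Set $S \doteq \sum_{i=1}^n X_i$. By independence, the MGF of $S$ factors, so
\begin{align*}
\bb E\!\paren{\exp\paren{\lambda S}} = \prod_{i=1}^n \bb E\!\paren{\exp\paren{\lambda X_i}} \le \prod_{i=1}^n \exp\!\paren{c_0 \lambda^2 \|X_i\|_{\psi_2}^2} = \exp\!\paren{c_0 \lambda^2 \sum_{i=1}^n \|X_i\|_{\psi_2}^2}.
\end{align*}
\textbf{Step 3.} Since $S$ is also zero-mean, invoke the converse direction of the MGF characterization: a zero-mean random variable whose MGF is dominated by $\exp\!\paren{c_0 \lambda^2 K_*^2}$ for all $\lambda$ has $\psi_2$ norm at most $C_1 K_*$ for some absolute $C_1$. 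Taking $K_*^2 = \sum_i \|X_i\|_{\psi_2}^2$ then yields $\|S\|_{\psi_2}^2 \le C \sum_{i=1}^n \|X_i\|_{\psi_2}^2$ with $C = C_1^2$, as desired.

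The main ``obstacle,'' if it deserves the name, is purely bookkeeping of absolute constants through the two-sided MGF-to-$\psi_2$ conversion; no probabilistic inequality beyond Fubini/independence is required because all of the real work has been offloaded onto the standard equivalence of sub-gaussian definitions. Two minor points are worth noting. First, the zero-mean hypothesis is genuinely used in Step 1 to get the \emph{symmetric} MGF bound with no linear correction term; for non-centered summands one would preface the argument with the Centering fact above, incurring only a universal constant. Second, exactly the same three-line template yields the analogous fact for sub-exponential variables via Bernstein-type MGF bounds, so the argument is robust to the specific tail regime.
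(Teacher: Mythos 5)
Your argument is correct and is exactly the standard proof (independence factorizes the MGF, and the two-sided equivalence between the $\psi_2$ norm and the bound $\bb E\exp\paren{\lambda X}\le \exp\paren{c_0\lambda^2\norm{X}{\psi_2}^2}$ for centered variables converts back, cf.\ Propositions 2.5.2 and 2.6.1 of Vershynin's text). The paper states this fact without proof as a cited tool, so there is nothing to compare beyond noting that your route is the one the cited reference itself uses, including the correct observation that the zero-mean hypothesis is what makes the symmetric MGF bound available.
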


\section{Proofs for~\cref{section:population-geometry}}
\subsection{Proof of~\cref{proposition:pop-val-grad}}
We have
\begin{equation}
  \expect{f}(\mb q) = \objscale \bb E_{\mb x \sim \bgt} \abs{\mb q^\top\mb x}
  = \objscale\bb E_\Omega \bb E_{\mb z \sim \normal\paren{\mb 0, \mb I}} \abs{\mb q_\Omega^\top\mb z_\Omega}
  = \bb E_\Omega \norm{\mb q_\Omega}{},
\end{equation}
where the last equality uses the fact that $\bb E_{Z\sim\normal(0, \sigma^2)}[|Z|]=\sqrt{2/\pi}\sigma$.

Moreover,
\begin{align}
  \partial\norm{\mb q_\Omega}{} =
  \begin{cases}
    \frac{\mb q_\Omega}{\norm{\mb q_\Omega}{}}, & \mb q_\Omega\neq 0 \\
    \set{\mb v_\Omega:\norm{\mb v_\Omega}{}\le 1}, & \mb q_\Omega = 0
  \end{cases},
\end{align}
and $\partial \expect{\norm{\mb q_\Omega}{}}=\expect{\partial \norm{\mb q_\Omega}{}}$ (as the sub-differential and expectation can be exchanged for convex functions~\citep{Hiriart-UrrutyLemarechal2001Fundamentals}), and we have
\begin{align}
  \partial \expect{f}(\mb q)
  = \partial \bb E_\Omega \norm{\mb q_\Omega}{}
  =\bb E_\Omega \partial \norm{\mb q_\Omega}{}
  = \bb E_\Omega
  \begin{cases}
    \frac{\mb q_\Omega}{\norm{\mb q_\Omega}{}}, & \mb q_\Omega\neq 0 \\
    \set{\mb v_\Omega:\norm{\mb v_\Omega}{}\le 1}, & \mb q_\Omega = 0
  \end{cases}.
\end{align}
Since $f$ is convex, by the same exchangeability result, we also have $\expect{\partial f(\mb q)}=\partial \expect{f}(\mb q)$. Now all $\sign\paren{\mb q^\top \mb x_i} \mb x_i$ are iid copies of $\sign\paren{\mb q^\top \mb x} \mb x$ with $\mb x \sim_\iid \bgt$, and they are all integrably bounded random convex, compact sets. We can invoke~\cref{thm:division_minkowski_sum} to conclude that
\begin{align}
  \expect{\partial f}\paren{\mb q} = \objscale \expect{\sign\paren{\mb q^\top \mb x} \mb x},
\end{align}
completing the proof.

\subsection{Proof of~\cref{proposition:pop-stat-points}}
We first show that points in the claimed set are indeed stationary points.  Taking $\mb v_\Omega=\mb 0$ in~\cref{equation:pop-grad} gives the subgradient choice $\expect{\mb q_\Omega/\norm{\mb q_\Omega}{}\indicator{\mb q_\Omega\neq 0}}$. For any $k \in [n]$,  let $\mb q\in\mc{S}$ with $\norm{\mb q}{0}=k$.
For all $j\in\supp(\mb q)$, we have
\begin{align}
  \mb e_j^T \expect{\partial f}(\mb q)
  &= \theta q_j\cdot \EOmega\, \frac{1}{\sqrt{q_j^2 + \norm{\mb q_{\Omega\setminus\set{j}}}{}^2}}   \\
  &= \theta q_j\cdot \brac{\sum_{i=1}^k \theta^{i-1} (1-\theta)^{k-i}\cdot \sqrt{\frac{k}{i}}} \\
  &= q_j\cdot \sum_{i=1}^k \theta^i(1-\theta)^{k-i}\cdot\frac{\sqrt{k}}{\sqrt{i}} \doteq c(\theta, k) q_j.
\end{align}
On the other hand, for all $j\notin\supp(\mb q)$, $\mb e_j^\top \expect{\mb q_\Omega/\norm{\mb q_\Omega}{2}\indicator{\mb q_\Omega\neq 0}} = 0$.
Therefore, we have that $\expect{\mb q_\Omega/\norm{\mb q_\Omega}{2}\indicator{\mb q_\Omega\neq 0}}=c(\theta,k)\mb q$, and so
\begin{equation}
  (\mb I - \mb q\mb q^\top)\expect{\partial f}(\mb q)=c(\theta,k)\mb q - c(\theta,k)\mb q = \mb 0.
\end{equation}
Therefore $\mb q\in\mc{S}$ is stationary.

To see that $\set{\pm\mb e_i:i\in[n]}$ are the global minimizers, note that for all $\mb q\in\bb S^{n-1}$, we have
\begin{equation}
  \expect{f}(\mb q) = \expect{\norm{\mb q_\Omega}{}} \ge \expect{\norm{\mb q_\Omega}{}^2} = \theta.
\end{equation}
Equality holds if and only if $\norm{\mb q_\Omega}{} \in\set{0,1}$ for all $\Omega$, which is only satisfied at $\mb q\in\set{\pm\mb e_i:i\in[n]}$. To see that the other $\mb q \in \mc S$ are saddles, we only need to show that for each such $\mb q$ there exists a tangent direction along which $\mb q$ is local maximizer.  Indeed, for any other $\mb q_0 \in \mc S$, there exists at least two non-zero entries (with equal absolute value): wlog assume that $q_1=q_n>0$ and consider the reparametrization in~\cref{appendix:reparam}. By the proof of~\cref{prop:inward_radial_grad}, it is obvious that $\expect{g}$ is differential in $\mb w(\mb q_0)$ direction with $D^c_{\mb w/\norm{\mb w}{}} \expect{g}\paren{\mb w(\mb q_0)} = 0$. Moreover, \cref{prop:asym_directional_curv} implies that $\expect{g}$ has a strict negative curvature in $\mb w\paren{\mb q_0}$ direction. Thus, $\mb w\paren{\mb q_0}$ is a local maximizer for $\expect{g}$ in $\mb w\paren{\mb q_0}$ direction, implying that $\expect{f}(\mb q)$ is locally maximized at $\mb q_0$ along the tangent direction $\brac{-\mb q_{-n}; \paren{1-q_n^2}/q_n}$. So $\mb q_0$ is a saddle point.

The other direction (all other points are not stationary) is implied by (the proof of)~\cref{theorem:pop-geometry}, which guarantees that $\mb 0\notin\expect{\rgrad f}(\mb q)$ whenever $\mb q\notin \mc{S}$. Indeed, as long as $\mb q\notin\mc{S}$, $\mb q$ has a maximum absolute value coordinate (say $n$) and another non-zero coordinate with strictly smaller absolute value (say $j$). For this pair of indices, the proof of~\cref{theorem:pop-geometry}(a) goes through for index $j$ (even if $\mb q\in\mc{S}_{\zeta_0}^{(n+)}$ does not necessarily hold because the max index might not be unique) and the quantity $\inf \innerprod{\expect{\partial_R f}\paren{\mb q}}{\mb e_j/q_j - \mb e_n/q_n}$ is strictly positive, which implies that $\mb 0\notin \expect{\rgrad f}(\mb q)$.

\subsection{Reparametrization}
\label{appendix:reparam}
For analysis purposes, we introduce the reparametrization $\mb w=\mb q_{1:(n-1)}$ in the region $\mc{S}_0^{(n+)}$, following~\citep{SunEtAl2015Complete} . With this reparametrization, the problem becomes
\begin{align} \label{eq:main_ncvx_re}
\mini_{\mb w \in \R^{n-1}} \; g\paren{\mb w} \doteq \objscale \frac{1}{m}\norm{\brac{\mb w; \sqrt{1-\norm{\mb w}{}^2}}^\top \mb X}{1} \quad \st \; \norm{\mb w}{} \le \sqrt{\frac{n-1}{n}}.
\end{align}
The constraint comes from the fact that $q_n\ge 1/\sqrt{n}$ and thus $\|\mb w\| \le \sqrt{(n-1)/n}$.

\begin{lemma} \label{lem:asymp_obj}
We have
\begin{align}
\bb E_{\mb x \sim_{iid} \bgt} g\paren{\mb w} =
\paren{1-\theta} \bb E_{\Omega} \norm{\mb w_{\Omega}}{} + \theta \bb E_{\Omega} \sqrt{1-\norm{\mb w_{\Omega^c}}{}^2}.
\end{align}
\end{lemma}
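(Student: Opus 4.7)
The plan is to evaluate $\bb E[g(\mb w)]$ by first integrating out the Gaussian part of the Bernoulli--Gaussian law conditional on its support, and then splitting on whether the last coordinate is in the support.

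First, since the $\mb x_i$'s are iid copies of $\mb x \sim \bgt$, linearity of expectation and the definition of $g$ give
\begin{align*}
\bb E_{\mb x \sim_{iid} \bgt} g\paren{\mb w} = \objscale \bb E_{\mb x \sim \bgt} \abs{\mb q^\top \mb x}, \quad \text{where} \quad \mb q \doteq \brac{\mb w; \sqrt{1 - \norm{\mb w}{}^2}} \in \bb S^{n-1}.
\end{align*}
Conditioning on the support $\Omega \subset [n]$ of $\mb x$ (which is $\mathrm{Ber}(\theta)^n$), the non-zero entries $\mb x_\Omega$ are iid standard Gaussian, so $\mb q^\top \mb x = \mb q_\Omega^\top \mb z_\Omega \sim \normal\paren{0, \norm{\mb q_\Omega}{}^2}$ given $\Omega$. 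Using the identity $\bb E\abs{Z} = \sqrt{2/\pi}\,\sigma$ for $Z \sim \normal\paren{0, \sigma^2}$, the factor $\sqrt{\pi/2}$ cancels and one obtains
\begin{align*}
\bb E_{\mb x \sim_{iid} \bgt} g\paren{\mb w} = \bb E_\Omega \norm{\mb q_\Omega}{}.
\end{align*}

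Next I would split on whether $n \in \Omega$. Write $\Omega' \doteq \Omega \cap [n-1]$; by independence of coordinates, $\Omega'$ is distributed as $\mathrm{Ber}(\theta)^{n-1}$ on $[n-1]$ and is independent of the indicator $\indic{n \in \Omega}$, which has probability $\theta$.
\begin{itemize}
\item If $n \notin \Omega$ (probability $1-\theta$), then $\mb q_\Omega = \mb w_{\Omega'}$, so $\norm{\mb q_\Omega}{} = \norm{\mb w_{\Omega'}}{}$.
\item If $n \in \Omega$ (probability $\theta$), then $\norm{\mb q_\Omega}{}^2 = \norm{\mb w_{\Omega'}}{}^2 + q_n^2 = \norm{\mb w_{\Omega'}}{}^2 + 1 - \norm{\mb w}{}^2 = 1 - \norm{\mb w_{(\Omega')^c}}{}^2$, where $(\Omega')^c = [n-1] \setminus \Omega'$ and I used $\norm{\mb w_{\Omega'}}{}^2 + \norm{\mb w_{(\Omega')^c}}{}^2 = \norm{\mb w}{}^2$.
\end{itemize}
Combining the two cases and relabeling $\Omega'$ as $\Omega$ (now a $\mathrm{Ber}(\theta)^{n-1}$ support on $[n-1]$, consistent with the statement) yields the claimed identity.

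There is essentially no obstacle here; the only care required is in distinguishing the ambient support $\Omega \subset [n]$ of $\mb x$ from the support $\Omega \subset [n-1]$ appearing in the target formula, and in applying the identity $q_n^2 + \norm{\mb w}{}^2 = 1$ together with the orthogonal decomposition of $\norm{\mb w}{}^2$ over $\Omega'$ and $(\Omega')^c$ to produce the $\sqrt{1 - \norm{\mb w_{\Omega^c}}{}^2}$ term.
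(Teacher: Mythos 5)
Your proof is correct and follows essentially the same route as the paper's: both condition on the Bernoulli support, use $\bb E\abs{Z} = \sqrt{2/\pi}\,\sigma$ to reduce to $\bb E_\Omega \norm{\mb q_\Omega}{}$, and split on whether the last coordinate is active, with the identity $\norm{\mb w_{\Omega'}}{}^2 + 1 - \norm{\mb w}{}^2 = 1 - \norm{\mb w_{(\Omega')^c}}{}^2$ producing the second term. The only difference is cosmetic ordering (you integrate the Gaussian first and then split on the last coordinate, whereas the paper splits first), which changes nothing of substance.
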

\begin{proof}
Direct calculation gives
\begin{align}
& \bb E_{\mb x \sim_{iid} \bgt} g\paren{\mb w} \\
=\; & \objscale\bb E_{\Omega \sim_{iid} \mathrm{Ber}(\theta), \omega \sim \mathrm{Ber}(\theta)} \bb E_{\mb z \sim_{iid} \normal(0, 1), z \sim \normal(0, 1)} \abs{\brac{\mb w; \sqrt{1-\norm{\mb w}{}^2}}^\top ([\Omega; \omega] \odot [\mb z; z])} \\
=\; & \objscale \paren{1-\theta} \bb E_{\Omega \sim_{iid} \mathrm{Ber}(\theta)} \bb E_{\mb z \sim_{iid} \normal(0, 1)} \abs{\mb w_{\Omega}^\top \mb z} \nonumber \\
& \quad + \objscale\theta \bb E_{\Omega \sim_{iid} \mathrm{Ber}(\theta)} \bb E_{\mb z \sim_{iid} \normal(0, 1), z \sim \normal(0, 1)} \abs{\mb w_{\Omega}^\top \mb z + \sqrt{1-\norm{\mb w}{}^2}z} \\
=\; & \paren{1-\theta}  \bb E_{\Omega \sim_{iid} \mathrm{Ber}(\theta)} \norm{\mb w_{\Omega}}{} + \theta \bb E_{\Omega \sim_{iid} \mathrm{Ber}(\theta)} \sqrt{1 - \norm{\mb w_{\Omega^c}}{}^2},
\end{align}
as claimed.
\end{proof}

\begin{lemma}[Negative-curvature region]  \label{prop:asym_directional_curv}
For all unit vector $\mb v \in \bb S^{n-1}$ and all $s \in (0, 1)$, let
\begin{align}
h_{\mb v}\paren{s} \doteq \expect{g}\paren{s\mb v}.
\end{align}
It holds that
\begin{align}
\nabla^2 h_{\mb v}\paren{s} \le - \theta \paren{1-\theta}.
\end{align}
In other words, for all $\mb w \ne \mb 0$, $\pm \mb w/\norm{\mb w}{}$ is a direction of negative curvature.
\end{lemma}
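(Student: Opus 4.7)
The plan is to compute $\nabla^2 h_{\mb v}(s)$ directly from the closed-form expression for $\expect{g}$ given in the preceding Lemma~\ref{lem:asymp_obj}, and show that the required bound follows by an elementary calculus argument followed by taking expectation over $\Omega$.

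First I would substitute $\mb w = s\mb v$ into
\begin{equation}
\expect{g}(\mb w) = (1-\theta)\,\EOmega \norm{\mb w_\Omega}{} + \theta\, \EOmega \sqrt{1-\norm{\mb w_{\Omega^c}}{}^2},
\end{equation}
to get
\begin{equation}
h_{\mb v}(s) = (1-\theta)\, s\, \EOmega \norm{\mb v_\Omega}{} + \theta\, \EOmega \sqrt{1 - s^2 \norm{\mb v_{\Omega^c}}{}^2}.
\end{equation}
The first term is linear in $s$, so it contributes nothing to $\nabla^2 h_{\mb v}$; everything is concentrated in the second term.

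Next I would justify exchanging $\nabla^2_s$ with $\EOmega$ (a finite sum over subsets of $[n]$, so this is immediate) and then compute, for each fixed $\Omega$, the second derivative of $s \mapsto \sqrt{1-s^2 a^2}$ with $a = \norm{\mb v_{\Omega^c}}{}$. A short direct calculation yields
\begin{equation}
\frac{d^2}{ds^2}\sqrt{1-s^2 a^2} = -\frac{a^2}{(1-s^2 a^2)^{3/2}}.
\end{equation}
Since $s \in (0,1)$ and $a = \norm{\mb v_{\Omega^c}}{} \le \norm{\mb v}{} = 1$, the denominator $(1-s^2 a^2)^{3/2}$ lies in $(0,1]$, so the second derivative is at most $-a^2$ in value. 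Hence
\begin{equation}
\nabla^2 h_{\mb v}(s) = \theta\, \EOmega \left(-\frac{\norm{\mb v_{\Omega^c}}{}^2}{(1-s^2 \norm{\mb v_{\Omega^c}}{}^2)^{3/2}}\right) \le -\theta\, \EOmega \norm{\mb v_{\Omega^c}}{}^2.
\end{equation}

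Finally I would compute the expectation $\EOmega \norm{\mb v_{\Omega^c}}{}^2$ by linearity: writing $\norm{\mb v_{\Omega^c}}{}^2 = \sum_i v_i^2 \indic{i \notin \Omega}$ and using $\prob{i \notin \Omega} = 1-\theta$ gives $\EOmega \norm{\mb v_{\Omega^c}}{}^2 = (1-\theta)\norm{\mb v}{}^2 = 1-\theta$. Combining yields $\nabla^2 h_{\mb v}(s) \le -\theta(1-\theta)$, as claimed. No step here is really an obstacle: the content of the lemma is entirely in the explicit form of $\expect{g}$ (already established in Lemma~\ref{lem:asymp_obj}) together with the convexity of $s \mapsto -\sqrt{1-s^2 a^2}$ with a quantitative curvature bound; the only minor point to verify is the validity of differentiating under the (finite) expectation, which is immediate.
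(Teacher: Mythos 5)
Your proposal is correct and follows essentially the same route as the paper's proof: substitute the closed form of $\expect{g}$ from~\cref{lem:asymp_obj}, note the linear term vanishes under $\nabla^2_s$, compute $\frac{d^2}{ds^2}\sqrt{1-s^2a^2} = -a^2/(1-s^2a^2)^{3/2}$ termwise, bound the denominator by $1$, and use $\EOmega\norm{\mb v_{\Omega^c}}{}^2 = 1-\theta$. No discrepancies to note.
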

\begin{proof}
By~\cref{lem:asymp_obj},
\begin{align}
h_{\mb v}\paren{s} = \paren{1-\theta}  s\bb E_{\Omega} \norm{\mb v_{\Omega}}{} + \theta \bb E_{\Omega} \sqrt{1-s^2\norm{\mb v_{\Omega^c}}{}^2}.
\end{align}
For $s \in (0, 1)$, $h_{\mb v}(s)$ is twice differentiable, and we have
\begin{align}
\nabla^2 h_{\mb v}(s)
& = -\theta \bb E_{\Omega} \frac{\norm{\mb v_{\Omega^c}}{}^2}{\paren{1-s^2\norm{\mb v_{\Omega^c}}{}^2}^{3/2}} \\
& \le -\theta \bb E_{\Omega} \norm{\mb v_{\Omega^c}}{}^2 = -\theta \paren{1-\theta},
\end{align}
completing the proof.
\end{proof}

\begin{lemma}[Inward gradient]  \label{prop:inward_radial_grad}
  For any $\mb w$ with $\norm{\mb w}{}^2 + \norm{\mb w}{\infty}^2 \le 1$,
  \begin{align}
    D^c_{\mb w/\norm{\mb w}{}}\expect{g}(\mb w)
    \ge \theta \paren{1-\theta} \paren{1/\sqrt{1+\norm{\mb w}{\infty}^2/\norm{\mb w}{}^2} - \norm{\mb w}{}}.
  \end{align}
\end{lemma}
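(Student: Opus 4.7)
The plan is to parameterize $\mb w = s \mb u$ with $\mb u \doteq \mb w/\norm{\mb w}{}$ and $s \doteq \norm{\mb w}{}$, and reduce the Clarke directional derivative to the one-dimensional function $h_{\mb u}(s) \doteq \expect{g}(s\mb u)$. From the closed form in~\cref{lem:asymp_obj}, $h_{\mb u}$ is continuously differentiable in $s$ on $(0,1)$, so $D^c_{\mb w/\norm{\mb w}{}}\expect{g}(\mb w) = h'_{\mb u}(s)\big|_{s = \norm{\mb w}{}}$, and direct differentiation gives
\begin{align*}
h'_{\mb u}(s) = (1-\theta)\, \EOmega \norm{\mb u_\Omega}{} - \theta s\, \EOmega \frac{\norm{\mb u_{\Omega^c}}{}^2}{\sqrt{1 - s^2 \norm{\mb u_{\Omega^c}}{}^2}}.
\end{align*}

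Set $\mu \doteq \norm{\mb u}{\infty}$ and $s_0 \doteq 1/\sqrt{1+\mu^2}$. Because $\norm{\mb w}{}^2 + \norm{\mb w}{\infty}^2 = s^2(1+\mu^2)$, the feasibility condition is exactly $s \le s_0$, so $s_0 \mb u$ sits precisely on the boundary of the feasible region along the ray through $\mb w$. The key structural input is the negative-curvature estimate $h''_{\mb u}(\xi) \le -\theta(1-\theta)$ on $(0,1)$ from~\cref{prop:asym_directional_curv}. Integrating it from $r \doteq \norm{\mb w}{}$ to $s_0$,
\begin{align*}
h'_{\mb u}(r) = h'_{\mb u}(s_0) - \int_r^{s_0} h''_{\mb u}(\xi)\, d\xi \ge h'_{\mb u}(s_0) + \theta(1-\theta)(s_0 - r),
\end{align*}
so the lemma reduces to the single inequality $h'_{\mb u}(s_0) \ge 0$: given this, $h'_{\mb u}(r) \ge \theta(1-\theta)(1/\sqrt{1+\mu^2} - r)$, matching the target bound.

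To establish $h'_{\mb u}(s_0) \ge 0$, I would substitute $s_0^2 = 1/(1+\mu^2)$ and use $\norm{\mb u_\Omega}{}^2 + \norm{\mb u_{\Omega^c}}{}^2 = 1$ to rewrite
\begin{align*}
h'_{\mb u}(s_0) = (1-\theta)\, \EOmega \norm{\mb u_\Omega}{} - \theta\, \EOmega \frac{1 - \norm{\mb u_\Omega}{}^2}{\sqrt{\mu^2 + \norm{\mb u_\Omega}{}^2}},
\end{align*}
and split the expectations according to whether $j^* \doteq \argmax_i |u_i|$ lies in $\Omega$. On $\set{j^* \in \Omega}$ one has $\norm{\mb u_\Omega}{}^2 \ge \mu^2$, so the denominator on the right is at least $\sqrt{2}\mu$; on $\set{j^* \notin \Omega}$, $1 - \norm{\mb u_\Omega}{}^2 \ge \mu^2$. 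Combining these conditional bounds with the $L^2$-to-$L^1$ inequality $\EOmega \norm{\mb u_\Omega}{} \ge \EOmega \norm{\mb u_\Omega}{}^2 = \theta$ (which holds since $\norm{\mb u_\Omega}{} \le 1$), together with the reflection symmetry at the sphere point $\mb q^* = (s_0 \mb u; s_0 \mu)$ where the pole $q^*_n$ and the off-pole maximizer $q^*_{j^*}$ are tied in absolute value, should close the inequality.

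The main obstacle will be executing that last step rigorously: bounding the positive and negative expectations separately is too loose to yield $h'_{\mb u}(s_0) \ge 0$ for every admissible $\mb u$, so the proof must jointly control the two terms. I expect to achieve this by exploiting the coordinate independence of the Bernoulli mask together with the symmetry at $\mb q^*$: the coordinate swap of $n$ and $j^*$ fixes $\mb q^*$ and leaves $\expect{f}$ invariant, forcing $(\nabla \expect{f}(\mb q^*))_n = (\nabla \expect{f}(\mb q^*))_{j^*}$, which combined with the $1$-homogeneity identity $\<\nabla \expect{f}(\mb q^*), \mb q^*\> = \expect{f}(\mb q^*)$ pins down the radial gradient component and yields the desired nonnegativity.
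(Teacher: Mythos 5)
Your scaffolding coincides with the paper's proof: reduce to the radial function $h_{\mb u}(s)=\expect{g}(s\mb u)$, compute $h'_{\mb u}(s)=(1-\theta)\EOmega\norm{\mb u_\Omega}{}-\theta s\,\EOmega\brac{\norm{\mb u_{\Omega^c}}{}^2/\sqrt{1-s^2\norm{\mb u_{\Omega^c}}{}^2}}$, and propagate from the boundary radius $s_0=1/\sqrt{1+\norm{\mb u}{\infty}^2}$ using the curvature bound of~\cref{prop:asym_directional_curv}. The gap is exactly the step you flag: the claim $h'_{\mb u}(s_0)\ge 0$ is never proved, and the closing strategy you sketch does not close it. The swap-symmetry between coordinate $n$ and $j^*$ at $\mb q^*=[s_0\mb u; s_0\mu]$ maps the tangent direction $q^*_n\mb q^*-\mb e_n$ to $q^*_{j^*}\mb q^*-\mb e_{j^*}$, so it only tells you these two directional derivatives are equal; summing them reproduces (twice) the very quantity you want to sign, so the argument is circular. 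Trying instead to combine $(\nabla\expect{f}(\mb q^*))_n=(\nabla\expect{f}(\mb q^*))_{j^*}$ with Euler's identity $\innerprod{\nabla\expect{f}(\mb q^*)}{\mb q^*}=\expect{f}(\mb q^*)$ and the sign-consistency $q^*_i(\nabla\expect{f}(\mb q^*))_i\ge 0$ yields only $(\nabla\expect{f}(\mb q^*))_n\le \expect{f}(\mb q^*)/(2q^*_n)$, whereas the needed statement (equivalent to $h'_{\mb u}(s_0)\ge 0$) is $(\nabla\expect{f}(\mb q^*))_n\le q^*_n\,\expect{f}(\mb q^*)$; since $q^*_n=\mu/\sqrt{1+\mu^2}\le 1/\sqrt{2}$, your bound is weaker whenever $\mu<1$, and the slack must be recovered from the coordinates $i\ne n,j^*$, which symmetry and homogeneity alone do not control.

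The paper's missing device is a per-coordinate decomposition of \emph{both} terms of $h'_{\mb u}(s)$, conditioning on whether $i\in\Omega$ (for the positive term, writing $\norm{\mb u_\Omega}{}=\norm{\mb u_\Omega}{}^2/\norm{\mb u_\Omega}{}$) or $i\notin\Omega$ (for the negative term), which factors out $\theta(1-\theta)$ and gives
\begin{align}
  h'_{\mb u}(s) \;=\; \theta(1-\theta)\,s\sum_{i} u_i^2\, \EOmega\brac{\frac{1}{\sqrt{s^2u_i^2+s^2\norm{\mb u_{\Omega\setminus\set{i}}}{}^2}}-\frac{1}{\sqrt{1-s^2+s^2\norm{\mb u_{\Omega\setminus\set{i}}}{}^2}}},
\end{align}
where each bracket is nonnegative precisely when $s^2u_i^2\le 1-s^2$, i.e.\ $s\le 1/\sqrt{1+u_i^2}$; taking the worst coordinate gives $h'_{\mb u}(s)\ge 0$ for all $s\le 1/\sqrt{1+\norm{\mb u}{\infty}^2}$ (not merely at $s_0$), after which the curvature step you already have yields the quantitative lower bound. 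Without this termwise comparison (or some substitute joint control of the two expectations), your proof is incomplete at its decisive step; the rest of the proposal is sound, including the observation that a lower bound on the one-sided radial derivative transfers to the Clarke derivative.
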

\begin{proof}
For any unit vector $\mb v \in \R^{n-1}$, define $h_{\mb v}\paren{t} \doteq \expect{g}\paren{t \mb v}$ for $t \in (0, 1)$. We have from~\cref{lem:asymp_obj}
\begin{align}
  h_{\mb v}\paren{t}
  = \paren{1-\theta} t \bb E_{\Omega} \norm{\mb v_\Omega}{} + \theta \bb E_{\Omega} \sqrt{1-t^2 \norm{\mb v_\Omega^c}{}^2}.
\end{align}
Moreover,
\begin{align}
  \nabla_t h_{\mb v}\paren{t}
  & = \paren{1-\theta} \bb E_{\Omega} \norm{\mb v_\Omega}{} - \theta \bb E_\Omega \frac{t \norm{\mb v_\Omega^c}{}^2}{\sqrt{1 - t^2 \norm{\mb v_\Omega^c}{}^2}} \\
  & = \paren{1-\theta} \bb E_{\Omega} \frac{\norm{\mb v_\Omega}{}^2}{\norm{\mb v_\Omega}{}} - \theta \bb E_\Omega \frac{t \norm{\mb v_\Omega^c}{}^2}{\sqrt{1 - t^2 \norm{\mb v_\Omega^c}{}^2}}   \quad (\text{assuming $\frac{0}{0} \doteq 0$}) \\
  & = \paren{1-\theta}\sum_{i=1}^{n-1} \bb E_{\Omega} \frac{v_i^2 \indicator{i \in \Omega}}{\sqrt{v_i^2 \indicator{i \in \Omega} + \norm{\mb v_{\Omega \setminus\set{i}}}{}^2}} - \theta \sum_{i = 1}^{n-1}\bb E_\Omega \frac{t v_i^2 \indicator{i \notin \Omega}}{\sqrt{1 - t^2 v_i^2 \indicator{i \notin \Omega} - t^2 \norm{\mb v_{\Omega^c\setminus\set{i}}}{}^2}} \\
  & = \theta \paren{1 -\theta} \sum_{i=1}^{n-1} \bb E_\Omega \brac{\frac{v_i^2}{\sqrt{v_i^2 +\norm{\mb v_{\Omega \setminus\set{i}}}{}^2 }}   -  \frac{t v_i^2 }{\sqrt{1 - t^2 v_i^2 - t^2 \norm{\mb v_{\Omega^c\setminus\set{i}}}{}^2}}} \\
  & = \theta \paren{1 -\theta} t \sum_{i=1}^{n-1} v_i^2 \bb E_\Omega \brac{\frac{1}{\sqrt{t^2 v_i^2 + t^2 \norm{\mb v_{\Omega \setminus\set{i}}}{}^2 }}   -  \frac{ 1 }{\sqrt{1 - t^2 v_i^2 - t^2 \norm{\mb v_{\Omega^c\setminus\set{i}}}{}^2}}} \\
  & = \theta \paren{1 -\theta} t \sum_{i=1}^{n-1} v_i^2 \bb E_\Omega \brac{\frac{1}{\sqrt{t^2 v_i^2 + t^2 \norm{\mb v_{\Omega \setminus\set{i}}}{}^2 }}   -  \frac{ 1 }{\sqrt{1-t^2\|\mb v\|^2 + t^2 \norm{\mb v_{\Omega\setminus\set{i}}}{}^2}}}.
\end{align}
We are interested in the regime of $t$ so that
\begin{align}
  1 - t^2\norm{\mb v}{}^2 \ge t^2 \norm{\mb v}{\infty}^2 \Longrightarrow t \le 1/\sqrt{1+\norm{\mb v}{\infty}^2}.
\end{align}
So $\nabla_t h_{\mb v}\paren{t} \ge 0$ holds always for $t \le 1/\sqrt{1+\norm{\mb v}{\infty}^2}$.  By~\cref{prop:asym_directional_curv},  $\nabla^2 h_{\mb v}\paren{t} \le -\theta \paren{1-\theta}$ over $t \in (0, 1)$, which implies
\begin{align}
\innerprod{\nabla_t h_{\mb v}\paren{t_1} - \nabla_t h_{\mb v}\paren{t_2}}{t_1 - t_2} \le -\theta\paren{1-\theta} \paren{t_1 - t_2}^2.
\end{align}
Taking $t_1 =  1/\sqrt{1+\norm{\mb v}{\infty}^2}$ and considering $t_2 \in [0, t_1]$, we have
\begin{align}
  \nabla_t h_{\mb v}\paren{t_2}
  \ge  \nabla_t h_{\mb v}\paren{t_1} + \theta\paren{1-\theta} \paren{t_1 - t_2}
  \ge \theta\paren{1-\theta} \paren{1/\sqrt{1+\norm{\mb v}{\infty}^2} - t_2} .
\end{align}
For any $\mb w$, the above argument implies that $\expect{g}\paren{\mb w}$ is differentiable in the $\mb w/\norm{\mb w}{}$ direction, with $D_{\mb w/\norm{\mb w}{}} \expect{g}\paren{\mb w} = \nabla h_{\mb w/\norm{\mb w}{}}\paren{\norm{\mb w}{}}$. Moreover, since $\expect{g}\paren{\mb w}$ is differentiable along the $\mb w/\norm{\mb w}{}$ direction, $D^c_{\mb w/\norm{\mb w}{}} \expect{g}\paren{\mb w} = D_{\mb w/\norm{\mb w}{}} \expect{g}\paren{\mb w}$, completing the proof.
\end{proof}

\subsection{Proof of~\cref{theorem:pop-geometry}}
We first show~\cref{equation:pop-grad-inward}. For $\mb e_j$ with $q_j \ne 0$,
\begin{multline}
  \frac{1}{q_j} \mb e_j^\top \expect{\partial f}\paren{\mb q}
  = \frac{1}{q_j} \mb e_j^\top \bb E_{\Omega} \brac{\frac{\mb q_{\Omega}}{\norm{\mb q_{\Omega}}{}} \indicator{\mb q_{\Omega} \ne \mb 0}  + \set{\mb v_{\Omega}: \norm{\mb v_{\Omega}}{} \le 1} \indicator{\mb q_{\Omega} = \mb 0}  }\\
  = \frac{1}{q_j} \bb E_{\Omega} \brac{\frac{\innerprod{\mb q_{\Omega}}{\mb e_j}}{\norm{\mb q_{\Omega}}{}} \indicator{\mb q_{\Omega} \ne \mb 0} }
  = \frac{1}{q_j} \theta q_j \bb E_{\Omega} \brac{\frac{1}{\norm{\mb q_{\Omega}}{}} \indicator{j \in \Omega} }
  = \theta \bb E \brac{    \frac{1}{\sqrt{q_j^2 + \norm{\mb q_{\Omega\setminus \set{j}}}{}^2}}   }.
\end{multline}
So for all $j$ with $q_j \ne 0$, we have
\begin{align}
  & \innerprod{\expect{\partial_R f}\paren{\mb q}}{\frac{1}{q_j} \mb e_j - \frac{1}{q_n} \mb e_n} \nonumber \\
  =\; & \innerprod{\paren{\mb I - \mb q \mb q^\top} \expect{\partial f}\paren{\mb q}}{\frac{1}{q_j} \mb e_j - \frac{1}{q_n} \mb e_n} \\
  =\; & \innerprod{\expect{\partial f}\paren{\mb q}}{\frac{1}{q_j} \mb e_j - \frac{1}{q_n} \mb e_n} \\
  =\; & \theta \bb E_{\Omega} \brac{    \frac{1}{\sqrt{q_j^2 + \norm{\mb q_{\Omega\setminus \set{j}}}{}^2}}   } - \theta \bb E_{\Omega} \brac{    \frac{1}{\sqrt{q_n^2 + \norm{\mb q_{\Omega\setminus \set{n}}}{}^2}}   } \\
  =\; &  \theta^2 \bb E_{\Omega} \brac{    \frac{1}{\sqrt{q_j^2 + q_n^2 + \norm{\mb q_{\Omega\setminus \set{j, n}}}{}^2}}   } + \theta \paren{1-\theta} \bb E_{\Omega} \brac{    \frac{1}{\sqrt{q_j^2 +  \norm{\mb q_{\Omega\setminus \set{j, n}}}{}^2}}   } \nonumber \\
  & \quad - \theta^2 \bb E_{\Omega} \brac{    \frac{1}{\sqrt{q_j^2 + q_n^2 + \norm{\mb q_{\Omega\setminus \set{j, n}}}{}^2}}   } - \theta \paren{1-\theta} \bb E_{\Omega} \brac{    \frac{1}{\sqrt{q_n^2 +  \norm{\mb q_{\Omega\setminus \set{j, n}}}{}^2}}   }\\
  =\;  & \theta \paren{1-\theta} \bb E_{\Omega} \brac{ \frac{1}{\sqrt{q_j^2 +  \norm{\mb q_{\Omega\setminus \set{j, n}}}{}^2}}  -  \frac{1}{\sqrt{q_n^2 +  \norm{\mb q_{\Omega\setminus \set{j, n}}}{}^2}} } \\
  =\;  & \theta \paren{1-\theta} \bb E_{\Omega} \int_{q_j^2}^{q_n^2} \frac{1}{2\paren{t + \norm{\mb q_{\Omega\setminus \set{j, n}}}{}^2}^{3/2}}\; dt \\
  \ge\; & \theta\paren{1-\theta} \frac{1}{2}\paren{q_n^2 - q_j^2}
          \ge  \frac{1}{2}\theta\paren{1-\theta} \paren{q_n^2 - \norm{\mb q_{-n}}{\infty}^2}
          \ge \frac{1}{2}\theta\paren{1-\theta} \frac{\zeta_0}{1+\zeta_0} q_n^2 \\
          &\ge \frac{1}{2n}\theta\paren{1-\theta} \frac{\zeta_0}{1+\zeta_0},
\end{align}
as claimed.

We now turn to showing~\cref{equation:pop-grad-star} using the reparametrization in~\cref{appendix:reparam}. We have
\begin{align}
  \innerprod{\partial_R f\paren{\mb q}}{q_n \mb q - \mb e_n}
  = \innerprod{\partial f\paren{\mb q}}{q_n \mb q - \mb e_n}
  = q_n \innerprod{\partial g\paren{\mb w}}{\mb w},
\end{align}
where the second equality follows by differentiating $g$ via the chain rule (\cref{thm:subdiff_chain_rule}). Thus,
\begin{align}
  \innerprod{\expect{\partial_R f}\paren{\mb q} }{q_n \mb q - \mb e_n}
  = q_n \innerprod{\expect{\partial g}\paren{\mb w} }{\mb w}.
\end{align}
By~\cref{prop:inward_radial_grad},
\begin{align}
  D^c_{-\mb w/\norm{\mb w}{}} \expect{g}\paren{\mb w} \le -\theta \paren{1-\theta} \paren{1/\sqrt{1 + \norm{\mb w}{\infty}^2/\norm{\mb w}{}^2} - \norm{\mb w}{}},
\end{align}
as $\expect{g}\paren{\mb w}$ is differentiable in the $\mb w/\norm{\mb w}{}$ direction. Now by the definition of Clarke subdifferential,
\begin{align}
D^c_{-\mb w/\norm{\mb w}{}} \expect{g}\paren{\mb w}
=   \sup \innerprod{\expect{\partial g}\paren{\mb w}}{-\frac{\mb w}{\norm{\mb w}{}}},
\end{align}
implying that
\begin{align}
   q_n \innerprod{\expect{\partial g}\paren{\mb w}}{\mb w}
   \ge \norm{\mb w}{} \theta \paren{1-\theta} \cdot q_n \paren{\frac{\norm{\mb w}{}}{\sqrt{\norm{\mb w}{}^2 + \norm{\mb w}{\infty}^2}} - \norm{\mb w}{}}.
\end{align}
For each radial direction $\mb v \doteq \mb w/\norm{\mb w}{}$, consider points of the form $t \mb v$ with $t \le 1/\sqrt{1+\norm{\mb v}{\infty}^2}$. Obviously, the function
\begin{align}
  \hbar\paren{t} \doteq q_n\paren{t \mb v} \paren{\frac{\norm{t\mb v}{}}{\sqrt{\norm{t \mb v}{}^2 + \norm{t \mb v}{\infty}^2}} - \norm{t\mb v}{}} = q_n\paren{t \mb v} \paren{\frac{1}{\sqrt{1+\norm{\mb v}{\infty}^2}} - t}
\end{align}
is monotonically decreasing wrt $t$. Thus, to derive a lower bound, it is enough to consider the largest $t$ allowed. In $\mc S_{\zeta_0}^{(n+)}$, the limit amounts to requiring $q_n^2 /\norm{\mb w}{\infty}^2 = 1 + \zeta_0$,
\begin{align}
  1 - t_0^2 = t_0^2 \norm{\mb v}{\infty}^2 \paren{1+\zeta_0}
  \Longrightarrow t_0 = \frac{1}{\sqrt{1+\paren{1+\zeta_0} \norm{\mb v}{\infty}^2}}.
\end{align}
So for any fixed $\mb v$ and all allowed $t$ for points in $\mc S_{\zeta_0}^{(n+)}$, a uniform lower bound is
\begin{align}
  & \quad q_n\paren{t_0 \mb v} \paren{\frac{1}{\sqrt{1+\norm{\mb v}{\infty}^2}} - t_0} \\
  & \ge \frac{1}{\sqrt{n}} \paren{\frac{1}{\sqrt{1+\norm{\mb v}{\infty}^2}} - \frac{1}{\sqrt{1+\paren{1+\zeta_0} \norm{\mb v}{\infty}^2}}}
  \ge \frac{1}{8\sqrt{n}} \zeta_0 \norm{\mb v}{\infty}^2 \ge \frac{1}{8}\zeta_0 n^{-3/2}.
\end{align}
So we conclude that for all $\mb q \in \mc S_{\zeta_0}^{(n+)}$,
\begin{align}
  \innerprod{\expect{\partial f}\paren{\mb q}}{ q_n \mb q - \mb e_n} \ge \frac{1}{8} \theta \paren{1-\theta}\zeta_0 n^{-3/2} \norm{\mb w}{} = \frac{1}{8} \theta \paren{1-\theta}\zeta_0 n^{-3/2} \norm{\mb q_{-n}}{},
\end{align}
completing the proof.

\section{Proofs for~\cref{section:empirical-geometry}}

\subsection{Covering in the $\dexp$ metric}
For any $\theta \in (0, 1)$, define
\begin{align} \label{eq:dexp_metric}
  \mathrm{d}_{\bb E, \theta} \paren{\mb p, \mb q} \doteq \expect{\indicator{\sign\paren{\mb p^\top \mb x} \ne \sign\paren{\mb q^\top \mb x}} }    \quad \text{with} \; \mb x \sim_{iid} \bgt.
\end{align}
We stress that this notion always depends on $\theta$, and we will omit the subscript $\theta$ when no confusion arises. This indeed defines a metric on subsets of $\bb S^{n-1}$.

\begin{lemma}  \label{lemma:exp_metric}
Over any subset of $\bb S^{n-1}$ with a consistent support pattern,
$\dexp$ is a valid metric.
\end{lemma}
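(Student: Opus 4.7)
The plan is to verify the four metric axioms for $\dexp$ on such a subset, whose points share a common support $I \subset [n]$. Non-negativity is immediate from $\dexp$ being a probability, and symmetry is immediate from the symmetry of the event $\set{\sign(\mb p^\top \mb x) \ne \sign(\mb q^\top \mb x)}$ in $\mb p$ and $\mb q$.

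For the triangle inequality I would use the pointwise three-way sign bound
\[
\indicator{\sign(a) \ne \sign(c)} \le \indicator{\sign(a) \ne \sign(b)} + \indicator{\sign(b) \ne \sign(c)}, \quad a, b, c \in \R,
\]
which is immediate by cases: if the left side equals $1$, then at least one of $\sign(a) = \sign(b)$ or $\sign(b) = \sign(c)$ must fail, so the right side is at least $1$. Specializing to $a = \mb p^\top \mb x$, $b = \mb q^\top \mb x$, $c = \mb r^\top \mb x$ and taking expectation over $\mb x \sim \bgt$ gives $\dexp(\mb p, \mb r) \le \dexp(\mb p, \mb q) + \dexp(\mb q, \mb r)$.

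The main obstacle, and the only place where the common-support assumption is used, is the identity of indiscernibles: $\dexp(\mb p, \mb q) = 0 \Rightarrow \mb p = \mb q$. I would condition on the positive-probability event $\set{\Omega = I}$, which has probability $\theta^{|I|}(1-\theta)^{n-|I|} > 0$ since $\theta \in (0, 1)$. Under this conditioning $\mb x$ restricted to $I$ is a standard Gaussian $\mb g_I \sim \normal(\mb 0, \mb I_{|I|})$, so $\mb p^\top \mb x = \mb p_I^\top \mb g_I$ and $\mb q^\top \mb x = \mb q_I^\top \mb g_I$, where $\mb p_I, \mb q_I$ are nonzero unit vectors in $\R^{|I|}$ by the common-support assumption together with $\mb p, \mb q \in \bb S^{n-1}$. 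Vanishing of $\dexp(\mb p, \mb q)$ forces $\sign(\mb p_I^\top \mb g_I) = \sign(\mb q_I^\top \mb g_I)$ almost surely, so the open half-spaces $\set{\mb z \in \R^{|I|} : \mb p_I^\top \mb z > 0}$ and $\set{\mb z \in \R^{|I|} : \mb q_I^\top \mb z > 0}$ agree up to a Lebesgue-null set; this is only possible when $\mb p_I$ is a positive scalar multiple of $\mb q_I$, and the unit-norm constraint then gives $\mb p_I = \mb q_I$, so $\mb p = \mb q$.
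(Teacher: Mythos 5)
Your proof is correct, but it takes a genuinely different route from the paper's. The paper first derives the closed-form identity $\dexp(\mb p,\mb q)=\frac{1}{\pi}\EOmega \angle\paren{\mb p_\Omega,\mb q_\Omega}$ (conditioning on the support and using the Gaussian sign-disagreement probability) and then imports all metric axioms, in particular the triangle inequality, from the angle metric on spheres, with the consistent-support assumption used to handle the cases where the restricted vectors vanish (via the convention $\angle(\mb 0,\mb v)=0$). You instead verify the axioms directly: your triangle inequality follows from the pointwise bound $\indicator{\sign(a)\ne\sign(c)}\le\indicator{\sign(a)\ne\sign(b)}+\indicator{\sign(b)\ne\sign(c)}$ plus expectation, which is more elementary, requires no angular representation, and (as stated) does not even need the common-support hypothesis — it is just the transitivity-of-equality argument for the discrete metric, and it remains valid under the paper's set-valued $\sign$ convention since the event is simply disagreement of the three-valued sign. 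Your identity of indiscernibles, conditioning on the positive-probability event that the Bernoulli support equals the common support $I$ and arguing that a.e.-equal open halfspaces through the origin force $\mb p_I$ to be a positive multiple of $\mb q_I$, is also sound and is where you use the shared support. What the paper's approach buys in exchange for the extra computation is the angular identity itself, which is reused later (e.g., in the covering argument of~\cref{lemma:cover-cheaper}); your argument proves the lemma more cheaply but does not produce that identity.
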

\begin{proof}
Recall that $\angle\paren{\mb x, \mb y} \doteq \arccos\innerprod{\mb x}{\mb y}$ defines a valid metric on $\bb S^{n-1}$.\footnote{This fact can be proved either directly, see, e.g., page 12 of this online notes: \url{http://www.math.mcgill.ca/drury/notes354.pdf}, or by realizing that the angle equal to the geodesic length, which is the Riemmannian distance over the sphere; see, e.g., Riemannian Distance of Chapter 5 of the book~\cite{Oneill1983Semi}.} In particular, the triangular inequality holds. For $\dexp$ and $\mb p, \mb q \in \bb S^{n-1}$ with the same support pattern, we have
\begin{align}
\dexp\paren{\mb p, \mb q}
  & = \bb E \; \indicator{\sign\paren{\mb p^\top \mb x} \ne \sign\paren{\mb q^\top \mb x}}  \\
  & = \EOmega \bb E_{\mb z \sim \normal\paren{\mb 0, \mb I}} \indicator{\sign\paren{\mb p_\Omega^\top \mb z} \ne \sign\paren{\mb q_\Omega^\top \mb z}} \\
  & = \EOmega \paren{\bb E_{\mb z} \indicator{\mb p_\Omega^\top \mb z \mb q_\Omega \mb z < 0 } +  \bb E_{\mb z} \indicator{\mb p_\Omega^\top \mb z = 0\; \text{or}\; \mb q_\Omega^\top \mb z = 0, \text{not both} }}  \\
  & = \EOmega \paren{\bb E_{\mb z} \indicator{\mb p_\Omega^\top \mb z \mb q_\Omega \mb z < 0 }}  \\
  & = \frac{1}{\pi}\EOmega\angle\paren{\mb p_{\Omega}, \mb q_{\Omega}},
    \label{eq:dexp_angle_equiv}
\end{align}
where we have adopted the convention that $\angle\paren{\mb 0, \mb v} \doteq 0$ for any $\mb v$. It is easy to verify that $\dexp \paren{\mb p, \mb q} = 0 \Longleftrightarrow \mb p = \mb q$, and $\dexp\paren{\mb p, \mb q} = \dexp\paren{\mb q, \mb p}$. To show the triangular inequality, note that for any $\mb p, \mb q$ and $\mb r$ with the same support pattern, $\mb p_{\Omega}$, $\mb q_{\Omega}$, and $\mb r_{\Omega}$ are either identically zero, or all nonzero. For the former case,
\begin{align}
\angle\paren{\mb p_{\Omega}, \mb q_{\Omega}} \le \angle\paren{\mb p_{\Omega}, \mb r_{\Omega}} + \angle\paren{\mb q_{\Omega}, \mb r_{\Omega}}
\end{align}
holds trivially. For the latter, since $\angle\paren{\cdot, \cdot}$ obeys the triangular inequality uniformly over the sphere,
\begin{align}
\angle\paren{\frac{\mb p_{\Omega}}{\norm{\mb p_{\Omega}}{}}, \frac{\mb q_{\Omega}}{\norm{\mb q_{\Omega}}{}}} \le \angle\paren{\frac{\mb p_{\Omega}}{\norm{\mb p_{\Omega}}{}}, \frac{\mb r_{\Omega}}{\norm{\mb r_{\Omega}}{}}} + \angle\paren{\frac{\mb q_{\Omega}}{\norm{\mb q_{\Omega}}{}}, \frac{\mb r_{\Omega}}{\norm{\mb r_{\Omega}}{}}},
\end{align}
which implies
\begin{align}
\angle\paren{\mb p_{\Omega}, \mb q_{\Omega}} \le \angle\paren{\mb p_{\Omega}, \mb r_{\Omega}} + \angle\paren{\mb q_{\Omega}, \mb r_{\Omega}}.
\end{align}
So
\begin{align}
\EOmega \angle\paren{\mb p_{\Omega}, \mb q_{\Omega}} \le \EOmega \angle\paren{\mb p_{\Omega}, \mb r_{\Omega}} + \EOmega \angle\paren{\mb q_{\Omega}, \mb r_{\Omega}},
\end{align}
completing the proof.
\end{proof}

\begin{lemma}[Vector angle inequality] \label{lemma:angle_ineqn}
  For $n\ge 2$, consider $\mb u, \mb v \in \R^n$ so that $\angle\paren{\mb u, \mb v} \le \pi/2$. It holds that\footnote{Fedor Petrov has helped with the proof on MathOverflow: \url{https://mathoverflow.net/questions/306156/controlling-angles-between-vectors-using-sum-of-subvector-angles}. }
  \begin{align}
    \angle \paren{\mb u, \mb v} \le \sum_{\Omega \in \binom{[n]}{2}} \angle \paren{\mb u_{\Omega}, \mb v_{\Omega}}.
  \end{align}
\end{lemma}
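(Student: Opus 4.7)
The plan is to prove the inequality by induction on $n \ge 2$. The base case $n = 2$ is immediate since the only pair $\Omega = \{1,2\}$ satisfies $\mb p_\Omega = \mb p$ and $\mb q_\Omega = \mb q$, giving equality.

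For the inductive step with $n \ge 3$, set $\theta := \angle(\mb p, \mb q) \le \pi/2$, normalize $\mb p, \mb q$ to unit vectors, and write $\theta_\Omega := \angle(\mb p_\Omega, \mb q_\Omega)$. If $\sum_\Omega \theta_\Omega \ge \pi/2$, the conclusion follows immediately from $\theta \le \pi/2$; so assume $\sum_\Omega \theta_\Omega < \pi/2$, which forces every $\theta_\Omega < \pi/2$ and hence $\mb p_\Omega^\top \mb q_\Omega \ge 0$ for every pair. Since $p_i q_i + p_j q_j \ge 0$ for every $i \ne j$ would be violated by two indices with $p_i q_i < 0$, at most one coordinate satisfies $p_i q_i < 0$. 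After a short sign-handling argument isolating this at-most-one ``bad'' coordinate, we may assume $\mb p, \mb q$ both lie in the non-negative orthant.

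The central step is a coordinated pair of Givens rotations. Rotate $\mb p$ in the $(n-1, n)$-plane to zero out its $n$-th coordinate, producing the unit vector $\mb p' = (p_1, \dots, p_{n-2}, \sqrt{p_{n-1}^2 + p_n^2}, 0)$, and similarly obtain $\mb q'$. A direct calculation, with $r_p := \sqrt{p_{n-1}^2 + p_n^2}$ and $\phi_p := \arctan(p_n/p_{n-1})$, gives $\cos\angle(\mb p, \mb p') = 1 - r_p^2(1 - \cos\phi_p) \ge \cos\phi_p$, so $\angle(\mb p, \mb p') \le \arctan(p_n/p_{n-1})$, and symmetrically for $\mb q$. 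The spherical triangle inequality on $\bb S^{n-1}$ then yields
\begin{align*}
  \theta \le \angle(\mb p, \mb p') + \angle(\mb p', \mb q') + \angle(\mb q', \mb q).
\end{align*}
Because $\mb p', \mb q'$ remain in the non-negative orthant, $\mb p'^\top \mb q' \ge 0$, so $\angle(\mb p', \mb q') \le \pi/2$, and the inductive hypothesis in $\R^{n-1}$ bounds the middle term by $\sum_{\Omega' \subseteq [n-1],\, |\Omega'|=2} \angle(\mb p'_{\Omega'}, \mb q'_{\Omega'})$.

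The key trigonometric building block is the elementary inequality
\begin{align*}
  \arctan\sqrt{a^2 + b^2} \le \arctan a + \arctan b \qquad (a, b \ge 0),
\end{align*}
provable by clearing tangents: the case $ab < 1$ reduces to the polynomial inequality $(a+b)^2 \ge (a^2+b^2)(1-ab)^2$, verifiable by expansion, while the case $ab \ge 1$ is immediate from $\arctan a + \arctan b \ge \pi/2$. This inequality appears twice: first to bound the outer rotation contributions $\arctan(p_n/p_{n-1}) + \arctan(q_n/q_{n-1})$ by $\theta_{n-1,n}(\mb p, \mb q)$ via a 2D angular estimate, and second to relate the modified pairwise angles $\theta_{i, n-1}(\mb p', \mb q')$---which involve $\sqrt{p_{n-1}^2 + p_n^2}$ and $\sqrt{q_{n-1}^2 + q_n^2}$---back to $\theta_{i, n-1}(\mb p, \mb q) + \theta_{i, n}(\mb p, \mb q)$. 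Pairs $\{i,j\} \subseteq [n-1]$ not containing $n-1$ are unchanged by the rotation and contribute their original $\theta_{ij}$.

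The main obstacle I anticipate is the precise angle bookkeeping: the excesses of the modified pairwise angles over the originals must be absorbed by the $\theta_{i, n}(\mb p, \mb q)$ contributions for $i \in [n-2]$, while the two outer rotation contributions must be absorbed by $\theta_{n-1, n}(\mb p, \mb q)$, all simultaneously and consistently. Verifying that the trigonometric inequality above balances this accounting in every case (together with cleanly handling the single ``bad-sign'' coordinate) constitutes the bulk of the technical work.
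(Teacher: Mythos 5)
Your induction-by-Givens-rotation scheme has a genuine flaw at exactly the point you flag as "bookkeeping": the outer rotation costs cannot be absorbed by $\angle\paren{(p_{n-1},p_n),(q_{n-1},q_n)}$. For nonnegative 2D vectors that pairwise angle equals the \emph{difference} $\abs{\phi_p-\phi_q}$ of the two polar angles, whereas your rotations cost the \emph{sum} $\phi_p+\phi_q=\arctan(p_n/p_{n-1})+\arctan(q_n/q_{n-1})$. Taking $\mb p\approx\mb q$ with $p_{n-1}=p_n$ makes the sum $\pi/2$ while $\theta_{n-1,n}\approx 0$ and all other $\theta_\Omega\approx 0$; the chain $\angle\paren{\mb p,\mb q}\le\angle\paren{\mb p,\mb p'}+\angle\paren{\mb p',\mb q'}+\angle\paren{\mb q',\mb q}$ then produces an intermediate quantity near $\pi/2$ that vastly exceeds the target $\sum_\Omega\theta_\Omega$, so the inductive step cannot close no matter how the arctan inequality is deployed. (The Lipschitz bound you use for the modified pairs $\set{i,n-1}$ is fine; the failure is specifically the charge of $\phi_p+\phi_q$ to $\theta_{n-1,n}$.) A secondary gap is the asserted reduction to the nonnegative orthant: with one coordinate where $p_kq_k<0$ there is no obvious sign flip or monotone modification that preserves both sides of the inequality, and the "short sign-handling argument" is not supplied.

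The paper avoids induction altogether. It writes $\sin^2\angle\paren{\mb u,\mb v}=\norm{\mb u}{}^2\norm{\mb v}{}^2-\innerprod{\mb u}{\mb v}^2$ and applies Lagrange's identity to get $\sin^2\angle\paren{\mb u,\mb v}=\sum_{\Omega}\norm{\mb u_\Omega}{}^2\norm{\mb v_\Omega}{}^2\sin^2\angle\paren{\mb u_\Omega,\mb v_\Omega}\le\sum_\Omega\sin^2\angle\paren{\mb u_\Omega,\mb v_\Omega}$ (for unit $\mb u,\mb v$); then, when $\sum_\Omega\angle\paren{\mb u_\Omega,\mb v_\Omega}\le\pi/2$, the superadditivity $\sin^2(\theta_1+\theta_2)\ge\sin^2\theta_1+\sin^2\theta_2$ applied recursively gives $\sin^2\angle\paren{\mb u,\mb v}\le\sin^2\sum_\Omega\angle\paren{\mb u_\Omega,\mb v_\Omega}$, and monotonicity of $\sin^2$ on $[0,\pi/2]$ concludes. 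If you want to salvage your write-up, switching to this identity-based argument is the repair; the rotation route would need a fundamentally different way to kill the last coordinate without paying $\phi_p+\phi_q$.
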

\begin{proof}
The inequality holds trivially when either of $\mb u, \mb v$ is zero. Suppose they are both nonzero and wlog assume both are normalized, i.e., $\norm{\mb u}{} = \norm{\mb v}{} = 1$. Then,
\begin{align}
  \sin^2 \angle\paren{\mb u, \mb v}
  & = 1 - \cos^2 \angle\paren{\mb u, \mb v} \\
  & = \norm{\mb u}{}^2 \norm{\mb v}{}^2 - \innerprod{\mb u}{\mb v}^2 \\
  & = \sum_{i, j: j > i} \paren{u_i v_j - u_j v_i}^2 \quad (\text{Lagrange's identity})
  \\
  & =\sum_{\Omega \in \binom{[n]}{2}} \norm{\mb u_{\Omega}}{}^2 \norm{\mb v_{\Omega}}{}^2 - \innerprod{\mb u_{\Omega}}{\mb v_{\Omega}}^2  \\
  & = \sum_{\Omega \in \binom{[n]}{2}} \norm{\mb u_{\Omega}}{}^2 \norm{\mb v_{\Omega}}{}^2 \sin^2 \angle\paren{\mb u_{\Omega}, \mb v_{\Omega}} \\
  & \le \sum_{\Omega \in \binom{[n]}{2}}  \sin^2 \angle\paren{\mb u_{\Omega}, \mb v_{\Omega}} .
\end{align}
If $\sum_{\Omega \in \binom{[n]}{2}} \angle \paren{\mb u_{\Omega}, \mb v_{\Omega}} > \pi/2$, the claimed inequality holds trivially, as $\angle\paren{\mb u, \mb v} \le \pi/2$ by our assumption. Suppose  $\sum_{\Omega \in \binom{[n]}{2}} \angle \paren{\mb u_{\Omega}, \mb v_{\Omega}} \le \pi/2$. Then,
\begin{align}
  \sum_{\Omega \in \binom{[n]}{2}}  \sin^2 \angle\paren{\mb u_{\Omega}, \mb v_{\Omega}}  \le \sin^2 \sum_{\Omega \in \binom{[n]}{2}} \angle \paren{\mb u_{\Omega}, \mb v_{\Omega}}
\end{align}
by recursive application of the following inequality: $\forall\; \theta_1, \theta_2 \in [0, \pi/2]$ with $\theta_1 + \theta_2 \le \pi/2$,
\begin{align}
  \sin^2\paren{\theta_1 + \theta_2} = \sin^2 \theta_1 + \sin^2 \theta_2 + 2\sin \theta_1 \sin \theta_2 \cos\paren{\theta_1 + \theta_2} \ge \sin^2 \theta_1 + \sin^2 \theta_2.
\end{align}
So we have that when $\sum_{\Omega \in \binom{[n]}{2}} \angle \paren{\mb u_{\Omega}, \mb v_{\Omega}} \le \pi/2$,
\begin{align}
  \sin^2 \angle\paren{\mb u, \mb v} \le   \sin^2 \sum_{\Omega \in \binom{[n]}{2}}  \angle\paren{\mb u_{\Omega}, \mb v_{\Omega}} \Longrightarrow \angle\paren{\mb u, \mb v} \le  \sum_{\Omega \in \binom{[n]}{2}}  \angle\paren{\mb u_{\Omega}, \mb v_{\Omega}},
\end{align}
as claimed.
\end{proof}

\begin{lemma}[Covering in maximum length-2 angles]
  \label{lemma:cover-length-2}
  For any $\eta\in(0,1/3)$, there exists a subset $\mc{Q}\subset \bb S^{n-1}$ of size at most $(5n\log(1/\eta)/\eta)^{2n-1}$ satisfying the following: for any $\mb p\in\bb S^{n-1}$, there exists some $\mb q\in \mc{Q}$ such that $\supp\paren{\mb q} = \supp\paren{\mb p}$, and $\angle\paren{\mb p_\Omega, \mb q_\Omega} \le \eta$ for all $\Omega\subset[n]$ with $|\Omega|\le 2$.
\end{lemma}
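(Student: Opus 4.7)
\textbf{The plan} is to partition $\bb S^{n-1}$ into orbits under coordinate permutations and sign flips, construct a tight grid cover of a canonical representative of each orbit, and take the union. I would tag each $\mb p \in \bb S^{n-1}$ with (i) its signed support $\mb s \in \set{-1,0,1}^n$ with $s_i = \sign(p_i)$, and (ii) a permutation $\pi$ of $\set{i : s_i \ne 0}$ that sorts the nonzero entries in decreasing absolute value; this creates at most $2^n \cdot n!$ patterns. Since coordinate permutations and sign flips are orthogonal maps of $\R^n$ that act on any size-$|\Omega|$ restriction again by an orthogonal map (possibly after relabeling $\Omega$), they preserve both $\supp(\cdot)$ and every angle $\angle(\mb p_\Omega, \mb q_\Omega)$. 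It therefore suffices, for each $k \in [n]$, to cover the canonical set
\[
\mc C_k \doteq \set{\mb p \in \bb S^{n-1} : p_1 \ge p_2 \ge \cdots \ge p_k > 0,\; p_{k+1} = \cdots = p_n = 0}
\]
and then transport the cover to every other pattern.

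\textbf{Parametrization and grid.} On $\mc C_k$ I would use the consecutive-ratio angles $\beta_l(\mb p) \doteq \arctan(p_{l+1}/p_l) \in (0, \pi/4]$ for $l = 1, \dots, k-1$, where the upper bound $\pi/4$ is \emph{forced by the sorting}. Together with the unit-norm constraint these determine $\mb p$ uniquely, so I would place a uniform mesh-$\eta/(2n)$ grid on $(0, \pi/4]^{k-1}$ and lift each grid tuple back to $\mc C_k$, producing a cover of size at most $\paren{\pi n/(2\eta)}^{k-1}$. By construction, any $\mb p \in \mc C_k$ has a grid point $\mb q \in \mc C_k$ with $\supp(\mb q) = \supp(\mb p) = [k]$ (signs match automatically) and $|\beta_l^p - \beta_l^q| \le \eta/(2n)$ for every $l$.

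\textbf{From consecutive to all pairs.} This is the step I expect to be the sole real obstacle. For $|\Omega| \le 1$ or $\Omega \not\subset [k]$, the matching supports and signs give $\angle(\mb p_\Omega, \mb q_\Omega) = 0$. For $\Omega = \set{i, j} \subset [k]$ with $i < j$, both $\mb p_\Omega$ and $\mb q_\Omega$ lie in the open first quadrant of $\R^2$, so
\[
\angle(\mb p_\Omega, \mb q_\Omega) = \abs{F(\vec\beta^p) - F(\vec\beta^q)}, \quad F(\vec\beta) \doteq \arctan\paren{\prod\nolimits_{l=i}^{j-1}\tan\beta_l}.
\]
A direct computation yields $\partial_{\beta_l} F = \sec^2\beta_l \cdot \prod_{l' \ne l}\tan\beta_{l'} / (1 + T^2)$ with $T = \prod_{l'}\tan\beta_{l'}$; the sorting forces every $\tan\beta_{l'} \le 1$ and $\sec^2\beta_l \le 2$ on $(0, \pi/4]$, so $|\partial_{\beta_l} F| \le 2$ \emph{uniformly} on the straight line between $\vec\beta^p$ and $\vec\beta^q$ (which stays inside $(0, \pi/4]^{j-i}$). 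A coordinate-wise telescoping mean-value argument then gives $\angle(\mb p_\Omega, \mb q_\Omega) \le 2\sum_{l=i}^{j-1}|\beta_l^p - \beta_l^q| \le 2(n-1) \cdot \eta/(2n) \le \eta$. Summing the covering sizes over all patterns and all $k$ and using $n! \le n^n$ yields
\[
|\mc Q| \le 2^n \cdot n! \cdot \paren{\pi n/(2\eta)}^{n-1} \le (2n/\eta)^{2n-1} \le \paren{5n\log(1/\eta)/\eta}^{2n-1},
\]
where the last inequality uses $\log(1/\eta) \ge 2/5$, which holds for $\eta < 1/3$. The key insight justifying the whole plan is that sorting converts a naive $\binom{n}{2}$-axis covering problem into a $(k-1)$-axis one precisely by forcing $\tan\beta_l \le 1$; this is exactly what bounds the partial derivatives of $F$ uniformly and lets consecutive-angle control upgrade to all-pair angle control.
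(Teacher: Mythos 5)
Your proof is correct, and while it shares the paper's overall skeleton (sort the coordinates, cover only the $k-1$ consecutive ratios, then transport by signed permutations), the technical execution is genuinely different and arguably cleaner. The paper covers the consecutive ratios with a \emph{multiplicative} grid $\set{1, r, r^2, \dots}$, $r = 1+\eta/(2n)$, which only works while the ratios stay in a bounded range $[1, 1/\eta]$; it must then split off a ``spiky'' set where some consecutive ratio exceeds $1/\eta$, handle it by an induction on $n$ together with the observation that a huge ratio pins the corresponding two-dimensional angle to within $\eta$ of $\pi/2$, and finally upgrade consecutive-ratio control to all pairs by multiplying ratio bounds (its \cref{lemma:two-dim-angle}). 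Your $\arctan$ reparametrization $\beta_l = \arctan(p_{l+1}/p_l)$ compactifies the ratios into $(0,\pi/4]$, so a single uniform angular mesh of size $\eta/(2n)$ covers slowly-varying and spiky vectors alike with no case split and no induction; the upgrade to all pairs comes from the uniform bound $\abs{\partial_{\beta_l} \arctan\prod_{l'}\tan\beta_{l'}} \le \sec^2\beta_l \prod_{l'\ne l}\tan\beta_{l'} \le 2$ on the box, which is exactly where the sorting ($\tan\beta_{l'}\le 1$) is used, and the telescoping stays inside the convex box so the mean-value step is sound. This buys a per-pattern net of size $O(n/\eta)^{n-1}$ rather than the paper's $O\paren{n\log(1/\eta)/\eta}^{n-1}$, i.e., the $\log(1/\eta)$ in the stated bound is pure slack in your route. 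One small bookkeeping point: your pattern count $2^n\cdot n!$ should really be summed over support sizes $k$ (giving $\sum_k \binom{n}{k}2^k k!\cdot$(net size), an extra factor of order $n$ beyond what you wrote), but the slack you already identified ($\log(1/\eta)\ge 2/5$ for $\eta<1/3$, plus the $(2n/\eta)^n$ headroom) absorbs this comfortably, so the claimed bound $\paren{5n\log(1/\eta)/\eta}^{2n-1}$ still holds.
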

\begin{proof}
  Define
  \begin{equation}
    \d2(\mb p, \mb q) = \max_{|\Omega|\le 2}\angle\paren{\mb p_\Omega, \mb q_\Omega},
  \end{equation}
  our goal is to give an $\eta$-covering of $\bb S^{n-1}$ in the $\d2$ metric. We will first show how to cover the set
  \begin{align}
    A_n = \set{\mb p\in\bb S^{n-1}:\mb p\ge 0,~0<p_1\le p_2\le \dots\le p_n},
  \end{align}
  and then we will show how to extend it to the claimed covering result for $\bb S^{n-1}$ by symmetry argument.

We first bound the covering number of $A_n$  (denoted as $N\paren{A_n}$) by induction. Write $C_\eta \doteq \frac{5\log\paren{1/\eta}}{\eta}$.  Obviously, $N\paren{A_1} = 1 = \paren{C_\eta}^{1-1}$ and $N\paren{A_2}\le \frac{\pi/2}{\eta} \le \paren{2 C_\eta}^{2-1}$. Suppose that
\begin{align}
N\paren{A_{n'}} \le \paren{C_\eta n'}^{n'-1}
\end{align}
holds for all $n' \le n-1$, and let $\mc C_{n'}$ be the corresponding covering sets. We next construct a covering for $A_n$ ($n \ge 3$). To this end, we partition $A_n$ into two sets: write $R \doteq 1/\eta$, and consider
\begin{align}
  A_{n, R} \doteq \set{\mb p \in A_n: \; \frac{p_{i+1}}{p_i} \le R, \forall\; i \in [n-1]}
\end{align}
and its complement $A_n \setminus A_{n, R}$.

\paragraph{Cover the slowly-varying set $A_{n, R}$}
Note that $R = 1/\eta \ge 3$ when $\eta \le 1/3$.  Let $R=r^k$ for some $r\ge 1$ and $k$ to be determined. Consider the set
  \begin{equation}
    \mc{Q}_{r,k} = \set{\mb q\in A_n: \; \frac{q_{i+1}}{q_i}\in\set{1,r,r^2,\dots,r^{\floor{k}}}\; \forall\, i \in [n-1]} .
  \end{equation}
  We claim that $\mc{Q}_{r,k}$ with properly chosen $(r,k)$ gives a covering of $A_{n, R}$.
  Indeed, we can decompose $[1,R]$ into intervals $[1,r),[r,r^2),\dots,[r^{\floor{k}},R]$. For any $\mb p \in A_{n, R}$ and any $i \in [n-1]$, the consecutive ratio $q_{i+1}/q_i$ must fall into one of these intervals. We can choose $\mb q$ so that for each $i \in [n-1]$, $q_{i+1}/q_i$ is the left endpoint of the interval corresponding to $p_{i+1}/p_i$. Such a $\mb q$ lies in $\mc Q_{r, k}$ and satisfies
  \begin{align}
    \frac{p_{i+1}/p_i}{q_{i+1}/q_i} \in [1, r)   \; \forall\; i \in [n-1].
  \end{align}
  By multiplying these bounds, we obtain that for all $1\le i<j\le n$,
  \begin{equation}
    \frac{p_j/p_i}{q_j/q_i} \in [1, r^{n-1}).
  \end{equation}
  Taking $r = 1+\eta/\paren{2n}$, we have $r^{n-1}=\paren{1+\eta/\paren{2n}}^{n-1}\le \exp(\eta/2) \le 1+\eta$ (the last inequality holds whenever $\eta \le 1$). Therefore, for all $i < j$, we have
  \begin{align}
  \frac{p_j/p_i}{q_j/q_i} \in [1, 1+\eta),
  \end{align}
  which further implies that $\angle\paren{(p_i,p_j), (q_i,q_j)}\le \eta$ by~\cref{lemma:two-dim-angle}. Thus, we have for all $|\Omega|\le 2$ that
  $\angle\paren{\mb p_\Omega, \mb q_\Omega}\le \eta$ (The size-1 angles are all zero as we have sign match), and $\mc Q_{r, k}$ with $r = 1 + \eta/\paren{2n}$ and $k = \log R/\log r$ constitutes an $\eta$-covering for $A_{n, R}$.

  For this choice of $r$, we have $k=\log R/\log r$ and thus
  \begin{equation}
    \abs{\mc{Q}_{r,k}}
    = \ceil{k}^{n-1}
    = \ceil{\frac{\log R}{\log r}}^{n-1}
    =\ceil{\frac{\log(1/\eta)}{\log(1+\eta/(2n))}}^{n-1} \le \paren{\frac{4n\log(1/\eta)}{\eta}}^{n-1} \doteq \wtilde{N}_n,
  \end{equation}
  where to obtain the last inequality we have used the fact $\eta/\paren{2n} \le 1/18$. We have $N(A_{n,R})\le \wtilde{N}_n$.

\paragraph{Cover the spiky set $A_n \setminus A_{n, R}$}
We now construct a covering for $A_n\setminus A_{n,R}$. Consider the set
\begin{align}
  \mc W \doteq \bigcup_{i=1}^{n-1}  \set{\brac{\alpha\mb u; \beta\mb
  v}: \; \paren{\mb u, \mb v} \in \mc C_i \times \mc C_{n-i}, \,
  \alpha^2 + \beta^2 = 1, \, \frac{\beta v_1}{\alpha u_i} = 2R, \,
  \alpha, \beta > 0} \subset A_n \setminus A_{n, R}.
\end{align}
For any $\mb p\in A_n\setminus A_{n,R}$, there exists some $i$ such that $p_{i+1}/p_i\in (R,\infty)$. As $\mb p$ is sorted, we have that
\begin{equation}
  \frac{p_{i+j}}{p_{i-\ell}} \ge R~~\text{for all}~j\ge 1,~\ell\ge 0.
\end{equation}
This implies that for all $j \ge 1$ and all $\ell \ge 0$,
\begin{align}
  \angle \paren{[p_{i - \ell}; p_{i+j}], [1; 0]} \ge \arctan R \ge \pi/2 - \eta.
\end{align}
Any $\mb q \in A_n \setminus A_{n, R}$ satisfying $q_{i+1}/q_i = 2R$ has similar property, and obeys
\begin{align}
    \angle\paren{[p_{i-\ell},p_{i+j}], [q_{i-\ell}, q_{i+j}]} \le \pi/2 - (\pi/2 - \eta) = \eta~~\text{for all}~j\ge 1,~\ell\ge 0.
\end{align}
For length-$2$ subvectors in $\mb  p_{1:i}$ and $\mb p_{i+1:n}$, by
the inductive hypothesis, we can find vector $\mb u \in \mc C_i$ and $\mb v \in \mc C_{n-i}$, such that
\begin{align}
  \d2 \paren{\mb  p_{1:i}, \mb u} \le \eta, \quad   \d2 \paren{\mb  p_{i+1:n}, \mb v} \le \eta.
\end{align}
So for any $\mb p \in A_n \setminus A_{n, R}$, any vector $\mb q = [\alpha \mb u, \beta \mb v]$ with $\alpha, \beta > 0$ and $\beta v_1/\paren{\alpha u_i} = 2R$ obeys $\d2 \paren{\mb p, \mb q} \le \eta$. Further requiring $\alpha^2 + \beta^2 = 1$ ensures $\mb q \in A_n \setminus A_{n, R}$, and in fact $\mb q \in \mc W$. So $\mc W$ is an $\eta$-covering of $A_n \setminus A_{n, R}$ in $\d2$. Moreover, we have
\begin{align}
  N(A_n \setminus A_{n,R})
  \le \abs{\mc W}
  \le \sum_{i=1}^{n-1} N(A_i) N(A_{n-i}).
\end{align}

\paragraph{Putting things together}
The set $\mc Q_{r, k} \cup \mc W$ forms an $\eta$-covering of $A_n$ in $\d2$. By our inductive hypothesis,
  \begin{align}
    \quad N(A_n)
    & \le N(A_{n,R}) + N(A_n\setminus A_{n,R}) \\
    & \le \wtilde{N}_n + \sum_{i=1}^{n-1} N(A_i) N(A_{n-i}) \\
    & \le \paren{\frac{4n\log(1/\eta)}{\eta}}^{n-1} + \sum_{i=1}^{n-1} (C_\eta i)^{i-1} (C_\eta (n-i))^{n-i-1} \\
    &\le \paren{\frac{4}{5}}^{n-1} (C_\eta n)^{n-1} + (n-1) \cdot C_\eta^{n-2} n^{n-2} \\
    &\le \paren{\paren{\frac{4}{5}}^{n-1} + \frac{1}{C_\eta}} (C_\eta n)^{n-1} \le (C_\eta n)^{n-1}.
  \end{align}

\paragraph{Cover $\bb S^{n-1}$ by symmetry argument}
Let $\Pi_n$ be a signed permutation operator. Then, if $\mc C_n$ is a covering set for $A_n$, $\Pi \mc C_n$ is a covering set for $\Pi A_n$. So the set of fully dense vectors is covered by the set $\bigcup_{\Pi} \Pi \mc C_n$ whose size is bounded by $2^n \cdot n! \cdot\paren{C_\eta n}^{n-1}$. Vectors that are not fully dense are similarly covered separately according to their support patterns, entailing lower-dimensional problems. Considering all configurations, we have
  \begin{multline}
    N(\bb S^{n-1})
    \le \sum_{i=1}^n \binom{n}{i} 2^i \cdot i! \cdot \paren{C_\eta i}^{i-1}
    \le 3^n \cdot n! \cdot \paren{C_\eta n}^{n-1} \\
    \le \paren{3n}^n \cdot \paren{\frac{5\log \paren{1/\eta}}{\eta}n}^{n-1}
    \le \paren{\paren{\frac{5\log \paren{1/\eta}}{\eta}n}^{n-1}}^{2n-1},
  \end{multline}
completing the proof.
\end{proof}

\begin{lemma}[Covering number in the $\dexp$ metric]
  \label{lemma:cover-cheaper}
  Assume $n\ge 3$. There exists a universal constant $C>0$ such that for any $\eps\in(0,1)$, $\bb S^{n-1}$ admits an $\eps$-net of size $\exp(Cn\log\frac{n}{\eps})$ wrt $\dexp$ defined in~\cref{eq:dexp_metric}: for any $\mb p \in \bb S^{n-1}$, there exists a $\mb q$ in  the net with $\supp\paren{\mb q} = \supp\paren{\mb p}$ and $\dexp\paren{\mb p, \mb q} \le \eps$. We say such $\eps$ nets are admissible for $\bb S^{n-1}$ wrt $\dexp$.
\end{lemma}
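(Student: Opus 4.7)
The plan is to reduce covering in the weaker $\dexp$ metric to covering in the stronger ``maximum length-$2$ angle'' metric $\d2(\mb p, \mb q) \doteq \max_{|\Omega| \le 2}\angle(\mb p_\Omega, \mb q_\Omega)$ supplied by~\cref{lemma:cover-length-2}, and then use the vector angle inequality~\cref{lemma:angle_ineqn} to amplify pairwise control into a bound on $\EOmega \angle(\mb p_\Omega, \mb q_\Omega)$, which equals $\pi\cdot\dexp(\mb p, \mb q)$ by~\cref{lemma:exp_metric}. First I would set $\eta \doteq 2\pi\eps/n^2$ and invoke~\cref{lemma:cover-length-2} to obtain a set $\mc Q \subset \bb S^{n-1}$ of size at most $(5n\log(1/\eta)/\eta)^{2n-1}$ such that for every $\mb p \in \bb S^{n-1}$ some $\mb q \in \mc Q$ satisfies $\supp(\mb q) = \supp(\mb p)$ and $\angle(\mb p_\Omega, \mb q_\Omega) \le \eta$ for all $|\Omega| \le 2$. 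Since the construction inside the proof of~\cref{lemma:cover-length-2} first handles the sorted non-negative orthant $A_n$ and then extends by signed permutations, I may additionally require $\mb q$ to agree coordinate-wise in sign with $\mb p$.

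Next I would lift the length-$2$ control to every $\Omega \subset [n]$. The main obstacle is that~\cref{lemma:angle_ineqn} only applies once $\angle(\mb p_\Omega, \mb q_\Omega) \le \pi/2$ is known; coordinate-wise sign matching is precisely what resolves this, since it forces $\innerprod{\mb p_\Omega}{\mb q_\Omega} = \sum_{i \in \Omega} p_i q_i \ge 0$ and hence $\angle(\mb p_\Omega, \mb q_\Omega) \le \pi/2$. With the hypothesis secured,~\cref{lemma:angle_ineqn} yields
\begin{align}
\angle(\mb p_\Omega, \mb q_\Omega) \le \sum_{\Omega' \subset \Omega,\,|\Omega'|=2} \angle(\mb p_{\Omega'}, \mb q_{\Omega'}) \le \binom{|\Omega|}{2}\eta \le \frac{n^2}{2}\eta
\end{align}
for all $\Omega$ with $|\Omega| \ge 2$, and the $|\Omega| \le 1$ case is immediate with the convention $\angle(\mb 0, \cdot) \doteq 0$.

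Averaging over $\Omega$ and dividing by $\pi$ then gives $\dexp(\mb p, \mb q) \le n^2\eta/(2\pi) = \eps$ via~\cref{lemma:exp_metric}, establishing the covering property. For the size count, substituting $\eta = 2\pi\eps/n^2$ into the bound from~\cref{lemma:cover-length-2} and taking logarithms yields $\log|\mc Q| \le (2n-1)\log(5n\log(1/\eta)/\eta) = O(n\log(n/\eps))$, matching the claimed $\exp(Cn\log(n/\eps))$. If the sign-matching were hard to read off from the stated form of~\cref{lemma:cover-length-2}, a clean fallback would be to repeat its orthant-wise construction and explicitly keep track of signs, which only inflates the covering count by the harmless factor $2^n$ absorbed into the $O(n\log(n/\eps))$ exponent.
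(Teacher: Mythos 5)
Your proposal is correct and follows essentially the same route as the paper: cover in the $\d2$ metric via \cref{lemma:cover-length-2} with $\eta\asymp\eps/n^2$, use sign agreement to ensure $\angle\paren{\mb p_\Omega,\mb q_\Omega}\le\pi/2$, then amplify through the vector angle inequality (\cref{lemma:angle_ineqn}) and conclude with the identity $\dexp(\mb p,\mb q)=\frac{1}{\pi}\EOmega\angle\paren{\mb p_\Omega,\mb q_\Omega}$, with the covering-number bound coming directly from substituting $\eta$ into \cref{lemma:cover-length-2}. The only cosmetic difference is that you extract coordinate-wise sign matching from the net's construction (or a fallback orthant-wise argument), whereas the paper reads it off immediately from the $|\Omega|=1$ case of the guarantee already stated in \cref{lemma:cover-length-2}; both are fine.
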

\begin{proof}
  Let $\eta=\eps \pi/n^2$. By~\cref{lemma:cover-length-2}, there exists a subset $\mc{Q}\subset\bb S^{n-1}$ of size at most
  \begin{equation}
    \paren{\frac{5n\log(1/\eta)}{\eta}}^{2n-1}
    = \paren{\frac{5n^3\log(n^2/\paren{\eps \pi})}{\eps \pi}}^{2n-1}
    \le \exp\paren{Cn\log\frac{n}{\eps}},
  \end{equation}
  such that for any $\mb p\in\bb S^{n-1}$, there exists a $\mb q\in \mc Q$ with $\supp(\mb p)=\supp(\mb q)$ and $\angle\paren{\mb p_\Omega, \mb q_\Omega}\le \eta$ for all $|\Omega|\le 2$. In particular, the $|\Omega|=1$ case says that $\sign(\mb p)=\sign(\mb q)$, which implies that
  \begin{align}
    \angle \paren{\mb p_{\Omega}, \mb q_{\Omega}} \le \pi/2\quad \forall\; \Omega \in \set{0, 1}^n.
  \end{align}
  Thus, applying the vector angle inequality~(\cref{lemma:angle_ineqn}), for any $\mb p\in\bb S^{n-1}$ and the corresponding $\mb q\in\mc{Q}$, we have
\begin{align}
  \angle\paren{\mb p_{\Omega}, \mb q_{\Omega}} \le \sum_{|\Omega'|=2,\Omega'\subset\Omega}\angle\paren{\mb p_{\Omega'}, \mb q_{\Omega'}} \le 2\binom{|\Omega|}{2} \eta \le |\Omega|^2 \eta \quad \forall \; \Omega\; \text{with}\; 3 \le |\Omega| \le n.
\end{align}
Summing up, we get
\begin{align}
  \angle \paren{\mb p_\Omega, \mb q_\Omega} \le  \abs{\Omega}^2 \eta \le n^2 \eta = \eps \pi \quad \forall\; \Omega.
\end{align}
Thus, $\dexp(\mb p,\mb q)\le \eps$ in view of~\cref{eq:dexp_angle_equiv}, completing the proof.
\end{proof}

Below we establish the desired ``Lipschitz" property in terms of the $\dexp$ distance.
\begin{lemma}  \label{lemma:uniform_sign_pert}
Fix a $\theta \in (0, 1)$. For any $\eps \in (0, 1)$,  let $N_\eps$ be an admissible $\eps$-net for $\bb S^{n-1}$ wrt $\dexp$. Let $\mb x_1,\dots,\mb x_m$ be iid copies of $\mb x \sim_{iid} \bgt$ in $\R^n$. When $m \ge C\eps^{-2} n$, the inequality
  \begin{align}
    \sup_{\substack{\mb p\,\in\, \bb S^{n-1}, \mb q\,\in\, N_\eps  \\ \supp\paren{\mb p} = \supp\paren{\mb q}, \; \dexp\paren{\mb p, \mb q} \,\le\, \eps  }} R\paren{\mb p, \mb q}
    \doteq \frac{1}{m} \sum_{i=1}^m \indicator{\sign\paren{\mb p^\top \mb x_i} \ne \sign\paren{\mb q^\top \mb x_i}}
    \le 2\eps
  \end{align}
  holds with probability at least $1 - \exp\paren{-c\eps^2 m}$. Here $C, c > 0$ are universal constants independent of $\eps$ and $\theta$.
\end{lemma}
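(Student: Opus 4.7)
The plan is to reduce the claim to a uniform deviation bound over a VC function class. For any admissible pair $\paren{\mb p, \mb q}$, we have $\bb E\, R\paren{\mb p, \mb q} = \dexp\paren{\mb p, \mb q} \le \eps$ by the definition of $\dexp$, so it suffices to establish a one-sided uniform deviation $\sup\,\paren{R\paren{\mb p, \mb q} - \dexp\paren{\mb p, \mb q}} \le \eps$ with the claimed probability, whence $R\paren{\mb p, \mb q} \le 2\eps$.

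Define $f_{\mb p, \mb q}\paren{\mb x} \doteq \indicator{\sign\paren{\mb p^\top \mb x} \ne \sign\paren{\mb q^\top \mb x}}$, which (modulo a $\bgt$-null boundary set) is the indicator of the symmetric difference of the two origin-passing open halfspaces $\set{\mb p^\top \mb x > 0}$ and $\set{\mb q^\top \mb x > 0}$. Since origin-passing halfspaces in $\R^n$ form a VC class of dimension $n$, the family
\begin{align*}
  \mc F \doteq \set{f_{\mb p, \mb q}: \mb p \in \bb S^{n-1},\; \mb q \in N_\eps,\; \supp\paren{\mb p} = \supp\paren{\mb q}}
\end{align*}
has VC dimension $O\paren{n}$, obtained either by parameter counting via Dudley's theorem or by a direct Boolean-combination argument on two VC-$n$ classes. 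By Sauer--Shelah, for any realization $\mb x_1, \dots, \mb x_m$, the number of distinct joint sign-pattern vectors $\paren{\paren{\sign\paren{\mb p^\top \mb x_i}, \sign\paren{\mb q^\top \mb x_i}}}_{i=1}^m$ as $\paren{\mb p, \mb q}$ varies over the index set is bounded by $\exp\paren{O\paren{n \log m}}$. Applying Hoeffding's inequality to each fixed sign pattern and a union bound yields
\begin{align*}
  \prob{\sup_{f \in \mc F}\,\paren{P_m f - P f} \ge \eps} \le \exp\paren{O\paren{n \log m}}\cdot \exp\paren{-2 m\eps^2} \le \exp\paren{-c\eps^2 m},
\end{align*}
provided $m \ge C \eps^{-2} n$ with $C$ absorbing the residual $\log m$ factor. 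Restricting attention to admissible pairs (those with $\dexp\paren{\mb p, \mb q} \le \eps$) then delivers the claim, since $R = P_m f_{\mb p, \mb q}$ and $\dexp = P f_{\mb p, \mb q}$.

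The main technical obstacle is verifying the VC bound cleanly (the symmetric difference of two VC classes is not obviously itself VC of the expected dimension, but standard Dudley-type arguments do yield $O\paren{n}$) and absorbing the logarithmic slack into the sample-complexity constant. A Bernstein-type refinement leveraging the small-variance structure $\mathrm{Var}\paren{f_{\mb p, \mb q}} \le P f_{\mb p, \mb q} \le \eps$ would sharpen the tail to $\exp\paren{-c \eps m}$ and eliminate the log overhead entirely; either form of concentration is adequate for the downstream subdifferential uniform convergence result in~\cref{prop:egrad_concentrate_uniform}.
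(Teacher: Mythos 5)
Your overall route is the same as the paper's: view $R(\mb p,\mb q)$ as an empirical mean over a VC class of sign-difference indicators, use $\expect{R}(\mb p,\mb q)=\dexp(\mb p,\mb q)\le\eps$, and reduce to a one-sided uniform deviation bound of size $\eps$; the paper likewise realizes the sign-difference events as Boolean combinations of halfspaces and bounds the VC dimension by $Cn$. The gap is in your concentration step. First, ``apply Hoeffding to each fixed sign pattern and union bound'' is not rigorous as written: the patterns realized on $\mb x_1,\dots,\mb x_m$ are data-dependent, so Hoeffding for a fixed function does not apply to a pattern selected after seeing the data; the standard repair is symmetrization with a ghost sample, which you omit. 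Second, and more substantively, the Sauer--Shelah/growth-function route yields a failure probability of order $\exp\paren{Cn\log m - 2m\eps^2}$, so you need $m\eps^2\gtrsim n\log m$. At the threshold $m\asymp\eps^{-2}n$ this forces the constant to satisfy $C\gtrsim\log n+\log(1/\eps)$, so the residual $\log m$ factor cannot be ``absorbed'' into a universal $C$: your argument proves a log-factor-weaker statement than the lemma (harmless downstream, where logarithms abound, but not the lemma as stated, whose constants are claimed universal). The paper avoids this by invoking the sharper VC bound $\sup\paren{R-\expect{R}}\le C_0\sqrt{d_{\mathrm{vc}}/m}+t$ with probability at least $1-\exp\paren{-mt^2}$ (an expected-supremum bound with no logarithmic factor, combined with bounded-differences concentration), which delivers exactly $m\ge C\eps^{-2}n$ with universal $C$; the Bernstein-type refinement you mention at the end would be another way to close this, but it is not carried out.

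A smaller point: discarding the boundary sets $\set{\mb p^\top\mb x=0}$ as $\bgt$-null is fine for each fixed admissible pair (since $\supp(\mb p)=\supp(\mb q)$, the event that exactly one of the two inner products vanishes is indeed null, even though $\prob{\mb p^\top\mb x=0}>0$), but you cannot discard these boundaries simultaneously over the uncountably many pairs when arguing about the function class itself. The paper handles this by decomposing the sign-difference event into four intersections of open/closed halfspaces and bounding the VC dimension of that class directly; your argument needs the same bookkeeping, though this is a fixable rather than conceptual issue.
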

\begin{proof}
We call any pair of $\mb p, \mb q \in \bb S^{n-1}$ with $\mb q \in N_\eps$, $\supp\paren{\mb p} = \supp\paren{\mb q}$, and $\dexp\paren{\mb p, \mb q} \le \eps$ an admissible pair. Over any admissible pair $\paren{\mb p, \mb q}$, $\expect{R} = \dexp\paren{\mb p, \mb q}$. We next bound the deviation $R-\expect{R}$ uniformly over all admissible $\paren{\mb p, \mb q}$ pairs. Observe that the process $R$ is the sample average of $m$ indicator functions. Define the hypothesis class
  \begin{align}
    \mc H = \set{\mb x \mapsto\indic{\sign\paren{\mb p^\top \mb x} \ne \sign\paren{\mb q^\top \mb x}}: \paren{\mb p, \mb q} \text{is an admissible pair}}.
  \end{align}
  and let $d_{\mathrm{vc}}(\mc{H})$ be the VC-dimension of $\mc{H}$. From concentration results for VC-classes (see, e.g., Eq (3) and Theorem 3.4 of~\cite{BoucheronEtAl2005Theory}), we have
  \begin{align}
    \label{eqn:vc-concentration}
    \prob{\sup_{\paren{\mb p, \mb q} \; \text{admissible}}
      \left\{ R(\mb p, \mb q) - \expect{R}(\mb p, \mb q)
      \right\} \ge
      C_0\sqrt{\frac{d_{\mathrm{vc}}(\mc{H})}{m}} + t} \le
    \exp(-mt^2)
  \end{align}
  for any $t > 0$. It remains to bound the VC-dimension $d_{\mathrm{vc}}(\mc{H})$. First, we have
  \begin{align}
    d_{\mathrm{vc}}\paren{\mc H}
    \le d_{\mathrm{vc}} \set{\mb x\mapsto\indic{\sign\paren{\mb p^\top \mb x} \ne \sign\paren{\mb q^\top \mb x}} : \mb p, \mb q \in \bb S^{n-1} } .
  \end{align}
  Observe that each set in the latter hypothesis class can be written as
  \begin{align*}
    & \set{\mb x\mapsto\indic{\sign\paren{\mb p^\top \mb x} \ne \sign\paren{\mb q^\top \mb x}} : \mb p, \mb q \in \bb S^{n-1} } \nonumber \\
   & =\;   \set{\mb x \mapsto \indicator{\mb p^\top \mb x >0, \mb q^\top \mb x \le 0}: \mb p, \mb q \in \bb S^{n-1}}
    \cup \set{\mb x \mapsto \indicator{\mb p^\top \mb x \ge 0, \mb q^\top \mb x < 0}: \mb p, \mb q \in \bb S^{n-1}} \\
    & \; \cup \set{\mb x \mapsto \indicator{\mb p^\top \mb x < 0, \mb q^\top \mb x \ge 0}: \mb p, \mb q \in \bb S^{n-1}} \cup \set{\mb x \mapsto \indicator{\mb p^\top \mb x \le 0, \mb q^\top \mb x > 0}: \mb p, \mb q \in \bb S^{n-1}}.
  \end{align*}
  the union of intersections of two halfspaces. Thus, letting
  \begin{equation}
    \mc{H}_0=\set{\mb x\mapsto \indicator{\mb x^\top\mb z \ge 0}:\mb z\in\R^n}
  \end{equation}
  be the class of halfspaces, we have
  \begin{equation}
    \mc{H} \subset (\mc{H}_0\sqcap\mc{H}_0) \sqcup
    (\mc{H}_0\sqcap \mc{H}_0)\sqcup
    (\mc{H}_0\sqcap \mc{H}_0)\sqcup
    (\mc{H}_0\sqcap \mc{H}_0).
  \end{equation}
  Note that $\mc{H}_0$ has VC-dimension $n+1$. Applying bounds on the VC-dimension of unions and intersections (Theorem
  1.1,~\cite{VanDerVaartWellner2009note}), we get that
  \begin{align}
    d_{\mathrm{vc}}(\mc{H})
    \le Cd_{\mathrm{vc}}(\mc{H}_0\sqcap\mc{H}_0)
    \le Cd_{\mathrm{vc}}(\mc{H}_0)
    \le C'n.
  \end{align}
  Plugging this bound into~\cref{eqn:vc-concentration}, we can set $t = \eps/2$ and make $m$ large enough so that $C_0 \sqrt{C'} \sqrt{n/m} \le \eps/2$, completing the proof.
\end{proof}

\subsection{Pointwise convergence of subdifferential}
\begin{proposition} [Pointwise convergence] \label{prop:egrad_concentrate}
 For any fixed $\mb q \in \bb S^{n-1}$,
\begin{align}
\prob{\hd\paren{\partial f\paren{\mb q}, \expect{\partial f}\paren{\mb q} }> C_a \sqrt{n/m} + C_b t/\sqrt{m}}  \le 2\exp\paren{-t^2} \quad \forall \; t > 0.
\end{align}
Here $C_a, C_b \ge 0$ are universal constants.
\end{proposition}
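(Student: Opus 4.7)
Since both $\partial f(\mb q)$ and $\expect{\partial f}(\mb q)$ are convex compact subsets of $\R^n$, \cref{lemma:dist-support} gives
\[
  \hd\paren{\partial f(\mb q),\expect{\partial f}(\mb q)} = \sup_{\mb u\in\bb S^{n-1}} \abs{h_{\partial f(\mb q)}(\mb u) - h_{\expect{\partial f}(\mb q)}(\mb u)},
\]
and \cref{thm:exchange_ex_supp} identifies $h_{\expect{\partial f}(\mb q)}(\mb u)=\expect{h_{\partial f(\mb q)}(\mb u)}$. Thus the task reduces to a uniform deviation bound over $\mb u\in\bb S^{n-1}$ of the centered process
\[
  X_{\mb u}\doteq h_{\partial f(\mb q)}(\mb u) - \expect{h_{\partial f(\mb q)}(\mb u)},
\]
where $h_{\partial f(\mb q)}(\mb u)=\objscale\, m^{-1}\sum_{i=1}^m \sup_{s\in\sign(\mb q^\top\mb x_i)} s\innerprod{\mb x_i}{\mb u}$ is an average of $m$ iid sub-Gaussian random variables.

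\textbf{Sub-Gaussian increments and Talagrand's comparison.} For any $\mb u,\mb v\in\bb S^{n-1}$ and any index $i$, the key pointwise bound
\[
  \abs{\sup_{s\in\sign(\mb q^\top\mb x_i)} s\innerprod{\mb x_i}{\mb u} - \sup_{s\in\sign(\mb q^\top\mb x_i)} s\innerprod{\mb x_i}{\mb v}} \le \abs{\innerprod{\mb x_i}{\mb u-\mb v}}
\]
holds: it is immediate when $\mb q^\top\mb x_i\ne 0$ (the sup is realized at the singleton $\sign(\mb q^\top\mb x_i)\in\set{\pm 1}$), and when $\mb q^\top\mb x_i=0$ it follows from $\sup_{s\in[-1,1]} s\alpha = \abs{\alpha}$ together with the 1-Lipschitzness of $s\mapsto\abs{s}$. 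Since $\innerprod{\mb x_i}{\mb w}$ is sub-Gaussian with $\psi_2$-norm $\le C\norm{\mb w}{}$ for $\mb x_i\sim\bgt$, centering and averaging $m$ independent copies gives $\norm{X_{\mb u}-X_{\mb v}}{\psi_2}\le K\norm{\mb u-\mb v}{}/\sqrt{m}$ for a universal $K$. Applying Talagrand's comparison inequality (\cref{prop:talagrand_comparison_set}) with $T=\bb S^{n-1}$, together with the standard estimates $w(\bb S^{n-1})\le\sqrt{n}$ and $\radi(\bb S^{n-1})=1$, then yields
\[
  \sup_{\mb u\in\bb S^{n-1}}\abs{X_{\mb u}} \le \frac{C'K}{\sqrt{m}}\paren{\sqrt{n}+t}
\]
with probability at least $1-2\exp(-t^2)$, which is the claim after absorbing constants into $C_a$ and $C_b$.

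\textbf{Expected difficulty.} The proposition is essentially sub-Gaussian vector concentration dressed in support-function language; the set-valuedness of $\partial f(\mb q)$ is absorbed cleanly by the absolute-value Lipschitz bound above, so no serious obstacle arises at this pointwise level. The substantive difficulty lies in upgrading to the \emph{uniform} bound over $\mb q\in\bb S^{n-1}$ of \cref{prop:egrad_concentrate_uniform}, since a naive union bound via a Euclidean $\eps$-net of the sphere fails---neither $\partial f$ nor $\expect{\partial f}$ is Lipschitz in $\mb q$ under the Euclidean or angular metric---and this is exactly why the $\dexp$-metric covering and vector angle inequality of \cref{section:proof-highlight} are needed there.
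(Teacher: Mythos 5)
Your proposal is correct and follows essentially the same route as the paper's proof: reduce the Hausdorff distance to a supremum of support-function differences, verify that the process $X_{\mb u}$ has sub-Gaussian increments with parameter $O(1/\sqrt{m})$ (your case analysis at $\mb q^\top \mb x_i = 0$ is equivalent to the paper's argmax-selection argument), and conclude via Talagrand's comparison inequality with $w(\bb S^{n-1}) \le \sqrt{n}$ and $\radi(\bb S^{n-1}) = 1$.
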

\begin{proof}
 Recall that
\begin{align}
  \hd\paren{\partial f\paren{\mb q}, \expect{\partial f\paren{\mb q}}}
  = \sup_{\mb u \in \bb S^{n-1}} \abs{h_{\partial f(\mb q)}\paren{\mb u} - h_{\bb E \partial f(\mb q)}\paren{\mb u}}
  = \sup_{\mb u \in \bb S^{n-1}} \abs{h_{\partial f(\mb q)}\paren{\mb u} - \bb E h_{\partial f(\mb q)}\paren{\mb u}}.
\end{align}
Write $X_{\mb u} \doteq h_{\partial f(\mb q)}\paren{\mb u} - \bb E h_{\partial f(\mb q)}\paren{\mb u}$ and consider the zero-mean random process $\{X_{\mb u}\}$ defined on $\bb S^{n-1}$. For any $\mb u, \mb v \in \bb S^{n-1}$, we have
\begin{align}
  \norm{X_{\mb u} - X_{\mb v}}{\psi_2}
  & = \norm{h_{\partial f(\mb q)}\paren{\mb u} - \bb E h_{\partial f(\mb q)}\paren{\mb u} - h_{\partial f(\mb q)}\paren{\mb v} + \bb E h_{\partial f(\mb q)}\paren{\mb v}}{\psi_2} \\
  & = C\norm{\frac{1}{m}\sum_{i \in [m]}  \paren{h_{Q_i} \paren{\mb u} - \bb E h_{Q_i} \paren{\mb u} - h_{Q_i} \paren{\mb v} + \bb E h_{Q_i} \paren{\mb v}}  }{\psi_2}\\
  & \le C\frac{1}{m}  \paren{\sum_{i \in [m]}  \norm{h_{Q_i} \paren{\mb u} - \bb E h_{Q_i} \paren{\mb u} - h_{Q_i} \paren{\mb v} + \bb E h_{Q_i} \paren{\mb v}}{\psi_2}^2   }^{1/2} \\
  & \le C\frac{1}{m} \paren{\sum_{i \in [m]}  \norm{h_{Q_i} \paren{\mb u} - h_{Q_i} \paren{\mb v} } {\psi_2}^2   }^{1/2}  \quad (\text{centering}),
\end{align}
where we write $Q_i \doteq \sign\paren{\mb q^\top \mb x_i} \mb x_i$ for all $i \in [m]$. Next we estimate $\norm{h_{Q_i} \paren{\mb u} - h_{Q_i} \paren{\mb v} } {\psi_2}$. By definition,
\begin{align}
  h_{Q_i}\paren{\mb u} - h_{Q_i}\paren{\mb v}
  = \sup_{\mb  z \in Q_i}   \innerprod{\mb z}{\mb u} - \sup_{\mb  z' \in Q_i}   \innerprod{\mb z'}{\mb v}.
\end{align}
If $ h_{Q_i}\paren{\mb u} - h_{Q_i}\paren{\mb v} \ge 0$ and let $\mb z_* \doteq \argmax_{\mb  z \in Q_i}   \innerprod{\mb z}{\mb u}$, we have
\begin{align}
    h_{Q_i}\paren{\mb u} - h_{Q_i}\paren{\mb v}
    \le \innerprod{\mb z_*}{\mb u} - \innerprod{\mb z_*}{\mb v} = \innerprod{\mb z_*}{\mb u - \mb v},
\end{align}
and
\begin{align}
  \norm{  h_{Q_i}\paren{\mb u} - h_{Q_i}\paren{\mb v}}{\psi_2}
 \le \norm{\innerprod{\mb z_*}{\mb u - \mb v}}{\psi_2}
 \le  \norm{\mb x_i^\top \paren{\mb u - \mb v}}{\psi_2}
  \le C \norm{\mb u - \mb v}{},
\end{align}
where we have used~\cref{lemma:subgauss_para} to obtain the last upper bound. If $h_{Q_i}\paren{\mb u} - h_{Q_i}\paren{\mb v} \le 0$, $h_{Q_i}\paren{\mb v} - h_{Q_i}\paren{\mb u} \ge 0$ and we can use similar argument to conclude that
\begin{align}
  \norm{ h_{Q_i}\paren{\mb u} - h_{Q_i}\paren{\mb v}}{\psi_2} \le C \norm{\mb u -\mb v}{}.
\end{align}
So
\begin{align}
    \norm{X_{\mb u} - X_{\mb v}}{\psi_2} \le \frac{C}{\sqrt{m}} \norm{\mb u - \mb v}{}.
\end{align}
Thus,  $\{X_{\mb u}\}$ is a centered random process with sub-gaussian increments with a parameter $C/\sqrt{m}$. We can apply~\cref{prop:talagrand_comparison_set} to conclude that
\begin{align}
  \prob{\sup_{\mb u \in \bb S^{n-1}} \abs{h_{\partial f(\mb q)}\paren{\mb u} - \bb E h_{\partial f(\mb q)}\paren{\mb u}} > C' \sqrt{n/m} + C'' t/\sqrt{m} } \le 2 \exp\paren{-t^2} \quad \forall\; t> 0,
\end{align}
which implies the claimed result.
\end{proof}

\subsection{Proof of~\cref{prop:egrad_concentrate_uniform} (Uniform convergence)}
Fix an $\eps \in (0, 1/2)$ to be decided later. Let $N_\eps$ be an admissible $\eps$ net for $\bb S^{n-1}$ wrt $\dexp$, with $\abs{N_\eps} \le \exp(Cn\log(n/\eps))$ (\cref{lemma:cover-cheaper}). By~\cref{prop:egrad_concentrate} and the union bound,
\begin{align}
    \label{equation:uc-term-i}
  \prob{\exists\, \mb q \in N_\eps, \hd\paren{\partial f\paren{\mb q}, \expect{\partial f}\paren{\mb q}} > t/3} \le \exp\paren{-c mt^2 + Cn\log\frac{n}{\eps}}
\end{align}
provided that $m \ge C t^{-2} n$.

For any $\mb p \in \bb S^{n-1}$, let $\mb q \in N_\eps$ satisfy $\supp\paren{\mb q} = \supp\paren{\mb p}$ and $\dexp\paren{\mb p, \mb q} \le \eps$. Then we have
\begin{align*}
  \hd\paren{\partial f\paren{\mb p}, \expect{\partial f}\paren{\mb p}} \le
  \underbrace{\hd\paren{\partial f\paren{\mb q}, \expect{\partial f}\paren{\mb q}}}_{\rm I} +
  \underbrace{\hd\paren{\expect{\partial f}\paren{\mb p}, \expect{\partial f}\paren{\mb q}}}_{\rm II} +
  \underbrace{\hd\paren{\partial f\paren{\mb p}, \partial f\paren{\mb q}}}_{\rm III}
\end{align*}
by the triangular inequality for the Hausdorff metric.

By the preceding union bound, term I is bounded by $t/3$ as long as the bad event does not happen.

For term II, we have
\begin{align}
  & \hd\paren{\expect{\partial f}\paren{\mb p}, \expect{\partial f}\paren{\mb q}} \nonumber \\
  =\; & \sup_{\mb u \in \bb S^{n-1}} \abs{h_{\expect{\partial f}\paren{\mb p}}\paren{\mb u} - h_{\expect{\partial f}\paren{\mb q}}\paren{\mb u}} \\
  =\; & \sup_{\mb u \in \bb S^{n-1}} \abs{\bb E\brac{ h_{\partial f\paren{\mb p}}\paren{\mb u} - h_{\partial f\paren{\mb q}}\paren{\mb u} }} \\
  =\; & \objscale \sup_{\mb u\in\bb S^{n-1}} \abs{\expect{\sup\;\sign\paren{\mb p^\top\mb x}\mb x^\top\mb u - \sup\;\sign\paren{\mb q^\top\mb x}\mb x^\top\mb u}} \\
  & \qquad (\text{support function linearizing Minkowski sum, and linearity of selection expectation}) \nonumber \\
  \le\; & \objscale \sup_{\mb u\in\bb S^{n-1}} \abs{\expect{|\mb x^\top\mb u|\indicator{\sign\paren{\mb p^\top\mb x} \neq \sign\paren{\mb q^\top\mb x}}}}\\
  \le\; & \objscale 3\eps\sqrt{\log\frac{1}{\eps}}. \label{equation:uc-term-ii}
\end{align}
where the last line follows from~\cref{lemma:expected-gradient-diff}. As long as $\eps\le ct/\sqrt{\log(1/t)}$ for a sufficiently small $c$, the above term is upper bounded by $t/3$.
For term III, we have
\begin{align}
  & \hd\paren{\partial f \paren{\mb p}, \partial f\paren{\mb q}} \nonumber \\
  =\; & \sup_{\mb u \in \bb S^{n-1}} \abs{h_{\partial f \paren{\mb p}}\paren{\mb u} - h_{\partial f\paren{\mb q}}\paren{\mb u}} \\
  =\; & \objscale \frac{1}{m}\sup_{\mb u \in \bb S^{n-1}} \abs{\sum_{i \in [m]: \sign\paren{\mb p^\top \mb x_i} \ne \sign\paren{\mb q^\top \mb x_i}} \sup\; \sign\paren{\mb p^\top \mb x_i} \mb x_i^\top \mb u  - \sup\; \sign\paren{\mb q^\top \mb x_i} \mb x_i^\top \mb u} \\
  =\; & \objscale \frac{2}{m}\sup_{\mb u \in \bb S^{n-1}} \abs{\sum_{i \in [m]: \sign\paren{\mb p^\top \mb x_i} \ne \sign\paren{\mb q^\top \mb x_i}} s_i \mb x_i^\top \mb u} \quad (\text{$s_i \in \set{+1, -1, 0}$}) \\
  =\; & \objscale \frac{2}{m} \norm{\sum_{i \in [m]: \sign\paren{\mb p^\top \mb x_i} \ne \sign\paren{\mb q^\top \mb x_i}} s_i \mb x_i}{}.
\end{align}
By~\cref{lemma:uniform_sign_pert}, with probability at least $1-\exp(-c\eps^2 m)$, the number of different signs is upper bounded by $2m\eps$ for all $\mb p, \mb q$ such that $\dexp(\mb p, \mb q)\le \eps$. On this good event, the above quantity can be upper bounded as follows. Define a set $T \doteq \set{\mb s \in \R^m: s_i \in \set{+1, -1, 0}, \norm{\mb s}{0} \le 2m\eps}$ and consider the quantity $\sup_{\mb s \in T} \norm{\mb X \mb s}{}$, where $\mb X = [\mb x_1, \dots, \mb x_m]$. Then,
\begin{align}
  \norm{\sum_{i \in [m]: \sign\paren{\mb p^\top \mb x_i} \ne \sign\paren{\mb q^\top \mb x_i}} s_i \mb x_i}{}
  \le \sup_{\mb s \in T} \norm{\mb X \mb s}{}
\end{align}
uniformly (i.e., indepdent of $\mb p, \mb q$ and $\mb u$). We have
\begin{align}
  w\paren{T}
  & = \bb E \sup_{\mb s \in T} \mb s^\top \mb g
  = \bb E \sup_{K \subset [m], \abs{K} \le 2m\eps} \sum_{i \in K} \abs{g_i} \\
  & \le 2m\eps \bb E \norm{\mb g}{\infty} \le 4m\eps\sqrt{\log m}, \quad (\text{here $\mb g \sim \mc N\paren{\mb 0, \mb I_m}$}) \\
  \radi\paren{T}
  & = \sqrt{2m\eps}.
\end{align}
Noting that $1/\sqrt{\theta} \cdot \mb X$ has independent, isotropic, and sub-gaussian rows with a parameter $C/\sqrt{\theta}$, we apply~\cref{prop:matrix_dev_subgauss} and obtain that
\begin{align}
  \sup_{\mb s \in T} \norm{\mb X \mb s}{} \le \sqrt{\theta n} \sqrt{2m\eps} + \frac{C}{\sqrt{\theta}} \paren{4m\eps\sqrt{\log m} + t_0 \sqrt{2m\eps}}
\end{align}
with probability at least $1 - 2\exp\paren{-t_0^2}$. So we have over all admissible $\paren{\mb p, \mb q}$ pairs,
\begin{align}
  \hd\paren{\partial f\paren{\mb p}, \partial f\paren{\mb q}}
  & \le \objscale \frac{2}{m} \brac{\sqrt{\theta n} \sqrt{2m \eps} + \frac{C}{\sqrt{\theta}} \paren{4m\eps\sqrt{\log m} + t_0 \sqrt{2m\eps}} } \\
  & = \objscale \paren{\sqrt{\frac{8\theta n\eps}{m}} + C\eps\sqrt{\frac{\log m}{\theta}} + C't_0\sqrt{\frac{8\eps}{m}}}.
\end{align}
Setting $t_0 = c t\sqrt{m}$ and $\eps=ct\sqrt{\theta/\log m}$, we have that
\begin{align}
  \label{equation:uc-term-iii}
  \hd\paren{\partial f\paren{\mb p}, \partial f\paren{\mb q}} \le \frac{t}{3},
\end{align}
provided that $m \ge C\eps t^{-2} n=Ct^{-1}n\sqrt{\theta/\log m}$, which is subsumed by the earlier requirement $m\ge Ct^{-2}n$.

Putting together the three bounds~\cref{equation:uc-term-i},~\cref{equation:uc-term-ii},~\cref{equation:uc-term-iii}, we can choose
\begin{equation}
  \eps = ct\sqrt{\frac{\theta}{\log(m/t)}} \le ct\cdot \min\set{\sqrt{\frac{\theta}{\log m}}, \frac{1}{\sqrt{\log(1/t)}}}
\end{equation}
and get that $\hd\paren{\partial f(\mb p), \expect{\partial f}(\mb p)} \le t$ with probability at least
\begin{align}
  & \quad 1 - 2\exp\paren{-cmt^2} - \exp(-cm\eps^2) - \exp\paren{-cmt^2+Cn\log\frac{n}{\eps}} \\
  & \ge 1 - 2\exp(-cmt^2) - \exp\paren{-\frac{cm\theta t^2}{\log(m/t)}} - \exp\paren{-cmt^2 + Cn\log\frac{n\log(m/t)}{\theta t}} \\
  & \ge 1 - \exp\paren{-\frac{cm\theta t^2}{\log(m/t)}}
\end{align}
provided that $m\ge Cnt^{-2}\log\frac{n\log(m/t)}{\theta t}$. A sufficient condition is that $m\ge Cnt^{-2}\log (n/t)$ for sufficiently large $C$. When this is satisfied, the probability is further lower bounded by $1-\exp(-cm\theta t^2/\log m)$.

\subsection{Proof of~\cref{theorem:emp-grad-inward}}
Define
\begin{equation}
  t = \frac{1}{32n^{3/2}}\theta(1-\theta)\zeta_0 \le \min\set{\frac{1}{8n^{3/2}}\theta(1-\theta)\frac{\zeta_0}{1+\zeta_0}, \frac{2-\sqrt{2}}{16n^{3/2}}\theta(1-\theta)\zeta_0}.
\end{equation}
By~\cref{prop:egrad_concentrate_uniform}, with probability at least $1 - \exp\paren{-c m \theta^3 \zeta_0^2 n^{-3} \log^{-1}m}$ we have
\begin{equation}
  \hd\paren{\expect{\partial f}\paren{\mb q}, \partial f\paren{\mb q}} \le t,
\end{equation}
provided that $m \ge Cn^4 \theta^{-2} \zeta_0^{-2} \log \paren{n/\zeta_0}$. We now show the properties~\cref{equation:emp-grad-inward} and~\cref{equation:emp-grad-star} on this good event, focusing on $\mc{S}_{\zeta_0}^{(n+)}$; the same results obtain on all other $2n-1$ subsets by analogous arguments.

For~\cref{equation:emp-grad-inward}, we have
\begin{align*}
  \innerprod{\partial_R f\paren{\mb q}}{\mb e_j/q_j - \mb e_n/q_n}
  = \innerprod{\partial f\paren{\mb q}}{\paren{\mb I - \mb q \mb q^\top} \paren{\mb e_j/q_j - \mb e_n/q_n}}
  = \innerprod{\partial f\paren{\mb q}}{\mb e_j/q_j - \mb e_n/q_n}.
\end{align*}
Now
\begin{align}
  & \sup \innerprod{\partial f\paren{\mb q}}{\mb e_n/q_n - \mb e_j/q_j} \nonumber \\
  =\; & h_{\partial f\paren{\mb q}}\paren{\mb e_n/q_n - \mb e_j/q_j}  \\
  =\; & \bb E h_{\partial f\paren{\mb q}}\paren{\mb e_n/q_n - \mb e_j/q_j} - \bb E h_{\partial f\paren{\mb q}}\paren{\mb e_n/q_n - \mb e_j/q_j}  + h_{\partial f\paren{\mb q}}\paren{\mb e_n/q_n - \mb e_j/q_j} \\
  \le\; & \bb E h_{\partial f\paren{\mb q}}\paren{\mb e_n/q_n - \mb e_j/q_j} + \norm{\mb e_n/q_n - \mb e_j/q_j}{}\sup_{\mb u \in \bb S^{n-1}} \abs{\bb E h_{\partial f\paren{\mb q}}\paren{\mb u}  - h_{\partial f\paren{\mb q}}\paren{\mb u}} \\
  =\; & \sup \innerprod{\expect{\partial f}\paren{\mb q}}{\mb e_n/q_n - \mb e_j/q_j} + \norm{\mb e_n/q_n - \mb e_j/q_j}{} \hd\paren{\expect{\partial f}\paren{\mb q}, \partial f\paren{\mb q}}.
\end{align}
By~\cref{theorem:pop-geometry}(a),
\begin{align}
  \sup \innerprod{\expect{\partial f}\paren{\mb q}}{\mb e_n - q_n \mb q} \le -\frac{1}{2n}\theta\paren{1-\theta} \frac{\zeta_0}{1+\zeta_0}.
\end{align}
Moreover, $\norm{\mb e_n/q_n - \mb e_j/q_j}{} = \sqrt{1/q_n^2 + 1/q_j^2} \le \sqrt{1/q_n^2 + 3/q_n^2} \le 2\sqrt{n}$. Meanwhile, we have
\begin{align}
  \hd\paren{\expect{\partial f}\paren{\mb q}, \partial f\paren{\mb q}}
  \le t \le \frac{1}{8n^{3/2}}\theta\paren{1-\theta} \frac{\zeta_0}{1+\zeta_0}.
\end{align}
We conclude that
\begin{align}
  \inf\innerprod{\partial f\paren{\mb q}}{\mb e_j/q_j - \mb e_n/q_n}
  & = - \sup \innerprod{\partial f\paren{\mb q}}{\mb e_n/q_n - \mb e_j/q_j} \\
  & \ge \frac{1}{2n}\theta\paren{1-\theta} \frac{\zeta_0}{1+\zeta_0} -2\sqrt{n} \cdot \frac{1}{8n^{3/2}}\theta\paren{1-\theta} \frac{\zeta_0}{1+\zeta_0}\\
    & \ge \frac{1}{4n}\theta\paren{1-\theta} \frac{\zeta_0}{1+\zeta_0},
\end{align}
as claimed.

We now show~\cref{equation:emp-grad-star} using similar argument based on ~\cref{theorem:pop-geometry}(b). Now,
\begin{align}
  & \sup \innerprod{\partial f\paren{\mb q}}{\mb e_n - q_n \mb q} \nonumber \\
  =\; & h_{\partial f\paren{\mb q}}\paren{\mb e_n - q_n \mb q}  \\
  =\; & \bb E h_{\partial f\paren{\mb q}}\paren{\mb e_n - q_n \mb q} - \bb E h_{\partial f\paren{\mb q}}\paren{\mb e_n - q_n \mb q}  + h_{\partial f\paren{\mb q}}\paren{\mb e_n - q_n \mb q} \\
  \le\; & \bb E h_{\partial f\paren{\mb q}}\paren{\mb e_n - q_n \mb q} + \norm{\mb e_n - q_n \mb q}{}\sup_{\mb u \in \bb S^{n-1}} \abs{\bb E h_{\partial f\paren{\mb q}}\paren{\mb u}  - h_{\partial f\paren{\mb q}}\paren{\mb u}} \\
  =\; & \sup \innerprod{\expect{\partial f}\paren{\mb q}}{\mb e_n - q_n \mb q} + \norm{\mb q_{-n}}{} \hd\paren{\expect{\partial f}\paren{\mb q}, \partial f\paren{\mb q}}.
\end{align}
As we are on the good event
\begin{align}
  \hd\paren{\expect{\partial f}\paren{\mb q}, \partial f\paren{\mb q}} \le t \le \frac{2-\sqrt{2}}{16n^{3/2}} \cdot \theta \paren{1-\theta} \zeta_0.
\end{align}
Thus,
\begin{align}
   \inf\innerprod{\partial f\paren{\mb q}}{q_n \mb q  - \mb e_n}
   & = - \sup \innerprod{\partial f\paren{\mb q}}{\mb e_n - q_n \mb q} \\
   & \ge \frac{1}{8} \theta \paren{1-\theta}\zeta_0 n^{-3/2} \norm{\mb q_{-n}}{} - \norm{\mb q_{-n}}{} \frac{2-\sqrt{2}}{16} \cdot \theta \paren{1-\theta} \zeta_0 n^{-3/2} \\
   & \ge \frac{\sqrt{2}}{16}  \theta \paren{1-\theta} \zeta_0 n^{-3/2}\norm{\mb q_{-n}}{}.
\end{align}
Noting that $\norm{\mb q_{-n}}{} \ge \frac{1}{\sqrt{2}}\norm{\mb q - \mb e_n}{}$ for all $\mb q$ with $q_n \ge 0$ completes the proof.

\subsection{Proof of~\cref{prop:grad_norm_concentrate}}
For any $\mb q \in \bb S^{n-1}$,
\begin{align}
  \sup \norm{\partial f\paren{\mb q}}{}
  = \hd\paren{\set{0}, \partial f\paren{\mb q}}
  \le \hd\paren{\set{0}, \expect{\partial f}\paren{\mb q}} + \hd\paren{\partial f\paren{\mb q}, \expect{\partial f}\paren{\mb q}}
\end{align}
by the metric property of the Hausdorff metric. On one hand, we have
\begin{align}
  \sup \norm{\expect{\partial f}\paren{\mb q}}{} = \sup \norm{\bb E_{\Omega} \brac{ \frac{\mb q_\Omega}{\norm{\mb q_\Omega}{}} \indicator{\mb q_\Omega \ne \mb 0}  + \set{\mb v_\Omega: \norm{\mb v_\Omega}{} \le 1} \indicator{\mb q_\Omega = 0}   }  }{} \le 1.
\end{align}
On the other hand, by~\cref{prop:egrad_concentrate_uniform},
\begin{align}
  \hd\paren{\partial f\paren{\mb q}, \expect{\partial f}\paren{\mb q}} \le 1  \quad \forall\; \mb q \in  \bb S^{n-1}
\end{align}
with probability at least $1 - \exp\paren{-cm\theta \log^{-1} m}$, provided that $m \ge C n\log n$. Combining the two results completes the proof.

\subsection{Bi-Lipschitzness of the objective}
\begin{proposition} \label{prop:func_lipschitz}
On the good event in~\cref{prop:grad_norm_concentrate}, for all $\mb q \in \mc S_{\zeta_0}^{(n+)}$, we have
\begin{align}
f\paren{\mb q} - f\paren{\mb e_n}
  & \le  2 \sqrt{n} \norm{\mb q - \mb e_n}{}.
\end{align}
\end{proposition}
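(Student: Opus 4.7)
The plan is to exploit two facts that are already available: (i) the function $f$, viewed as a map on the ambient space $\R^n$, is convex as a nonnegative linear combination of absolute-linear functions $\mb q\mapsto\abs{\mb q^\top \mb x_i}$; and (ii) the uniform subdifferential-norm bound $\sup \norm{\partial f(\mb q)}{}\le 2$ for all $\mb q\in\bb S^{n-1}$ supplied by~\cref{prop:grad_norm_concentrate} on the good event. Together these deliver a global Euclidean Lipschitz estimate for $f$ that immediately yields the displayed inequality (in fact with a tighter constant than stated).

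Concretely, the plan is: first apply the convex subgradient inequality at the point $\mb q$, which gives
\begin{align*}
f\paren{\mb e_n} \ge f\paren{\mb q} + \innerprod{\mb g}{\mb e_n - \mb q} \qquad \text{for every } \mb g\in\partial f\paren{\mb q};
\end{align*}
rearranging yields $f(\mb q)-f(\mb e_n)\le \innerprod{\mb g}{\mb q-\mb e_n}$. Next, bound the right-hand side by Cauchy--Schwarz as $\norm{\mb g}{}\cdot\norm{\mb q-\mb e_n}{}$, and invoke~\cref{prop:grad_norm_concentrate} to replace $\norm{\mb g}{}$ by $2$ on the good event. This produces $f(\mb q)-f(\mb e_n)\le 2\norm{\mb q-\mb e_n}{}$, which is stronger than the claimed bound since $2\le 2\sqrt{n}$ for $n\ge 1$.

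There is essentially no obstacle to carry out: the chief subtlety is only to notice that the ambient convexity of $f$ (on all of $\R^n$) lets us use the plain subgradient inequality at the two endpoints $\mb q,\mb e_n\in\bb S^{n-1}$, bypassing any need to work on the manifold or to integrate a Riemannian subgradient along a geodesic path. One could equivalently apply Lebourg's nonsmooth mean value theorem (\cref{thm:nsms_mvt}) along the straight segment $\mb u_t = (1-t)\mb e_n + t\mb q$ and use positive $1$-homogeneity of $f$ to identify $\partial f(\mb u_t)=\partial f(\mb u_t/\norm{\mb u_t}{})$ so that the subdifferential bound from~\cref{prop:grad_norm_concentrate} still applies; but the direct convexity argument is cleaner. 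I will also remark in the proof that the restriction to $\mc S_{\zeta_0}^{(n+)}$ is inessential, the same estimate holding uniformly over $\bb S^{n-1}$; the $\sqrt{n}$ slack is retained merely to match the form used downstream in the convergence analysis of~\cref{section:opt-one-basis}.
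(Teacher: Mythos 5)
Your proof is correct, and it takes a genuinely different (and simpler) route than the paper. You exploit the ambient convexity of $f$ on $\R^n$ (which the paper itself notes in~\cref{section:prelim}): the convex subgradient inequality at $\mb q$ gives $f(\mb q)-f(\mb e_n)\le\innerprod{\mb g}{\mb q-\mb e_n}$ for any $\mb g\in\partial f(\mb q)$, and Cauchy--Schwarz together with the uniform bound $\sup\norm{\partial f(\mb q)}{}\le 2$ from~\cref{prop:grad_norm_concentrate} yields $f(\mb q)-f(\mb e_n)\le 2\norm{\mb q-\mb e_n}{}$, which is sharper than the stated $2\sqrt{n}\norm{\mb q-\mb e_n}{}$ and indeed holds on all of $\bb S^{n-1}$. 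The paper instead works in the reparametrized $\mb w$-space of~\cref{appendix:reparam}: it applies Lebourg's mean value theorem (\cref{thm:nsms_mvt}) to $g$ on the segment $[\mb 0,\mb w]$, converts $\innerprod{\partial g(\mb w)}{\mb w}$ into $\frac{1}{q_n}\innerprod{\partial_R f(\mb q)}{q_n\mb q-\mb e_n}$ by the chain rule, and then uses the Riemannian subgradient bound together with $q_n\ge 1/\sqrt{n}$ --- this last step is exactly where the extra $\sqrt{n}$ enters. The paper's route has the virtue of reusing the same mean-value/chain-rule template as the companion lower bound~\cref{prop:func_sharpness} (where convexity cannot be used in that direction), but your argument is shorter, avoids the manifold and reparametrization machinery entirely, and delivers a dimension-free constant; since the inequality is only used downstream in~\cref{section:opt-one-basis} to convert function-value gaps into distances, your stronger bound is fully compatible with (and slightly improves constants in) that analysis.
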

\begin{proof}
We use Lebourg's mean value theorem for locally Lipschitz functions\footnote{It is possible to directly apply the manifold version of Lebourg's mean value theorem, i.e., Theorem 3.3 of~\cite{HosseiniPouryayevali2011Generalized}. We avoid this technicality by working with the Euclidean version in $\mb w$ space. }, i.e., \cref{thm:nsms_mvt}. It is convenient to work in the $\mb w$ space here. By subdifferential chain rules (\cref{thm:subdiff_chain_rule}), $g\paren{w}$ is locally Lipschitz over $\set{\mb w: \norm{\mb w}{} < \sqrt{\frac{n-1}{n}}}$. Thus, we have
\begin{align}
  f\paren{\mb q} - f\paren{\mb e_n}
   = g\paren{\mb w} - g\paren{\mb 0}
   = \innerprod{\mb v}{\mb w}
\end{align}
for a certain $t_0 \in (0, 1)$ and a certain $\mb v \in \partial g\paren{t_0 \mb w}$. Now for any $\mb q$ and the corresponding $\mb w$,
\begin{align}
  \innerprod{\partial g\paren{\mb w}}{\mb w}
  = \frac{1}{q_n} \innerprod{\partial_R f\paren{\mb q}}{q_n \mb q - \mb e_n}.
\end{align}
It follows
\begin{multline}
  \innerprod{\mb v}{\mb w}
  \le  \frac{1}{t_0} \sup \innerprod{\partial g\paren{t_0 \mb w}}{t_0 \mb w}
  = \frac{1}{t_0} \frac{1}{q_n\paren{t_0 \mb w}}\sup \innerprod{\partial_R f\paren{\mb q\paren{t_0 \mb w}}}{q_n\paren{t_0 \mb w}\mb q\paren{t_0 \mb w} - \mb e_n} \\
  \le \sup \norm{\partial_R f\paren{\mb q\paren{t_0 \mb w}}}{} \cdot \frac{\norm{q_n\paren{t_0 \mb w}\mb q \paren{t_0 \mb w} - \mb e_n}{}}{t_0 q_n\paren{t_0 \mb w}}
  \le 2 \frac{\norm{q_n\paren{t_0 \mb w}\mb q \paren{t_0 \mb w} - \mb e_n}{}}{t_0 q_n\paren{t_0 \mb w}},
\end{multline}
where at the last inequality we have used~\cref{prop:grad_norm_concentrate}.
Continuing the calculation, we further have
\begin{align}
  \frac{\norm{q_n\paren{t_0 \mb w}\mb q \paren{t_0 \mb w} - \mb e_n}{}}{t_0 q_n\paren{t_0 \mb w}}
  = \frac{t_0 \norm{\mb w}{}}{t_0 q_n\paren{t_0 \mb w}}
  \le \sqrt{n} \norm{\mb w}{} \le \sqrt{n}\norm{\mb q - \mb e_n}{},
\end{align}
completing the proof.
\end{proof}

\begin{proposition} \label{prop:func_sharpness}
Assume $\theta \in [1/n, 1/2]$. When $m \ge C\theta^{-2} n\log n$, with probability at least $ 1 - \exp\paren{-c m \theta^3 \log^{-1}m}$, the following holds: for all $\mb q \in \mc S_{\zeta_0}^{(n+)}$ satisfying $f\paren{\mb q} - f\paren{\mb e_n} \le \frac{2}{25}\theta$,
\begin{align}
  f\paren{\mb q} - f\paren{\mb e_n} \ge \frac{\sqrt{2}}{16} \theta \paren{1-\theta} \norm{\mb q_{-n}}{} \ge \frac{1}{16} \theta \paren{1-\theta} \norm{\mb q - \mb e_n}{}.
\end{align}
Here $C, c > 0$ are universal constants. 
\end{proposition}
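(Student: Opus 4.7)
The plan mirrors the proof of~\cref{prop:func_lipschitz}, reversing the inequality and trading the subgradient-norm upper bound for a directional-subgradient lower bound. Working in the reparametrization $\mb w = \mb q_{-n}$, $g(\mb w) = f(\mb q(\mb w))$ (locally Lipschitz on $\{\mb w : \norm{\mb w}{} < \sqrt{(n-1)/n}\}$), I apply Lebourg's mean value theorem~(\cref{thm:nsms_mvt}) on the segment from $\mb 0$ to $\mb w$: there exist $t_0 \in (0,1)$ and $\mb v \in \partial g(t_0 \mb w)$ with $f(\mb q) - f(\mb e_n) = g(\mb w) - g(\mb 0) = \innerprod{\mb v}{\mb w}$. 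The chain-rule identity $\innerprod{\partial g(\mb w')}{\mb w'} = q_n(\mb w')^{-1}\innerprod{\partial_R f(\mb q(\mb w'))}{q_n(\mb w')\mb q(\mb w') - \mb e_n}$ (from the proof of~\cref{prop:func_lipschitz}) then reduces the task to lower-bounding $\inf \innerprod{\partial_R f(\mb q(\mb w'))}{q_n(\mb w')\mb q(\mb w') - \mb e_n}$ along the ray $\mb w' = s\mb w$, $s \in (0, 1)$.

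Combining two ingredients yields the subgradient lower bound. On the population side, \cref{prop:inward_radial_grad} gives $D^c_{\mb w'/\norm{\mb w'}{}}\bb E g(\mb w') \ge \theta(1-\theta)\paren{1/\sqrt{2} - \norm{\mb w'}{}}$ in the worst case $\norm{\mb w'}{\infty} = \norm{\mb w'}{}$, which stays positive (and of order $\theta(1-\theta)$) whenever $\norm{\mb w'}{}$ is bounded away from $1/\sqrt{2}$. On the empirical side, I apply~\cref{prop:egrad_concentrate_uniform} at level $t = c_1\theta(1-\theta)$; its sample requirement and failure probability match the proposition's statement $m \ge C\theta^{-2}n\log n$ and $\exp\paren{-cm\theta^3/\log m}$ under $\theta \le 1/2$. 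Combined with the support-function identity in~\cref{lemma:dist-support}, this converts Hausdorff concentration of $\partial f$ into the uniform empirical directional-subgradient bound $\inf \innerprod{\partial_R f(\mb q)}{q_n\mb q - \mb e_n} \ge \inf \innerprod{\bb E\partial_R f(\mb q)}{q_n\mb q - \mb e_n} - t\norm{\mb w}{}$. Tuning $c_1$ so that the $t$-error absorbs at most a fraction of the population margin, and feeding back through the MVT identity, delivers $\innerprod{\mb v}{\mb w} \ge \frac{\sqrt{2}}{16}\theta(1-\theta)\norm{\mb w}{}$ for all $\mb q \in \mc S_{\zeta_0}^{(n+)}$ with $\norm{\mb w}{} \le 1/2$ (the latter bound is used to guarantee $q_n \ge \sqrt{3}/2$ and a clean population margin).

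It remains to argue that the threshold $f(\mb q) - f(\mb e_n) \le \frac{2}{25}\theta$ forces $\norm{\mb w}{} \le 1/2$. The population formula~\cref{lem:asymp_obj}, combined with $\sqrt{1-y} - 1 \ge -y$ and the universal bound $\bb E_\Omega \norm{\mb w_\Omega}{} \ge \theta\norm{\mb w}{}$ (from $\norm{\mb w_\Omega}{} \ge \norm{\mb w_\Omega}{}^2/\norm{\mb w}{}$), gives $\bb E f(\mb q) - \bb E f(\mb e_n) \ge \theta(1-\theta)\norm{\mb w}{}(1-\norm{\mb w}{})$. Concentration of $f$ around $\bb E f$ (derived from~\cref{prop:egrad_concentrate_uniform} via Lebourg's MVT applied to $f - \bb E f$ over the segment $[\mb e_n, \mb q]$) then forces $\norm{\mb w}{}(1-\norm{\mb w}{})$ to be small, excluding a middle annulus around $\norm{\mb w}{} = 1/2$. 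The remaining ``large'' branch $\norm{\mb w}{}$ close to $1$ is ruled out by exploiting the $\mc S_{\zeta_0}^{(n+)}$ constraint $\norm{\mb w}{\infty} \le q_n = \sqrt{1-\norm{\mb w}{}^2}$, which forces $\mb w$ to be spread; this sharpens $\bb E_\Omega\norm{\mb w_\Omega}{}$ to order $\sqrt{\theta}\norm{\mb w}{}$ (e.g., via the moment-ratio inequality $\bb E\sqrt{X} \ge (\bb E X)^{3/2}/\sqrt{\bb E X^2}$ with $X = \norm{\mb w_\Omega}{}^2$), so that $\bb E f(\mb q) - \bb E f(\mb e_n) \ge c\paren{\sqrt{\theta} - \theta} \gg \frac{2}{25}\theta$---a contradiction.

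The main obstacle is the ``large-$\norm{\mb w}{}$'' piece of the bootstrap: making the spread-vector argument quantitative requires careful coupling of the $\mc S_{\zeta_0}^{(n+)}$ geometric constraint with sharp moment bounds on $\norm{\mb w_\Omega}{}$, and setting up the auxiliary uniform concentration of $f$ around $\bb E f$ (not stated explicitly earlier in the section). Everything else is a routine composition of Lebourg's MVT, the chain rule, \cref{prop:inward_radial_grad}, and~\cref{prop:egrad_concentrate_uniform}.
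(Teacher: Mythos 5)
The core of your argument --- Lebourg's MVT in the $\mb w$ parametrization, the chain-rule identity $\innerprod{\partial g(\mb w)}{\mb w} = q_n^{-1}\innerprod{\partial_R f(\mb q)}{q_n\mb q - \mb e_n}$, the population lower bound from~\cref{prop:inward_radial_grad} on $\norm{\mb w}{}\le 1/2$ (giving $q_n\ge \sqrt{3}/2$ and margin $\tfrac{1}{5}\theta(1-\theta)$), and the transfer to the empirical subdifferential via~\cref{prop:egrad_concentrate_uniform} at level $t\asymp\theta(1-\theta)$ --- is exactly the paper's proof, including the resulting sample size and probability. The gap is in the localization step, i.e.\ showing that $f(\mb q)-f(\mb e_n)\le \tfrac{2}{25}\theta$ forces $\norm{\mb q_{-n}}{}\le 1/2$, and it is genuine on two counts. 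First, your mechanism for uniform concentration of $f$ itself (Lebourg's MVT applied to $f-\expect{f}$ plus Hausdorff concentration of subdifferentials) does not go through: the MVT produces $\innerprod{\mb v_1-\mb v_2}{\mb q-\mb e_n}$ with $\mb v_1\in\partial f(\mb u)$, $\mb v_2\in\expect{\partial f}(\mb u)$ chosen \emph{independently}, and $\norm{\mb v_1-\mb v_2}{}$ is controlled by the diameter/width of the sets, not by $\hd\paren{\partial f(\mb u),\expect{\partial f}(\mb u)}$; the width of $\expect{\partial f}(\mb u)$ is of order $(1-\theta)^n$ (the all-zero support event), which is a constant when $\theta n = O(1)$ and so swamps the needed $\theta/100$ accuracy. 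The paper instead proves $\sup_{\mb q}\abs{f(\mb q)-\expect{f}(\mb q)}\le \theta/100$ directly as a sub-gaussian process bound via~\cref{prop:talagrand_comparison_set}, exactly as in the proof of the pointwise subdifferential concentration.

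Second, your two-branch population bound does not close quantitatively. The middle-branch estimate $\expect{f}(\mb q)-\expect{f}(\mb e_n)\ge \theta(1-\theta)\norm{\mb w}{}(1-\norm{\mb w}{})$ degenerates linearly in $1-\norm{\mb w}{}$, so it yields the required $\Omega(\theta)$ margin (of size about $\theta/10$ after the $\pm\theta/100$ concentration slack) only up to $\norm{\mb w}{}\lesssim 0.7$; meanwhile the spread/moment-ratio bound $\bb E_\Omega\norm{\mb w_\Omega}{}\gtrsim \theta\norm{\mb w}{}^2/\sqrt{\norm{\mb w}{\infty}^2+\theta\norm{\mb w}{}^2}$ only beats the $-\theta(1-\theta)\norm{\mb w}{}^2$ term by a constant factor when $\norm{\mb w}{\infty}^2\le 1-\norm{\mb w}{}^2$ is already of order $\theta$, i.e.\ $\norm{\mb w}{}^2\ge 1-O(\theta)$. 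For small $\theta$ there is an uncovered band (roughly $\norm{\mb w}{}\in(0.7,\sqrt{1-c\theta})$) where neither branch gives a population gap of order $\theta$, so the implication ``gap $\le\tfrac{2}{25}\theta \Rightarrow \norm{\mb w}{}\le 1/2$'' is not established. The paper avoids this entirely: by~\cref{prop:inward_radial_grad}, $\expect{g}$ is nondecreasing along every ray within the feasible region (the constraint $\norm{\mb w}{}^2+\norm{\mb w}{\infty}^2\le 1$ holds automatically in $\mc S_{\zeta_0}^{(n+)}$), so $\inf_{\norm{\mb w}{}\ge 1/2}\expect{g}$ is attained at $\norm{\mb w}{}=1/2$, where~\cref{lem:asymp_obj} with $\sqrt{1-\norm{\mb w_{\Omega^c}}{}^2}\ge\sqrt{3}/2$ gives a clean bound of $\theta/10$ uniformly. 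Replacing your bootstrap with this radial-monotonicity reduction (and the direct function-level concentration) repairs the argument and recovers the stated constants.
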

\begin{proof}
We first establish uniform convergence of $f\paren{\mb p}$ to $\expect{f}\paren{\mb p}$. Consider the zero-centered random process $X_{\mb p} \doteq f\paren{\mb p} - \expect{f}\paren{\mb p}$ on $\bb S^{n-1}$. Similar to proof of~\cref{prop:egrad_concentrate}, we can show that for all $\mb p, \mb q \in \bb S^{n-1}$
\begin{align}
  \norm{X_{\mb p} - X_{\mb q}}{\psi_2} \le  \frac{C}{\sqrt{m}} \norm{\mb p - \mb q}{}.
\end{align}
Applying~\cref{prop:talagrand_comparison_set} gives that
\begin{align}  \label{eq:proof_fn_sharpness_eq_1}
  \norm{f\paren{\mb p} - \expect{f}\paren{\mb q}}{} \le \frac{1}{100}\theta \quad \forall\; \mb q \in \bb S^{n-1}
\end{align}
with probability at least $1 - \exp\paren{-cm\theta^2}$, provided that $m \ge C\theta^{-2} n$.

Now we consider $\expect{f}\paren{\mb q} - \expect{f}\paren{\mb e_n}$. For convenience, we first work in the $\mb w$ space and note that $\expect{f}\paren{\mb q} - \expect{f}\paren{\mb e_n} = \expect{g}\paren{\mb w\paren{\mb q}} - \expect{g}\paren{\mb 0}$. By~\cref{prop:inward_radial_grad}, $\expect{g}$ is monotonically increasing in every radial direction $\mb v$ until $\norm{\mb w}{}^2 + \norm{\mb w}{\infty}^2 \le 1$,  which implies that
\begin{align}
  \inf_{\norm{\mb w}{} \ge 1/2} \expect{g}\paren{\mb w\paren{\mb q}} - \expect{g}\paren{\mb 0}
  = \inf_{\norm{\mb w}{} = 1/2}  \expect{g}\paren{\mb w\paren{\mb q}} - \expect{g}\paren{\mb 0}.
\end{align}
For $\mb w$ with $\norm{\mb w}{} = 1/2$,
\begin{align}
  \expect{g}\paren{\mb w} - \expect{g} \paren{\mb 0}
  & = \paren{1-\theta} \bb E_\Omega \norm{\mb w_\Omega}{} + \theta \bb E_\Omega \sqrt{1-\norm{\mb w_{\Omega^c}}{}^2} - \theta    \quad (\text{\cref{lem:asymp_obj}}) \\
  & \ge \paren{1-\theta} \theta \norm{\mb w}{} + \theta \bb E_\Omega \sqrt{1 - \norm{\mb w}{}^2} - \theta \\
  & \ge \frac{1}{4} \theta + \frac{\sqrt{3}}{2} \theta  - \theta  \quad (\text{using $\theta \le 1/2$ and $\norm{\mb w}{} = 1/2$}) \\
  & \ge \frac{1}{10} \theta. 
\end{align}
So, back to the $\mb q$ space,
\begin{align} \label{eq:proof_fn_sharpness_eq_2}
  \inf_{\mb q \in \mc S^{(n+)}_{\zeta_0}: \; \norm{\mb q_{-n}}{} \ge 1/2}  \expect{f}\paren{\mb q} - \expect{f}\paren{\mb 0} \ge \frac{1}{10} \theta.
\end{align}

Combining the results in~\cref{eq:proof_fn_sharpness_eq_1} and~\cref{eq:proof_fn_sharpness_eq_2}, we conclude that with high probability
\begin{align}
    \inf_{\mb q \in \mc S^{(n+)}_{\zeta_0}: \; \norm{\mb q_{-n}}{} \ge 1/2}  f\paren{\mb q} - f\paren{\mb 0} \ge \frac{2}{25} \theta.
\end{align}
So when $f\paren{\mb q} - f\paren{\mb 0} \le 2/25 \cdot \theta$, $\norm{\mb q_{-n}}{} \le 1/2$, which is equivalent to $\norm{\mb w}{} \le 1/2$ in the $\mb w$ space. Under this constraint, by~\cref{prop:inward_radial_grad},
\begin{align}
  D_{-\mb w/\norm{\mb w}{}}^c \expect{g}\paren{\mb w}
  & \le -\theta \paren{1-\theta} \paren{1/\sqrt{1+\norm{\mb w}{\infty}^2/\norm{\mb w}{}^2} - \norm{\mb w}{}} \\
  & \le -\theta \paren{1-\theta} \paren{\frac{1}{\sqrt{2}} - \frac{1}{2}} \le -\frac{1}{5}\theta \paren{1-\theta}.
\end{align}
So, emulating the proof of~\cref{equation:pop-grad-star} in~\cref{theorem:pop-geometry}, we have that for $\mb q \in \mc S_{\zeta_0}^{(n+)}$ with $\norm{\mb q_{-n}}{} \le 1/2$,
\begin{align}
  \innerprod{\expect{\partial_R f}\paren{\mb q}}{q_n \mb q - \mb e_n}
  = q_n \innerprod{\expect{\partial g}\paren{\mb w}}{\mb w}
  \ge q_n \norm{\mb w}{} \cdot \frac{1}{5} \theta \paren{1-\theta} \ge \frac{\sqrt{3}}{10} \theta \paren{1-\theta} \norm{\mb w}{},
\end{align}
where at the last inequality we use $q_n = \sqrt{1-\norm{\mb w}{}^2} \ge \sqrt{3}/2$ when $\norm{\mb w}{} \le 1/2$. Moreover, we emulate the proof of~\cref{equation:emp-grad-star} in~\cref{theorem:emp-grad-inward} to obtain that
\begin{align}
  \inf \innerprod{\partial_R f\paren{\mb q}}{q_n \mb q - \mb e_n} \ge \frac{\sqrt{2}}{16} \theta \paren{1-\theta} \norm{\mb q_{-n}}{} \ge \frac{1}{16} \theta \paren{1-\theta} \norm{\mb q - \mb e_n}{}
\end{align}
with probability at least $1 - \exp\paren{-c m\theta^3 \log^{-1}m}$, provided that $m \ge C \theta^{-2} n \log n$.

The last step of our proof is invoking the mean value theorem, similar to the proof of~\cref{prop:func_lipschitz}. For any $\mb q$, we have
\begin{align}
  f\paren{\mb q} - f\paren{\mb e_n} = g\paren{\mb w} - g\paren{\mb 0} = \innerprod{\mb v}{\mb w}
\end{align}
for a certain $t \in (0, 1)$ and a certain $\mb v \in \partial g\paren{t\mb w}$. We have
\begin{align}
  \innerprod{\mb v}{\mb w}
  \ge \frac{1}{t_0} \inf \innerprod{\partial g\paren{t_0 \mb w}}{t_0 \mb w}
  & = \frac{1}{t_0} \frac{1}{q_n\paren{t_0 \mb w}}\inf \innerprod{\partial_R f\paren{\mb q\paren{t_0 \mb w}}}{q_n\paren{t_0 \mb w}\mb q\paren{t_0 \mb w} - \mb e_n} \\
  & \ge \frac{1}{t_0} \frac{1}{q_n\paren{t_0 \mb w}} \frac{\sqrt{2}}{16} \theta \paren{1-\theta} \norm{t_0 \mb w}{} \\
  & \ge \frac{\sqrt{2}}{16} \theta \paren{1-\theta} \norm{\mb w}{} \\
  & \ge \frac{1}{16} \theta \paren{1-\theta}\norm{\mb q - \mb e_n}{},
\end{align}
completing the proof.
\end{proof}

\section{Proofs for~\cref{section:opt-one-basis}}
\subsection{Staying in the region $\mc{S}_{\zeta_0}^{(n+)}$}
\begin{lemma}[Progress in $\mc{S}_{\zeta_0}^{(n+)}\setminus \mc S_1^{(n+)}$]
  \label{prop:progress_r1}
Set $\eta = t_0/\paren{100\sqrt{n}} $ for any $t_0 \in (0, 1)$. For any $\zeta_0\in(0,1)$, on the good events stated in~\cref{prop:grad_norm_concentrate} and~\cref{theorem:emp-grad-inward}, we have for all $\mb q \in \mc S_{\zeta_0}^{(n+)}\setminus \mc{S}_1^{(n+)}$ that $\mb q_+$ (i.e., the next iterate) obeys
\begin{align}
  \frac{q_{+, n}^2}{\norm{\mb q_{+, -n}}{\infty}^2}
  \ge \frac{q_n^2}{\norm{\mb q_{-n}}{\infty}^2} \paren{1 + t\frac{\theta \paren{1-\theta} \zeta_0}{400n^{3/2} \paren{1+\zeta_0}}  }^2.
\end{align}
In particular, we have $\mb q_+\in\mc{S}_{\zeta_0}^{(n+)}$.
\end{lemma}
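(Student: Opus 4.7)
The plan is to apply Theorem~\ref{theorem:emp-grad-inward}(a) at the index $j^{*}\doteq\argmax_{j\ne n}|q_{j}|$. Since $\mb q\in\mc S_{\zeta_0}^{(n+)}\setminus\mc S_1^{(n+)}$ forces $q_n^2/q_{j^*}^2=q_n^2/\norm{\mb q_{-n}}{\infty}^2\in[1+\zeta_0,2)$, the precondition $q_n^2/q_{j^*}^2\le 3$ is met, so every Riemannian subgradient $\mb v\in\partial_R f(\mb q)$ satisfies $v_{j^*}/q_{j^*}-v_n/q_n\ge\kappa\doteq\theta(1-\theta)\zeta_0/[4n(1+\zeta_0)]$. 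Proposition~\ref{prop:grad_norm_concentrate} additionally yields $\norm{\mb v}{}\le 2$, and these two estimates are the only geometric inputs needed.

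Next, I would decompose the update coordinate-wise. Writing $\alpha\doteq\norm{\mb q-\eta\mb v}{}=\sqrt{1+\eta^2\norm{\mb v}{}^2}$ (using $\mb v\perp\mb q$), the projection gives $q_{+,i}=(q_i-\eta v_i)/\alpha$. The containment $\mb q\in\mc S_{\zeta_0}^{(n+)}$ forces $q_n\ge 1/\sqrt n$, while $q_n^2/q_{j^*}^2<2$ forces $|q_{j^*}|\ge q_n/\sqrt 2\ge 1/\sqrt{2n}$. Combined with $\eta=t_0/(100\sqrt n)$ and $\norm{\mb v}{}\le 2$, one gets $\eta|v_n|/q_n\le t_0/50$ and $\eta|v_{j^*}|/|q_{j^*}|\le t_0\sqrt 2/50$, so $q_{+,n}>0$ and $\sign(q_{+,j^*})=\sign(q_{j^*})$. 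The $\alpha$ cancels in the ratio, and
\begin{align}
\frac{q_{+,n}/q_n}{q_{+,j^*}/q_{j^*}}=\frac{1-\eta v_n/q_n}{1-\eta v_{j^*}/q_{j^*}}\ge 1+\frac{\eta\kappa}{1-\eta v_{j^*}/q_{j^*}},
\end{align}
which, after absorbing the $O(t_0)$ slack in $1-\eta v_{j^*}/q_{j^*}$ into the universal constant and using $\eta\kappa=t_0\,\theta(1-\theta)\zeta_0/[400\,n^{3/2}(1+\zeta_0)]$, yields $|q_{+,j^*}|/q_{+,n}\le(|q_{j^*}|/q_n)\cdot(1+t_0 c)^{-1}$ with $c\doteq\theta(1-\theta)\zeta_0/[400\,n^{3/2}(1+\zeta_0)]$. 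The identical computation applies to any $j$ with $q_j^2\ge q_n^2/3$, since such $j$ also satisfy the precondition of Theorem~\ref{theorem:emp-grad-inward}(a).

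The main obstacle will be bounding $|q_{+,j}|/q_{+,n}$ for indices $j\ne n$ with $q_j^2<q_n^2/3$, where Theorem~\ref{theorem:emp-grad-inward}(a) does not apply and the argmax of $\norm{\mb q_{+,-n}}{\infty}$ could in principle migrate there. I would handle these by the raw estimate
\begin{align}
\frac{|q_{+,j}|}{q_{+,n}}\le\frac{|q_j|+\eta\norm{\mb v}{}}{q_n-\eta\norm{\mb v}{}}\le\frac{1/\sqrt 3+t_0/50}{1-t_0/50},
\end{align}
using $|q_j|/q_n<1/\sqrt 3$ and $\eta\norm{\mb v}{}/q_n\le t_0/50$. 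For every $t_0\in(0,1]$ the right-hand side is strictly less than $(1/\sqrt 2)(1+t_0 c)^{-1}$, which in turn lower-bounds $(|q_{j^*}|/q_n)(1+t_0 c)^{-1}$. Hence $\sup_{j\ne n}|q_{+,j}|/q_{+,n}$ is attained on $\{j:q_j^2\ge q_n^2/3\}$ and equals at most $(\norm{\mb q_{-n}}{\infty}/q_n)(1+t_0 c)^{-1}$. Squaring and inverting gives the claimed multiplicative improvement; since $(1+t_0 c)^2\ge 1$, we automatically obtain $q_{+,n}^2/\norm{\mb q_{+,-n}}{\infty}^2\ge 1+\zeta_0$, i.e.\ $\mb q_+\in\mc S_{\zeta_0}^{(n+)}$.
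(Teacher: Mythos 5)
Your proposal is correct and takes essentially the same route as the paper's proof: the paper partitions $[n-1]$ into $\set{j: q_j = 0}$, $\set{j: q_n^2/q_j^2 > 3}$, and $\set{j: q_n^2/q_j^2 \le 3}$, uses the same ratio identity $\frac{q_{+,n}/q_n}{q_{+,j}/q_j} = \frac{1-\eta v_n/q_n}{1-\eta v_j/q_j}$ together with~\cref{theorem:emp-grad-inward}(a) and $\norm{\mb v}{}\le 2$ on the last set, crude bounds (rather than your single $1/\sqrt{3}$ estimate) on the first two, and the same observation that some index attains $q_n^2/q_j^2 < 2$ so the post-step maximum stays among the indices where the subgradient bound applies. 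The one loose point in your write-up---lower-bounding $\eta\kappa/(1-\eta v_j/q_j)$ by essentially $\eta\kappa$ without ruling out $v_j/q_j < 0$, which you wave off as absorbing slack into a constant that is in fact fixed at $400$ by the statement---is present in identical form in the paper's own proof, so it is not a gap relative to it.
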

\begin{proof}
We divide the index set $[n-1]$ into three sets
\begin{align}
  \mc I_0 & \doteq \set{j \in [n-1]: q_j = 0}, \\
  \mc I_1 & \doteq \set{j \in [n-1]: q_n^2 /q_j^2 > 1 + 2 = 3, q_j \ne 0} \\
  \mc I_2 & \doteq \set{j \in [n-1]: q_n^2 /q_j^2 \le 1 + 2 = 3}.
\end{align}
 We perform different arguments on different sets. We let $\mb g\paren{\mb q} \in \partial_R f\paren{\mb q}$ be the subgradient taken at $\mb q$ and note by~\cref{prop:grad_norm_concentrate} that $\norm{\mb g}{} \le 2$, and so $\abs{g_i} \le 2$ for all $i \in [n]$. We have
 \begin{align}
     \frac{q_{+, n}^2}{q_{+, j}^2}
     = \frac{\paren{q_n - \eta g_n}^2/\norm{\mb q - \eta \mb g}{}^2 }{\paren{q_j - \eta g_j}^2/\norm{\mb q - \eta \mb g}{}^2}
     =  \frac{\paren{q_n - \eta g_n}^2}{\paren{q_j - \eta g_j}^2} .
 \end{align}

For any $j \in \mc I_0$,
\begin{align}
  \frac{q_{+, n}^2}{q_{+, j}^2}
  = \frac{\paren{q_n - \eta g_n}^2}{\eta^2 g_j^2}
  = q_n^2 \frac{\paren{1 - \eta g_n/q_n}^2 }{\eta^2 g_j^2}
  \ge \frac{\paren{1-2 \eta \sqrt{n} }^2}{4n\eta^2}.
\end{align}
Provided that $\eta \le 1/\paren{4\sqrt{n} }$, $1 - 2\eta \sqrt{n}\ge 1/2$, and so
\begin{align}
   \frac{\paren{1-2 \eta \sqrt{n} }^2}{4n\eta^2}
    \ge \frac{1}{16n\eta^2} \ge \frac{5}{2},
\end{align}
where the last inequality holds when $\eta \le 1/\sqrt{40n}$.\\

For any $j \in \mc I_1$,
\begin{align}
    \frac{q_{+, n}^2}{q_{+, j}^2}
    \ge \frac{q_n^2\paren{1 - \eta g_n/q_n}^2}{q_j^2 + \eta^2 g_j^2}
     \ge \frac{q_n^2\paren{1 - \eta g_n/q_n}^2}{q_n^2/3 + 4\eta^2}
     =  \frac{3\paren{1 - \eta g_n/q_n}^2}{1+12\eta^2/q_n^2}
    \ge  \frac{3\paren{1-2\eta \sqrt{n}}^2}{1+12n\eta^2} \ge \frac{5}{2},
\end{align}
where the very last inequality holds when $\eta \le 1/\paren{26\sqrt{n}}$.

Since $\mb q \in\mc{S}_{\zeta_0}^{(n+)} \setminus \mc{S}_1^{(n+)}$, $\mc I_2$ is nonempty. For any $j \in \mc I_2$,
\begin{align}
  \frac{q_{+, n}^2}{q_{+, j}^2}  = \frac{q_n^2}{q_j^2} \paren{1 + \eta \frac{g_j/q_j - g_n/q_n}{1 - \eta g_j/q_j}}^2.
\end{align}
 Since $g_j/q_j \le 2\sqrt{3n}$, $1 - \eta g_j/q_j \ge 1/2$ when $\eta \le 1/\paren{4\sqrt{3n}}$. Conditioned on this and due to that $g_j/q_j - g_n /q_n \ge 0$, it follows
\begin{align*}
   \paren{1 + \eta \frac{g_j/q_j - g_n/q_n}{1 - \eta g_j/q_j}}^2
   \le \brac{1+ 2\eta \paren{g_j/q_j - g_n/q_n}}^2
   \le \brac{1+ 2\eta \paren{2\sqrt{3n} + 2\sqrt{n}}}^2
   \le \paren{1+11\eta \sqrt{n} }^2
\end{align*}
If $q_n^2 /q_j^2 \le 2$, $q_{+, n}^2/q_{+, j}^2 \le 5/2$ provided that
\begin{align}
  \paren{1+11\eta \sqrt{n}}^2 \le \frac{5/2}{2} = \frac{5}{4} \Longleftarrow \eta \le \frac{1}{100\sqrt{n}}.
\end{align}
As $\mb q\notin S_{1}^{(n+)}$, we have $q_n^2/\norm{\mb
  q_{-n}}{\infty}^2\le 2$, so there must be a certain $j \in \mc I_2$
satisfying $q_n^2 /q_j^2 \le 2$. We conclude that when
\begin{align}
  \eta \le \min\set{\frac{1}{\sqrt{40n}}, \frac{1}{26\sqrt{n}}, \frac{1}{100\sqrt{n}}}
  = \frac{1}{100\sqrt{n}},
\end{align}
 the index of largest entries of $\mb q_{+, -n}$ remains in $\mc I_2$.

On the other hand,  when $\eta \le 1/\paren{100\sqrt{n}}$, for all $j \in \mc I_2$,
\begin{align}
  \paren{1 + \eta \frac{g_j/q_j - g_n/q_n}{1 - \eta g_j/q_j}}^2 \ge \brac{1+ \eta \paren{g_j/q_j - g_n/q_n}}^2 \ge \paren{1 + \frac{\eta}{4n} \theta \paren{1-\theta} \frac{\zeta_0}{1+\zeta_0}}^2,
\end{align}
where the last inequality is due to~\cref{equation:emp-grad-inward}.

So when $\eta = t/\paren{100\sqrt{n}}$ for any $t \in (0, 1)$,
\begin{align}
  \frac{q_{+, n}^2}{\norm{\mb q_{+, -n}}{\infty}^2}
  \ge \frac{q_n^2}{\norm{\mb q_{-n}}{\infty}^2} \paren{1 + t\frac{\theta \paren{1-\theta} \zeta_0}{400n^{3/2} \paren{1+\zeta_0}}  }^2,
\end{align}
completing the proof.
\end{proof}

\begin{proposition}  \label{prop:iter_stay_R2}
For any $\zeta_0\in(0,1)$, on the good events stated in~\cref{prop:grad_norm_concentrate} and~\cref{theorem:emp-grad-inward}, if the step sizes satisfy
\begin{align}
  \eta^{(k)} \le \min\set{\frac{1}{100\sqrt{n}}, \frac{1-\zeta_0}{9 \sqrt{n}}}~~\textrm{for all}~k,
\end{align}
the iteration sequence will stay in $\mc S_{\zeta_0}^{(n+)}$ provided that our initialization $\mb q^{(0)} \in \mc S_{\zeta_0}^{(n+)}$.
\end{proposition}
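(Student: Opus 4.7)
The plan is to prove by induction on $k$ that each iterate remains in $\mc S_{\zeta_0}^{(n+)}$, so I only need to verify the one-step statement: if $\mb q \in \mc S_{\zeta_0}^{(n+)}$ and $\mb q_+ = (\mb q - \eta \mb g)/\norm{\mb q - \eta \mb g}{}$ with $\mb g \in \rgrad f(\mb q)$, then $\mb q_+ \in \mc S_{\zeta_0}^{(n+)}$. Positivity of $q_{+, n}$ is easy: in $\mc S_{\zeta_0}^{(n+)}$ one has $q_n \ge 1/\sqrt{n}$ from the ratio constraint combined with $\norm{\mb q}{}=1$, while $\abs{g_n} \le \norm{\mb g}{} \le 2$ by Proposition \ref{prop:grad_norm_concentrate}, so $q_n - \eta g_n > 0$ whenever $\eta < 1/(2\sqrt{n})$. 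The real content is the ratio inequality $q_{+, n}^2/\norm{\mb q_{+, -n}}{\infty}^2 \ge 1+\zeta_0$, and for that I would partition $\mc S_{\zeta_0}^{(n+)}$ into the near-boundary region $\mc S_{\zeta_0}^{(n+)} \setminus \mc S_1^{(n+)}$ (ratio in $[1+\zeta_0, 2)$) and the well-inside region $\mc S_1^{(n+)}$ (ratio $\ge 2$), and treat the two by different arguments.

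In the near-boundary regime, the conclusion is immediate from Lemma \ref{prop:progress_r1}: for any $\eta \le 1/(100\sqrt{n})$, that lemma upgrades the ratio strictly, giving $q_{+, n}^2/\norm{\mb q_{+, -n}}{\infty}^2 \ge q_n^2/\norm{\mb q_{-n}}{\infty}^2 \cdot (1+c)^2 \ge 1+\zeta_0$. In the well-inside regime, Lemma \ref{prop:progress_r1} no longer applies (near $\pm \mb e_n$ the ratio can decrease over a step), so I would proceed by a direct perturbation estimate. For $\mb q \in \mc S_1^{(n+)}$, the constraint $q_n^2 \ge 2\norm{\mb q_{-n}}{\infty}^2$ together with $\norm{\mb q}{}=1$ gives the strengthened lower bound $q_n \ge \sqrt{2/(n+1)}$, and $\abs{q_j} \le q_n/\sqrt{2}$ for all $j \ne n$. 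Using $\abs{g_j}, \abs{g_n} \le 2$, the bounds $\abs{q_n - \eta g_n} \ge q_n - 2\eta$ and $\abs{q_j - \eta g_j} \le \abs{q_j} + 2\eta$ reduce the desired $(q_n - \eta g_n)^2 \ge (1+\zeta_0)(q_j - \eta g_j)^2$ to the explicit inequality
\begin{align*}
q_n\paren{1 - \sqrt{(1+\zeta_0)/2}} \ge 2\eta\paren{1 + \sqrt{1+\zeta_0}}.
\end{align*}
Rewriting $1 - \sqrt{(1+\zeta_0)/2} = (1-\zeta_0)/\brac{2\paren{1 + \sqrt{(1+\zeta_0)/2}}}$ exposes the factor $1-\zeta_0$ and turns the condition into one of the form $\eta \le c (1-\zeta_0) q_n$ for an absolute constant $c$; combined with $q_n \ge \sqrt{2/(n+1)}$, this yields the stated $\eta \le (1-\zeta_0)/(9\sqrt{n})$ after collecting constants. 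The degenerate subcase $q_j = 0$ is absorbed since $(q_j - \eta g_j)^2 \le 4\eta^2$ is dwarfed by $(q_n - \eta g_n)^2 \ge q_n^2/4$ under $\eta \le 1/(100\sqrt{n})$.

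The main obstacle is sharpening the constants in the well-inside case: a crude triangle-inequality application yields the correct scaling but loses roughly a factor of two relative to the stated step-size constant of $9$, so I would use the improved lower bound $q_n \ge \sqrt{2/(n+1)}$ (rather than the weaker $q_n \ge 1/\sqrt{n}$ available throughout $\mc S_{\zeta_0}^{(n+)}$) and the exact rewriting of $1 - \sqrt{(1+\zeta_0)/2}$ above to extract the $(1-\zeta_0)$ factor tightly. With both cases verified, the induction closes and the iterates remain in $\mc S_{\zeta_0}^{(n+)}$ for all $k \ge 0$.
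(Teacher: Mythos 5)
Your overall architecture — induct on $k$, dispatch the near-boundary region $\mc S_{\zeta_0}^{(n+)}\setminus\mc S_1^{(n+)}$ via \cref{prop:progress_r1}, and handle $\mc S_1^{(n+)}$ by a direct one-step perturbation estimate — is the same as the paper's, and the first case is fine. The gap is in your $\mc S_1^{(n+)}$ case: the claim that your chain ``yields the stated $\eta \le (1-\zeta_0)/(9\sqrt n)$ after collecting constants'' does not hold. Your sufficient condition $q_n\paren{1-\sqrt{(1+\zeta_0)/2}} \ge 2\eta\paren{1+\sqrt{1+\zeta_0}}$ is equivalent to
\begin{align*}
\eta \;\le\; \frac{\paren{1-\zeta_0}\,q_n}{4\paren{1+\sqrt{1+\zeta_0}}\paren{1+\sqrt{(1+\zeta_0)/2}}},
\end{align*}
and even with $q_n\ge\sqrt{2/(n+1)}$ (so $q_n\sqrt n\ge\sqrt{3/2}$) the denominator ranges from about $13.7$ to $8(1+\sqrt2)\approx19.3$; collecting constants therefore gives at best $\eta\lesssim(1-\zeta_0)/(11\sqrt n)$ and, for $\zeta_0$ near $1$, only $\eta\lesssim(1-\zeta_0)/(16\sqrt n)$. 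For $\zeta_0\gtrsim0.85$ this fails to cover step sizes up to $\min\set{1/(100\sqrt n),\,(1-\zeta_0)/(9\sqrt n)}$ — exactly the regime where the $(1-\zeta_0)$ constraint is active — so the induction does not close.

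Moreover, this is not fixable by sharper bookkeeping within your toolkit: with only $\abs{g_j},\abs{g_n}\le2$ (even the joint bound $\norm{\mb g}{}\le2$), the ratio can genuinely drop below $1+\zeta_0$ at the maximal allowed step. Take $n=3$, $\zeta_0=0.91$, $\mb q=(1/2,\,1/2,\,1/\sqrt2)\in\mc S_1^{(3+)}$, $\eta=1/(100\sqrt3)$, $g_1=-\sqrt2$, $g_3=\sqrt2$: then $(q_3-\eta g_3)^2/(q_1-\eta g_1)^2\approx1.89<1.91=1+\zeta_0$. Such a $\mb g$ is of course ruled out on the good event — \cref{theorem:emp-grad-inward}(a) forces $g_j/q_j-g_n/q_n>0$ whenever $q_j\ne0$ and $q_n^2/q_j^2\le3$ — and that is the ingredient your argument is missing: for near-tied coordinates ($q_n^2/q_j^2\le3$) you should invoke \cref{theorem:emp-grad-inward}(a), which shows the ratio $q_n^2/q_j^2$ cannot decrease (this is precisely how the set $\mc I_2$ is treated in the proof of \cref{prop:progress_r1}), and reserve the crude triangle-inequality estimate for coordinates with $q_n^2/q_j^2>3$ or $q_j=0$, where there is ample slack. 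For comparison, the paper's own proof of this proposition instead bounds $(q_j-\eta g_j)^2$ by $q_j^2+\eta^2 g_j^2$ — no term linear in $\eta$ — which is what lets the constant $9$ emerge; that step tacitly needs $q_jg_j\ge0$, so your instinct to use the honest bound $\abs{q_j-\eta g_j}\le\abs{q_j}+2\eta$ is understandable, but the resulting first-order term must then be killed by the subgradient structure (the directional bounds), not absorbed into constants.
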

\begin{proof}
By~\cref{prop:progress_r1}, if the current iterate $\mb q \in \mc S_{\zeta_0}^{(n+)}\setminus \mc{S}_1^{(n+)}$, the next iterate $\mb q_+ \in \mc S_{\zeta_0}^{(n+)}$, provided that $\eta \le 1/\paren{100\sqrt{n}}$. Now if the current $\mb q \in \mc S_1^{(n+)}$, i.e., $q_n^2/q_j^2 \ge 2$ for all $j \in [n-1]$, we can emulate the analysis of the set $\mc I_1$ in proof of~\cref{prop:progress_r1}. Indeed, for any $j \in [n-1]$,
\begin{align}
  \frac{q_{+, n}^2}{q_{+, j}^2}
  \ge \frac{q_n^2\paren{1-\eta g_n/q_n}^2}{q_j^2 + \eta^2 g_j^2}
  \ge \frac{q_n^2\paren{1-2\eta\sqrt{n} }^2}{q_n^2/2 + 4\eta^2}
  \ge \frac{2\paren{1-2\eta\sqrt{n} }^2}{1 + 8n\eta^2},
\end{align}
which is greater than $1+\zeta_0$ provided that
$\eta \le \paren{1-\zeta_0}/\paren{9 \sqrt{n}}$. Combining the two cases finishes the proof.
\end{proof}

\subsection{Proof of~\cref{theorem:opt-one-basis}}
With the choice $\zeta_0 = \frac{1}{5\log n}$, when $\eta^{(k)} \le \frac{1}{100\sqrt{n}}$ and $\mb q^{(0)}\in\mc{S}_{\zeta_0}^{(n+)}$, the entire sequence $\set{\mb q^{(k)}}_{k\ge 0}$ will stay in $\mc S_{\zeta_0}^{(n+)}$ by~\cref{prop:iter_stay_R2}.

For any $\mb q$ and any $\mb v \in \partial_R f\paren{\mb q}$, we have $\innerprod{\mb v}{\mb q}=0$ and therefore
\begin{align}
  \norm{ \mb q - \eta \mb v}{}^2 =  \norm{\mb q}{}^2 + \eta^2 \norm{\mb v}{}^2 \ge 1.
\end{align}
So $\mb q - \eta \mb v$ is not inside $\bb B^n$. Since projection onto $\bb B^{n}$ is a contraction, we have
\begin{align}
  \norm{\mb q_+ - \mb e_n}{}^2
  & = \norm{\frac{\mb q - \eta \mb v}{\norm{\mb q - \eta \mb v}{}} - \mb e_n}{}^2 \le \norm{\mb q - \eta \mb v - \mb e_n}{}^2  \\
  & \le \norm{\mb q - \mb e_n}{}^2 + \eta^2 \norm{\mb v}{}^2 - 2 \eta \innerprod{\mb v}{\mb q - \mb e_n} \\
  & = \norm{\mb q - \mb e_n}{}^2 + \eta^2 \norm{\mb v}{}^2 - 2 \eta \innerprod{\mb v}{q_n\mb q - \mb e_n} \\
  & \le \norm{\mb q - \mb e_n}{}^2 + 4\eta^2 -  \frac{1}{8}\eta \theta\paren{1-\theta} n^{-3/2} \zeta_0 \norm{\mb q - \mb e_n}{} ,
\end{align}
where we have used the bounds in~\cref{prop:grad_norm_concentrate} and~\cref{theorem:emp-grad-inward} to obtain the last inequality. Further applying~\cref{prop:func_lipschitz}, we have
\begin{align}
  \norm{\mb q_+ - \mb e_n}{}^2 \le \norm{\mb q - \mb e_n}{}^2 + 4\eta^2 - \frac{1}{16} \eta \theta\paren{1-\theta}n^{-2} \zeta_0 \paren{f\paren{\mb q} - f\paren{\mb e_n}}.
\end{align}
Summing up the inequalities until step $K$, we have
\begin{align}
  0
  & \le \norm{\mb q^{(0)} - \mb e_n}{}^2 + 4\sum_{j=0}^{K}\paren{\eta^{(j)}}^2 - \frac{1}{16} \theta \paren{1-\theta} n^{-2} \zeta_0\sum_{j=0}^{K} \eta^{(j)}\paren{f\paren{\mb q^{(j)}} - f\paren{\mb e_n}} \\
  & \Longrightarrow \sum_{j=0}^{K} \eta^{(j)} \paren{f\paren{\mb q^{(j)}} - f\paren{\mb e_n}}
    \le \frac{16 \norm{\mb q^{(0)} - \mb e_n}{}^2 + 64\sum_{j=0}^{K} \paren{\eta^{(j)}}^2}{\theta \paren{1-\theta}n^{-2} \zeta_0} \\
   & \Longrightarrow f\paren{\mb q^{\mathrm{best}}} - f\paren{\mb e_n} \le \frac{16 \norm{\mb q^{(0)} - \mb e_n}{}^2 + 64\sum_{j=0}^{K} \paren{\eta^{(j)}}^2}{\theta \paren{1-\theta}n^{-2} \zeta_0 \sum_{j=0}^{K} \eta^{(j)}}.
\end{align}
Substituting the following estimates
\begin{align}
  \sum_{j=0}^{K} \paren{\eta^{(j)}}^2
  & \le \frac{1}{10^4n} \paren{1+ \int_{0}^{K} t^{-2\alpha}\; dt} \le  \frac{1}{10^4n} \frac{1}{1-2\alpha}\paren{K^{1-2\alpha} + 1}, \\
   \sum_{j=0}^{K} \eta^{(j)}
   & \ge \frac{1}{10^2\sqrt{n}} \int_{0}^{K} t^{-\alpha}\; dt
   \ge \frac{1}{10^2 \sqrt{n}}  \frac{K^{1-\alpha}}{1 - \alpha},
\end{align}
and noting $16\norm{\mb q^{(K)} - \mb e_n}{}^2 \le 32$, we have
\begin{align}
  f\paren{\mb q^{\mathrm{best}}} - f\paren{\mb e_n}
  \le \frac{3200n^{5/2} \paren{1-\alpha} + 16/25\cdot n^{3/2} \paren{ \frac{1-\alpha}{1-2\alpha} K^{1-2\alpha} + 1-\alpha}}{\theta \paren{1-\theta} \zeta_0 K^{1-\alpha}}.
\end{align}
Noting that
\begin{align}
  K \ge \paren{\frac{6400n^{5/2} \paren{1-\alpha}}{\theta \paren{1-\theta} \zeta_0 \eps}}^{\frac{1}{1-\alpha}} \Longrightarrow \frac{3200n^{5/2} \paren{1-\alpha}}{\theta \paren{1-\theta} \zeta_0 K^{1-\alpha} } \le \frac{\eps}{2},
\end{align}
and when $K \ge 1$, $K^{1-2\alpha} \ge 1$, yielding that
\begin{multline}
  K \ge \paren{\frac{64n^{3/2} \frac{1-\alpha}{1-2\alpha}}{25\eps \theta \paren{1-\theta} \zeta_0}}^{\frac{1}{\alpha}}
  \Longrightarrow
  \frac{32n^{3/2} \frac{1-\alpha}{1-2\alpha} K^{-\alpha}}{25\theta\paren{1-\theta} \zeta_0 } \le \frac{\eps}{2} \\
  \Longrightarrow
    \frac{16n^{3/2} \cdot 2 \cdot \paren{1-\alpha}\paren{\frac{1}{1-2\alpha} K^{1-2\alpha}}}{25\theta\paren{1-\theta} \zeta_0 K^{1-\alpha}} \le \frac{\eps}{2}
    \Longrightarrow
    \frac{16n^{3/2} \cdot \paren{1-\alpha}\paren{\frac{1}{1-2\alpha} K^{1-2\alpha} + 1}}{25\theta\paren{1-\theta} \zeta_0 K^{1-\alpha}} \le \frac{\eps}{2} .
\end{multline}
So we conclude that when
\begin{align}
  \label{equation:K-bound}
  K \ge \max\set{\paren{\frac{6400n^{5/2} \paren{1-\alpha}}{\theta \paren{1-\theta} \zeta_0 \eps} }^{\frac{1}{1-\alpha}}, \paren{\frac{64n^{3/2} \frac{1-\alpha}{1-2\alpha}}{25\eps \theta \paren{1-\theta} \zeta_0}}^{\frac{1}{\alpha}}},
\end{align}
$f\paren{\mb q^{\mathrm{best}}} - f\paren{\mb e_n} \le \eps$. When this happens, by~\cref{prop:func_sharpness},
\begin{align}
  \norm{\mb q^{\mathrm{best}} - \mb e_n}{}
  \le \frac{16}{\theta\paren{1-\theta}} \eps.
\end{align}
Plugging in the choice $\zeta_0=1/(5\log n)$ in~\cref{equation:K-bound} gives the desired bound on the number of iterations.

\section{Proofs for~\cref{section:recovery}}
\subsection{Proof of~\cref{lemma:random-init-works}}
\begin{lemma} \label{lemma:vol_measure}
  For all $n \ge 3$ and $\zeta \ge 0$, it holds that
  \begin{align}
    \frac{\mathrm{vol}\paren{\mc S_{\zeta}^{(n+)}}}{\mathrm{vol}\paren{\bb S^{n-1}}} \ge \frac{1}{2n}-\frac{9}{8} \frac{\log n}{n} \zeta.
  \end{align}
\end{lemma}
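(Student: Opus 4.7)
The idea is to first fix the measure of the unrestricted set $\mc S_0^{(n+)}$ by symmetry, and then control the ``boundary shell'' $\mc S_0^{(n+)}\setminus \mc S_\zeta^{(n+)}$ by switching to a Gaussian representation of the uniform distribution on the sphere and invoking Mill's ratio.

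First, the sets $\{\mc S_0^{(i\pm)}\}_{i\in[n]}$ are mapped onto each other by coordinate permutations and sign flips, all of which are measure-preserving on $\bb S^{n-1}$, and they partition the sphere up to the null set where the coordinate of largest magnitude is not unique. Hence $\mathrm{vol}(\mc S_0^{(n+)})/\mathrm{vol}(\bb S^{n-1}) = 1/(2n)$, and it suffices to show
\begin{align}
  \frac{\mathrm{vol}(\mc S_0^{(n+)}\setminus \mc S_\zeta^{(n+)})}{\mathrm{vol}(\bb S^{n-1})} \le \frac{9}{8}\frac{\log n}{n}\,\zeta.
\end{align}
Representing a uniform draw $\mb q$ on $\bb S^{n-1}$ as $\mb g/\norm{\mb g}{}$ with $\mb g\sim \normal(\mb 0,\mb I_n)$, membership in $\mc S_0^{(n+)}\setminus \mc S_\zeta^{(n+)}$ is the event $\{g_n>0,\ M\le g_n<M\sqrt{1+\zeta}\}$ where $M \doteq \max_{k\ne n}|g_k|$. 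Conditioning on $g_{-n}$, using that $\phi$ is nonincreasing on $[0,\infty)$ and that $\sqrt{1+\zeta}-1\le \zeta/2$, the mean value theorem yields
\begin{align}
  \prob{M\le g_n<M\sqrt{1+\zeta}\mid g_{-n}} = \Phi(M\sqrt{1+\zeta})-\Phi(M)\le \frac{\zeta}{2}\,M\phi(M).
\end{align}
So the task reduces to proving $\expect{M\phi(M)}\le \frac{9}{4}\frac{\log n}{n}$.

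This is the main step, and it is also the main obstacle: a naive pointwise bound combined with concentration of $M$ around $\sqrt{2\log n}$ gives only the suboptimal rate $O(\sqrt{\log n}/n)$ with an unfavorable constant. The trick is to apply Mill's lower bound $1-\Phi(M)\ge \frac{M\phi(M)}{M^2+1}$ to get $M\phi(M)\le (M^2+1)(1-\Phi(M))$, then introduce an independent $Z\sim \normal(0,1)$ so that $1-\Phi(M)=\prob{Z>M\mid M}$. On $\{Z>M\}$ one has $0\le M<Z$, whence $M^2\le Z^2$, and thus
\begin{align}
  \expect{M\phi(M)}\le \expect{(Z^2+1)\indicator{Z>M}}.
\end{align}
Coupling $Z=g_n$, the event $\{g_n>M\}$ is precisely $\{g_n\text{ is the strict maximum of }\pm g_1,\dots,\pm g_n\}$; by permutation-and-sign symmetry of $\mb g$ this event has probability $1/(2n)$, and $\expect{g_n^2\indicator{g_n\text{ is signed max}}}=\expect{W^2}/(2n)$ where $W\doteq\max_{k\in[n]}|g_k|$ (just sum the $2n$ symmetric contributions, which add to $\expect{W^2}$). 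Therefore $\expect{M\phi(M)}\le (\expect{W^2}+1)/(2n)$.

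To finish, a standard extreme-value estimate gives $\expect{W^2}\le 2\log(2n)+1$: from $\prob{W>t}\le 2n(1-\Phi(t))\le 2n\phi(t)/t$ (Mill's upper bound), writing $\expect{W^2}=\int_0^\infty \prob{W^2>t}\,dt$ and splitting at $t_0=2\log(2n)$ yields the claim. Combining,
\begin{align}
  \expect{M\phi(M)}\le \frac{\log(2n)+1}{n}\le \frac{9}{4}\frac{\log n}{n}\quad\text{for }n\ge 4,
\end{align}
while the trivial bound $\expect{W^2}\le n$ handles the single remaining case $n=3$ (there $\expect{M\phi(M)}\le 2/3\le \frac{9}{4}\cdot \frac{\log 3}{3}$). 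Multiplying by $\zeta/2$ and subtracting from $1/(2n)$ completes the proof.
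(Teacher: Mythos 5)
Your proof is correct, and it takes a genuinely different route from the paper's. The paper writes the volume ratio as an explicit one-dimensional integral $\hbar(\zeta)=\frac{\sqrt{1+\zeta}}{\sqrt{2\pi}}\int_0^\infty e^{-(1+\zeta)x^2/2}\psi^{n-1}(x)\,dx$, notes $\hbar(0)=1/(2n)$, computes $\hbar'(0)=\frac{1}{4n}-\frac{1}{4n}\expect{\norm{\mb g}{\infty}^2}$, bounds $\expect{\norm{\mb g}{\infty}^2}\le 4\log n+2\log 2$ by an MGF/Chernoff argument, and then justifies the global first-order bound $\hbar(\zeta)\ge\hbar(0)+\hbar'(0)\zeta$ by proving convexity $\hbar''\ge 0$ via integration by parts. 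You instead bound the deficit set $\mc S_0^{(n+)}\setminus\mc S_\zeta^{(n+)}$ directly: conditioning on $\mb g_{-n}$, the mean value theorem with $\sqrt{1+\zeta}-1\le\zeta/2$ and monotonicity of $\phi$ reduces everything to $\expect{M\phi(M)}$, which you control by the elegant combination of Mills' lower bound and the symmetry coupling $Z=g_n$, yielding $\expect{M\phi(M)}\le(\expect{W^2}+1)/(2n)$ with $W=\norm{\mb g}{\infty}$; a standard extreme-value estimate $\expect{W^2}\le 2\log(2n)+1$ (plus the trivial bound for $n=3$) finishes. Both arguments ultimately hinge on the same quantity, $\expect{\norm{\mb g}{\infty}^2}/(2n)$, but yours avoids differentiation under the integral and the convexity verification entirely, is valid for all $\zeta\ge 0$ without a separate Taylor-remainder argument, and in fact uses a slightly sharper bound on $\expect{W^2}$ than the paper's ($2\log(2n)+1$ versus $4\log n+2\log 2$); the paper's approach, in exchange, gives an exact expression for the volume ratio and its derivatives in $\zeta$, which is more information than the lemma requires. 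All the numerical checks in your argument ($\log(2n)+1\le\frac94\log n$ for $n\ge 4$, and the $n=3$ case via $\expect{W^2}\le n$) are verified.
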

We note that a similar result appears
in~\citep{GilboaEtAl2018Efficient}, but our definition of the region
$\mc{S}_{\zeta}$ is slightly different. For completeness we provide a
proof in~\cref{lemma:vol_measure_app}.

We now prove~\cref{lemma:random-init-works}. Taking $\zeta=1/(5\log n)$ in~\cref{lemma:vol_measure}, we obtain
\begin{equation}
  \frac{\mathrm{vol}\paren{\mc S_{1/(5\log n)}^{(n+)}}}{\mathrm{vol}\paren{\bb S^{n-1}}} \ge \frac{1}{2n}-\frac{9}{8} \frac{\log n}{n} \cdot \frac{1}{5\log n} \ge \frac{1}{4n}.
\end{equation}
By symmetry, all the $2n$ sets $\set{\mc{S}_{1/(5\log n)}^{(i+)}, \mc{S}_{1/(5\log n)}^{(i-)}:i\in[n]}$ have the same volume which is at least $1/(4n)$. As $\mb q^{(0)}\sim{\rm Uniform}(\bb S^{n-1})$, it falls into their union with probability at least $2n\cdot 1/(4n)=1/2$, on which it belongs to a uniformly random one of these $2n$ sets.

\subsection{Proof of~\cref{theorem:opt-all-bases}}
With $\zeta_0$ set to $1/\paren{5\log n}$, the good events in~\cref{prop:grad_norm_concentrate}, ~\cref{theorem:emp-grad-inward}, and~\cref{prop:func_sharpness} happen with probability at least
\begin{multline}
  1 - \exp(-cm\theta^3\zeta_0^2n^{-3}\log^{-1}m) - \exp(-c'm\theta\log^{-1}m) - \exp\paren{-c'' m \theta^3 \log^{-1} m }\\
  \ge 1 - \exp(-c_0m\theta^3n^{-3}\log^{-3} m).
\end{multline}
Moreover, by~\cref{lemma:random-init-works}, random initialization will fall in these $2n$ sets with probability at least $1/2$. When it falls in one of these $2n$ sets, by~\cref{theorem:opt-one-basis}, one run of the algorithm will find a signed standard basis vector up to $\eps$ accuracy. With $R$ independent runs,  at least $S \doteq \frac{1}{4}R$ of them are effective with probability at least $1-\exp\paren{-R/8}$, due to Bernstein's inequality.  After these effective runs, the probability any standard basis vector is missed (up to sign) is bounded by
\begin{align}
  n \paren{1-\frac{1}{n}}^{S} \le \exp\paren{-\frac{S}{n} + \log n}\le \exp\paren{-\frac{S}{2n}},
\end{align}
where the second inequality holds whenever $S \ge 2n \log n$.

For the iteration complexity, since the statement is about distance to the optimizers, we set $\eps$ in~\cref{theorem:opt-one-basis} as $\theta \paren{1-\theta}\eps/16$ to obtain the claimed bound. Finally, noting that
\begin{align}
\frac{3}{100} \cdot \frac{16}{\theta \paren{1-\theta}} \ge \frac{12}{25} \cdot \frac{1}{1/4} \ge 1
\end{align}
completes the proof.

\section{Auxiliary calculations}

\begin{lemma} \label{lemma:subgauss_para}
For $x \sim \bgt$, $\norm{x}{\psi_2} \le C_a$. For any vector $\mb u \in \R^n$ and $\mb x \sim_{iid} \bgt$, $\norm{\mb x^\top \mb u}{\psi_2} \le C_b\norm{\mb u}{}$. Here $C_a, C_b \ge 0$ are universal constants.
\end{lemma}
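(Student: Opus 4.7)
The plan is to handle the two claims in order, using only elementary tail/moment bounds together with the two \textbf{Fact}s stated earlier in the excerpt (centering and sum of independent sub-gaussians).

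For the first claim, write $x = b g$ with $b \sim \mathrm{Ber}(\theta)$ and $g \sim \normal(0,1)$ independent, and control $\expect{\exp(\lambda^2 x^2)}$ directly:
\begin{align*}
\expect{\exp(\lambda^2 x^2)} = (1-\theta) + \theta\, \expect{\exp(\lambda^2 g^2)} = (1-\theta) + \frac{\theta}{\sqrt{1-2\lambda^2}}
\end{align*}
for $\lambda^2 < 1/2$. Choosing $\lambda^2 = 1/4$ gives $\expect{\exp(x^2/4)} \le 1 - \theta + \theta\sqrt{2} \le \sqrt{2}$, which by the standard equivalent characterization of the $\psi_2$ norm (e.g.\ Proposition 2.5.2 of \cite{Vershynin2018High}) yields $\norm{x}{\psi_2} \le C_a$ for an absolute constant $C_a$ \emph{independent of $\theta$}. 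Independence of $\theta$ is the key point here, and it is delivered automatically by the convex combination above.

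For the second claim, write $\mb x^\top \mb u = \sum_{i=1}^n u_i x_i$ with iid $x_i \sim \bgt$. By the first claim, each $x_i$ is sub-gaussian with $\norm{x_i}{\psi_2} \le C_a$, and hence $\norm{u_i x_i}{\psi_2} \le C_a |u_i|$ by absolute homogeneity of the $\psi_2$ norm. Since $\expect{x_i} = \theta \cdot \expect{g} = 0$, no centering is needed; nonetheless, if preferred, the centering \textbf{Fact} stated in the excerpt absorbs this step into a harmless constant. Applying the sum-of-independent-sub-gaussians \textbf{Fact} then gives
\begin{align*}
\norm{\mb x^\top \mb u}{\psi_2}^2 \le C \sum_{i=1}^n \norm{u_i x_i}{\psi_2}^2 \le C C_a^2 \sum_{i=1}^n u_i^2 = C C_a^2 \norm{\mb u}{}^2,
\end{align*}
so $\norm{\mb x^\top \mb u}{\psi_2} \le C_b \norm{\mb u}{}$ with $C_b = C_a \sqrt{C}$ a universal constant.

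There is no real obstacle in this proof; the only thing worth being careful about is making sure the constants do not depend on $\theta$, which is ensured by upper-bounding the MGF by a $\theta$-free quantity at the outset.
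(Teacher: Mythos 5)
Your proposal is correct and takes essentially the same route as the paper: both proofs first establish a $\theta$-free sub-gaussian bound for a single Bernoulli--Gaussian coordinate by writing the relevant exponential moment as the convex combination $(1-\theta)+\theta\,\expect{\cdot}$, and then conclude via homogeneity ($\norm{u_i x_i}{\psi_2}\le |u_i|\,\norm{x_i}{\psi_2}$) together with the sum-of-independent-sub-gaussians fact. The only, immaterial, difference is the characterization used in the first step: you bound $\expect{\exp(\lambda^2 x^2)}$, while the paper bounds the moment generating function $\expect{\exp(\lambda x)} \le \exp(\lambda^2/2)$; these are equivalent characterizations of the $\psi_2$ norm, and both make the constant independent of $\theta$.
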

\begin{proof}
  For any $\lambda \in \R$, $x\sim\bgt$ and $z\sim\normal(0,1)$,
  \begin{align}
    \expect{\exp\paren{\lambda x}} = 1-\theta + \theta\expect{\exp\paren{\lambda z}} = 1-\theta + \theta\exp(\lambda^2/2) \le \exp(\lambda^2/2).
  \end{align}
  So $\norm{x}{\psi_2}$ is bounded by a universal constant. Moreover,
  \begin{align}
    \norm{\mb u^\top \mb x}{\psi_2}
    = \norm{\sum_{i} u_i x_i}{\psi_2}
    \le C_1\paren{\sum_i u_i^2 \norm{x_i}{\psi_2}^2}^{1/2}
    \le C_2 \norm{\mb u}{},
  \end{align}
  as claimed.
\end{proof}

\begin{lemma} \label{lemma:vol_measure_app}
  For all $n \ge 3$ and $\zeta \ge 0$, it holds that
  \begin{align}
    \frac{\mathrm{vol}\paren{\mc S_{\zeta}^{(n+)}}}{\mathrm{vol}\paren{\bb S^{n-1}}} \ge \frac{1}{2n}-\frac{9}{8} \frac{\log n}{n} \zeta.
  \end{align}
\end{lemma}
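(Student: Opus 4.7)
The plan is to recast the volume ratio as a Gaussian probability. For $\mb g \sim \normal(\mb 0,\mb I_n)$, the normalized vector $\mb g/\norm{\mb g}{}$ is uniform on $\bb S^{n-1}$, and writing $M \doteq \max_{i<n}|g_i|$ (independent of $g_n$), one has $\mathrm{vol}(\mc S_\zeta^{(n+)})/\mathrm{vol}(\bb S^{n-1}) = \Pr\brac{g_n \ge \sqrt{1+\zeta}\,M}$. At $\zeta=0$ the familiar symmetry over the $n$ coordinates and the two signs of $g_n$ gives $\Pr[g_n \ge M] = 1/(2n)$ exactly, matching the leading term.

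It therefore suffices to bound the deficit $\Delta \doteq \Pr[M \le g_n \le \sqrt{1+\zeta}\,M]$ by $(9/8)(\log n /n)\zeta$. Conditioning on the independent $M$ and applying the mean value theorem to $\Phi$ on $[M,\sqrt{1+\zeta}M]$, together with monotonicity of $\phi$ on $(0,\infty)$ and the elementary bound $\sqrt{1+\zeta}-1 \le \zeta/2$, yields $\Phi(\sqrt{1+\zeta}M)-\Phi(M) \le (\zeta/2)\,M\phi(M)$, so $\Delta \le (\zeta/2)\,\bb E[M\phi(M)]$. The problem reduces to the one-dimensional estimate $\bb E[M\phi(M)] \le (9\log n)/(4n)$.

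To prove the latter I would use the explicit density $f_M(u) = 2(n-1)\phi(u)[2\Phi(u)-1]^{n-2}$ for $u>0$ to rewrite
\begin{equation*}
  \bb E[M\phi(M)] = \frac{n-1}{\pi}\int_0^\infty u\,e^{-u^2}[2\Phi(u)-1]^{n-2}\,du,
\end{equation*}
and split the integral at $u^\star \doteq \sqrt{2\log n}$. Over $[u^\star,\infty)$ one bounds $[2\Phi(u)-1]^{n-2}\le 1$, contributing at most $e^{-u^{\star 2}}/2 = 1/(2n^2)$. Over $[0,u^\star]$ one uses $[2\Phi(u)-1]^{n-2}\le \exp\paren{-2(n-2)(1-\Phi(u))}$ together with the standard Mills-ratio bound $1-\Phi(u) \ge u\phi(u)/(u^2+1)$, and then changes variables to $v = 1-\Phi(u)$. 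The transformed bulk integral takes the shape $\int_{v^\star}^{1/2} v(1+\log(1/v))\,e^{-2(n-2)v}\,dv$ with $v^\star \le 1/n$, and after the rescaling $w = 2(n-2)v$ it reduces to standard Gamma-type integrals of the form $\int_0^\infty w(\log(n/w))e^{-w}dw$, which evaluate to $O(\log n/n^2)$.

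The main obstacle is tracking constants carefully enough at the last step. The naive bound $\phi(M)\le \phi(0)$ kills the $1/n$ factor and gives only an $O(1)$ estimate on $\bb E[M\phi(M)]$; the sharp $\log n /n$ rate emerges only by exploiting the cancellation between $M$ being typically of order $\sqrt{2\log n}$ (so $\phi(M)$ is exponentially small) and the rare events on which $M$ is much smaller. The explicit-density plus Mills-ratio approach above is designed precisely to capture this cancellation, and verifying that the implied constant is at most $9/4$ for every $n \ge 3$ is then a matter of bookkeeping on the Gamma integrals.
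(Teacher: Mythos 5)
You take a genuinely different route from the paper, and your outline is sound. Both arguments start from the same representation: with $\mb g\sim\normal(\mb 0,\mb I_n)$ and $M\doteq\max_{i<n}\abs{g_i}$, the volume ratio equals $\prob{g_n\ge\sqrt{1+\zeta}\,M}$, which is $1/(2n)$ at $\zeta=0$ by symmetry. The paper then Taylor-expands in $\zeta$: it computes the derivative of the ratio at $\zeta=0$, identifies the loss term with $\frac{1}{4n}\bb E\norm{\mb x}{\infty}^2$, bounds that by $\frac{1}{4n}\paren{4\log n+2\log 2}$ via a short moment-generating-function argument, and then needs a separate convexity computation ($\hbar''\ge 0$, by integration by parts) to turn the tangent-line estimate at $\zeta = 0$ into a bound valid for every $\zeta\ge 0$. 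Your conditional mean-value bound $\Phi\paren{\sqrt{1+\zeta}\,M}-\Phi\paren{M}\le\frac{\zeta}{2}M\phi(M)$ is correct and handles all $\zeta\ge0$ at once, so it eliminates the convexity step entirely --- a genuine structural simplification; the price is that the scalar estimate you reduce to, $\bb E\brac{M\phi(M)}\le\frac{9}{4}\frac{\log n}{n}$, takes more work than the paper's estimate of $\bb E\norm{\mb x}{\infty}^2$. That inequality is true (the quantity is of order $\log n/n$ asymptotically, and at $n=3$ its exact value is $\frac{1}{\pi\sqrt 3}\approx 0.18$ against the required $0.82$), and your density/Mills-ratio/Gamma-integral plan is a workable way to prove it, but as sketched the bookkeeping has two soft spots to repair: the tail contribution is $\frac{n-1}{\pi}\int_{u^\star}^{\infty}ue^{-u^2}\,du=\frac{n-1}{2\pi n^{2}}$, i.e.\ of order $1/(2\pi n)$ rather than the stated $1/(2n^2)$ (the prefactor $\frac{n-1}{\pi}$ was dropped), and the bulk estimate as written produces factors like $\frac{n-1}{2(n-2)^2}$ that are too lossy at $n=3,4$, exactly where $\log n$ is barely above $1$ and the constant $9/4$ leaves little additive slack; those finitely many cases are easiest to close by directly evaluating $\frac{n-1}{\pi}\int_0^\infty ue^{-u^2}\paren{2\Phi(u)-1}^{n-2}du$. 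In short: the paper buys a trivial scalar bound at the cost of a convexity lemma, while you buy a cleaner treatment of the $\zeta$-dependence at the cost of a heavier (but true and finishable) one-dimensional estimate.
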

\begin{proof}
  We have
  \begin{align}
    \frac{\mathrm{vol}\paren{\mc S_{\zeta}^{(n+)}}}{\mathrm{vol}\paren{\bb S^{n-1}}}
    & = \bb P_{\mb q \sim \mathrm{uniform}\paren{\bb S^{n-1}}} \brac{q_n^2 \ge \paren{1+\zeta} \norm{\mb q_{-n}}{\infty}^2, q_n \ge 0}\\
   & =  \bb P_{\mb x \sim \normal\paren{\mb 0, \mb I_n}} \brac{x_n \ge 0, x_n^2 \ge \paren{1+\zeta} x_i^2 \; \forall\; i \ne n} \\
   & = \paren{2\pi}^{n/2} \int_{0}^{\infty} e^{-x_n^2/2} \paren{ \prod_{j=1}^{n-1} \int_{-x_n/\sqrt{1+\zeta}}^{x_n/\sqrt{1+\zeta}} e^{-x_j^2/2} \; dx_j }\; dx_n \\
   & = \paren{2\pi}^{1/2} \int_{0}^{\infty} e^{-x_n^2/2} \psi^{n-1}\paren{x_n/\sqrt{1 + \zeta}} \; dx_n \\
   & =  \frac{\sqrt{1+\zeta}}{\sqrt{2\pi}}\int_{0}^{\infty} e^{-\paren{1+\zeta}x^2/2} \psi^{n-1}\paren{x} \; dx \doteq \hbar\paren{\zeta} > 0,
  \end{align}
  where we write $\psi(t) \doteq \frac{1}{\sqrt{2\pi}} \int_{-t}^t \exp\paren{-s^2/2}\; ds$.

  Now we derive a lower bound of the volume ratio by considering a first-order Taylor expansion of $\hbar$ around $\zeta = 0$ (as we are mostly interested in small $\zeta$). By symmetry,  $\hbar\paren{0} = 1/\paren{2n}$. Moreover,
  \begin{align}
    \left.\frac{\partial\hbar(\zeta)}{\partial\zeta}\right|_{\zeta = 0}
    & = \frac{1}{2} \frac{1}{\sqrt{2\pi}} \int_{0}^{\infty} e^{-x^2/2} \psi^{n-1}\paren{x} \; dx - \frac{1}{\sqrt{2\pi}} \int_{0}^{\infty} e^{-x^2/2} x^2\psi^{n-1}\paren{x} \; dx \\
    & = \frac{1}{4n} - \frac{1}{2\sqrt{2\pi}} \int_{0}^{\infty} e^{-x^2/2} x^2\psi^{n-1}\paren{x} \; dx.
  \end{align}
Now we provide an upper bound for the second term of the last equation. Note that
\begin{align}
  \frac{1}{\sqrt{2\pi}} \int_{0}^{\infty} e^{-x^2/2} x^2\psi^{n-1}\paren{x} \; dx
  & = \bb E_{\mb x \sim \normal\paren{\mb 0, \mb I_n}} \brac{x_n^2 \indicator{x_n^2 \ge \norm{\mb x_{-n}}{\infty}^2} \indicator{x_n \ge 0}} \\
  & = \frac{1}{2n} \bb E_{\mb x \sim \normal\paren{\mb 0, \mb I_n}} \norm{\mb x}{\infty}^2.
\end{align}
Now for any $\lambda \in (0, 1/2)$,
\begin{align}
  \exp\paren{\lambda \bb E  \norm{\mb x}{\infty}^2 }
  \le \bb E \exp\paren{\lambda \norm{\mb x}{\infty}^2}
  \le \sum_{j=1}^n \bb E \exp\paren{\lambda x_j^2}
  = n \bb E_{x \sim \normal(0, 1)}\exp\paren{\lambda x^2} \le \frac{n}{\sqrt{1-2\lambda}}.
\end{align}
Taking logarithm on both sides, rearranging the terms, and setting $\lambda = 1/4$, we obtain
\begin{align}
  \bb E \norm{\mb x}{\infty}^2 \le \inf_{\lambda \in (0, 1/2)} \frac{\log n + \frac{1}{2} \log\paren{1-2\lambda}^{-1}}{\lambda} \le 4 \log n + 2\log 2.
\end{align}
So
\begin{align}
  \left. \frac{\partial\hbar \paren{\zeta}}{\partial\zeta}\right|_{\zeta = 0} \ge \frac{1}{4n} - \frac{1}{4n}\paren{4\log n + 2\log 2} \ge -\frac{9}{8} \frac{\log n}{n},
\end{align}
provided that $n \ge 3$.

Now we show that $\hbar\paren{\zeta} \ge \hbar\paren{0} + \hbar'(0) \zeta$ by showing that $\hbar''(\zeta) \ge 0$. We have
\begin{align}
  \frac{\partial^2 \hbar\paren{\zeta}}{\partial \zeta^2}
  = \frac{\sqrt{1+\zeta}}{4\sqrt{2\pi}} \int_0^{\infty} \brac{x^4 - \frac{2x^2}{1+\zeta} - \frac{1}{\paren{1+\zeta}^2}}  e^{-\frac{1+\zeta}{2}x^2} \psi^{n-1}\paren{x}\; dx.
\end{align}
Using integration by part, we have
\begin{align}
  & \int_0^{\infty} \brac{x^4 -\frac{3x^2}{1+\zeta} } e^{-\frac{1+\zeta}{2}x^2} \psi^{n-1}\paren{x}\; dx \\
  =\; & \left. -\frac{1}{1+\zeta} e^{-\frac{1+\zeta}{2} x^2} x^3 \cdot \psi^{n-1}\paren{x} \right|_{0}^{\infty} + \int_0^{\infty} \frac{1}{1+\zeta} e^{-\frac{1+\zeta}{2} x^2} x^3 \paren{n-1} \psi^{n-2}\paren{x} \sqrt{\frac{2}{\pi}} e^{-\frac{x^2}{2}}\; dx \\
  =\; & \int_0^{\infty} \frac{1}{1+\zeta} e^{-\frac{1+\zeta}{2} x^2} x^3 \paren{n-1} \psi^{n-2}\paren{x} \sqrt{\frac{2}{\pi}} e^{-\frac{x^2}{2}}\; dx \ge 0,
\end{align}
and similarly
\begin{align}
  & \int_0^{\infty} \brac{x^2 -\frac{1}{1+\zeta} } e^{-\frac{1+\zeta}{2}x^2} \psi^{n-1}\paren{x}\; dx \\
  =\; & \left. -\frac{1}{1+\zeta} e^{-\frac{1+\zeta}{2} x^2} x \cdot \psi^{n-1}\paren{x} \right|_{0}^{\infty} + \int_0^{\infty} \frac{1}{1+\zeta} e^{-\frac{1+\zeta}{2} x^2} x \paren{n-1} \psi^{n-2}\paren{x} \sqrt{\frac{2}{\pi}} e^{-\frac{x^2}{2}}\; dx \\
  =\; & \int_0^{\infty} \frac{1}{1+\zeta} e^{-\frac{1+\zeta}{2} x^2} x \paren{n-1} \psi^{n-2}\paren{x} \sqrt{\frac{2}{\pi}} e^{-\frac{x^2}{2}}\; dx \ge 0.
\end{align}
Noting that
\begin{align}
x^4 - \frac{2x^2}{1+\zeta} - \frac{1}{\paren{1+\zeta}^2}
 = x^4 - \frac{3x^2}{1+\zeta} + \frac{1}{1+\zeta} \paren{ x^2 - \frac{1}{1+\zeta}}
\end{align}
and combining the above integral results, we conclude that $\hbar''\paren{\zeta} \ge 0$ and complete the proof.
\end{proof}

\begin{lemma}
  \label{lemma:two-dim-angle}
  Let $(x_1, y_1),(x_2,y_2)\in\R_{>0}^2$ be two points in the first quadrant satisfying $y_1\ge x_1$ and $y_2\ge x_2$, and $\frac{y_2/x_2}{y_1/x_1}\in[1, 1+\eta]$ for some $\eta\le 1$, then we have $\angle\paren{(x_1,y_1), (x_2,y_2)} \le \eta$.
\end{lemma}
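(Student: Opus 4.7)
\textbf{Proof plan for~\cref{lemma:two-dim-angle}.}

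The plan is to reduce this to a one-variable trigonometric estimate. Each vector $(x_i, y_i)$ lies in the open first quadrant and therefore has a well-defined polar angle
\begin{equation*}
  \alpha_i \doteq \arctan(y_i/x_i) = \arctan(t_i), \qquad t_i \doteq y_i/x_i,
\end{equation*}
and the Euclidean angle between the two vectors is exactly $\abs{\alpha_1 - \alpha_2}$. The hypothesis $y_i \ge x_i$ gives $t_i \ge 1$, and the hypothesis $t_2/t_1 \in [1, 1+\eta]$ gives $t_2 \ge t_1$, so $\alpha_2 \ge \alpha_1$, and it suffices to bound $\alpha_2 - \alpha_1$.

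Next I would apply the arctangent subtraction identity together with the elementary bound $\arctan(s) \le s$ for $s \ge 0$ to obtain
\begin{equation*}
  \alpha_2 - \alpha_1 \;=\; \arctan\!\paren{\frac{t_2 - t_1}{1 + t_1 t_2}} \;\le\; \frac{t_2 - t_1}{1 + t_1 t_2}.
\end{equation*}
Then I would estimate the numerator and denominator separately. For the numerator, the ratio condition gives $t_2 - t_1 = t_1 (t_2/t_1 - 1) \le \eta\, t_1$. For the denominator, since $t_2 \ge t_1 \ge 1$, we have $1 + t_1 t_2 \ge 1 + t_1^2 \ge 2 t_1$ by AM--GM. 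Combining these yields
\begin{equation*}
  \alpha_2 - \alpha_1 \;\le\; \frac{\eta\, t_1}{2 t_1} \;=\; \frac{\eta}{2} \;\le\; \eta,
\end{equation*}
which is the desired bound (and, as a byproduct, the constant $\eta \le 1$ in the hypothesis is not essential for this half-sharp estimate).

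There is no real obstacle here: the result is an elementary two-dimensional trigonometric inequality. The only point requiring mild care is the use of the hypothesis $y_i \ge x_i$, which is exactly what allows the AM--GM step $1 + t_1 t_2 \ge 2 t_1$ to absorb the factor $t_1$ from the numerator; without $t_i \ge 1$, one would only get the weaker universal bound $\frac{t_1}{1+t_1^2} \le \frac{1}{2}$ at $t_1 = 1$, which happens to suffice as well, but the monotonicity assumption keeps the argument clean and matches how the lemma is invoked in the proof of~\cref{lemma:cover-length-2}.
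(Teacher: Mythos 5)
Your proposal is correct and takes essentially the same route as the paper: both express the angle as a difference of polar angles, apply the tangent subtraction formula, use the ratio hypothesis together with $y_i \ge x_i$ to bound the resulting quotient, and finish with $\arctan(s)\le s$. The only cosmetic difference is that you lower-bound the denominator by $2t_1$ via AM--GM (yielding the slightly sharper $\eta/2$), whereas the paper divides through by $y_1/x_1$ and bounds the denominator by $1$; both also show, as you note, that the hypothesis $\eta\le 1$ is not actually needed.
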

\begin{proof}
  For $i=1,2$, let $\theta_i$ be the angle between the ray $(x_i,y_i)$ and the $x$-axis. Our assumption implies that $\theta_i\in[\pi/4, \pi/2)$ and $\theta_2\ge \theta_1$, and thus $\angle\paren{(x_1,y_1),(x_2,y_2)}=\theta_2-\theta_1$. We have
\begin{align*}
    \tan\angle\paren{(x_1,y_1), (x_2,y_2)} = \frac{\tan\theta_2 - \tan\theta_1}{1 + \tan\theta_2\tan\theta_1} = \frac{y_2/x_2 - y_1/x_1}{1 + y_2y_1/(x_2x_1)} = \frac{\frac{y_2/x_2}{y_1/x_1} - 1}{y_2/x_2 + x_1/y_1} \le \frac{y_2/x_2}{y_1/x_1} - 1 \le \eta.
\end{align*}
  Thus, $\angle\paren{(x_1,y_1), (x_2,y_2)} \le \arctan(\eta)\le \eta$, completing the proof.
\end{proof}

\begin{lemma}
  \label{lemma:expected-gradient-diff}
  For any admissible pair $\mb p,\mb q\in\bb S^{n-1}$ with $\dexp(\mb p,\mb q)\le\eps < 1$, we have for all $\mb u\in\bb S^{n-1}$ that
  \begin{equation}
    \mathbb{E}_{\mb x\sim\bgt}\left[|\mb u^\top\mb x|\indicator{\sign(\mb p^\top x)\neq \sign(\mb q^\top\mb x)}\right] \le 3\eps\sqrt{\log\frac{1}{\eps}}.
  \end{equation}
\end{lemma}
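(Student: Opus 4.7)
The plan is a standard truncation argument exploiting (i) that the sign-disagreement event $E \doteq \{\sign(\mb p^\top\mb x)\ne \sign(\mb q^\top\mb x)\}$ has probability at most $\eps$ by the definition of $\dexp$, and (ii) that $|\mb u^\top\mb x|$ has a sub-gaussian tail uniformly in $\mb u\in\bb S^{n-1}$. Concretely, I would first apply~\cref{lemma:subgauss_para} to get $\|\mb u^\top\mb x\|_{\psi_2}\le C_0$ for a universal constant $C_0$, so that
\[
\bb P\paren{|\mb u^\top\mb x|>t} \le 2\exp\paren{-c_0 t^2}\quad\forall\, t\ge 0
\]
for some universal $c_0>0$.

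Next, for a threshold $T>0$ to be chosen, split the expectation as
\[
\bb E\brac{|\mb u^\top\mb x|\indic{E}} = \bb E\brac{|\mb u^\top\mb x|\indic{E}\indic{|\mb u^\top\mb x|\le T}} + \bb E\brac{|\mb u^\top\mb x|\indic{E}\indic{|\mb u^\top\mb x|> T}}.
\]
For the first (bounded) piece, I would use $|\mb u^\top\mb x|\le T$ inside the indicator and $\bb E[\indic{E}]=\dexp(\mb p,\mb q)\le\eps$, yielding an upper bound of $T\eps$. For the second (tail) piece, I would drop $\indic{E}\le 1$ and rewrite the expectation using the layer-cake identity,
\[
\bb E\brac{|\mb u^\top\mb x|\indic{|\mb u^\top\mb x|> T}} = T\,\bb P\paren{|\mb u^\top\mb x|>T} + \int_T^\infty \bb P\paren{|\mb u^\top\mb x|>s}\,ds,
\]
and plug in the sub-gaussian tail above to get a bound of order $\exp(-c_0 T^2)\cdot(T + 1/T)$.

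Finally, I would choose $T=\sqrt{\log(1/\eps)/c_0}$ so that $\exp(-c_0T^2)=\eps$, making the two pieces comparable and of order $\eps\sqrt{\log(1/\eps)}$. Tracking constants carefully (and using $\eps<1$ to absorb lower-order terms) should yield the claimed bound $3\eps\sqrt{\log(1/\eps)}$; if the constant comes out slightly worse than $3$, the only place to tighten is the sub-gaussian constant $C_0$ from~\cref{lemma:subgauss_para} and the tail integral evaluation. The main (and essentially only) subtlety is bookkeeping these absolute constants so that the stated factor $3$ is met; there is no conceptual obstacle, since the event $E$ and the magnitude $|\mb u^\top\mb x|$ are handled independently via the truncation.
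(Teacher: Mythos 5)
Your plan follows the same truncation skeleton as the paper: split on whether $|\mb u^\top\mb x|$ exceeds a threshold of order $\sqrt{\log(1/\eps)}$, bound the small-magnitude piece by threshold times $\prob{\sign(\mb p^\top\mb x)\ne\sign(\mb q^\top\mb x)}\le\eps$, and control the tail piece by light-tailedness of $\mb u^\top\mb x$. The difference is in how the tail is handled and, as written, your version does not deliver the stated constant $3$. First, you invoke the generic sub-gaussian tail $2\exp(-c_0t^2)$ from \cref{lemma:subgauss_para} with an unspecified universal $c_0$; with your threshold $T=\sqrt{\log(1/\eps)/c_0}$ the \emph{bounded} piece alone is $T\eps=\eps\sqrt{\log(1/\eps)}/\sqrt{c_0}$, which already exceeds $3\eps\sqrt{\log(1/\eps)}$ whenever $c_0<1/9$ --- so no amount of care with the tail integral rescues the constant. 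The fix is to use the structure of $\bgt$ directly: conditioning on the Bernoulli support $\Omega$, $\mb u^\top\mb x=\mb u_\Omega^\top\mb z$ is Gaussian with variance $\norm{\mb u_\Omega}{}^2\le 1$, giving the explicit tail $\prob{|\mb u^\top\mb x|>t}\le 2\exp(-t^2/2)$, which is what the paper uses. Second, your choice of threshold balances the two pieces, which is the wrong optimization for a sharp constant: the paper instead takes $t=\sqrt{2\log\eps^{-2}}=2\sqrt{\log(1/\eps)}$, so the bounded piece is $2\eps\sqrt{\log(1/\eps)}$, and controls the tail piece by Cauchy--Schwarz, $\expect{|\mb u^\top\mb x|\indic{|\mb u^\top\mb x|>t}}\le(\expect{(\mb u^\top\mb x)^2}\cdot\prob{|\mb u^\top\mb x|>t})^{1/2}\le\sqrt{2\theta}\,\eps$, using $\expect{(\mb u^\top\mb x)^2}=\theta\le 1/2$; the tail term is thus lower order and the total comes in under $3\eps\sqrt{\log(1/\eps)}$. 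Your layer-cake route also produces a $\exp(-c_0T^2)/T$ term that degenerates as $\eps\uparrow 1$ (where $T\to 0$), another nuisance the Cauchy--Schwarz treatment avoids. In short, the idea is right and yields $C\eps\sqrt{\log(1/\eps)}$ for some universal $C$, but to hit the constant $3$ you need the explicit conditional-Gaussian tail, the second-moment bound $\theta\le 1/2$, and a threshold chosen so the tail contribution is second order rather than comparable.
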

\begin{proof}
  Fix some threshold $t>0$ to be determined. We have
  \begin{align}
    & \quad \expect{|\mb u^\top\mb x|\indicator{\sign(\mb p^\top x)\neq \sign(\mb q^\top\mb x)}} \\
    & \le \expect{|\mb u^\top\mb x|\indicator{|\mb u^\top\mb x| > t}} + \expect{|\mb u^\top\mb x|\indicator{|\mb u^\top\mb x| \le t,\sign(\mb p^\top\mb x)\neq \sign(\mb q^\top\mb x)}} \\
    & \le \paren{\expect{(\mb u^\top\mb x)^2} \cdot \prob{|\mb u^\top
      \mb x|>t} }^{1/2} + t\expect{\indicator{\sign(\mb p^\top\mb x) \neq \sign(\mb q^\top\mb x)}} \\
    & \le \paren{\theta \cdot 2\exp(-t^2/2)}^{1/2} + \eps t, \label{aux:pq_diff_last_bound}
  \end{align}
  where at the last inequality we use that
  \begin{align}
    \bb E\indicator{\abs{\mb u^\top \mb x} > t}
     = \bb E_\Omega \bb E_{\mb z \sim \normal\paren{\mb 0, \mb I}} \indicator{\abs{\mb u_\Omega^\top \mb z} > t}
     \le \bb E_\Omega\;  2\exp\paren{-\frac{t^2}{2\norm{\mb u_\Omega}{}^2}}
     \le 2\exp\paren{-\frac{t^2}{2}}.
  \end{align}
  Taking $t=\sqrt{2\log \eps^{-2}}$,
the bound in~\cref{aux:pq_diff_last_bound} simplifies to
  \begin{equation}
    \sqrt{2\theta \exp\paren{-\log \eps^{-2}}} +
    \eps\sqrt{2\log \eps^{-2}}
    \le \sqrt{2 \theta}\eps + 2\eps \sqrt{\log \eps^{-1}}
    \le 3\eps \sqrt{\log \eps^{-1}},
  \end{equation}
  where we have used $\theta\le 1/2$.
\end{proof}

{\small
\bibliography{DL,NCVX,NSO,addref}
\bibliographystyle{iclr2019_conference}
}

\end{document}